\definecolor{yvesblue}{RGB}{0, 47, 167}
\definecolor{darkerlogocolor}{RGB}{20, 0, 145}  
\newtcolorbox{ttcolorbox}[1][]{colframe=darkerlogocolor, colback=darkerlogocolor!4!white, title=#1}
\newtheoremstyle{define}%
  {3pt}{3pt}{}{}
  {\bfseries}{}
  {0pt}{\thmname{#1} \thmnumber{#2} \normalfont\thmnote{ (#3)}. }
\theoremstyle{define}
\definecolor{gray}{rgb}{0.5,0.5,0.5}
\definecolor{light-gray}{gray}{0.2}
\definecolor{dark-gray}{gray}{0.15}
\definecolor{brickblack}{rgb}{0.0, 0.0, 0.0}
\definecolor{brickred}{rgb}{0.8, 0.25, 0.33}
\definecolor{brickblue}{rgb}{0.432, 0.600, 0.793}
\newcommand{\x}{\bm{x}}
\newcommand{\bz}{\mathbf{z}}
\newcommand{\bu}{\mathbf{u}}
\newcommand{\BZ}{\mathbf{Z}}
\newcommand{\BU}{\mathbf{U}}
\newcommand{\BK}{\mathbf{K}}
\newcommand{\BS}{\mathbf{S}}
\newcommand{\y}{\mathbf{y}}
\newcommand{\z}{\bm{z}}
\newcommand{\X}{\mathbf{X}}
\newcommand{\Y}{\mathbf{Y}}
\newcommand{\p}{\mathbf{p}}
\newcommand{\data}{\mathcal{D}}
\def\1{\bm{1}}
\declaretheoremstyle[
  spaceabove=3pt,
  spacebelow=3pt,
  bodyfont=\normalfont,
  headfont=\bfseries,
  notefont=\normalfont,
  headpunct={ },
  postheadspace=0pt,
  headformat=\bfseries\NAME\ \NUMBER\ \NOTE\normalfont.
]{define2}
\declaretheoremstyle[
  spaceabove=3pt,
  spacebelow=3pt,
  bodyfont=\normalfont,
  headfont=\bfseries,
  notefont=\normalfont,
  headpunct={},
  postheadspace=0pt,
  headformat=\bfseries\NAME\ \NUMBER\ \NOTE\normalfont
]{define3}
\declaretheorem[numberwithin=section,style=define2]{theorem}
\declaretheorem[numberwithin=section,style=define3]{proposition}
\declaretheorem[numberwithin=section,style=define3]{lemma}
\title{Variational Uncertainty Decomposition\\for In-Context Learning}
\author{
    I. Shavindra Jayasekera\thanks{Equal contribution.},
    Jacob Si\footnotemark[1], \\
    \textbf{Filippo Valdettaro,
    Wenlong Chen,
    A. Aldo Faisal,
    Yingzhen Li} \\
    Imperial College London \\
    \texttt{\{i.jayasekera23, y.si23, yingzhen.li\}@imperial.ac.uk} \\
}
\begin{document}

\maketitle

\begin{abstract}
As large language models (LLMs) gain popularity in conducting prediction tasks in-context, understanding the sources of uncertainty in in-context learning becomes essential to ensuring reliability. The recent hypothesis of in-context learning performing predictive Bayesian inference opens the avenue for Bayesian uncertainty estimation, particularly for decomposing uncertainty into epistemic uncertainty due to lack of in-context data and aleatoric uncertainty inherent in the in-context prediction task. However, the decomposition idea remains under-explored due to the intractability of the latent parameter posterior from the underlying Bayesian model. In this work, we introduce a variational uncertainty decomposition framework for in-context learning without explicitly sampling from the latent parameter posterior, by optimising auxiliary queries as probes to obtain an upper bound to the aleatoric uncertainty of an LLM's in-context learning procedure, which also induces a lower bound to the epistemic uncertainty. Through experiments on synthetic and real-world tasks, we show quantitatively and qualitatively that the decomposed uncertainties obtained from our method exhibit desirable properties of epistemic and aleatoric uncertainty. Code is available at: \url{https://github.com/jacobyhsi/VUD}.
\end{abstract}

\section{Introduction}

Large Language Models (LLMs) have demonstrated remarkable abilities in natural language generation \cite{deepseekai2025deepseekv3technicalreport, qwen2025qwen25technicalreport, touvron2023llamaopenefficientfoundation}, and are being extended to a wide range of applications such as question answering \cite{yang2018hotpotqadatasetdiverseexplainable}, retrieval-augmented generation \cite{lewis2021retrievalaugmentedgenerationknowledgeintensivenlp}, information analysis \cite{si2024interpretabnetdistillingpredictivesignals, naveed2024comprehensiveoverviewlargelanguage}, and bandit problems \cite{krishnamurthy2024largelanguagemodelsexplore}. In particular, an emergent property of an LLM is \emph{in-context learning} (ICL), where the model acquires task behavior at inference time, without the need for prior pre-training or fine-tuning \cite{brown2020languagemodelsfewshotlearners}. With the rising importance and presence of LLMs, understanding where and why these models are uncertain is essential in assessing their trustworthiness and robustness.
A straightforward method of assessing uncertainty is to directly prompt the LLM to quantify the uncertainty of its outputs. However, this can be unreliable due to the overconfidence of language models \cite{wen2024mitigating}. Therefore, being able to faithfully quantify and determine the sources of uncertainties from the LLMs' output can assist practitioners in better understanding and addressing the model's limitations.

Recent work has hypothesised that ICL exhibits properties of Bayesian inference \cite{xie2022explanationincontextlearningimplicit}. If we concatenate a dataset of a predictive task $\data = \{(\x_i,\y_i)\}_{i=1}^n$ and a test input $\x^*$ into a prompt, then we can view ICL as (approximately) inferring an implicit latent parameter $\theta$ for an underlying posterior distribution $p(\theta | \data)$ and computing a posterior predictive distribution $p(\y^*|\x^*,\mathcal{D})$. This interpretation allows estimation of uncertainty through a Bayesian framework, which measures a model's \emph{total (predictive) uncertainty} by computing the entropy $\mathbb{H}[\y^*|\x^*, \data]$ or, in regression settings, the total variance $\mathrm{Var}[\y^*|\x^*,\data]$. The total uncertainty can then be decomposed further into two sources \cite{kendall2017uncertaintiesneedbayesiandeep, smith2024rethinking}: \textit{aleatoric uncertainty}, which captures noise inherent in the data generation process (thus irreducible), and \textit{epistemic uncertainty} that accounts for uncertainty in the model due to the lack of knowledge (reducible with more data). In the bottom of Figure \ref{fig:two_moons}, we motivate the importance of a decomposition on the two-moons classification dataset. This decomposition provides valuable insights: aleatoric uncertainty pinpoints regions of ambiguity around the decision boundary, while epistemic uncertainty exposes areas lacking sufficient in-context data, guiding practitioners on where additional data or model refinement is needed. This notion of uncertainty decomposition has been explored in various domains, including computer vision \cite{kendall2017uncertaintiesneedbayesiandeep, kendall2018multitasklearningusinguncertainty} and reinforcement learning \cite{osband2016deep, depeweg2018decompositionuncertaintybayesiandeep}.

Obtaining high-quality Bayesian uncertainty estimates and decomposition for LLM-based ICL poses two major challenges. First, an LLM's auto-regressive prediction procedure often does not satisfy the exchangeability condition \cite{falck2024incontextlearninglargelanguage, ye2024exchangeable}, which questions the existence of the implicit Bayesian model with latent parameter $\theta$. Second, even if an implicit Bayesian model exists, one cannot explicitly simulate posterior samples $\theta \sim p(\theta | \data)$, which are required by the uncertainty decomposition procedure in many existing Bayesian neural network methods \cite{neal2012bayesian, blundell2015weight, graves2011practical, hernandez2015probabilistic, li2015stochastic}. In this regard, recent work on Martingale posterior \cite{falck2024incontextlearninglargelanguage} proposes generating a long sequence of future data and estimating a posterior distribution over $\theta$ via risk minimisation. But the Martingale posterior approach incurs a high computational cost and, still, the missing guarantee of exchangeability makes its uncertainty estimates questionable in aligning with the uncertainty from a coherent Bayesian model.

In this work, we propose a Variational Uncertainty Decomposition (VUD) framework for LLM-based ICL, focusing on addressing the mentioned two challenges. Our contributions are as follows:
\vspace{-3mm}
\begin{itemize}[leftmargin=*]
\setlength{\itemsep}{0pt}
\setlength{\parskip}{0pt}
\setlength{\parsep}{0pt}
\item We propose an \emph{optimisable} variational upper-bound to the aleatoric (predictive) uncertainty without explicit simulating the parameter posterior $p(\theta | \data)$, by appending in optimisable auxiliary inputs $\BZ$ to the context and computing uncertainty measures with $\BZ$ conditioning. This variational estimator also induces a lower-bound on the epistemic uncertainty, which can be used in relevant tasks. An overview of our two-task variational decomposition pipeline can be found in the above of Figure \ref{fig:two_moons}.

\item We propose novel LLM prompting and optimisation techniques for computing $p(\y^* | \x^*, \data)$ and searching optimal $\BZ$. Our design facilitates (approximate) exchangeability for ICL, making the variational uncertainty estimates better aligned with desirable Bayesian properties such as epistemic uncertainty reduction with increasing amount of data.

\end{itemize}
\vspace{-3mm}

Experiments on synthetic regression and classification datasets show that our uncertainty decomposition framework is effective, behaving qualitatively similar to a Bayesian model. Quantitatively, the variational estimation of epistemic uncertainty also benefits downstream tasks such as bandit and in-context abstention tasks applied to real-world natural language datasets.

\begin{figure}[t]
    \centering
    \begin{subfigure}[t]{0.865\textwidth}
        \includegraphics[width=\linewidth]{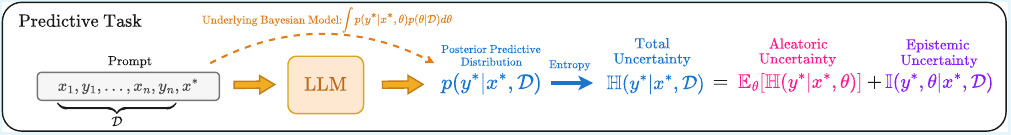}
    \end{subfigure}
    \par\vspace{1mm}
    \begin{subfigure}[t]{0.865\textwidth}
        \includegraphics[width=\textwidth]{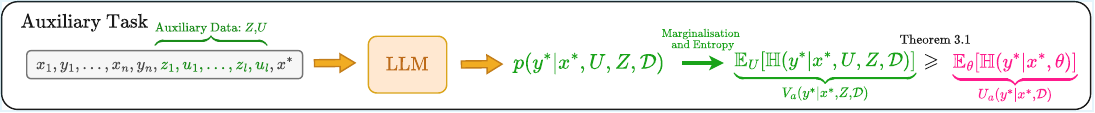}
    \end{subfigure}
    \par\vspace{1mm}
    \begin{subfigure}[t]{0.3
    \textwidth}
        \centering
        \includegraphics[width=\linewidth]{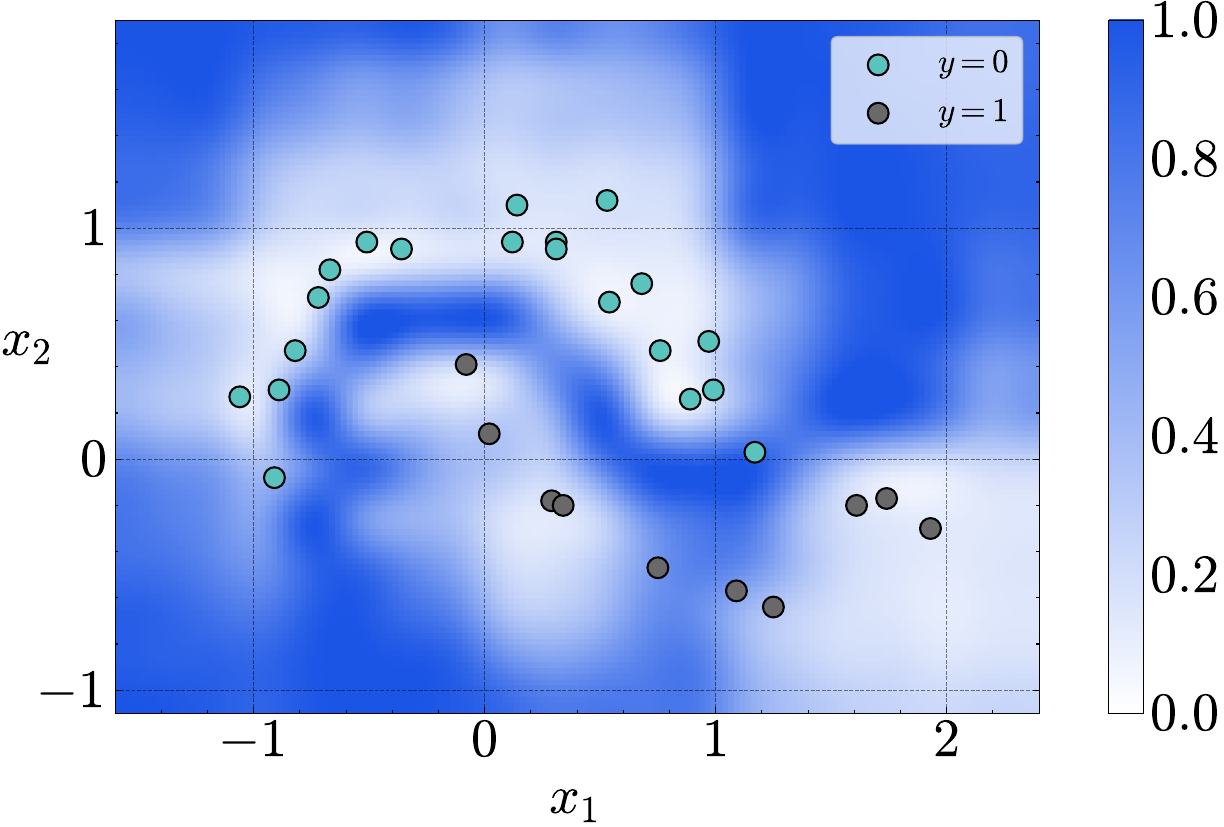}
        \caption{Total Uncertainty}
    \end{subfigure}
    \begin{subfigure}[t]{0.3\textwidth}
        \includegraphics[width=\linewidth]{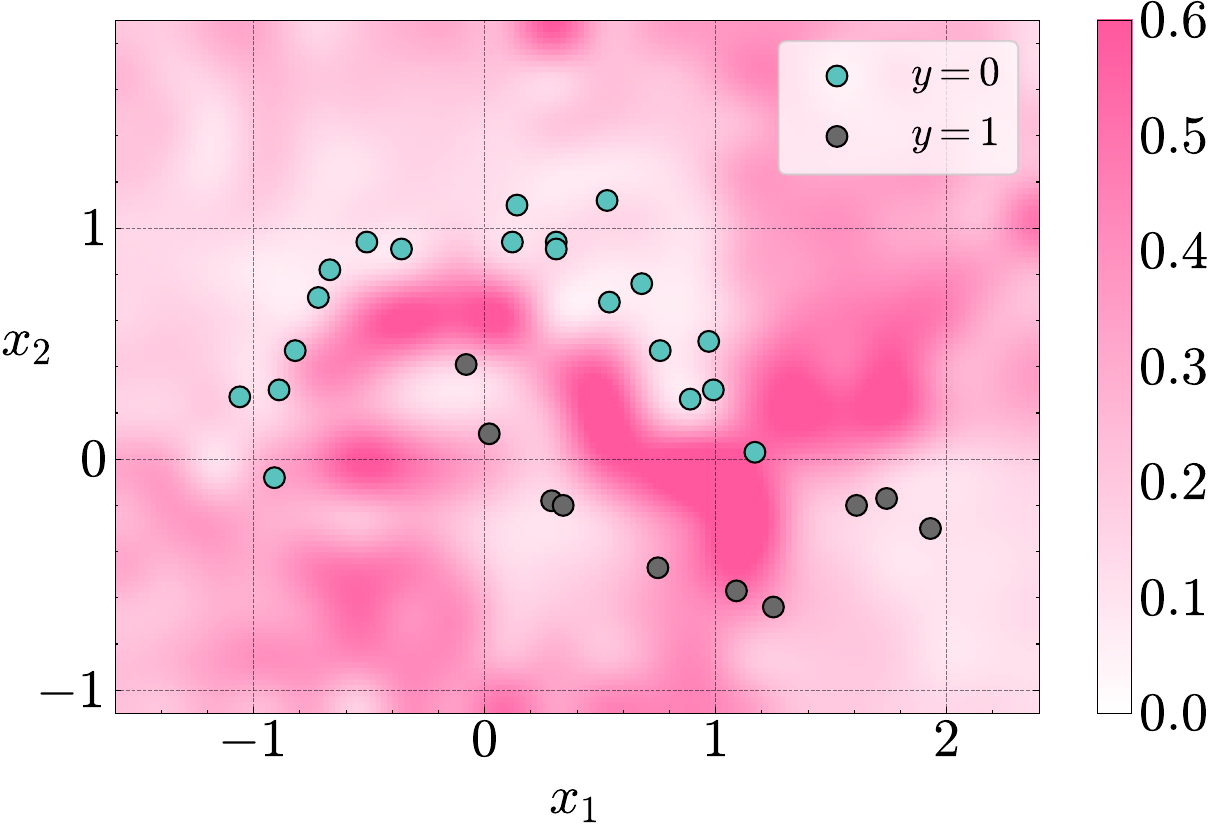}
        \caption{Aleatoric Uncertainty}
    \end{subfigure}
    \begin{subfigure}[t]{0.3\textwidth}
        \includegraphics[width=\linewidth]{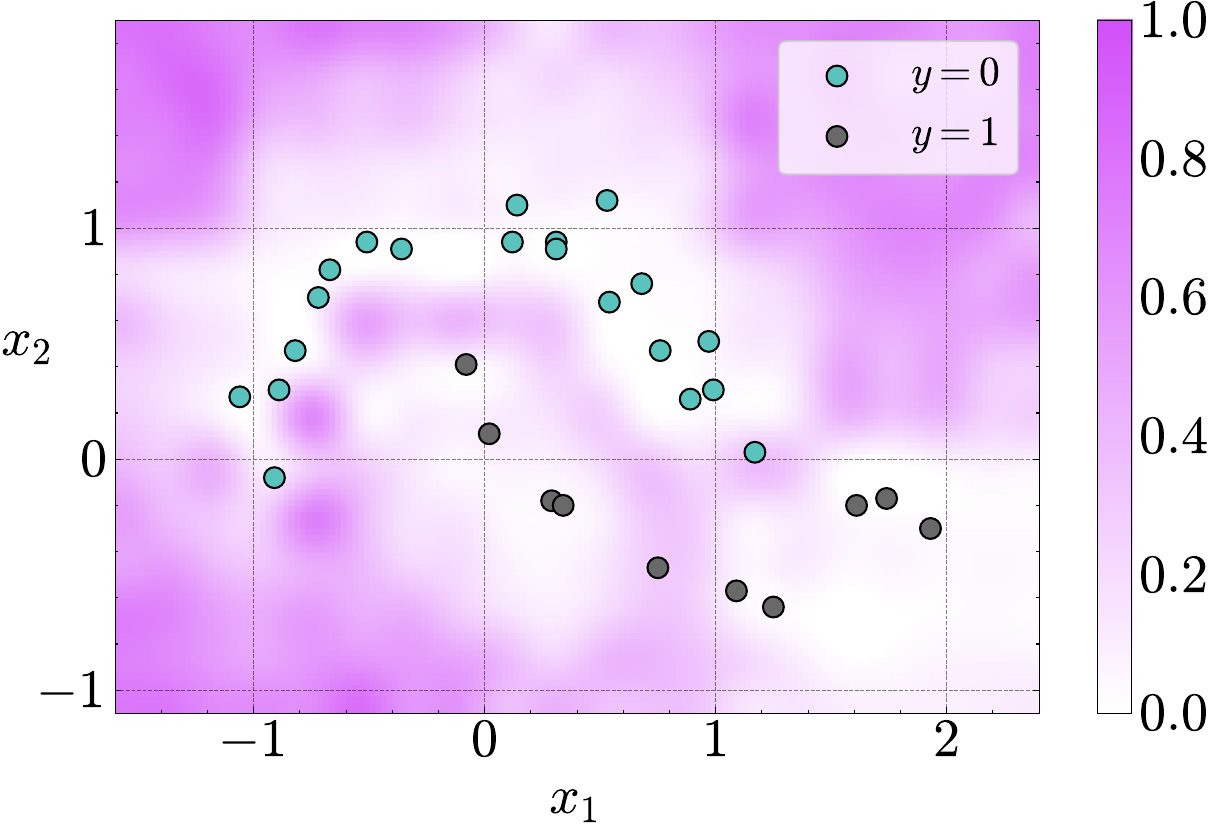}
        \caption{Epistemic Uncertainty}
    \end{subfigure}
    \centering
    \captionsetup{justification=centering}
    \caption{Uncertainty Decomposition with Auxiliary Data (Above). \\ Decomposition Example for Two-Moons Dataset (Below).}
    \label{fig:two_moons}
    \vspace{-6mm}
\end{figure}
\section{Background}
\label{sec:background}
\textbf{In-Context Learning and Bayesian Inference}. A (pre-trained) LLM with weights $\phi$ parametrises a set of conditional distributions $\{p_{\phi}^i(\bm{t}_i|\bm{t}_{1:i-1})\}_{i \in \mathbb{N}^+}$  over tokens $\{\bm{t}_i\}_{i \in \mathbb{N}^+}$. Given a predictive task of covariate-label pairs, $\data = \{(\x_i,\y_i)\}_{i=1}^n$, and test covariate $\x^*$, the ICL procedure with an LLM sets $(\bm{t}_{2i-1}, \bm{t}_{2i}) = (\x_i, \y_i)$ and $(\bm{t}_{2n+1}, \bm{t}_{2n+2}) = (\x^*, \y^*)$ and computes the predictive distribution as  $p(\y^*|\x^*, \data) = p^{2n+2}_\phi(\bm{t}_{2n+2}|\bm{t}_{1:2n+1})$. 
Now suppose the random variables $\y_{1:n}|\x_{1:n} \sim \prod_{i=1}^n p(\y_i |\x_i, \x_{<i}, \y_{<i} )$ (with $p(\y_i|\x_i, \x_{<i}, \y_{<i}) = p^{2i}_\phi(\bm{t}_{2i}|\bm{t}_{1:2i-1})$)  are \emph{exchangeable}, namely for all permutations $\sigma$ of $[n]$, 
\begin{equation}\label{eq:exchangeability_assumption}
    p(\y_{\sigma(1)}, \dots, \y_{\sigma(n)} |\x_{\sigma(1)}, \dots,\x_{\sigma(n)}) = p(\y_1, \dots, \y_n |\x_1, \dots,\x_n),
\end{equation}
then by de Finetti's theorem \cite{definetti1929funzione} there exists a Bayesian model w.r.t. a parameter $\theta$ such that 
\begin{equation}\label{eq:background_bayesian_model}
    p(\y_1, \dots, \y_n |\x_1, \dots,\x_n) = \int \prod_{i=1}^n p(\y_i|\x_i, \theta)p(\theta)d\theta.
\end{equation}
Notably, the parameter $\theta$ here is defined \emph{implicitly}. We discuss the link between ICL and Bayesian models as well as existing methods to promote exchangeability further in Appendix \ref{appx:exchangeability_icl} and \ref{appx:further_related_works}.

\textbf{Decomposing Predictive Uncertainty}. Consider a \emph{prescribed} Bayesian model $\y|\x \sim p(\y|\x, \theta)$ with prior $\theta \sim p(\theta)$. Given a dataset $\data=\{(\x_i, \y_i)\}_{i=1}^n$, we can (approximately) compute the posterior predictive distribution $p(\y^* | \x^*, \data) = \int p(\y^* | \x^*, \theta) p(\theta | \data) d \theta$. Then the predictive \emph{total (entropic) uncertainty} is defined as $U(\y^*|\x^*,\data) = \mathbb{H}[p(\y^* | \x^*, \data)]$, which can be decomposed further into \emph{aleatoric uncertainty} $U_a(\y^* | \x^*, \data)$ and \emph{epistemic uncertainty} $U_e(\y^* | \x^*, \data)$ \cite{kendall2017uncertaintiesneedbayesiandeep}:
\begin{equation}\label{eq: 3_entropy_decomp}
    \underbrace{\mathbb{H}[p(\y^* | \x^*, \data)]}_{=:U(\y^*|\x^*,\data) } = \underbrace{\mathbb{E}_{p(\theta | \data)}[\mathbb{H}[p(\y^* | \x^*, \theta)]]}_{=:U_a(\y^* | \x^*, \data)} + \underbrace{\mathbb{I}[\y^*;\theta|\x^*,\data]}_{=:U_e(\y^* | \x^*, \data)}.
\end{equation}

The two different notions of uncertainty have distinct statistical interpretation presented as follows.
\vspace{-0.5em}
\begin{itemize}[leftmargin=*]
\setlength{\itemsep}{0pt}
\setlength{\parskip}{2pt}
\setlength{\parsep}{0pt}
\item \textbf{Aleatoric uncertainty} measures the inherent and irreducible randomness in data. Technically, under model correctness and identifiablity assumptions, there exists a parameter $\theta^*$ such that $p(\y | \x, \theta^*) = p_{\text{data}}(\y | \x)$, where $\mathcal{D} \overset{\text{i.i.d.}}{\sim} p_{\text{data}}(\y | \x)$ is the data distribution. Therefore the inherent stochasticity in data prediction can be measured via entropy $\mathbb{H}[p_{\text{data}}(\y^* | \x^*)] = \mathbb{H}[p(\y^* | \x^*, \theta^*)]$. However, $\theta^*$ is unlikely to be recovered precisely from finite observations in $\mathcal{D}$. Instead $U_a(\y^* | \x^*, \mathcal{D})$ defines a \emph{Bayesian estimator} of aleatoric uncertainty, by considering the uncertainty in $\theta$ (described by the posterior $p(\theta | \mathcal{D})$) and averaging the entropy $\mathbb{H}[p(\y^* | \x^*, \theta)]$ over plausible $\theta \sim p(\theta | \mathcal{D})$. This estimator will converge to the true aleatoric uncertainty $\mathbb{H}[p_{\text{data}}(\y^* | \x^*)]$, if $p(\theta | \mathcal{D}) \rightarrow \delta(\theta = \theta^*)$ as $|\mathcal{D}| \rightarrow \infty$. We also refer to e.g.,~\citep{smith2024rethinking} for additional discussions regarding this Bayesian definition.

\item \textbf{Epistemic uncertainty} reveals the model's uncertainty in prediction due to lack of knowledge from data, which is reducible by adding in new and meaningful data. Specifically, by definition of
$\mathbb{I}[\y^*; \theta | \x^*, \mathcal{D}] = \mathbb{E}_{p(\y^*|\x^*, \mathcal{D})}[D_{\mathrm{KL}}[p(\theta | \y^*, \x^*, \mathcal{D}) || p(\theta | \mathcal{D})]]$ shows another interpretation of epistemic uncertainty as the \emph{expected information gain} of acquiring a new datum $(\x^*, \y^*)$ under the current posterior belief $p(\theta | \mathcal{D})$. This motivates Bayesian active learning \citep{houlsby2011bayesian,gal2017deepactive} and Bayesian optimisation \citep{mackay1992information,srinivas2010gaussian,snoek2012practicalbayesianoptimizationmachine,hernandez2014predictive} with epistemic uncertainty to assist the exploration-exploitation process. 
On the other hand, writing $\mathbb{I}[\y^*; \theta | \x^*, \mathcal{D}] = \mathbb{E}_{p(\theta | \mathcal{D})}[D_{\mathrm{KL}}[p(\y^* | \x^*, \theta) || p(\y^* | \x^*, \mathcal{D})]]$, epistemic uncertainty is reflected by the \emph{disagreement} between ``experts'' from the posterior $\theta \sim p(\theta | \mathcal{D})$. This leads to the use of epistemic uncertainty in detection tasks for e.g., out-of-distribution data and adversarial inputs \citep{li2017dropout}.   

\end{itemize}
\vspace{-0.5em}

When $\y^* \in \mathbb{R}$, we can also use variance as the uncertainty measure, meaning that we can compute the \emph{total variance} of the prediction, and perform a similar decomposition into \emph{aleatoric and epistemic variances} by the tower rule property:
\begin{equation}\label{eq: 3_variance_decomp}
    \underbrace{\mathrm{Var}[\y^*|\x, \data]}_{=:U^{\Sigma}(\y^* | \x^*, \data)} =  \underbrace{\mathbb{E}_{p(\theta | \data)}[\mathrm{Var}[\y^* | \x^*, \theta]]}_{=:U^{\Sigma}_a(\y^* | \x^*, \data)} + \underbrace{\mathrm{Var}_{p(\theta | \data)}[\mathbb{E}[\y^* | \x^*, \theta]]}_{=:U^{\Sigma}_e(\y^* | \x^*, \data)}.
\end{equation}
Typically, these decompositions are obtained by Monte Carlo estimation with (approximate) samples from $p(\theta|\data)$ \cite{lakshminarayanan2017simple}.
However, this approach poses a challenge when we don't have access to $p(\theta|\data)$, which may occur if the Bayesian model is only implicitly defined \cite{xie2022explanationincontextlearningimplicit} as in Eq.~(\ref{eq:background_bayesian_model}), or if sampling from $p(\theta|\data)$ is prohibitively expensive.
\section{Method}\label{sec:method}
We present an alternative approach for uncertainty decomposition defined in (\ref{eq: 3_entropy_decomp}) and (\ref{eq: 3_variance_decomp}), which sidesteps explicit posterior sampling of the parameter $\theta$ and thus, is suitable for implicitly defined Bayesian models.

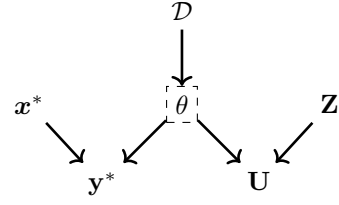
\begin{wrapfigure}{r}{0.34\columnwidth}
    \centering
    \begin{tikzpicture}
        \node[draw, circle, minimum size = 8mm, text centered] (x) {$\x^*$};
        \node[draw, circle, minimum size = 8mm,right = 1 of x, text centered] (D) {$\data$};
        \node[draw, circle, dashed, minimum size = 8mm, below = 0.4 of D, text centered] (t) {$\theta$};
        \node[draw, circle, minimum size = 8mm, right = 1 of D, text centered] (Z) {$\BZ$};
        \node[draw, circle, minimum size = 8mm, right = 1 of t, text centered] (U) {$\BU$};
        \node[draw, circle, minimum size = 8mm, left = 1 of t, text centered] (y) {$\y^*$};
        \draw[->, line width= 1] (D) --  (t);
        \draw [->, line width= 1] (t) -- (y);
        \draw [->, line width= 1] (x) -- (y);
        \draw [->, line width= 1] (t) -- (U);
        \draw [->, line width= 1] (Z) -- (U);
        \end{tikzpicture}
        \vspace{-2mm}
        \caption{The DAG $\mathcal{G}$ of the conditional independence assumptions.}
        \label{fig:conditional_indep_DAG}
        \vspace{-6mm}
\end{wrapfigure}
Although our practical algorithmic development focuses on LLM in-context learning on context ${\data=\{(\x_i, \y_i)\}_{i=1}^n}$ and test query $\x^*$, the decomposition technique applies to any Bayesian model \emph{a la} de Finetti \eqref{eq:background_bayesian_model}, including prescribed Bayesian models such as Bayesian linear regression and Gaussian processes (Appendix \ref{appx:theoretical_examples}). 

\subsection{Variational Estimates of Uncertainty Decomposition}\label{sec: 3_variational_uncertainty_decomp}

\textbf{Total Uncertainty Decomposition}. Suppose we can directly compute (or approximate) the posterior predictive distribution $p(\y^*|\x^*,\data)$ for arbitrary $\data$ and $\x^*$. Now consider a set of \emph{auxiliary inputs} (``queries'') ${\BZ = \{\z_j\}_{j=1}^m}$, and corresponding outputs (``answers'') as $\BU = \{\bu_j \}_{j=1}^m$. Then we define the following \emph{variational estimation} of the aleatoric uncertainty as:
\begin{equation}\label{eq:3_variational_aleatoric_uncertainty}
    V_a(\y^* | \x^*, \BZ, \data) := \mathbb{E}_{p(\BU | \BZ, \data)}[\mathbb{H}[p(\y^* | \x^*, \BU, \BZ, \data)]].
\end{equation}

To ensure consistency with an underlying Bayesian model \eqref{eq:background_bayesian_model}, we assume that $\x^*,\y^*, \BZ, \BU, \data$ obey the conditional independence relations given by the directed acyclic graph (DAG) $\mathcal{G}$ in Figure \ref{fig:conditional_indep_DAG}. This assumption allows us to prove the following theorem relating the variational estimation of the aleatoric uncertainty to the exact Bayesian estimate of aleatoric uncertainty.
\begin{restatable}[Aleatoric Uncertainty Upper-Bound]{theorem}{VariationalDecompOfUncertainty}\label{thm:3_variational_uncertainty_decomposition} \textit{If the conditional independence relations in $\mathcal{G}$ hold, then the variational estimator provides an upper-bound to the aleatoric uncertainty:}
\begin{equation}\label{eq:3_variational_entropy_upper_bound}
    V_a(\y^* | \x^*, \BZ, \data) \geq U_a(\y^* | \x^*, \data),
\end{equation}

\textit{where the gap between $U_a(\y^* | \x^*, \data)$ and $V_a(\y^* | \x^*, \data)$ is:}
\begin{align}
\mathbb{E}_{p(\BU | \BZ, \mathcal{D})}[\mathbb{I}[\y^*; \theta | \x^*, \BU, \BZ, \mathcal{D}]] &= \mathbb{E}_{p(\y^*, \BU | \x^*, \BZ, \data)} \left[ D_{\mathrm{KL}}[p(\theta | \y^*, \x^*, \BU, \BZ, \data) || p(\theta | \BU, \BZ, \data) ] \right] \nonumber \\
&= \mathbb{E}_{p(\theta, \BU | \BZ, \data)} \left[ D_{\mathrm{KL}}[p(\y^* | \x^*, \theta) || p(\y^* | \x^*, \BU, \BZ, \data) ] \right]. \label{eq:3_variance_entropy_difference}
\end{align}
\end{restatable}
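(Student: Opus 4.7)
The plan is to apply the standard total-uncertainty decomposition of Eq.~(\ref{eq: 3_entropy_decomp}) \emph{after} extending the conditioning set from $\data$ to $(\BU, \BZ, \data)$, and then average over $\BU$. Concretely, the identity
\begin{equation*}
\mathbb{H}[p(\y^* | \x^*, \BU, \BZ, \data)] = \mathbb{E}_{p(\theta | \BU, \BZ, \data)}\bigl[\mathbb{H}[p(\y^* | \x^*, \theta, \BU, \BZ, \data)]\bigr] + \mathbb{I}[\y^*; \theta | \x^*, \BU, \BZ, \data]
\end{equation*}
taken in expectation under $p(\BU | \BZ, \data)$, together with the non-negativity of the mutual information term, delivers the inequality (\ref{eq:3_variational_entropy_upper_bound}) as soon as the first term on the right-hand side is shown to reduce to $U_a(\y^* | \x^*, \data)$.

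To make that reduction I would exploit two d-separation properties of the DAG $\mathcal{G}$ in Figure~\ref{fig:conditional_indep_DAG}: (i) $\y^* \perp (\BU, \BZ, \data) \mid \theta, \x^*$, because every path from $\y^*$ to any of $\BU, \BZ, \data$ is blocked by conditioning on $\theta$; and (ii) $\theta \perp \BZ \mid \data$, because the only $\theta$--$\BZ$ path $\theta \to \BU \leftarrow \BZ$ is a collider at $\BU$ which is not conditioned upon. Relation (i) collapses $p(\y^* | \x^*, \theta, \BU, \BZ, \data)$ to $p(\y^* | \x^*, \theta)$. Then the tower rule rewrites $\mathbb{E}_{p(\BU | \BZ, \data)}\mathbb{E}_{p(\theta | \BU, \BZ, \data)}[\,\cdot\,]$ as $\mathbb{E}_{p(\theta | \BZ, \data)}[\,\cdot\,]$, which by (ii) equals $\mathbb{E}_{p(\theta | \data)}[\,\cdot\,]$. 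Thus the averaged first term is exactly $U_a(\y^* | \x^*, \data)$, and the gap $V_a - U_a$ equals $\mathbb{E}_{p(\BU | \BZ, \data)}[\mathbb{I}[\y^*; \theta | \x^*, \BU, \BZ, \data]]$.

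The two KL-divergence expressions in (\ref{eq:3_variance_entropy_difference}) then follow from the two canonical rewrites of conditional mutual information. Applying $\mathbb{I}[\y^*; \theta | \cdot] = \mathbb{E}_{p(\y^* | \cdot)}[D_{\mathrm{KL}}[p(\theta | \y^*, \cdot) \,\|\, p(\theta | \cdot)]]$ and absorbing the outer $\mathbb{E}_{p(\BU | \BZ, \data)}$ into a joint expectation under $p(\y^*, \BU | \x^*, \BZ, \data)$ yields the first line. Applying instead $\mathbb{I}[\y^*; \theta | \cdot] = \mathbb{E}_{p(\theta | \cdot)}[D_{\mathrm{KL}}[p(\y^* | \theta, \cdot) \,\|\, p(\y^* | \cdot)]]$ and using relation (i) once more to reduce $p(\y^* | \theta, \x^*, \BU, \BZ, \data)$ to $p(\y^* | \x^*, \theta)$, while combining the two outer expectations into $p(\theta, \BU | \BZ, \data)$, yields the second line.

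The chain-rule and tower-rule bookkeeping is routine; the main obstacle is verifying the two d-separation claims and applying them consistently, because every simplification in the argument --- dropping $(\BU, \BZ, \data)$ inside $p(\y^* | \x^*, \theta, \cdots)$ and dropping $\BZ$ inside $p(\theta | \BZ, \data)$ --- rests on them. If the DAG assumption were violated (for instance, if the auxiliary predictions $\BU$ carried information about $\y^*$ beyond what is mediated by $\theta$), the identification of $V_a - U_a$ with a non-negative conditional mutual information would break down, and neither the upper bound nor the KL gap formulas would hold.
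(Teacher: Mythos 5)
Your proof is correct and follows essentially the same route as the paper's: the paper carries out the same decomposition by expanding $-\log p(\y^* | \x^*, \BU, \BZ, \data)$ via Bayes' rule and the conditional independence $\y^* \perp \BU, \BZ, \data \mid \theta, \x^*$, whereas you invoke the standard identity $\mathbb{H}[\y^*\mid C] = \mathbb{E}_\theta[\mathbb{H}[\y^*\mid\theta, C]] + \mathbb{I}[\y^*;\theta\mid C]$ with $C = (\x^*, \BU, \BZ, \data)$ and then average over $\BU$ --- the same computation in a cleaner wrapper. Your two d-separation claims, including the collider argument giving $p(\theta\mid\BZ,\data) = p(\theta\mid\data)$ (which the paper instead justifies by explicitly marginalising $p(\theta\mid\BU,\BZ,\data)p(\BU\mid\BZ,\data)$ over $\BU$), are exactly the properties the paper relies on.
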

See Appendix \ref{appx:variational_decomposition_proofs} for the proof. Importantly, the upper-bound \eqref{eq:3_variational_entropy_upper_bound} holds for \emph{arbitrary} $\BZ$ which inspires the following optimisation procedure to obtain the best variational estimate:
\begin{equation} \label{eq:3_V_a definition}
    V_a(\y^* | \x^*, \data) := \min_{\BZ} V_a(\y^* | \x^*, \BZ, \data),
\end{equation}
Since the aleatoric uncertainty is trivially upper-bounded by the total uncertainty in \eqref{eq: 3_entropy_decomp}, we denote \[\tilde{V}_a(\y^* | \x^*, \data) = \min \{V_a(\y^* | \x^*, \data),\mathbb{H}[p(\y^* | \x^*, \data)]\},\] as the \emph{variational estimate of the aleatoric uncertainty}. We can obtain a \emph{variational estimate for the epistemic uncertainty} by defining $V_e(\y^* | \x^*, \data) := \mathbb{H}[p(\y^* | \x^*, \data)] - \tilde{V}_a(\y^* | \x^*, \data)$, which implies that $V_e(\y^* | \x^*, \data) \leq U_e(\y^* | \x^*, \data)$, and the gap between $U_e(\y^* | \x^*, \data)$ and $V_e(\y^* | \x^*, \data)$ is again $\mathbb{E}_{p(\BU | \BZ, \mathcal{D})}[\mathbb{I}[\y^*; \theta | \x^*, \BU, \BZ, \mathcal{D}]]$. This motivates our Variational Uncertainty Decomposition approach illustrated in Figure \ref{fig:two_moons}.
We discuss another information-theoretic view in Appendix \ref{appx:variational_decomposition_proofs}.

The effectiveness of this variational decomposition hinges on the choice of $\BZ$ to optimise \eqref{eq:3_V_a definition}, which is equivalent to minimising the gap \eqref{eq:3_variance_entropy_difference}. Critically, similar to the two interpretations of the epistemic uncertainty $U_e(\y^* | \x^*, \data)$ presented in Section \ref{sec:background}, this gap can also be viewed from two angles.
\vspace{-0.5em}
\begin{itemize}[leftmargin=*]
\setlength{\itemsep}{0pt}
\setlength{\parskip}{2pt}
\setlength{\parsep}{0pt}

\item \textbf{Residual information gain in fantasy:}
From the first definition of mutual information in \eqref{eq:3_variance_entropy_difference}, we see that this gap quantifies the (expected) \emph{residual information gain} of acquiring a new datum $(\x^*, \y^*)$ assuming the model has further \emph{fantasised} observations $(\BZ, \BU)$ in addition to $\mathcal{D}$. Therefore, ``clever queries'' $\BZ$, together with the fantasied answers $\BU$, should provide sufficient information regarding the model's epistemic ``belief'' in $\theta$, such that further observing $\y^*$ and $\x^*$ does not provide much more certainty in $\theta$. 

\item \textbf{Remaining disagreement in fantasy:}
Alternatively, from the second definition of mutual information in \eqref{eq:3_variance_entropy_difference}, we see that this gap also captures the expected amount of \emph{remaining disagreement} between posterior experts after conditioning on additional \emph{fantasised} data $(\BZ, \BU)$. Therefore, ``clever queries'' $\BZ$ should be constructed by encouraging model agreement in its epistemic ``belief'' of the answer $\y^*$ to the target query $\x^*$, after fantasising the answers $\BU$ to the queries $\BZ$.

\end{itemize}
\vspace{-0.5em}
As a result of increased certainty of the model's subjective beliefs (in $\theta$ and/or in $\y^*$ given $\x^*$) after observing the fantasied data $(\BZ, \BU)$, the conditional entropy, $V_a(\y^* | \x^*, \BZ, \data)$ is a suitable proxy for the exact Bayesian aleatoric uncertainty estimate $U_a(\y^* | \x^*, \data)$. It remains an upper bound because some of the epistemic uncertainty in $\theta$ is absorbed into the aleatoric uncertainty conditioned on $\BU$, which is reflected by the conditional expectation of the entropy of $p(\y^* |\x^*, \BU, \BZ, \data) = \int p(\y^* | \x^*, \theta) p(\theta | \BU, \BZ, \data) d\theta$ when computing $V_a(\y^* | \x^*, \BZ, \data)$. 

\textbf{Total Variance Decomposition}. Similarly to \eqref{eq:3_V_a definition}, we can also construct a variational estimate for the aleatoric variance and derive a corresponding upper-bound. See Appendix \ref{appx:variance_decomp_proofs} for the proof.
\begin{restatable}[Aleatoric Variance Upper-Bound]{theorem}{VariationalDecompOfVariance}\label{thm:3_variatioanl_variance_decomposition}\textit{If the conditional independence relation in $\mathcal{G}$ holds, then the variational estimator provides an upper-bound to the estimation of aleatoric variance:
\begin{equation}\label{eq:3_variational_variance_upper_bound}
    V^\Sigma_a(\y^* | \x^*, \BZ, \data) := \mathbb{E}_{p(\BU | \BZ, \data)}[\mathrm{Var}[\y^* | \x^*, \BU, \BZ, \data]] \geq U^\Sigma_a(\y^* | \x^*, \data).
\end{equation}}
\end{restatable}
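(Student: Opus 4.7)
The plan is to mirror the proof strategy of Theorem \ref{thm:3_variational_uncertainty_decomposition}, but using the law of total variance in place of the entropy chain rule. First I would apply the law of total variance to the inner quantity $\mathrm{Var}[\y^* | \x^*, \BU, \BZ, \data]$, conditioning on $\theta$:
\begin{equation*}
\mathrm{Var}[\y^* | \x^*, \BU, \BZ, \data] = \mathbb{E}_{p(\theta | \BU, \BZ, \data)}\bigl[\mathrm{Var}[\y^* | \x^*, \theta, \BU, \BZ, \data]\bigr] + \mathrm{Var}_{p(\theta | \BU, \BZ, \data)}\bigl[\mathbb{E}[\y^* | \x^*, \theta, \BU, \BZ, \data]\bigr].
\end{equation*}
By the conditional independence $\y^* \perp (\BU, \BZ, \data) \mid (\theta, \x^*)$, which is directly read off $\mathcal{G}$, both inner conditionals simplify to $p(\y^* | \x^*, \theta)$, so the innermost variance and mean become $\mathrm{Var}[\y^* | \x^*, \theta]$ and $\mathbb{E}[\y^* | \x^*, \theta]$ respectively.

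Next I would take the outer expectation over $p(\BU | \BZ, \data)$ to obtain $V^\Sigma_a(\y^* | \x^*, \BZ, \data)$ as a sum of two terms. For the first, applying the tower rule and swapping the order of integration gives $\mathbb{E}_{p(\theta | \BZ, \data)}[\mathrm{Var}[\y^* | \x^*, \theta]]$. Since $\BZ$ connects to $\theta$ in $\mathcal{G}$ only through the collider $\BU$, and $\BU$ is \emph{not} in the conditioning set at this stage, we have $\theta \perp \BZ \mid \data$, so this first term reduces exactly to $U^\Sigma_a(\y^* | \x^*, \data) = \mathbb{E}_{p(\theta | \data)}[\mathrm{Var}[\y^* | \x^*, \theta]]$. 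The second term, $\mathbb{E}_{p(\BU | \BZ, \data)}\bigl[\mathrm{Var}_{p(\theta | \BU, \BZ, \data)}[\mathbb{E}[\y^* | \x^*, \theta]]\bigr]$, is non-negative as an expectation of a variance, which yields the claimed inequality and explicitly identifies the gap.

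The main obstacle is careful d-separation bookkeeping in $\mathcal{G}$: one must distinguish between the outer step where we marginalise over $\BU$ (leaving the collider at $\BU$ unopened, so that $\BZ \perp \theta \mid \data$) and the inner conditionals where we \emph{do} condition on $\BU$ (and the $\BZ$-$\theta$ path through the collider is unblocked, making the inner posterior genuinely depend on $\BZ$). Everything else is a direct variance analogue of the entropy argument, and the gap plays exactly the role of the residual mutual information in \eqref{eq:3_variance_entropy_difference}, quantifying the remaining disagreement among the conditional means $\mathbb{E}[\y^* | \x^*, \theta]$ after fantasising the auxiliary answers $\BU$.
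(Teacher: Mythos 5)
Your proposal is correct and follows essentially the same route as the paper: the paper's Lemma \ref{lem:conditional_variance_inequality} is precisely the expected form of the law of total variance you invoke, and both arguments then use the conditional independence $\y^* \perp \BU, \BZ, \data \mid \x^*, \theta$ to reduce the conditional-variance term to $\mathrm{Var}[\y^*|\x^*,\theta]$ and marginalise $\BU$ (and $\BZ$) out of the posterior over $\theta$. Your identification of the gap as $\mathbb{E}_{p(\BU|\BZ,\data)}\bigl[\mathrm{Var}_{p(\theta|\BU,\BZ,\data)}[\mathbb{E}[\y^*|\x^*,\theta]]\bigr]$ also matches the remark following the paper's proof.
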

The best variational estimate is then $V_a^\Sigma(\y^* | \x^*, \data) := \min_{\BZ} V_a^\Sigma(\y^* | \x^*, \BZ, \data)$, and a lower-bound of the epistemic variance is obtained as $V_e^\Sigma(\y^* | \x^*, \data) := \mathrm{Var}[\y^*|\x, \data] - V_a^\Sigma(\y^* | \x^*, \data)$.

\begin{figure}[t]
    \centering
    \includegraphics[width=\textwidth]{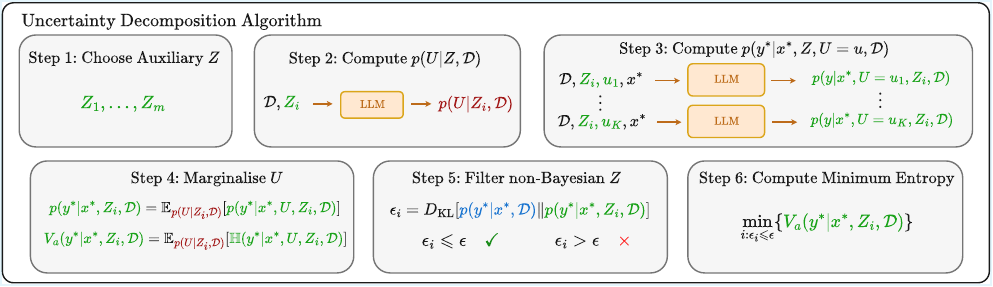}
    \caption{Variational Uncertainty Decomposition (VUD) Framework.}
    \label{fig:architecture}
    \vspace{-4mm}
\end{figure}

\subsection{Optimising the Variational Estimates and Promoting Exchangeability}\label{sec:methods_kl_filtering}

The presented decomposition technique requires the model to be Bayesian \emph{a la} de Finetti \eqref{eq:background_bayesian_model} and compatible with the DAG $\mathcal{G}$ (Figure \ref{fig:conditional_indep_DAG}), which is not the case if naively prompting LLM for in-context learning. Specifically, exchangeability requires ensuring the following necessary conditions \citep{berti2004limit,ye2024exchangeable}: 

\begin{itemize}
    \item[(C1)] $\quad p(\y_i|\x_i, \x_{<i}, \y_{<i}) = p(\y_i|\x_i, \sigma(\x_{<i}, \y_{<i}))$ for all $i \in \mathbb{N}_{+}$ \& all permutations $\sigma$ on $[i]$;
    \item[(C2)] $\quad p(\y^*|\x^*, \BZ, \data)  := \int p(\y^*| \x^*, \BU, \BZ, \data) p(\BU | \BZ, \data) d\BU = p(\y^*|\x^*, \data)$.
\end{itemize}

Derivations of these necessary conditions can be found in Appendix \ref{appx:exchangeability_icl}.
To promote exchangeability for LLM in-context learning, we propose two strategies tailored for the above conditions. First, to approximately achieve (C1), we construct the predictive distribution by shuffling the context and ensembling the LLM's predictions, i.e., we define for context ${\data=\{(\x_i, \y_i)\}_{i=1}^n}$ and test query $\x^*$ (with $S_n$ a uniform distribution over the permutations on $[n]$):
\begin{equation}
p(\y^* | \x^*, \data) := \frac{1}{L} \sum_{l=1}^L p_{\phi}^{2n+2}(\y^* | \x^*, \{\x_{\sigma_l(1)}, \y_{\sigma_l(1)}, ..., \x_{\sigma_l(n)}, \y_{\sigma_l(n)} \}), \quad \sigma_l \sim S_n.
\label{eq:posterior_from_permutation}
\end{equation}
The other distributions $p(\BU | \BZ, \data)$ and $p(\y^* | \x^*, \BU, \BZ, \data)$ are defined in the same manner. 
For classification tasks, we evaluate the LLM logits to compute \eqref{eq:posterior_from_permutation}. However, in the regression case, we make a further Gaussian approximation to \eqref{eq:posterior_from_permutation}, which allows for easy computation of the entropy and marginalisation. Further details can be found in Appendix \ref{appx:compute_post_pred_dist}.
Then to approximately satisfy (C2), we restrict the search of $\BZ$ (Eq.~\eqref{eq:3_V_a definition}) to ensure the solution satisfies
\begin{equation}\label{eq:threshold_eq}
D_{\mathrm{KL}}[p(\y^*|\x^*, \BZ, \data) \parallel p(\y^*| \x^*, \data)]< \epsilon,
\end{equation}
for some $\epsilon > 0$. Any metric or divergence on probability distributions will suffice for \eqref{eq:threshold_eq} but we choose KL divergence due to ease of computation. We filter out the $\BZ$ candidates that violate this KL constraint, hence we name this step as \emph{KL filtering}.
Choosing the number of permutations $L$ and the threshold $\epsilon$ for KL filtering of $\BZ$ determines the accepted level of Bayesian approximation in the variational decomposition. While the selection of $L$ is mainly determined by the computational resources, the choice of $\epsilon$ is further discussed in Appendix \ref{appx:choosing_epsilon}.

Lastly, to reduce the search space of $\BZ$ for efficient computation, we restrict $\BZ$ to contain a single example in $\x$ domain, i.e., $m=1$ and $\BZ = \bz$, and design sampling techniques to obtain candidates for optimal $\BZ$, including random sampling, setting $\BZ = \x^*$, perturbing $\BZ$ around $\x^*$ and a Bayesian optimisation strategy \cite{snoek2012practicalbayesianoptimizationmachine}. Empirically we find that perturbing $\BZ$ around $\x^*$ works best for inputs that lie in a continuous space, which can partly be explained via the Gaussian process example in Appendix  \ref{appx:theoretical_examples}. For natural language tasks such as question-answering (QA), we conduct the perturbation of $\bz$ by ``rephrasing'' $\x^*$ with another LLM. Further details regarding the sampling procedures we explored for perturbing $\BZ$ are in Appendix \ref{appx:z_sampling_methods}. Our overall step-by-step Variational Uncertainty Decomposition framework (VUD) is depicted in Figure \ref{fig:architecture}. Detailed decomposition algorithms for classification and regression tasks are provided in Appendix \ref{appx:algorithm_pseudocode}.

\section{Related Work}

Our work takes inspiration from the growing body of literature connecting ICL to Bayesian inference \cite{ye2024exchangeable, xie2022explanationincontextlearningimplicit, jeon2024information, liu2024towards}. While much of the existing research centers on estimating a latent concept, often through methods like the Martingale posterior \cite{falck2024incontextlearninglargelanguage, xie2022explanationincontextlearningimplicit}, we take a different route by approximating conditional entropy and mutual information using auxiliary data. While our work is not the first to decompose predictive uncertainty in LLMs into aleatoric and epistemic components, prior approaches define these uncertainties differently from their traditional definitions in Bayesian deep learning \cite{kendall2017uncertaintiesneedbayesiandeep, depeweg2018decompositionuncertaintybayesiandeep, wimmer2023quantifying}. Huo et al.~\cite{hou2024decomposinguncertaintylargelanguage} analyse how uncertainty changes when a prompt is modified with additional ``clarifications.'' While this is similar in spirit to our use of perturbations, we append perturbations to the ICL data rather than the predictive task itself. Moreover, their approach attributes aleatoric uncertainty solely to input ambiguity and does not incorporate a Bayesian framework, leading to a definition of uncertainty that diverges from the standard Bayesian interpretation. Ling et al. \cite{ling2024uncertainty} assume a Bayesian approach but use alternative non-standard definitions of aleatoric and epistemic uncertainties. We provide a more detailed discussion of these related works, along with applications to OOD detection and bandit problems, in Appendix \ref{appx:further_related_works}. 

\section{Experiments}\label{sec:4_experiments}

We evaluate the robustness and applicability of our method to classification and regression tasks. This includes ablation studies and visualisations on synthetic datasets, as well as downstream applications such as bandit problems and out-of-distribution (OOD) detection on question-answering (QA) tasks. We use the following LLMs in our experiments: Qwen2.5-14B/7B, \cite{qwen2025qwen25technicalreport} and Llama-3.1-8B \cite{touvron2023llamaopenefficientfoundation}. Only for QA tasks, we use Qwen2.5-14B-Instruct. For conciseness, we show results for Qwen2.5-14B/14B-Instruct in the main text and the results for the remaining LLMs and baselines are given in Appendix \ref{appx:expts}. Prompts and sampling details are provided in Appendix \ref{appx:prompts}.

\subsection{Synthetic Regression \& Classification Datasets}
We visualise the uncertainty decompositions on synthetic regression \& classification datasets and conduct ablation studies on the effects of KL filtering and $\BZ$ choices. Further ablations regarding permuting the in-context examples and various LLMs are in Appendix \ref{appx:z_sampling_methods} and \ref{appx:exchangeability_icl}.

\textbf{Visualisations}. In Figures \ref{subfig-a:log_lin_reg} and \ref{subfig-b:log_lin_reg}, we visualise the VUD uncertainty decompositions for a 1-D logistic regression (classification) and a 1-D linear regression (regression) task, each conditioned on a set of $|\mathcal{D}|=15$ in-context examples (vertical lines). We consider more complex tasks of the Two Moons dataset (class.) in Figure \ref{fig:two_moons}, a dataset with designated ``gaps'' and heteroscedastic noises in the in-context learning data (reg.) in Figure \ref{fig:gaps}, and the Spirals dataset (multi-class class.) in Figure \ref{fig:sprials_task}.

Across these examples, we observe similar qualitative characteristics of the uncertainty decomposition. The epistemic uncertainty (represented by the gap between the total and aleatoric uncertainty in the 1-D examples) is lowest in regions near demonstrations and increases as the distance to the in-context learning data increases. In the classification examples, the aleatoric uncertainty is sharply localised near the decision boundary of the problem where $p(\y^*|\x^*,\data) \approx 0.5$. In the regression setting of Figure \ref{subfig-b:log_lin_reg}, we observe minimal change in the aleatoric uncertainty, which reflects the homoscedastic noise of the data observations. However, in Figure \ref{fig:gaps} where we have heteroscedastic noise, the model accurately distinguishes between regions of high and low heteroscedastic noise. These examples indicate that the model can correctly distinguish between uncertainty from inherent data noise and uncertainty arising from missing contextual information.

\begin{figure}[t]
    \centering
    \begin{subfigure}[t]{0.50\textwidth}
        \centering
        \includegraphics[width=\textwidth]{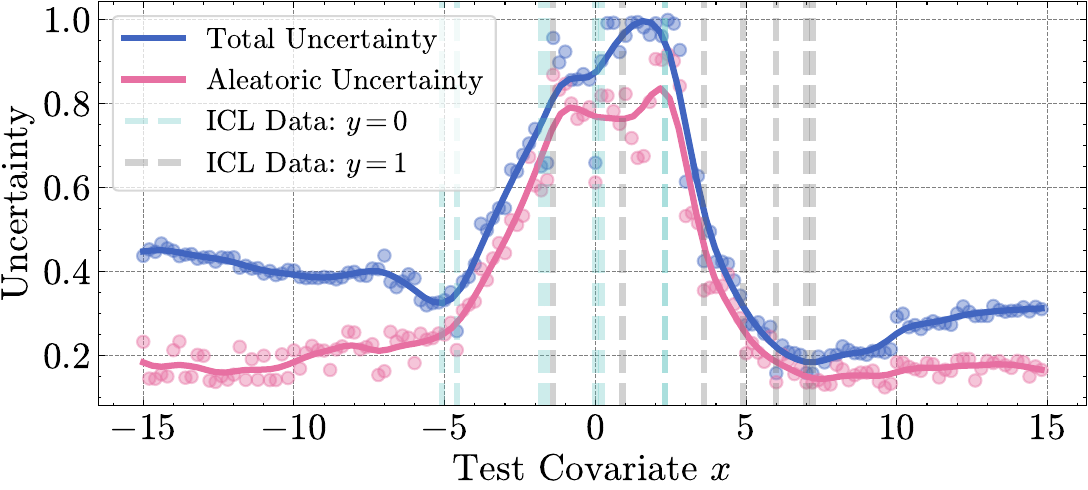}
        \caption{Logistic Regression (Classification)}\label{subfig-a:log_lin_reg}
    \end{subfigure}
    \hfill
    \begin{subfigure}[t]{0.49\textwidth}
        \centering
        \includegraphics[width=\textwidth]{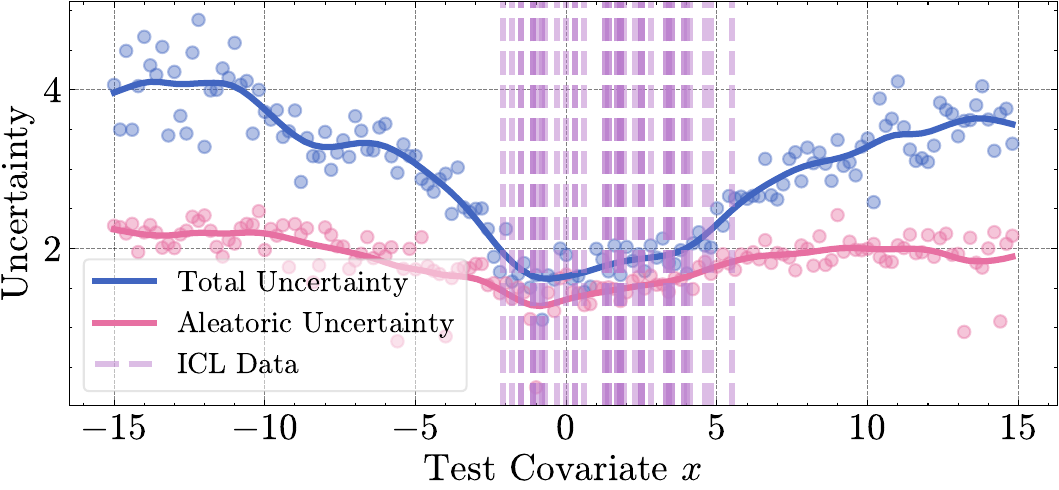}
        \caption{Linear Regression (Regression)}\label{subfig-b:log_lin_reg}
    \end{subfigure}
    \vspace{-1mm}
    \caption{Uncertainty Decompositions for Logistic and Linear Regressions.}
    \label{fig:log_lin_reg}
    \vspace{-4mm}
\end{figure}

\begin{figure}[t]
    \centering
    \begin{subfigure}[t]{0.49\textwidth}
        \centering
        \includegraphics[width=\linewidth]{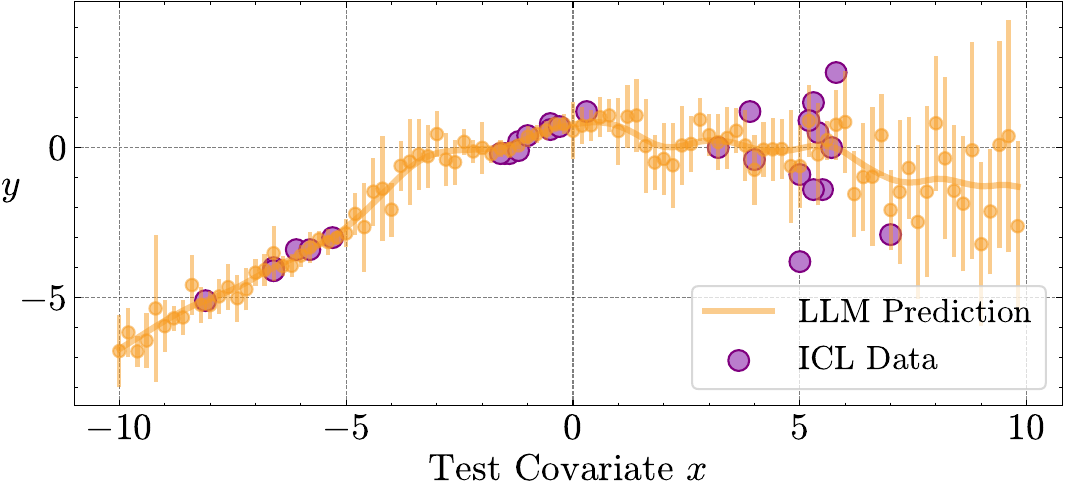}
        \caption{Predicted Mean and Standard Deviation}\label{subfig-a:gaps}
    \end{subfigure}
    \hfill
    \begin{subfigure}[t]{0.49\textwidth}
        \centering
        \includegraphics[width=\linewidth]{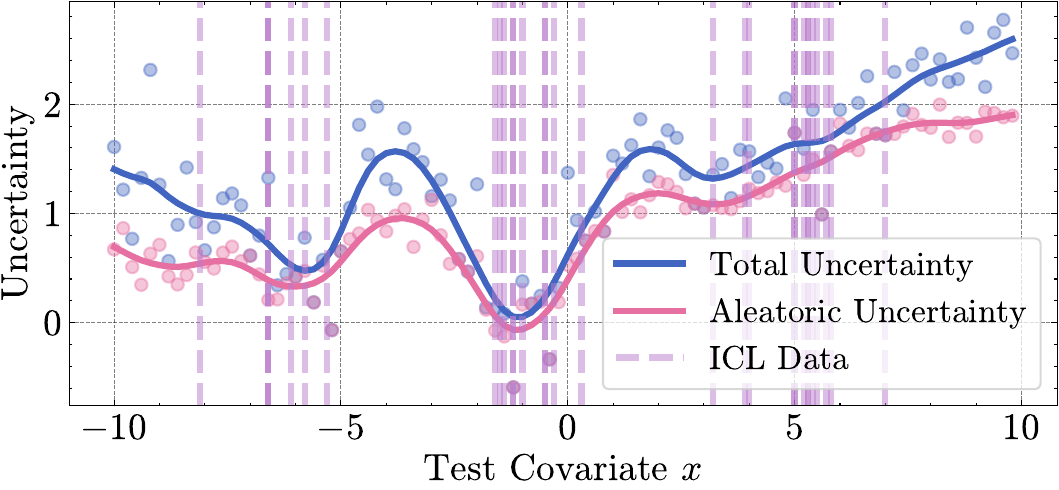}
        \caption{Uncertainty Decomposition}
    \end{subfigure}\label{subfig-b:gaps}
    \vspace{-1mm}
    \caption{Uncertainty Decompositions for Regression Tasks with Gaps in ICL Data.}
    \label{fig:gaps}
    \vspace{-4mm}
\end{figure}

\textbf{Ablations}. In Figure \ref{fig:epistemic_train-size}, we analyse the behavior of uncertainty decompositions as a function of in-context dataset size $|\mathcal{D}|$ under a logistic regression setting. 
We consider both in-distribution test inputs ($x = 0, 5$, solid lines) and out-of-distribution test inputs ($x = -15, -10, -5, 10, 15$, dotted lines). 
As expected, Figure \ref{subfig-a:epistemic_train_size} shows decreasing epistemic uncertainty across all test covariates with increasing $|\mathcal{D}|$, since additional training examples reduce model uncertainty. The largest epistemic uncertainty occurs at out-of-distribution inputs ($x = -15, -10, -5, 10, 15$), while in-distribution inputs ($x = 0, 5$) consistently exhibit lower values. The decay is most rapid for in-distribution test points, suggesting that the model becomes confident more quickly when the test point distribution overlaps with the training data. In contrast, aleatoric uncertainty reported in Figure \ref{subfig-b:epistemic_train_size} remains relatively stable as $|\mathcal{D}|$ grows, particularly for out-of-distribution covariates. Notably, aleatoric uncertainty is highest for the decision boundary at $x = 0$, where the class overlap is greatest, and remains consistently elevated across all dataset sizes. Out-of-distribution points show slightly lower but stable aleatoric values, reflecting lower intrinsic class ambiguity at extreme covariates. The mild increase in aleatoric uncertainty for in-distribution points at small dataset sizes is likely due to model underfitting, which resolves as more data is provided.

In Figure \ref{fig:ablation_z-choice}, we compare the computed aleatoric uncertainty across different $\BZ$ sampling methods under the logistic regression setting. These include Perturb, where small noise is added to the test example to create $\BZ$; Repeated, where $\BZ$ is chosen to be the test example itself; Random, where $\BZ$ is sampled uniformly from the dataset; and Bayesian Optimisation (BO) \cite{snoek2012practicalbayesianoptimizationmachine}, where $\BZ$ is actively selected to minimise a utility function related to the uncertainty. 
The aleatoric uncertainties reported in Figure \ref{subfig_a:z-ablation} show that all these approaches track the total uncertainty curve around the decision boundary, indicating strong performance in capturing the local uncertainty landscape. Among them, Repeated returns the lowest variational aleatoric uncertainty estimate. Perturb also provides lower estimates, closely following the peak and providing stable estimates across the covariate space. Random sampling shows an upward trend in low ICL density regions far from the decision boundary, indicating poor stability. 
Regarding the KL divergence \eqref{eq:threshold_eq} achieved by the selected $\BZ$ in Figure \ref{subfig_b:z-ablation}, Random and BO consistently have the lowest KL divergence across the majority of test samples, followed by the Perturb method which is significantly faster than BO. The Repeated sampling method yields higher KL values than Perturb, indicating greater deviation from the predictive posterior and is thus less aligned with Bayesian principles. These evidences support Perturb as a scalable and well-performing approach for sampling candidate $\BZ$ in \eqref{eq:3_V_a definition}'s optimisation procedure.

\begin{figure}[t]
    \centering
    \begin{subfigure}[t]{0.1945\textwidth}
        \centering
        \includegraphics[width=\linewidth]{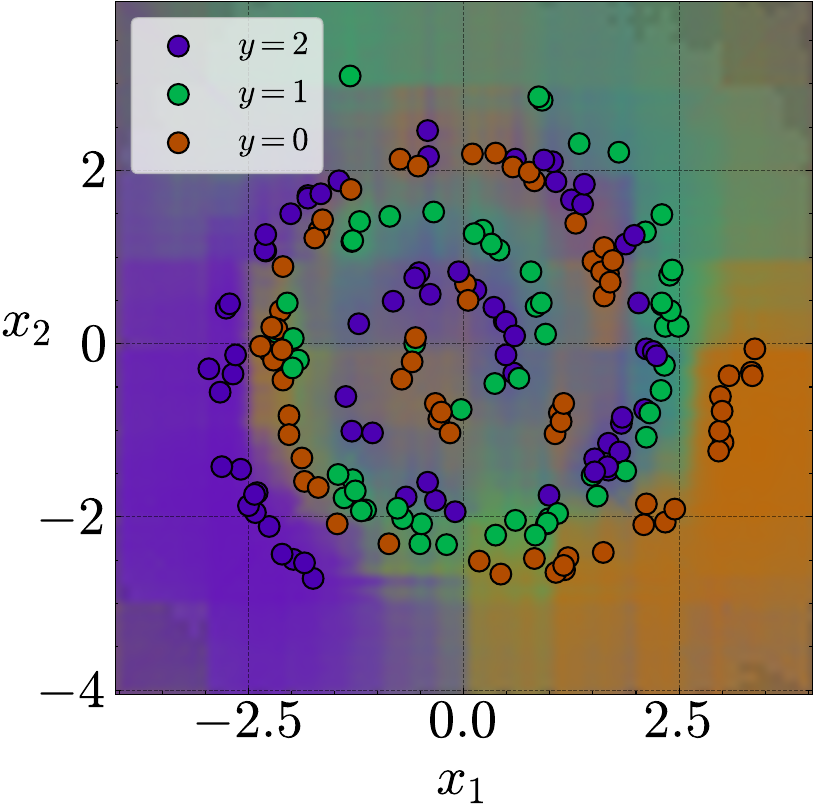}
        \caption{Posterior Prob.}\label{subfig-a:spirals_prob}
    \end{subfigure}
    \begin{subfigure}[t]{0.24\textwidth}
        \centering
        \includegraphics[width=\linewidth]{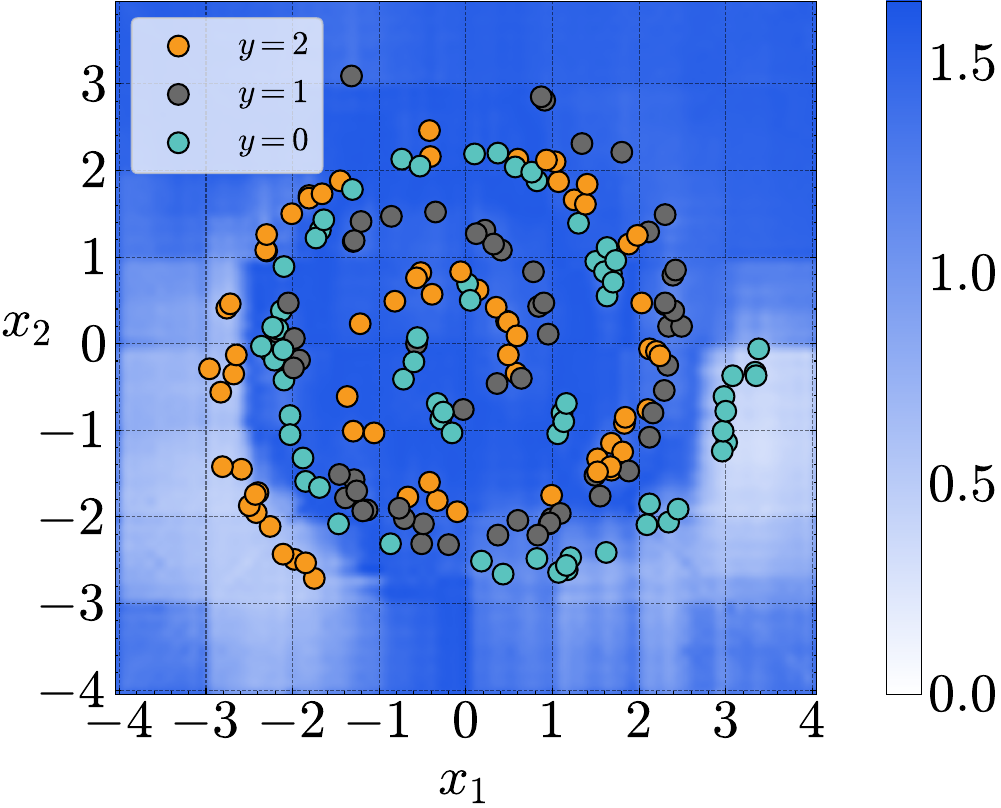}
        \caption{Total Uncertainty}
    \end{subfigure}\label{subfig-b:spirals_total}
    \begin{subfigure}[t]{0.24\textwidth}
        \centering
        \includegraphics[width=\linewidth]{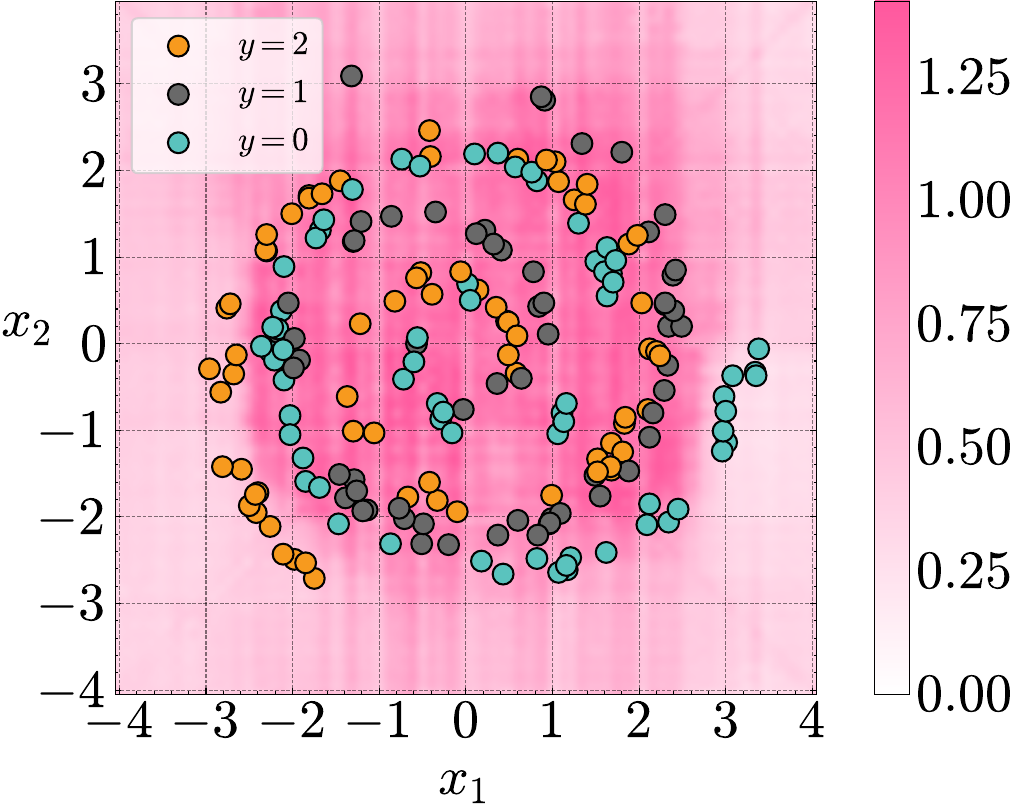}
        \caption{Aleatoric Uncertainty}\label{subfig-c:spirals_aleatoric}
    \end{subfigure}
    \begin{subfigure}[t]{0.24\textwidth}
        \centering
        \includegraphics[width=\linewidth]{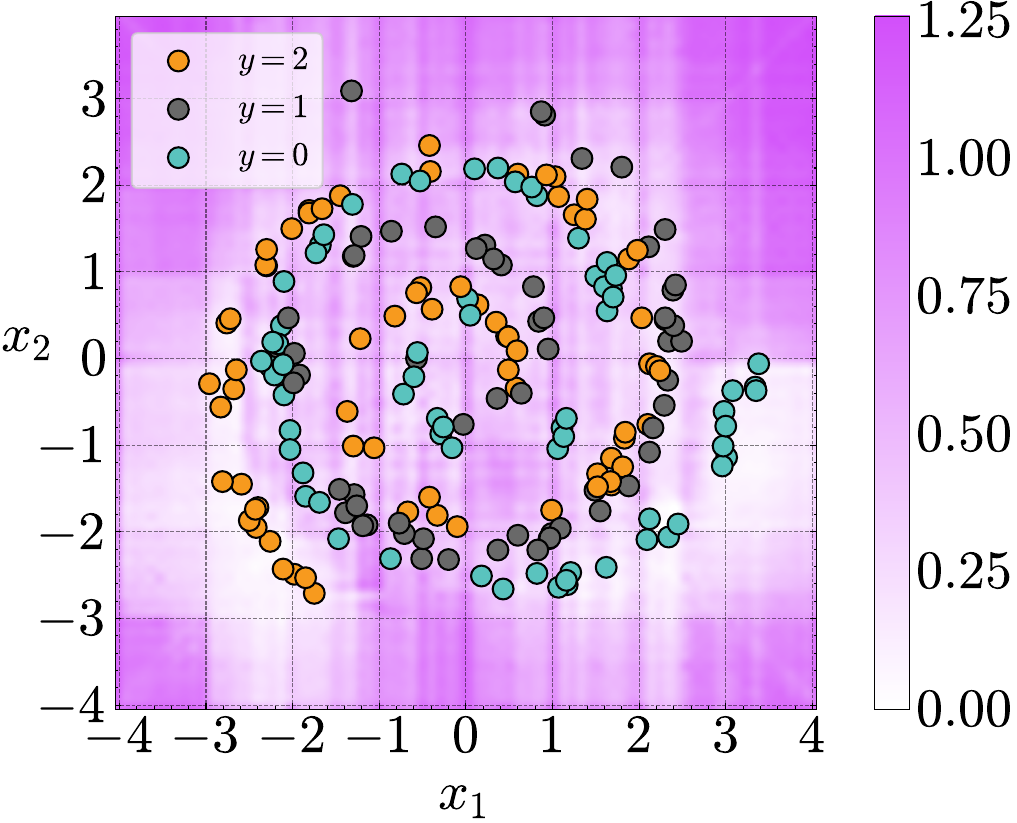}
        \caption{Epistemic Uncertainty}
    \end{subfigure}\label{subfig-d:spirals_epistemic}
    \caption{Uncertainty Decompositions for Spirals Classification Task.}
    \label{fig:sprials_task}
    \vspace{-4mm}
\end{figure}

\begin{figure}[t]
    \centering
    \begin{subfigure}[t]{0.49\textwidth}
        \centering
        \includegraphics[width=\textwidth]{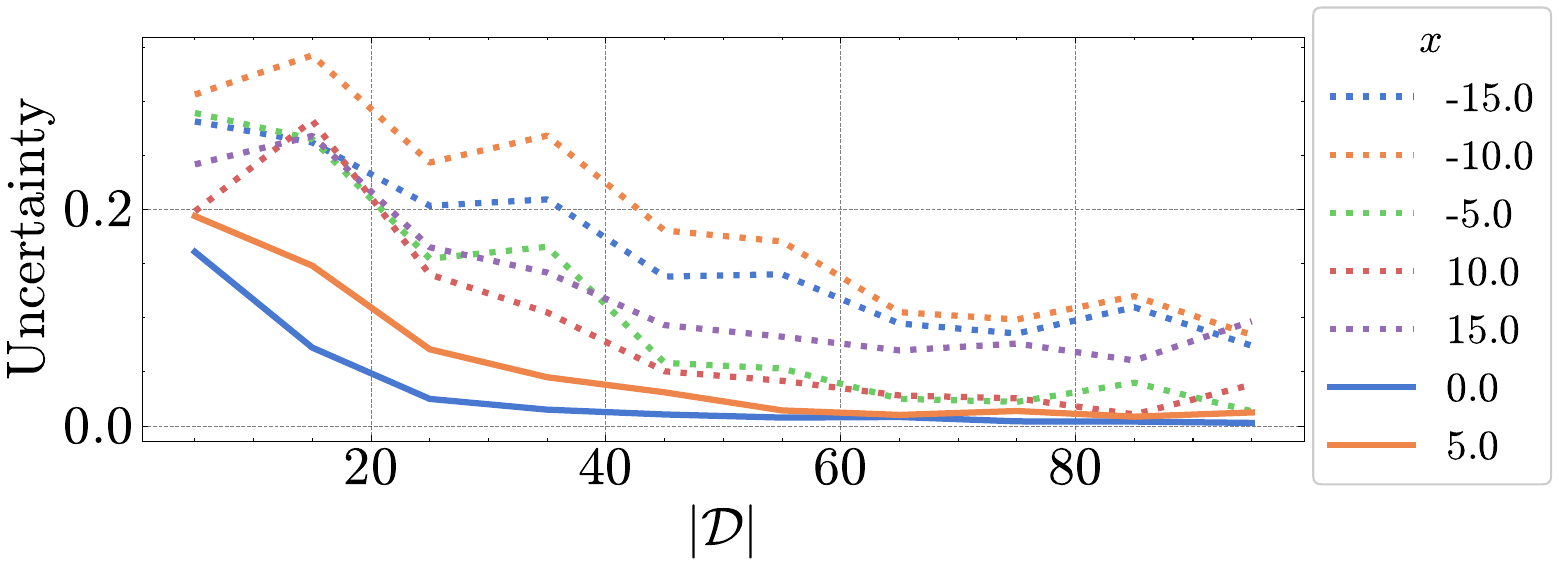}
        \caption{Epistemic Uncertainty vs. Size of Training Set}\label{subfig-a:epistemic_train_size}
    \end{subfigure}
    \hfill
    \begin{subfigure}[t]{0.49\textwidth}
        \centering
        \includegraphics[width=\textwidth]{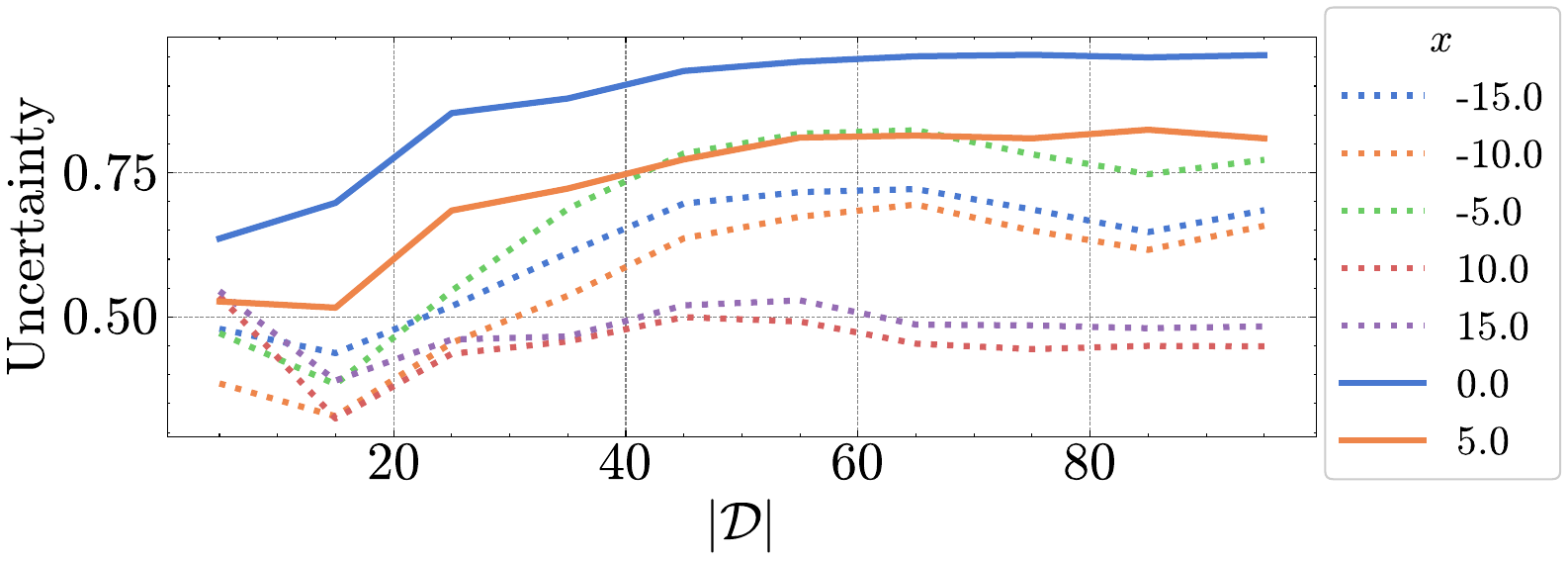}
        \caption{Aleatoric Uncertainty vs. Size of Training Set}\label{subfig-b:epistemic_train_size}
    \end{subfigure}
    \captionsetup{justification=centering}
    \caption{Uncertainty decompositions for logistic regression task with varying dataset size. Solid and dotted lines indicate in-distribution and out-of-distribution predictive points respectively.}
    \label{fig:epistemic_train-size}
    \vspace{-4mm}
\end{figure}

\begin{figure}[t]
    \centering
    \begin{subfigure}[t]{0.483\textwidth}
        \centering
        \includegraphics[width=\textwidth]{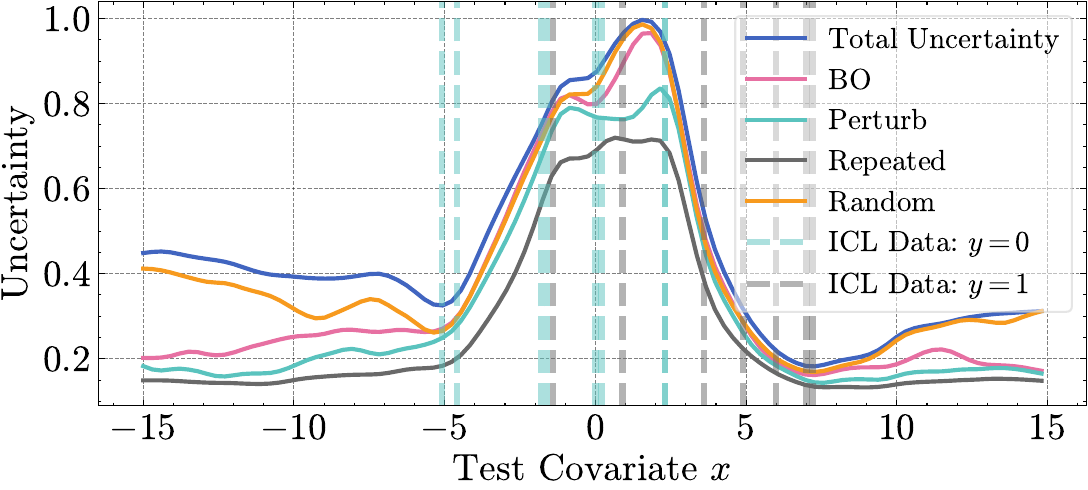}
        \caption{Effect of $\BZ$ on Uncertainty Estimates}\label{subfig_a:z-ablation}
    \end{subfigure}
    \hfill
    \begin{subfigure}[t]{0.497\textwidth}
        \centering
        \includegraphics[width=\textwidth]{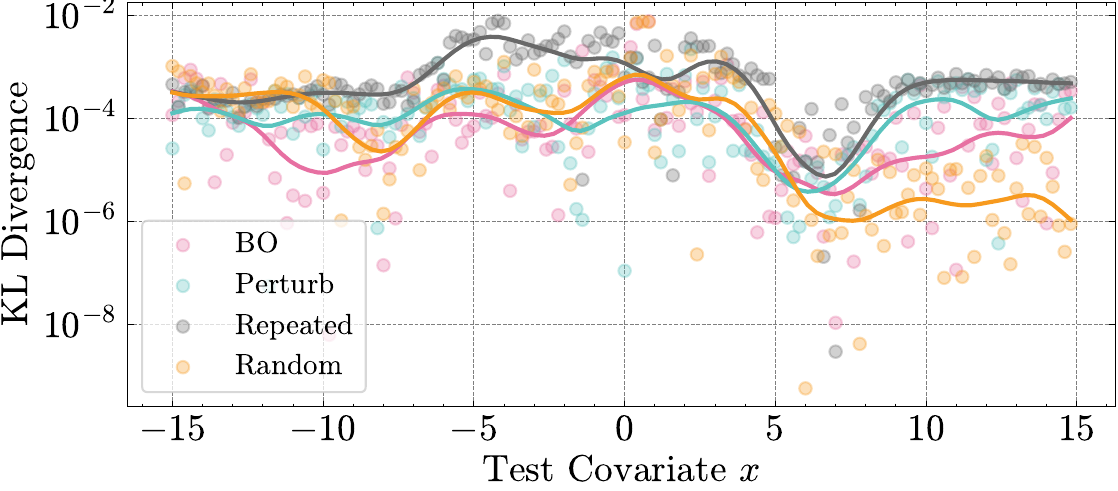}
        \caption{KL Divergence of Optimal Selected $\BZ$}
        \label{subfig_b:z-ablation}
    \end{subfigure}
    \caption{Ablation of $\BZ$ choice on Aleatoric Uncertainty and KL Divergence.}
    \label{fig:ablation_z-choice}
    \vspace{-4mm}
\end{figure}

\subsection{Downstream Applications of Uncertainty Decomposition}

We conduct quantitative experiments on two applications of uncertainty decomposition: bandit problems and out-of-distribution detection in real-world question-answering tasks.

\textbf{Bandits}. Bandit problems in reinforcement learning necessitate the ability to distinguish between aleatoric and epistemic uncertainty to balance exploration and exploitation. In a bandit problem, for a trial $t$, an agent must choose an arm $a_t\in\mathcal{A}$ which gives a reward $r_t$. The goal is to minimise the overall regret over all the trials $\sum_t \mu_t^* - \mathbb{E}[r_t]$, where $\mu^*_t$ is the mean reward from the optimal arm. We consider Upper Confidence Bound (UCB) bandit algorithms \cite{auer2000using}, where $a_t = \mathrm{argmax}_a Q_t(a) + \alpha U_t(a)$, where $Q_t$ is the estimated reward from arm $a$ and $U_t$ is the uncertainty in arm $a$ at trial $t$, and $\alpha$ is the exploration rate. We use the LLM posterior mean as $Q_t$, and compare the performance of epistemic and total variance as $U_t$. In this setting, epistemic variance guides exploration to choose arms where additional data is beneficial, whereas total variance may prioritise actions where the reward has high intrinsic noise. We use the multi-armed bandit ``Buttons'' task \cite{krishnamurthy2024largelanguagemodelsexplore}, with 5 arms, where each arm $a$ yields a Bernoulli reward with mean $p_a$. The base reward level $p$ controls the overall success probability, with the optimal arm set to $p_a^* = p + \frac{\Delta}{2}$ and all other arms set to $p_a = p - \frac{\Delta}{2}$, where $\Delta$ denotes the reward gap between the optimal and suboptimal arms. We set $\Delta=0.2$, which is the "hard" setting in \cite{krishnamurthy2024largelanguagemodelsexplore}. When $p>0.5$, the reward for the optimal arm will have the lowest (aleatoric) variance, and UCB algorithms using total variance will choose more suboptimal actions. We use mean regret and worst-case mean regret (from the 30\% of worst performing seeds) as the primary performance metrics as well as metrics of median reward, suffix-fail frequency and $K\cdot \text{MinFrac}$ used in \cite{krishnamurthy2024largelanguagemodelsexplore}. We also include UCB1 and Greedy as a non-LLM baseline, and the instruction prompting method from \cite{krishnamurthy2024largelanguagemodelsexplore} as an LLM-based non-uncertainty baseline. See Appendix \ref{appx:bandit_experiments} for further details on metrics, results and implementation of the LLM-UCB algorithm.

Figure \ref{fig:buttons_run_uncertainty} shows a typical run of epistemic variance (EV) and total variance (TV) for a particular seed. In both examples, the $Q$ value (posterior mean reward) for the optimal arm is the highest in the last 50 trials (Figure \ref{subfig-b:q_value}) and thus should be chosen. But when we consider the arms chosen, the optimal arm is not picked in the last 50 trials for the TV run (Figure \ref{subfig-a:action_counts}). This is because the epistemic variance decreases to zero with the number of observations but total variance does not (Figure \ref{subfig-c:buttons_u_values}). Therefore, in the EV setting $Q_t(a)$ dominates $U_t(a)$ for large $t$, whereas for TV setting this does not necessarily hold. Table \ref{tbl:buttons-bandit} shows our experimental results on the Buttons task. We see for $p>0.5$, the worst-case regret is significantly lower for EV than TV, indicating that the UCB algorithms is more robust for EV. Furthermore, EV generally results in lower mean regret for $p >0.5$ with the exception of $p=0.6, \alpha=2$. However, it is important to note bandit algorithms have high variance in mean regret due to the stochasticity of the reward.

\begin{figure}[t]
    \begin{subfigure}{\textwidth}
        \centering
        \begin{minipage}[t]{0.31\textwidth}
            \centering
            \includegraphics[width=\textwidth]{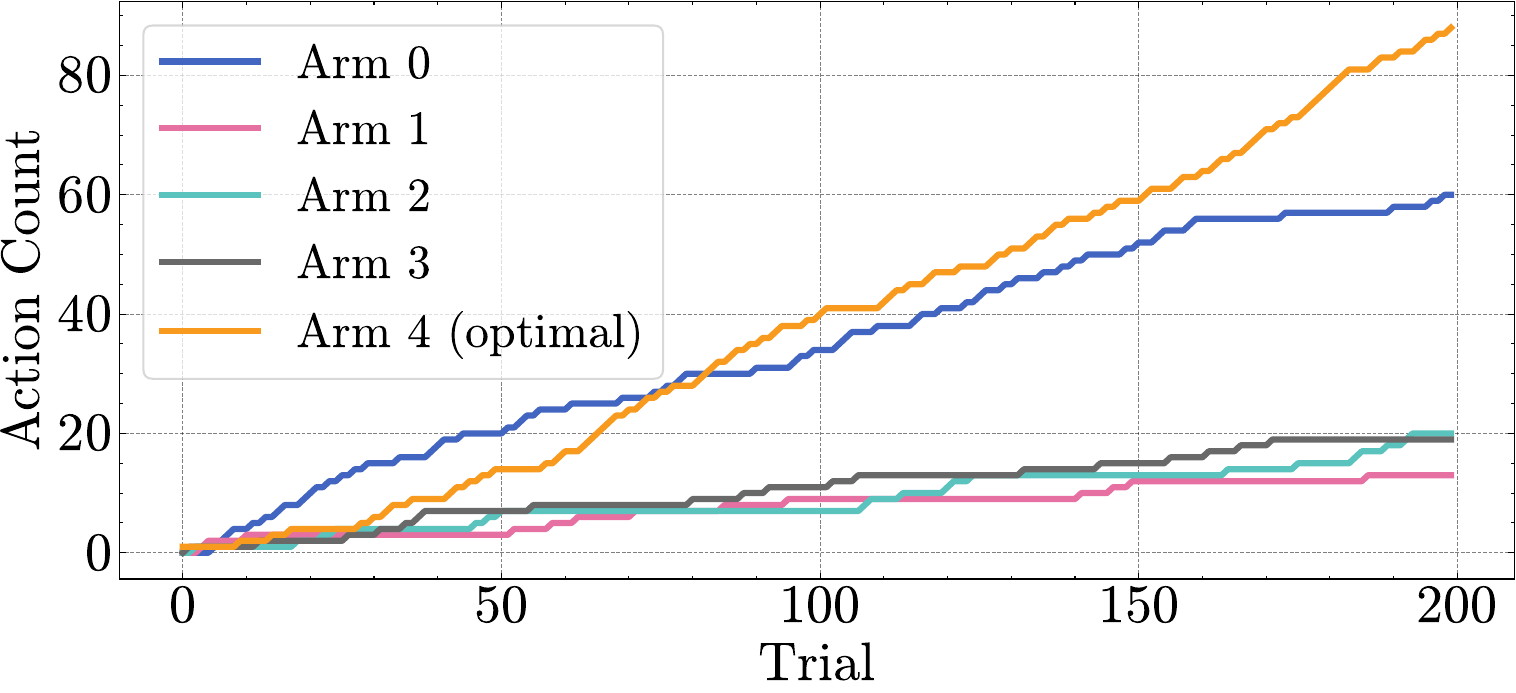}
        \end{minipage}
        \hfill
        \begin{minipage}[t]{0.32\textwidth}
            \centering
            \includegraphics[width=\textwidth]{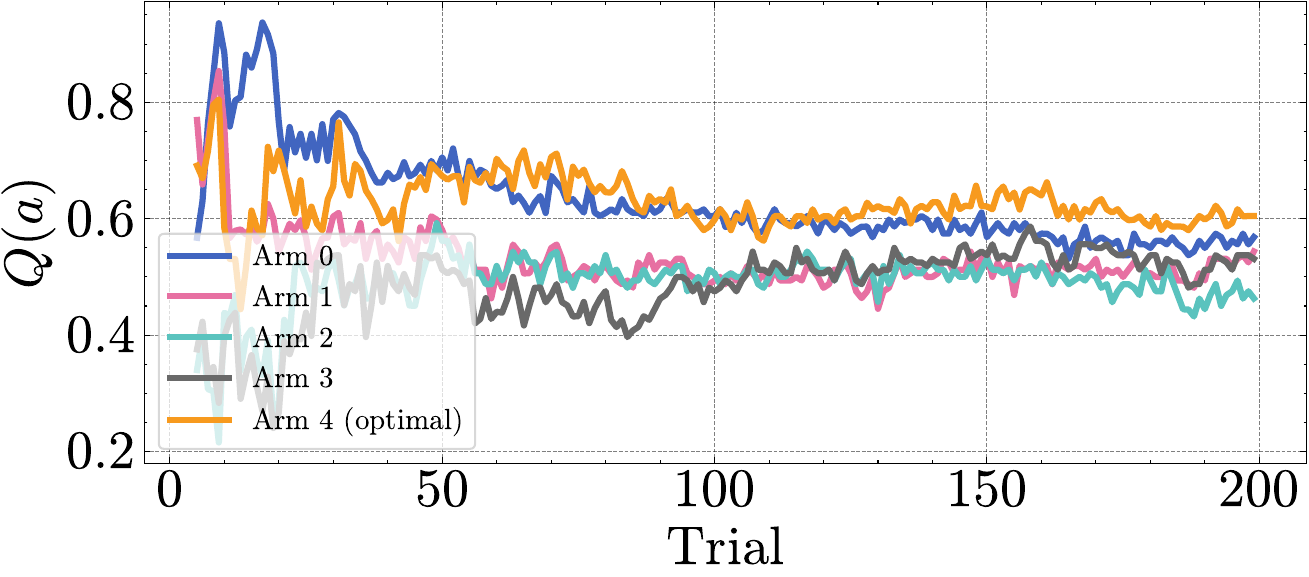}
        \end{minipage}
        \hfill
        \begin{minipage}[t]{0.32\textwidth}
            \centering
            \includegraphics[width=\textwidth]{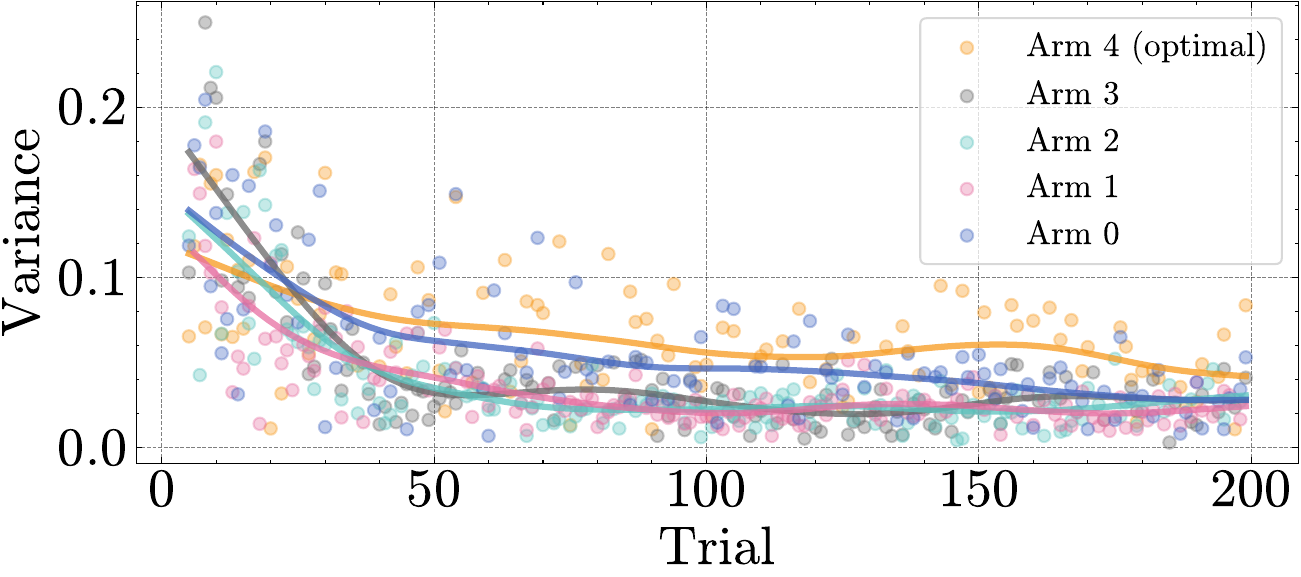}
        \end{minipage}
    \end{subfigure}
    
    \begin{subfigure}{\textwidth}
            \centering
        \begin{minipage}[t]{0.31\textwidth}
            \centering
            \includegraphics[width=\textwidth]{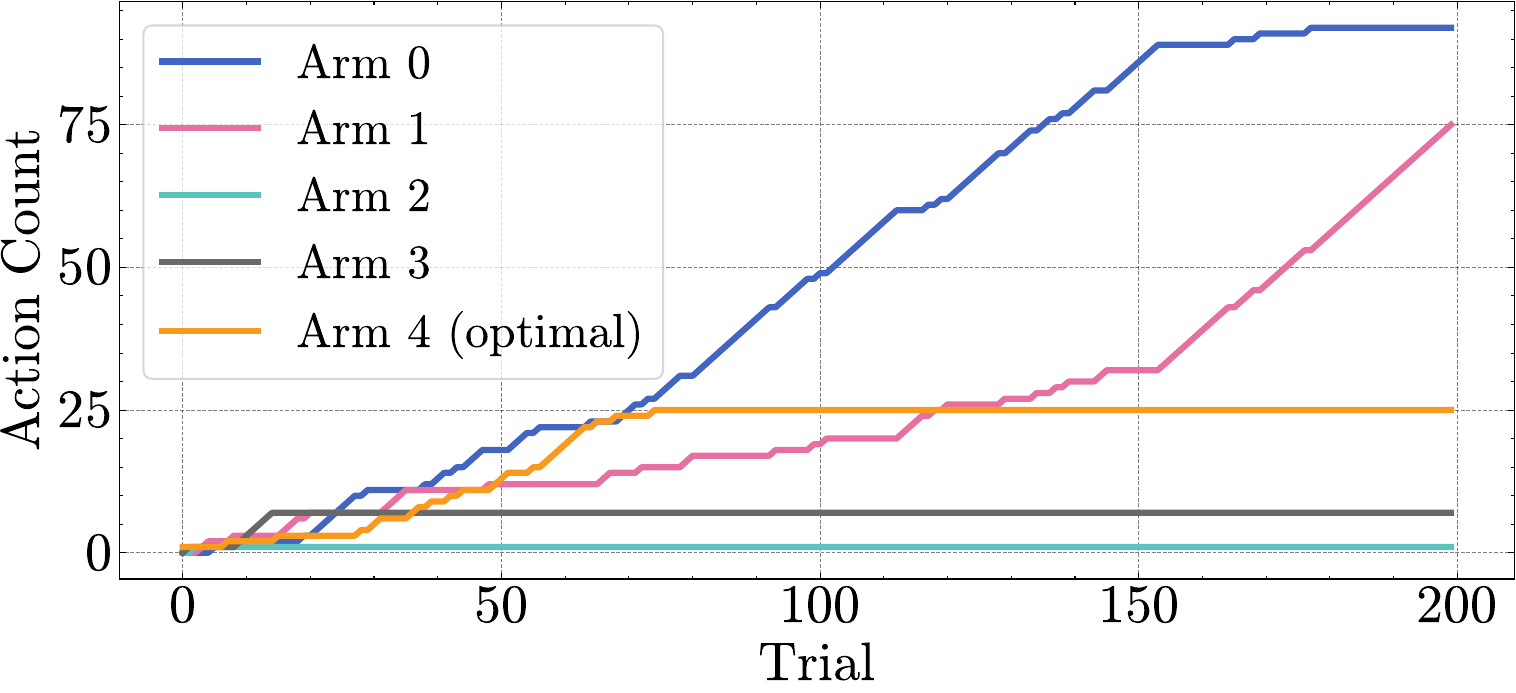}
            \caption{Cumulative Arm Counts}\label{subfig-a:action_counts}
        \end{minipage}
        \hfill
        \begin{minipage}[t]{0.32\textwidth}
            \centering
            \includegraphics[width=\textwidth]{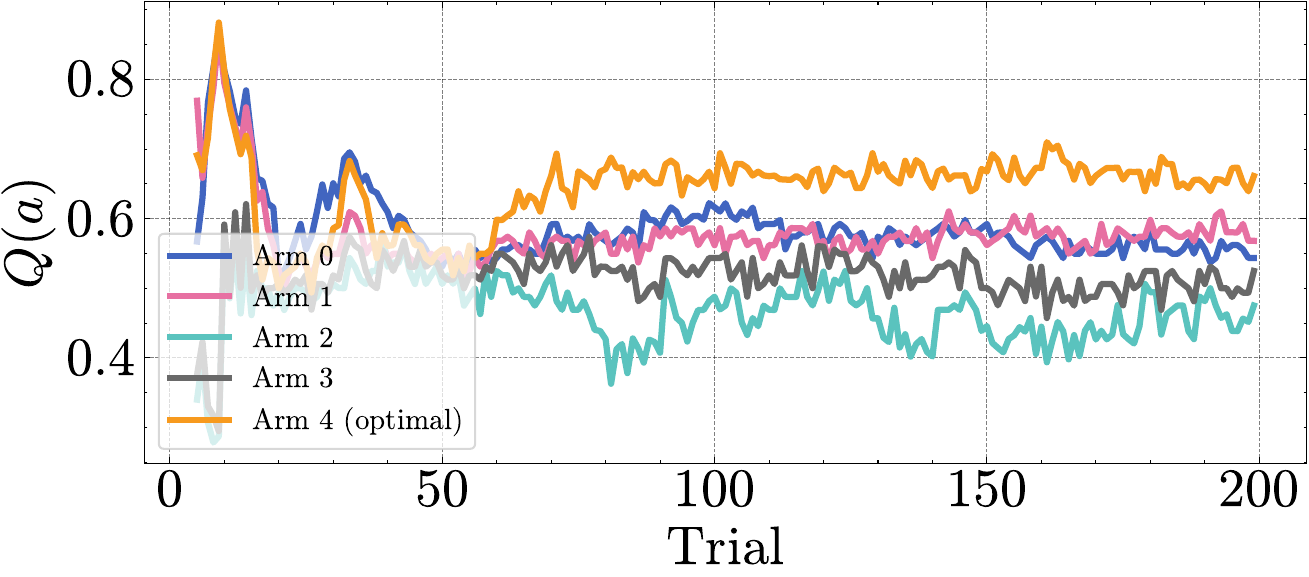}
            \caption{$Q$-values}\label{subfig-b:q_value}
        \end{minipage}
        \hfill
        \begin{minipage}[t]{0.32\textwidth}
            \centering
            \includegraphics[width=\textwidth]{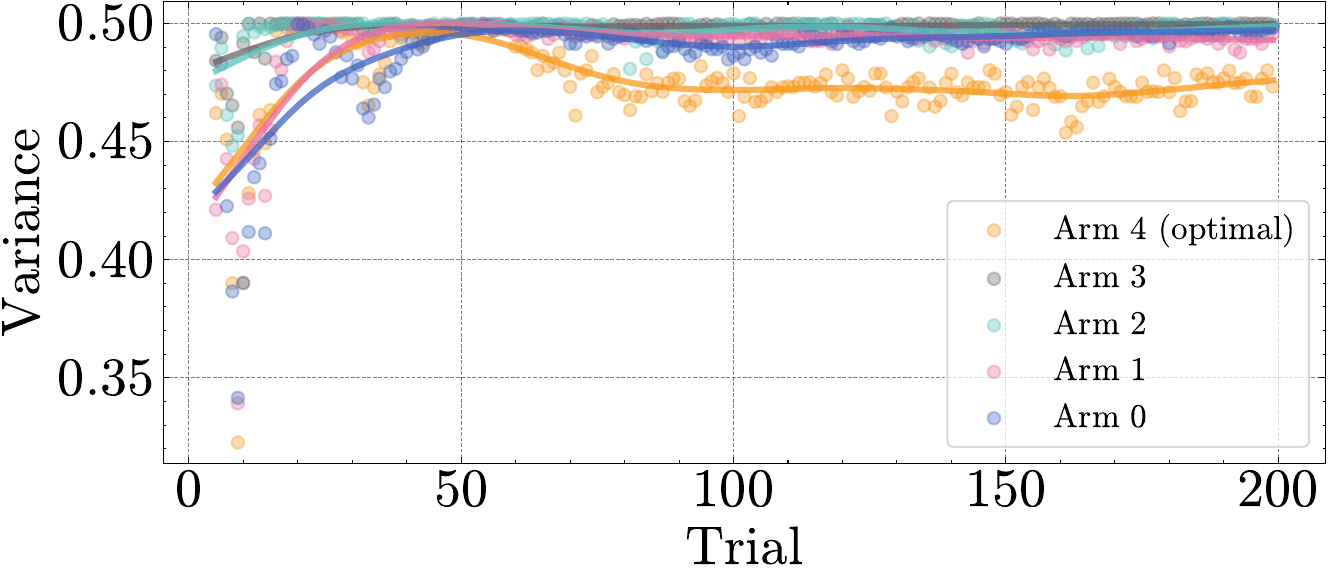}
            \caption{$U$-values (Variance)}
            \label{subfig-c:buttons_u_values}
        \end{minipage}
    \end{subfigure}
\caption{Example Run ($p=0.6, \alpha=5$) with Epistemic (above) and Total Variance (below).}
\label{fig:buttons_run_uncertainty}
\vspace{-4mm}
\end{figure}

\begin{table*}[t]
    \centering
    \caption{Buttons Bandit Problem. TV is Total Variance and EV is Epistemic Variance.}
    \vspace{-2mm}
    \label{tbl:buttons-bandit}
    \begin{normalsize}
    \begin{threeparttable}
    \begin{sc}
    \resizebox{\textwidth}{!}{
    \begin{tabular}{ccccccc}
    \toprule
     & {Method} & {Mean Worst-Case Regret $\downarrow$} & {Mean Regret $\downarrow$} & {Median Reward $\uparrow$} & {$\mathrm{SuffFailFreq}(T/2)$ $\downarrow$} & {$K \cdot  \mathrm{MinFrac}$ $\downarrow$}  \\
    \hline
    \multirow{7}{*}{\rotatebox[origin=c]{90}{$p=0.5$}} & UCB1 & 0.128{\tiny$\pm$.019} & 0.094{\tiny$\pm$.027} & 0.510 & 0.0 & 0.29 \\ 
    & Greedy  & 0.199{\tiny$\pm$.000} & 0.101{\tiny$\pm$.092} & 0.525 & 0.460 & 0.03 \\ 
    & Instruct Baseline  & 0.161{\tiny$\pm$.020} & 0.107{\tiny$\pm$.043} & 0.495 & 0.0 & 0.26 \\
    \cdashline{2-7}
    \addlinespace[0.1cm]
     & TV ($\alpha=2$) & 0.196{\tiny$\pm$.005} & 0.100{\tiny$\pm$.074} & 0.492 & 0.3 & \textbf{0.03} \\
     & EV ($\alpha=2$)  & \textbf{0.147{\tiny$\pm$.000}} & \textbf{0.087{\tiny$\pm$.051}} & \textbf{0.522} & \textbf{0.0} & 0.12 \\
    \cdashline{2-7}
    \addlinespace[0.1cm]
     & TV ($\alpha=5$) & 0.198{\tiny$\pm$.000} & \textbf{0.100{\tiny$\pm$.074}} & 0.492 & 0.7 & \textbf{0.04} \\
     & EV ($\alpha=5$)  & \textbf{0.152{\tiny$\pm$.011}} & 0.124{\tiny$\pm$.024} & \textbf{0.510} & \textbf{0.0} & 0.60 \\
    \midrule
     \multirow{7}{*}{\rotatebox[origin=c]{90}{$p=0.6$}} & UCB1 & 0.127{\tiny$\pm$.018} & 0.094{\tiny$\pm$.027} & 0.610 & 0.0 & 0.28 \\ 
    & Greedy & 0.199{\tiny$\pm$.000} & 0.092{\tiny$\pm$.090} & 0.645 & 0.396 & 0.03 \\ 
    & Instruct Baseline & 0.111{\tiny$\pm$.007} & 0.076{\tiny$\pm$.043} & 0.620 & 0.0 & 0.18 \\
    \cdashline{2-7}
    \addlinespace[0.1cm]
    & TV ($\alpha=2$) & 0.198{\tiny$\pm$.001} & \textbf{0.035{\tiny$\pm$.054}} & \textbf{0.670} & 0.1 & \textbf{0.04} \\
    & EV ($\alpha=2$) & \textbf{0.149{\tiny$\pm$.039}} & 0.068{\tiny$\pm$.042} & 0.642 & \textbf{0.0} & 0.145 \\
    \cdashline{2-7}
    \addlinespace[0.1cm]
     & TV ($\alpha=5$) & 0.199{\tiny$\pm$.000} & 0.158{\tiny$\pm$.065} & 0.555 & 0.8 & \textbf{0.04} \\
     & EV ($\alpha=5$) & \textbf{0.140{\tiny$\pm$.013}} & \textbf{0.105{\tiny$\pm$.027}} & \textbf{0.600} & \textbf{0.0} & 0.42 \\
    \midrule
     \multirow{7}{*}{\rotatebox[origin=c]{90}{$p=0.7$}} & UCB1 & 0.122{\tiny$\pm$.017} & 0.094{\tiny$\pm$.027} & 0.710 & 0.0 & 0.27 \\ 
    & Greedy & 0.199{\tiny$\pm$.000} & 0.085{\tiny$\pm$.089} & 0.760 & 0.369 & 0.03 \\ 
    & Instruct Baseline & 0.132{\tiny$\pm$.043} & 0.087{\tiny$\pm$.040} & 0.703 & 0.0 & 0.18 \\
    \cdashline{2-7}
    \addlinespace[0.1cm]
     & TV ($\alpha=2$ ) & 0.199{\tiny$\pm$.000} & 0.076{\tiny$\pm$.087} & 0.725 & 0.3 & \textbf{0.03} \\
     & EV ($\alpha=2$) & \textbf{0.092{\tiny$\pm$.004}} & \textbf{0.050{\tiny$\pm$.033}} & \textbf{0.735} & \textbf{0.0} & 0.11 \\
    \cdashline{2-7}
    \addlinespace[0.1cm]
    & TV ($\alpha=5$) & 0.195{\tiny$\pm$.003} & 0.151{\tiny$\pm$.073} & 0.603 & 0.7 & \textbf{0.04} \\
    & EV ($\alpha=5$) & \textbf{0.135{\tiny$\pm$.007}} & \textbf{0.092{\tiny$\pm$.037}} & \textbf{0.682} & \textbf{0.0} & 0.24 \\
    \bottomrule
    \end{tabular}}
    \end{sc}
    \end{threeparttable}
    \end{normalsize}
\end{table*}

\textbf{In-Context Abstention}.
Whilst LLMs demonstrate strong capability across a diverse range of in-context learning NLP tasks, they have been shown to also hallucinate and provide false information \cite{kalai2025languagemodelshallucinateabstention, madhusudhan2024llmsknowanswerinvestigatingabstention, kirichenko2025abstentionbenchreasoningllmsfail}, impacting the reliability of these models.
\emph{LLM-Abstention} is a domain that addresses this problem by abstaining from answering questions when there is high uncertainty \cite{wen2024characterizing, feng2024don}. This is achieved by abstaining from questions where the LLM exhibits uncertainty above a predetermined threshold. In abstention, we want to avoid answering under settings where the prediction is inherently uncertain and therefore, we would expect to see an improvement in accuracy when we use aleatoric uncertainty to threshold abstention compared to total uncertainty.

\begin{wrapfigure}{r}{0.38\columnwidth}
    \centering
    \vspace{-4mm}
    \includegraphics[width=0.38\columnwidth]{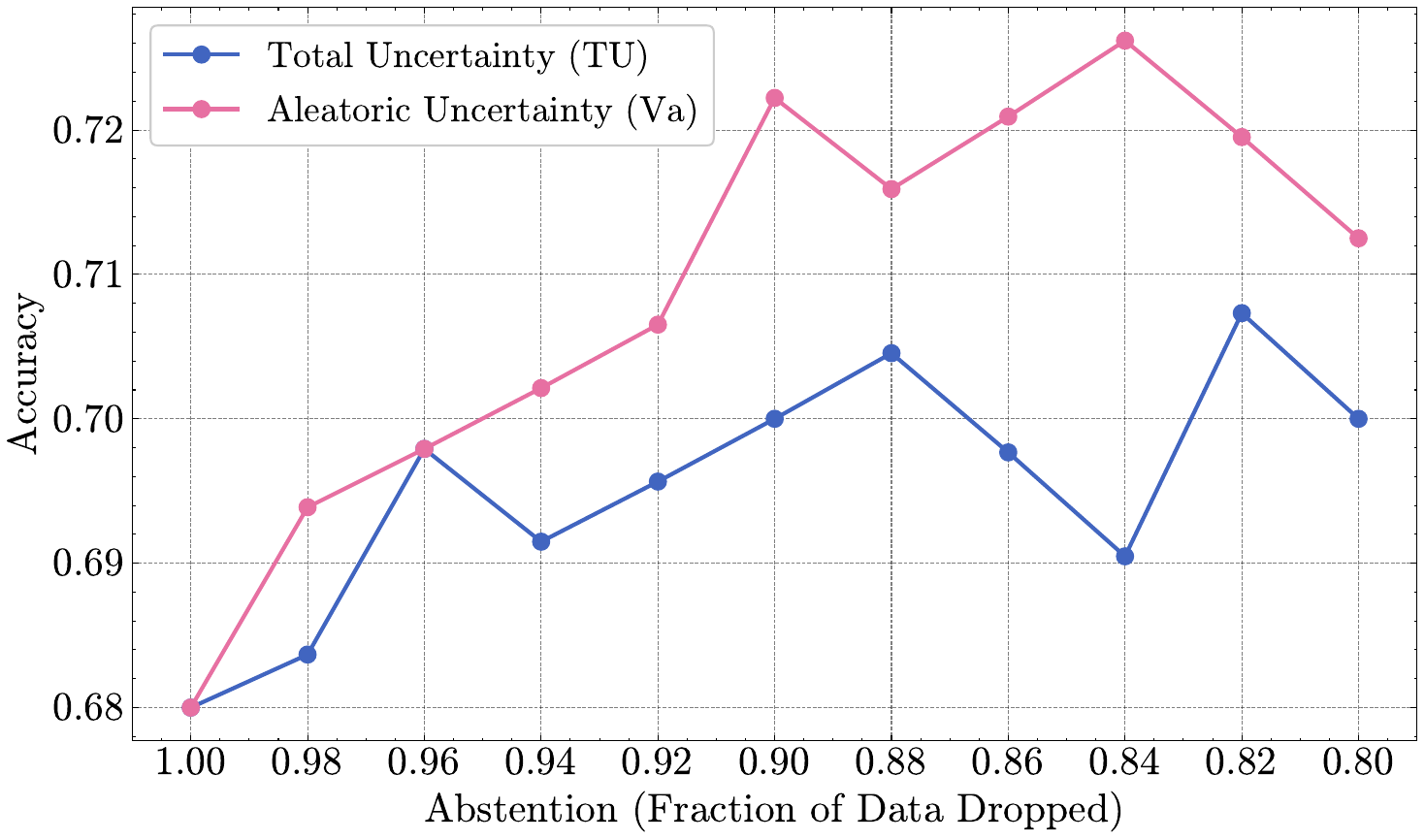}
    \vspace{-4mm}
    \caption{Effect of In-Context Abstention on Accuracy (MMLU-Moral).}
    \label{fig:abstention_mmlu-moral}
    \vspace{-4mm}
\end{wrapfigure}
In our experiments, we apply LLM-abstention to binary classification datasets: BoolQA \cite{clark-etal-2019-boolq}, HotpotQA \cite{yang2018hotpotqadatasetdiverseexplainable}, and PubMedQA \cite{jin-etal-2019-pubmedqa}; as well as a multiclass classification dataset: MMLU \cite{hendrycks2021measuringmassivemultitasklanguage}.
We then extract the total uncertainty (TU), and the decomposed aleatoric uncertainty (AU) using VUD across 100 questions from each dataset.
Our results in Table \ref{tbl:reject_answering} demonstrate that under a threshold rate of rejecting the highest 10\% of uncertain samples, using AU to threshold yields larger predictive accuracy performance gains than TU. In Figure \ref{fig:abstention_mmlu-moral}, we show an example of abstention across a range of threshold rates for the MMLU dataset. We observe that thresholding by AU improves accuracy on questions that are answered. 

\begin{table*}[htbp]
    \centering
    \vskip -0.1in
    \caption{Abstention on QA tasks. Accuracies are computed on the remaining questions after the uncertainty-based question filtering approach. Higher accuracy improvement when filtering using aleatoric uncertainty (AU) highlights the effectiveness of the uncertainty decomposition.}
    \label{tbl:reject_answering}
    \vspace{-2mm}
    \begin{normalsize}
    \begin{threeparttable}
    \begin{sc}
    \resizebox{1.0\textwidth}{!}{
    \begin{tabular}{lcccc}
    \toprule
    & & \multicolumn{3}{c}{Accuracy (\%) $\uparrow$} \\
    \cmidrule(lr){3-5}
    \textbf{Dataset} & & w/out Filtering & w/ Filtering (TU) & w/ Filtering (AU) \\
    \midrule
    BoolQA   & & 80.02{\tiny$\pm$.001} & \textbf{85.56}{\tiny$\pm$.002} & \textbf{85.56}{\tiny$\pm$.003} \\
    HotpotQA & & 86.98{\tiny$\pm$.000} & 87.78{\tiny$\pm$.002} & \textbf{90.00}{\tiny$\pm$.003} \\
    PubMedQA & & 87.02{\tiny$\pm$.002} & 88.89{\tiny$\pm$.001} & \textbf{90.00}{\tiny$\pm$.000} \\
    MMLU-CS  & & 81.01{\tiny$\pm$.003} & 85.56{\tiny$\pm$.002} & \textbf{86.67}{\tiny$\pm$.001} \\
    MMLU-M   & & 68.03{\tiny$\pm$.001} & 70.00{\tiny$\pm$.003} & \textbf{72.22}{\tiny$\pm$.002} \\
    \bottomrule
    \end{tabular}}
    \end{sc}
    \end{threeparttable}
    \end{normalsize}
    \vspace{-4mm}
\end{table*}

We provide further details on this experiment and an example from the MMLU-Moral dataset comparing filtered samples of high aleatoric vs total uncertainty in Appendix \ref{appx:nlp}.
We find that as aleatoric uncertainty measures inherent randomness and stochasticity in $p(y|x)$, AU, when compared with TU, focuses more on evaluating whether the answer to the question is inherently ambiguous.

\subsection{Summary of Additional Experiments}

We present in Appendix \ref{appx:expts} further studies on additional baselines and applications.
In Appendix \ref{appx:martingale_exp}, we provided a ``Martingale posterior'' \cite{ fong2023martingale} version of predictive uncertainty decomposition. Our results show that the decomposition result is highly sensitive to the choice of the proxy model, and the Martingale posterior's estimate of total uncertainty does not agree with the predictive entropy of the LLM in-context predictive distribution. In Appendix \ref{appx:nlp}, we consider an ``in-context out-of-distribution'' task \cite{hendrycks2018baselinedetectingmisclassifiedoutofdistribution}. Our goal is to demonstrate that leveraging epistemic uncertainty from our decomposition yields higher OOD detection accuracy than directly utilising the total uncertainty. We observe that for our method, epistemic uncertainty (EU) yields higher AUC scores in more ID/OOD settings than total uncertainty (TU), implying better OOD detection results via our decomposition.

\section{Conclusion}

In this work, we introduce the Variational Uncertainty Decomposition framework for ICL in LLMs. Motivated by a Bayesian view of ICL, we use auxiliary data to derive a variational upper bound to the aleatoric uncertainty and variance. This permits the estimation of the aleatoric uncertainty and variance, without requiring an estimation of the latent Bayesian parameter $\theta$. Through extensive experiments using synthetic toy and real-world datasets, we demonstrate that our method provides a sensible decomposition that qualitatively and quantitatively respects properties of epistemic and aleatoric uncertainties. These results show that our method is capable of accurately distinguishing between aleatoric and epistemic uncertainty across a variety of LLMs.

\textbf{Limitations}. We assume that ICL behaves in a Bayesian manner. Whilst there is some evidence to support this Bayesian hypothesis \cite{xie2022explanationincontextlearningimplicit, ye2024exchangeable, muller2021transformers}, it has also been observed that in longer sampling horizons this Bayesian hypothesis breaks down \cite{falck2024incontextlearninglargelanguage, liu2024towards}. We address this by considering short sampling horizons, permutations, and a filtering step to remove ``non-Bayesian'' samples. However, whilst the filtering condition is necessary for a Bayesian model, it is not sufficient and doesn't guarantee Bayesian behaviour. Therefore, we view our method as approximately Bayesian where $\epsilon$ is a quantification of the Bayesian approximation. Secondly, we focus on regression and classification tasks where the output of the task is a real number or a small set of classes and our prompt structure ensures short responses. In many real-world settings, the LLM output is in natural language where responses can differ in tokens but have the same semantic meaning. Therefore, uncertainty quantification methods that consider semantics \cite{kuhn2023semantic} can be integrated with the VUD algorithm to obtain a posterior over the natural language response, and we leave this as future work.
\newpage

\section*{Acknowledgements}
ISJ and YL acknowledge the EPSRC StatML CDT (EP/S023151/1) and BASF funding through EPSRC prosperity partnership programme IConIC (EP/X025292/1). FV was supported by a Imperial College London Department of Computing PhD scholarship. AAF holds a UKRI Turing AI Fellowship (EP/V025449/1).

\bibliography{neurips_2025}

@misc{brown2020languagemodelsfewshotlearners,
      title={Language Models are Few-Shot Learners}, 
      author={Tom B. Brown and Benjamin Mann and Nick Ryder and Melanie Subbiah and Jared Kaplan and Prafulla Dhariwal and Arvind Neelakantan and Pranav Shyam and Girish Sastry and Amanda Askell and Sandhini Agarwal and Ariel Herbert-Voss and Gretchen Krueger and Tom Henighan and Rewon Child and Aditya Ramesh and Daniel M. Ziegler and Jeffrey Wu and Clemens Winter and Christopher Hesse and Mark Chen and Eric Sigler and Mateusz Litwin and Scott Gray and Benjamin Chess and Jack Clark and Christopher Berner and Sam McCandlish and Alec Radford and Ilya Sutskever and Dario Amodei},
      year={2020},
      eprint={2005.14165},
      archivePrefix={arXiv},
      primaryClass={cs.CL},
      url={https://arxiv.org/abs/2005.14165}, 
}

@misc{deepseekai2025deepseekv3technicalreport,
      title={DeepSeek-V3 Technical Report}, 
      author={DeepSeek-AI and Aixin Liu and Bei Feng and Bing Xue and Bingxuan Wang and Bochao Wu and Chengda Lu and Chenggang Zhao and Chengqi Deng and Chenyu Zhang and Chong Ruan and Damai Dai and Daya Guo and Dejian Yang and Deli Chen and Dongjie Ji and Erhang Li and Fangyun Lin and Fucong Dai and Fuli Luo and Guangbo Hao and Guanting Chen and Guowei Li and H. Zhang and Han Bao and Hanwei Xu and Haocheng Wang and Haowei Zhang and Honghui Ding and Huajian Xin and Huazuo Gao and Hui Li and Hui Qu and J. L. Cai and Jian Liang and Jianzhong Guo and Jiaqi Ni and Jiashi Li and Jiawei Wang and Jin Chen and Jingchang Chen and Jingyang Yuan and Junjie Qiu and Junlong Li and Junxiao Song and Kai Dong and Kai Hu and Kaige Gao and Kang Guan and Kexin Huang and Kuai Yu and Lean Wang and Lecong Zhang and Lei Xu and Leyi Xia and Liang Zhao and Litong Wang and Liyue Zhang and Meng Li and Miaojun Wang and Mingchuan Zhang and Minghua Zhang and Minghui Tang and Mingming Li and Ning Tian and Panpan Huang and Peiyi Wang and Peng Zhang and Qiancheng Wang and Qihao Zhu and Qinyu Chen and Qiushi Du and R. J. Chen and R. L. Jin and Ruiqi Ge and Ruisong Zhang and Ruizhe Pan and Runji Wang and Runxin Xu and Ruoyu Zhang and Ruyi Chen and S. S. Li and Shanghao Lu and Shangyan Zhou and Shanhuang Chen and Shaoqing Wu and Shengfeng Ye and Shengfeng Ye and Shirong Ma and Shiyu Wang and Shuang Zhou and Shuiping Yu and Shunfeng Zhou and Shuting Pan and T. Wang and Tao Yun and Tian Pei and Tianyu Sun and W. L. Xiao and Wangding Zeng and Wanjia Zhao and Wei An and Wen Liu and Wenfeng Liang and Wenjun Gao and Wenqin Yu and Wentao Zhang and X. Q. Li and Xiangyue Jin and Xianzu Wang and Xiao Bi and Xiaodong Liu and Xiaohan Wang and Xiaojin Shen and Xiaokang Chen and Xiaokang Zhang and Xiaosha Chen and Xiaotao Nie and Xiaowen Sun and Xiaoxiang Wang and Xin Cheng and Xin Liu and Xin Xie and Xingchao Liu and Xingkai Yu and Xinnan Song and Xinxia Shan and Xinyi Zhou and Xinyu Yang and Xinyuan Li and Xuecheng Su and Xuheng Lin and Y. K. Li and Y. Q. Wang and Y. X. Wei and Y. X. Zhu and Yang Zhang and Yanhong Xu and Yanhong Xu and Yanping Huang and Yao Li and Yao Zhao and Yaofeng Sun and Yaohui Li and Yaohui Wang and Yi Yu and Yi Zheng and Yichao Zhang and Yifan Shi and Yiliang Xiong and Ying He and Ying Tang and Yishi Piao and Yisong Wang and Yixuan Tan and Yiyang Ma and Yiyuan Liu and Yongqiang Guo and Yu Wu and Yuan Ou and Yuchen Zhu and Yuduan Wang and Yue Gong and Yuheng Zou and Yujia He and Yukun Zha and Yunfan Xiong and Yunxian Ma and Yuting Yan and Yuxiang Luo and Yuxiang You and Yuxuan Liu and Yuyang Zhou and Z. F. Wu and Z. Z. Ren and Zehui Ren and Zhangli Sha and Zhe Fu and Zhean Xu and Zhen Huang and Zhen Zhang and Zhenda Xie and Zhengyan Zhang and Zhewen Hao and Zhibin Gou and Zhicheng Ma and Zhigang Yan and Zhihong Shao and Zhipeng Xu and Zhiyu Wu and Zhongyu Zhang and Zhuoshu Li and Zihui Gu and Zijia Zhu and Zijun Liu and Zilin Li and Ziwei Xie and Ziyang Song and Ziyi Gao and Zizheng Pan},
      year={2025},
      eprint={2412.19437},
      archivePrefix={arXiv},
      primaryClass={cs.CL},
      url={https://arxiv.org/abs/2412.19437}, 
}

@misc{touvron2023llamaopenefficientfoundation,
      title={LLaMA: Open and Efficient Foundation Language Models}, 
      author={Hugo Touvron and Thibaut Lavril and Gautier Izacard and Xavier Martinet and Marie-Anne Lachaux and Timothée Lacroix and Baptiste Rozière and Naman Goyal and Eric Hambro and Faisal Azhar and Aurelien Rodriguez and Armand Joulin and Edouard Grave and Guillaume Lample},
      year={2023},
      eprint={2302.13971},
      archivePrefix={arXiv},
      primaryClass={cs.CL},
      url={https://arxiv.org/abs/2302.13971}, 
}

@misc{yang2018hotpotqadatasetdiverseexplainable,
      title={HotpotQA: A Dataset for Diverse, Explainable Multi-hop Question Answering}, 
      author={Zhilin Yang and Peng Qi and Saizheng Zhang and Yoshua Bengio and William W. Cohen and Ruslan Salakhutdinov and Christopher D. Manning},
      year={2018},
      eprint={1809.09600},
      archivePrefix={arXiv},
      primaryClass={cs.CL},
      url={https://arxiv.org/abs/1809.09600}, 
}

@misc{lewis2021retrievalaugmentedgenerationknowledgeintensivenlp,
      title={Retrieval-Augmented Generation for Knowledge-Intensive NLP Tasks}, 
      author={Patrick Lewis and Ethan Perez and Aleksandra Piktus and Fabio Petroni and Vladimir Karpukhin and Naman Goyal and Heinrich Küttler and Mike Lewis and Wen-tau Yih and Tim Rocktäschel and Sebastian Riedel and Douwe Kiela},
      year={2021},
      eprint={2005.11401},
      archivePrefix={arXiv},
      primaryClass={cs.CL},
      url={https://arxiv.org/abs/2005.11401}, 
}

@misc{kendall2017uncertaintiesneedbayesiandeep,
      title={What Uncertainties Do We Need in Bayesian Deep Learning for Computer Vision?}, 
      author={Alex Kendall and Yarin Gal},
      year={2017},
      eprint={1703.04977},
      archivePrefix={arXiv},
      primaryClass={cs.CV},
      url={https://arxiv.org/abs/1703.04977}, 
}

@article{smith2024rethinking,
  title={Rethinking Aleatoric and Epistemic Uncertainty},
  author={Smith, Freddie Bickford and Kossen, Jannik and Trollope, Eleanor and van der Wilk, Mark and Foster, Adam and Rainforth, Tom},
  journal={arXiv preprint arXiv:2412.20892},
  year={2024}
}

@misc{kendall2018multitasklearningusinguncertainty,
      title={Multi-Task Learning Using Uncertainty to Weigh Losses for Scene Geometry and Semantics}, 
      author={Alex Kendall and Yarin Gal and Roberto Cipolla},
      year={2018},
      eprint={1705.07115},
      archivePrefix={arXiv},
      primaryClass={cs.CV},
      url={https://arxiv.org/abs/1705.07115}, 
}

@misc{hou2024decomposinguncertaintylargelanguage,
      title={Decomposing Uncertainty for Large Language Models through Input Clarification Ensembling}, 
      author={Bairu Hou and Yujian Liu and Kaizhi Qian and Jacob Andreas and Shiyu Chang and Yang Zhang},
      year={2024},
      eprint={2311.08718},
      archivePrefix={arXiv},
      primaryClass={cs.CL},
      url={https://arxiv.org/abs/2311.08718}, 
}

@book{neal2012bayesian,
  title={Bayesian learning for neural networks},
  author={Neal, Radford M},
  volume={118},
  year={2012},
  publisher={Springer Science \& Business Media}
}

@inproceedings{blundell2015weight,
  title={Weight uncertainty in neural network},
  author={Blundell, Charles and Cornebise, Julien and Kavukcuoglu, Koray and Wierstra, Daan},
  booktitle={International conference on machine learning},
  pages={1613--1622},
  year={2015},
  organization={PMLR}
}

@inproceedings{gal2017deepactive,
  title={Deep bayesian active learning with image data},
  author={Gal, Yarin and Islam, Riashat and Ghahramani, Zoubin},
  booktitle={International conference on machine learning},
  pages={1183--1192},
  year={2017},
  organization={PMLR}
}

@article{houlsby2011bayesian,
  title={Bayesian active learning for classification and preference learning},
  author={Houlsby, Neil and Husz{\'a}r, Ferenc and Ghahramani, Zoubin and Lengyel, M{\'a}t{\'e}},
  journal={arXiv preprint arXiv:1112.5745},
  year={2011}
}

@inproceedings{li2017dropout,
  title={Dropout inference in bayesian neural networks with alpha-divergences},
  author={Li, Yingzhen and Gal, Yarin},
  booktitle={International conference on machine learning},
  pages={2052--2061},
  year={2017},
  organization={PMLR}
}

@article{graves2011practical,
  title={Practical variational inference for neural networks},
  author={Graves, Alex},
  journal={Advances in neural information processing systems},
  volume={24},
  year={2011}
}

@article{li2015stochastic,
  title={Stochastic expectation propagation},
  author={Li, Yingzhen and Hern{\'a}ndez-Lobato, Jos{\'e} Miguel and Turner, Richard E},
  journal={Advances in neural information processing systems},
  volume={28},
  year={2015}
}

@inproceedings{hernandez2015probabilistic,
  title={Probabilistic backpropagation for scalable learning of bayesian neural networks},
  author={Hern{\'a}ndez-Lobato, Jos{\'e} Miguel and Adams, Ryan},
  booktitle={International conference on machine learning},
  pages={1861--1869},
  year={2015},
  organization={PMLR}
}

@article{osband2016deep,
  title={Deep exploration via bootstrapped DQN},
  author={Osband, Ian and Blundell, Charles and Pritzel, Alexander and Van Roy, Benjamin},
  journal={Advances in neural information processing systems},
  volume={29},
  year={2016}
}

@misc{depeweg2018decompositionuncertaintybayesiandeep,
      title={Decomposition of Uncertainty in Bayesian Deep Learning for Efficient and Risk-sensitive Learning}, 
      author={Stefan Depeweg and José Miguel Hernández-Lobato and Finale Doshi-Velez and Steffen Udluft},
      year={2018},
      eprint={1710.07283},
      archivePrefix={arXiv},
      primaryClass={stat.ML},
      url={https://arxiv.org/abs/1710.07283}, 
}

@misc{naveed2024comprehensiveoverviewlargelanguage,
      title={A Comprehensive Overview of Large Language Models}, 
      author={Humza Naveed and Asad Ullah Khan and Shi Qiu and Muhammad Saqib and Saeed Anwar and Muhammad Usman and Naveed Akhtar and Nick Barnes and Ajmal Mian},
      year={2024},
      eprint={2307.06435},
      archivePrefix={arXiv},
      primaryClass={cs.CL},
      url={https://arxiv.org/abs/2307.06435}, 
}

@misc{si2024interpretabnetdistillingpredictivesignals,
      title={InterpreTabNet: Distilling Predictive Signals from Tabular Data by Salient Feature Interpretation}, 
      author={Jacob Si and Wendy Yusi Cheng and Michael Cooper and Rahul G. Krishnan},
      year={2024},
      eprint={2406.00426},
      archivePrefix={arXiv},
      primaryClass={cs.LG},
      url={https://arxiv.org/abs/2406.00426}, 
}

@misc{krishnamurthy2024largelanguagemodelsexplore,
      title={Can large language models explore in-context?}, 
      author={Akshay Krishnamurthy and Keegan Harris and Dylan J. Foster and Cyril Zhang and Aleksandrs Slivkins},
      year={2024},
      eprint={2403.15371},
      archivePrefix={arXiv},
      primaryClass={cs.LG},
      url={https://arxiv.org/abs/2403.15371}, 
}

@misc{malinin2021uncertaintygradientboostingensembles,
      title={Uncertainty in Gradient Boosting via Ensembles}, 
      author={Andrey Malinin and Liudmila Prokhorenkova and Aleksei Ustimenko},
      year={2021},
      eprint={2006.10562},
      archivePrefix={arXiv},
      primaryClass={cs.LG},
      url={https://arxiv.org/abs/2006.10562}, 
}

@misc{falck2024incontextlearninglargelanguage,
      title={Is In-Context Learning in Large Language Models Bayesian? A Martingale Perspective}, 
      author={Fabian Falck and Ziyu Wang and Chris Holmes},
      year={2024},
      eprint={2406.00793},
      archivePrefix={arXiv},
      primaryClass={stat.ML},
      url={https://arxiv.org/abs/2406.00793}, 
}

@misc{xie2022explanationincontextlearningimplicit,
      title={An Explanation of In-context Learning as Implicit Bayesian Inference}, 
      author={Sang Michael Xie and Aditi Raghunathan and Percy Liang and Tengyu Ma},
      year={2022},
      eprint={2111.02080},
      archivePrefix={arXiv},
      primaryClass={cs.CL},
      url={https://arxiv.org/abs/2111.02080}, 
}

@misc{qwen2025qwen25technicalreport,
      title={Qwen2.5 Technical Report}, 
      author={Qwen and : and An Yang and Baosong Yang and Beichen Zhang and Binyuan Hui and Bo Zheng and Bowen Yu and Chengyuan Li and Dayiheng Liu and Fei Huang and Haoran Wei and Huan Lin and Jian Yang and Jianhong Tu and Jianwei Zhang and Jianxin Yang and Jiaxi Yang and Jingren Zhou and Junyang Lin and Kai Dang and Keming Lu and Keqin Bao and Kexin Yang and Le Yu and Mei Li and Mingfeng Xue and Pei Zhang and Qin Zhu and Rui Men and Runji Lin and Tianhao Li and Tianyi Tang and Tingyu Xia and Xingzhang Ren and Xuancheng Ren and Yang Fan and Yang Su and Yichang Zhang and Yu Wan and Yuqiong Liu and Zeyu Cui and Zhenru Zhang and Zihan Qiu},
      year={2025},
      eprint={2412.15115},
      archivePrefix={arXiv},
      primaryClass={cs.CL},
      url={https://arxiv.org/abs/2412.15115}, 
}

@misc{hendrycks2018baselinedetectingmisclassifiedoutofdistribution,
      title={A Baseline for Detecting Misclassified and Out-of-Distribution Examples in Neural Networks}, 
      author={Dan Hendrycks and Kevin Gimpel},
      year={2018},
      eprint={1610.02136},
      archivePrefix={arXiv},
      primaryClass={cs.NE},
      url={https://arxiv.org/abs/1610.02136}, 
}

@article{kuhn2023semantic,
  title={Semantic uncertainty: Linguistic invariances for uncertainty estimation in natural language generation},
  author={Kuhn, Lorenz and Gal, Yarin and Farquhar, Sebastian},
  journal={arXiv preprint arXiv:2302.09664},
  year={2023}
}

@article{ye2024exchangeable,
  title={Exchangeable Sequence Models Can Naturally Quantify Uncertainty Over Latent Concepts},
  author={Ye, Naimeng and Yang, Hanming and Siah, Andrew and Namkoong, Hongseok},
  journal={arXiv preprint arXiv:2408.03307},
  year={2024}
}

@article{jeon2024information,
  title={An information-theoretic analysis of in-context learning},
  author={Jeon, Hong Jun and Lee, Jason D and Lei, Qi and Van Roy, Benjamin},
  journal={arXiv preprint arXiv:2401.15530},
  year={2024}
}

@article{panwar2023context,
  title={In-context learning through the bayesian prism},
  author={Panwar, Madhur and Ahuja, Kabir and Goyal, Navin},
  journal={arXiv preprint arXiv:2306.04891},
  year={2023}
}

@article{muller2021transformers,
  title={Transformers can do bayesian inference},
  author={M{\"u}ller, Samuel and Hollmann, Noah and Arango, Sebastian Pineda and Grabocka, Josif and Hutter, Frank},
  journal={arXiv preprint arXiv:2112.10510},
  year={2021}
}

@article{ling2024uncertainty,
  title={Uncertainty quantification for in-context learning of large language models},
  author={Ling, Chen and Zhao, Xujiang and Zhang, Xuchao and Cheng, Wei and Liu, Yanchi and Sun, Yiyou and Oishi, Mika and Osaki, Takao and Matsuda, Katsushi and Ji, Jie and others},
  journal={arXiv preprint arXiv:2402.10189},
  year={2024}
}

@inproceedings{wimmer2023quantifying,
  title={Quantifying aleatoric and epistemic uncertainty in machine learning: Are conditional entropy and mutual information appropriate measures?},
  author={Wimmer, Lisa and Sale, Yusuf and Hofman, Paul and Bischl, Bernd and H{\"u}llermeier, Eyke},
  booktitle={Uncertainty in artificial intelligence},
  pages={2282--2292},
  year={2023},
  organization={PMLR}
}

@article{zhao2024probing,
  title={Probing the decision boundaries of in-context learning in large language models},
  author={Zhao, Siyan and Nguyen, Tung and Grover, Aditya},
  journal={Advances in Neural Information Processing Systems},
  volume={37},
  pages={130408--130432},
  year={2024}
}

@article{liu2024towards,
  title={Towards better understanding of in-context learning ability from in-context uncertainty quantification},
  author={Liu, Shang and Cai, Zhongze and Chen, Guanting and Li, Xiaocheng},
  journal={arXiv preprint arXiv:2405.15115},
  year={2024}
}

@article{xiao2021hallucination,
  title={On hallucination and predictive uncertainty in conditional language generation},
  author={Xiao, Yijun and Wang, William Yang},
  journal={arXiv preprint arXiv:2103.15025},
  year={2021}
}

@article{malinin2020uncertainty,
  title={Uncertainty estimation in autoregressive structured prediction},
  author={Malinin, Andrey and Gales, Mark},
  journal={arXiv preprint arXiv:2002.07650},
  year={2020}
}

@article{lu2021fantastically,
  title={Fantastically ordered prompts and where to find them: Overcoming few-shot prompt order sensitivity},
  author={Lu, Yao and Bartolo, Max and Moore, Alastair and Riedel, Sebastian and Stenetorp, Pontus},
  journal={arXiv preprint arXiv:2104.08786},
  year={2021}
}

@inproceedings{zhao2021calibrate,
  title={Calibrate before use: Improving few-shot performance of language models},
  author={Zhao, Zihao and Wallace, Eric and Feng, Shi and Klein, Dan and Singh, Sameer},
  booktitle={International conference on machine learning},
  pages={12697--12706},
  year={2021},
  organization={PMLR}
}

@article{zhang2023deep,
  title={Deep de finetti: Recovering topic distributions from large language models},
  author={Zhang, Liyi and McCoy, R Thomas and Sumers, Theodore R and Zhu, Jian-Qiao and Griffiths, Thomas L},
  journal={arXiv preprint arXiv:2312.14226},
  year={2023}
}

@book{lattimore2020bandit,
  title={Bandit algorithms},
  author={Lattimore, Tor and Szepesv{\'a}ri, Csaba},
  year={2020},
  publisher={Cambridge University Press}
}

@inproceedings{azar2017minimax,
  title={Minimax regret bounds for reinforcement learning},
  author={Azar, Mohammad Gheshlaghi and Osband, Ian and Munos, R{\'e}mi},
  booktitle={International conference on machine learning},
  pages={263--272},
  year={2017},
  organization={PMLR}
}

@article{osband2013more,
  title={(More) efficient reinforcement learning via posterior sampling},
  author={Osband, Ian and Russo, Daniel and Van Roy, Benjamin},
  journal={Advances in Neural Information Processing Systems},
  volume={26},
  year={2013}
}

@inproceedings{sasso2023posterior,
  title={Posterior sampling for deep reinforcement learning},
  author={Sasso, Remo and Conserva, Michelangelo and Rauber, Paulo},
  booktitle={International Conference on Machine Learning},
  pages={30042--30061},
  year={2023},
  organization={PMLR}
}

@article{monea2024llms,
  title={LLMs Are In-Context Reinforcement Learners},
  author={Monea, Giovanni and Bosselut, Antoine and Brantley, Kiant{\'e} and Artzi, Yoav},
  journal={arXiv preprint arXiv:2410.05362},
  year={2024}
}

@article{nie2024evolve,
  title={EVOLvE: Evaluating and Optimizing LLMs For Exploration},
  author={Nie, Allen and Su, Yi and Chang, Bo and Lee, Jonathan N and Chi, Ed H and Le, Quoc V and Chen, Minmin},
  journal={arXiv preprint arXiv:2410.06238},
  year={2024}
}

@article{wu2023smartplay,
  title={Smartplay: A benchmark for llms as intelligent agents},
  author={Wu, Yue and Tang, Xuan and Mitchell, Tom M and Li, Yuanzhi},
  journal={arXiv preprint arXiv:2310.01557},
  year={2023}
}

@article{arumugam2025toward,
  title={Toward Efficient Exploration by Large Language Model Agents},
  author={Arumugam, Dilip and Griffiths, Thomas L},
  journal={arXiv preprint arXiv:2504.20997},
  year={2025}
}

@article{geifman2017selective,
  title={Selective classification for deep neural networks},
  author={Geifman, Yonatan and El-Yaniv, Ran},
  journal={Advances in neural information processing systems},
  volume={30},
  year={2017}
}

@inproceedings{wen2024mitigating,
  title={Mitigating overconfidence in large language models: A behavioral lens on confidence estimation and calibration},
  author={Wen, Bingbing and Xu, Chenjun and Wolfe, Robert and Wang, Lucy Lu and Howe, Bill and others},
  booktitle={NeurIPS 2024 Workshop on Behavioral Machine Learning},
  year={2024}
}

@misc{yang2024bayesianlowrankadaptationlarge,
      title={Bayesian Low-rank Adaptation for Large Language Models}, 
      author={Adam X. Yang and Maxime Robeyns and Xi Wang and Laurence Aitchison},
      year={2024},
      eprint={2308.13111},
      archivePrefix={arXiv},
      primaryClass={cs.LG},
      url={https://arxiv.org/abs/2308.13111}, 
}

@inproceedings{definetti1929funzione,
  author    = {Bruno de Finetti},
  title     = {Funzione caratteristica di un fenomeno aleatorio},
  booktitle = {Atti del Congresso Internazionale dei Matematici},
  pages     = {179--190},
  year      = {1929},
  language  = {Italian}
}

@misc{snoek2012practicalbayesianoptimizationmachine,
      title={Practical Bayesian Optimization of Machine Learning Algorithms}, 
      author={Jasper Snoek and Hugo Larochelle and Ryan P. Adams},
      year={2012},
      eprint={1206.2944},
      archivePrefix={arXiv},
      primaryClass={stat.ML},
      url={https://arxiv.org/abs/1206.2944}, 
}

@article{mackay1992information,
  title={Information-based objective functions for active data selection},
  author={MacKay, David JC},
  journal={Neural computation},
  volume={4},
  number={4},
  pages={590--604},
  year={1992},
  publisher={MIT Press One Rogers Street, Cambridge, MA 02142-1209, USA journals-info~…}
}

@inproceedings{clark-etal-2019-boolq,
    title = "{B}ool{Q}: Exploring the Surprising Difficulty of Natural Yes/No Questions",
    author = "Clark, Christopher  and
      Lee, Kenton  and
      Chang, Ming-Wei  and
      Kwiatkowski, Tom  and
      Collins, Michael  and
      Toutanova, Kristina",
    editor = "Burstein, Jill  and
      Doran, Christy  and
      Solorio, Thamar",
    booktitle = "Proceedings of the 2019 Conference of the North {A}merican Chapter of the Association for Computational Linguistics: Human Language Technologies, Volume 1 (Long and Short Papers)",
    month = jun,
    year = "2019",
    address = "Minneapolis, Minnesota",
    publisher = "Association for Computational Linguistics",
    url = "https://aclanthology.org/N19-1300/",
    doi = "10.18653/v1/N19-1300",
    pages = "2924--2936",
    abstract = "In this paper we study yes/no questions that are naturally occurring {---} meaning that they are generated in unprompted and unconstrained settings. We build a reading comprehension dataset, BoolQ, of such questions, and show that they are unexpectedly challenging. They often query for complex, non-factoid information, and require difficult entailment-like inference to solve. We also explore the effectiveness of a range of transfer learning baselines. We find that transferring from entailment data is more effective than transferring from paraphrase or extractive QA data, and that it, surprisingly, continues to be very beneficial even when starting from massive pre-trained language models such as BERT. Our best method trains BERT on MultiNLI and then re-trains it on our train set. It achieves 80.4{\%} accuracy compared to 90{\%} accuracy of human annotators (and 62{\%} majority-baseline), leaving a significant gap for future work."
}

@inproceedings{jin-etal-2019-pubmedqa,
    title = "{P}ub{M}ed{QA}: A Dataset for Biomedical Research Question Answering",
    author = "Jin, Qiao  and
      Dhingra, Bhuwan  and
      Liu, Zhengping  and
      Cohen, William  and
      Lu, Xinghua",
    editor = "Inui, Kentaro  and
      Jiang, Jing  and
      Ng, Vincent  and
      Wan, Xiaojun",
    booktitle = "Proceedings of the 2019 Conference on Empirical Methods in Natural Language Processing and the 9th International Joint Conference on Natural Language Processing (EMNLP-IJCNLP)",
    month = nov,
    year = "2019",
    address = "Hong Kong, China",
    publisher = "Association for Computational Linguistics",
    url = "https://aclanthology.org/D19-1259/",
    doi = "10.18653/v1/D19-1259",
    pages = "2567--2577",
    abstract = "We introduce PubMedQA, a novel biomedical question answering (QA) dataset collected from PubMed abstracts. The task of PubMedQA is to answer research questions with yes/no/maybe (e.g.: Do preoperative statins reduce atrial fibrillation after coronary artery bypass grafting?) using the corresponding abstracts. PubMedQA has 1k expert-annotated, 61.2k unlabeled and 211.3k artificially generated QA instances. Each PubMedQA instance is composed of (1) a question which is either an existing research article title or derived from one, (2) a context which is the corresponding abstract without its conclusion, (3) a long answer, which is the conclusion of the abstract and, presumably, answers the research question, and (4) a yes/no/maybe answer which summarizes the conclusion. PubMedQA is the first QA dataset where reasoning over biomedical research texts, especially their quantitative contents, is required to answer the questions. Our best performing model, multi-phase fine-tuning of BioBERT with long answer bag-of-word statistics as additional supervision, achieves 68.1{\%} accuracy, compared to single human performance of 78.0{\%} accuracy and majority-baseline of 55.2{\%} accuracy, leaving much room for improvement. PubMedQA is publicly available at \url{https://pubmedqa.github.io}."
}

@article{berti2004limit,
  title={LIMIT THEOREMS FOR A CLASS OF IDENTICALLY DISTRIBUTED RANDOM VARIABLES},
  author={Berti, Patrizia and Pratelli, Luca and Rigo, Pietro},
  journal={The Annals of Probability},
  volume={32},
  number={3A},
  pages={2029--2052},
  year={2004}
}

@inproceedings{auer2000using,
  title={Using upper confidence bounds for online learning},
  author={Auer, Peter},
  booktitle={Proceedings 41st annual symposium on foundations of computer science},
  pages={270--279},
  year={2000},
  organization={IEEE}
}

@article{sankararaman2022bayesformer,
  title={Bayesformer: Transformer with uncertainty estimation},
  author={Sankararaman, Karthik Abinav and Wang, Sinong and Fang, Han},
  journal={arXiv preprint arXiv:2206.00826},
  year={2022}
}

@article{balabanov2024uncertainty,
  title={Uncertainty quantification in fine-tuned LLMs using LoRA ensembles},
  author={Balabanov, Oleksandr and Linander, Hampus},
  journal={arXiv preprint arXiv:2402.12264},
  year={2024}
}

@article{onal2024gaussian,
  title={Gaussian stochastic weight averaging for Bayesian low-rank adaptation of large language models},
  author={Onal, Emre and Fl{\"o}ge, Klemens and Caldwell, Emma and Sheverdin, Arsen and Fortuin, Vincent},
  journal={arXiv preprint arXiv:2405.03425},
  year={2024}
}

@article{fong2023martingale,
  title={Martingale posterior distributions},
  author={Fong, Edwin and Holmes, Chris and Walker, Stephen G},
  journal={Journal of the Royal Statistical Society Series B: Statistical Methodology},
  volume={85},
  number={5},
  pages={1357--1391},
  year={2023},
  publisher={Oxford University Press US}
}

@article{lee2023martingale,
  title={Martingale posterior neural processes},
  author={Lee, Hyungi and Yun, Eunggu and Nam, Giung and Fong, Edwin and Lee, Juho},
  journal={arXiv preprint arXiv:2304.09431},
  year={2023}
}

@article{nguyen2022transformer,
  title={Transformer neural processes: Uncertainty-aware meta learning via sequence modeling},
  author={Nguyen, Tung and Grover, Aditya},
  journal={arXiv preprint arXiv:2207.04179},
  year={2022}
}

@misc{hendrycks2022scalingoutofdistributiondetectionrealworld,
      title={Scaling Out-of-Distribution Detection for Real-World Settings}, 
      author={Dan Hendrycks and Steven Basart and Mantas Mazeika and Andy Zou and Joe Kwon and Mohammadreza Mostajabi and Jacob Steinhardt and Dawn Song},
      year={2022},
      eprint={1911.11132},
      archivePrefix={arXiv},
      primaryClass={cs.CV},
      url={https://arxiv.org/abs/1911.11132}, 
}

@article{requeima2024llm,
  title={Llm processes: Numerical predictive distributions conditioned on natural language},
  author={Requeima, James and Bronskill, John and Choi, Dami and Turner, Richard and Duvenaud, David K},
  journal={Advances in Neural Information Processing Systems},
  volume={37},
  pages={109609--109671},
  year={2024}
}

@book{wilcox2011introduction,
  title={Introduction to robust estimation and hypothesis testing},
  author={Wilcox, Rand R},
  year={2011},
  publisher={Academic press}
}

@incollection{rasmussen2003gaussian,
  title={Gaussian processes in machine learning},
  author={Rasmussen, Carl Edward},
  booktitle={Summer school on machine learning},
  pages={63--71},
  year={2003},
  publisher={Springer}
}

@inproceedings{srinivas2010gaussian,
  title={Gaussian process optimization in the bandit setting: no regret and experimental design},
  author={Srinivas, Niranjan and Krause, Andreas and Kakade, Sham and Seeger, Matthias},
  booktitle={Proceedings of the 27th International Conference on International Conference on Machine Learning},
  pages={1015--1022},
  year={2010}
}

@article{hensman2013gaussian,
  title={Gaussian processes for big data},
  author={Hensman, James and Fusi, Nicolo and Lawrence, Neil D},
  journal={arXiv preprint arXiv:1309.6835},
  year={2013}
}

@article{hernandez2014predictive,
  title={Predictive entropy search for efficient global optimization of black-box functions},
  author={Hern{\'a}ndez-Lobato, Jos{\'e} Miguel and Hoffman, Matthew W and Ghahramani, Zoubin},
  journal={Advances in neural information processing systems},
  volume={27},
  year={2014}
}

@inproceedings{chen2023calibrating,
  title={Calibrating Transformers via Sparse Gaussian Processes},
  author={Chen, Wenlong and Li, Yingzhen},
  booktitle={The Eleventh International Conference on Learning Representations},
  year={2023}
}

@article{lakshminarayanan2017simple,
  title={Simple and scalable predictive uncertainty estimation using deep ensembles},
  author={Lakshminarayanan, Balaji and Pritzel, Alexander and Blundell, Charles},
  journal={Advances in neural information processing systems},
  volume={30},
  year={2017}
}

@article{hendrycks2018deep,
  title={Deep anomaly detection with outlier exposure},
  author={Hendrycks, Dan and Mazeika, Mantas and Dietterich, Thomas},
  journal={arXiv preprint arXiv:1812.04606},
  year={2018}
}

@article{malinin2018predictive,
  title={Predictive uncertainty estimation via prior networks},
  author={Malinin, Andrey and Gales, Mark},
  journal={Advances in neural information processing systems},
  volume={31},
  year={2018}
}

@article{hendrycks2020pretrained,
  title={Pretrained transformers improve out-of-distribution robustness},
  author={Hendrycks, Dan and Liu, Xiaoyuan and Wallace, Eric and Dziedzic, Adam and Krishnan, Rishabh and Song, Dawn},
  journal={arXiv preprint arXiv:2004.06100},
  year={2020}
}

@article{zhou2021contrastive,
  title={Contrastive out-of-distribution detection for pretrained transformers},
  author={Zhou, Wenxuan and Liu, Fangyu and Chen, Muhao},
  journal={arXiv preprint arXiv:2104.08812},
  year={2021}
}

@article{ming2022delving,
  title={Delving into out-of-distribution detection with vision-language representations},
  author={Ming, Yifei and Cai, Ziyang and Gu, Jiuxiang and Sun, Yiyou and Li, Wei and Li, Yixuan},
  journal={Advances in neural information processing systems},
  volume={35},
  pages={35087--35102},
  year={2022}
}

@article{xu2021unsupervised,
  title={Unsupervised out-of-domain detection via pre-trained transformers},
  author={Xu, Keyang and Ren, Tongzheng and Zhang, Shikun and Feng, Yihao and Xiong, Caiming},
  journal={arXiv preprint arXiv:2106.00948},
  year={2021}
}

@article{ren2022out,
  title={Out-of-distribution detection and selective generation for conditional language models},
  author={Ren, Jie and Luo, Jiaming and Zhao, Yao and Krishna, Kundan and Saleh, Mohammad and Lakshminarayanan, Balaji and Liu, Peter J},
  journal={arXiv preprint arXiv:2209.15558},
  year={2022}
}

@article{rainforth2024modern,
  title={Modern Bayesian experimental design},
  author={Rainforth, Tom and Foster, Adam and Ivanova, Desi R and Bickford Smith, Freddie},
  journal={Statistical Science},
  volume={39},
  number={1},
  pages={100--114},
  year={2024},
  publisher={Institute of Mathematical Statistics}
}

@article{palmer2015predictive,
  title={Predictive information in a sensory population},
  author={Palmer, Stephanie E and Marre, Olivier and Berry, Michael J and Bialek, William},
  journal={Proceedings of the National Academy of Sciences},
  volume={112},
  number={22},
  pages={6908--6913},
  year={2015},
  publisher={National Academy of Sciences}
}

@incollection{lee1998independent,
  title={Independent component analysis},
  author={Lee, Te-Won},
  booktitle={Independent component analysis: Theory and applications},
  pages={27--66},
  year={1998},
  publisher={Springer}
}

@article{hlavavckova2007causality,
  title={Causality detection based on information-theoretic approaches in time series analysis},
  author={Hlav{\'a}{\v{c}}kov{\'a}-Schindler, Katerina and Palu{\v{s}}, Milan and Vejmelka, Martin and Bhattacharya, Joydeep},
  journal={Physics Reports},
  volume={441},
  number={1},
  pages={1--46},
  year={2007},
  publisher={Elsevier}
}

@inproceedings{walters2009estimation,
  title={Estimation of mutual information: A survey},
  author={Walters-Williams, Janett and Li, Yan},
  booktitle={Rough Sets and Knowledge Technology: 4th International Conference, RSKT 2009, Gold Coast, Australia, July 14-16, 2009. Proceedings 4},
  pages={389--396},
  year={2009},
  organization={Springer}
}

@inproceedings{poole2019variational,
  title={On variational bounds of mutual information},
  author={Poole, Ben and Ozair, Sherjil and Van Den Oord, Aaron and Alemi, Alex and Tucker, George},
  booktitle={International conference on machine learning},
  pages={5171--5180},
  year={2019},
  organization={PMLR}
}

@article{belghazi2018mine,
  title={Mine: mutual information neural estimation},
  author={Belghazi, Mohamed Ishmael and Baratin, Aristide and Rajeswar, Sai and Ozair, Sherjil and Bengio, Yoshua and Courville, Aaron and Hjelm, R Devon},
  journal={arXiv preprint arXiv:1801.04062},
  year={2018}
}

@article{oord2018representation,
  title={Representation learning with contrastive predictive coding},
  author={Oord, Aaron van den and Li, Yazhe and Vinyals, Oriol},
  journal={arXiv preprint arXiv:1807.03748},
  year={2018}
}

@article{shorinwa2025survey,
  title={A survey on uncertainty quantification of large language models: Taxonomy, open research challenges, and future directions},
  author={Shorinwa, Ola and Mei, Zhiting and Lidard, Justin and Ren, Allen Z and Majumdar, Anirudha},
  journal={ACM Computing Surveys},
  year={2025},
  publisher={ACM New York, NY}
}

@article{abbasli2025comparing,
  title={Comparing Uncertainty Measurement and Mitigation Methods for Large Language Models: A Systematic Review},
  author={Abbasli, Toghrul and Toyoda, Kentaroh and Wang, Yuan and Witt, Leon and Ali, Muhammad Asif and Miao, Yukai and Li, Dan and Wei, Qingsong},
  journal={arXiv preprint arXiv:2504.18346},
  year={2025}
}

@article{kadavath2022language,
  title={Language models (mostly) know what they know},
  author={Kadavath, Saurav and Conerly, Tom and Askell, Amanda and Henighan, Tom and Drain, Dawn and Perez, Ethan and Schiefer, Nicholas and Hatfield-Dodds, Zac and DasSarma, Nova and Tran-Johnson, Eli and others},
  journal={arXiv preprint arXiv:2207.05221},
  year={2022}
}

@article{fadeeva2024fact,
  title={Fact-checking the output of large language models via token-level uncertainty quantification},
  author={Fadeeva, Ekaterina and Rubashevskii, Aleksandr and Shelmanov, Artem and Petrakov, Sergey and Li, Haonan and Mubarak, Hamdy and Tsymbalov, Evgenii and Kuzmin, Gleb and Panchenko, Alexander and Baldwin, Timothy and others},
  journal={arXiv preprint arXiv:2403.04696},
  year={2024}
}

@inproceedings{guo2017calibration,
  title={On calibration of modern neural networks},
  author={Guo, Chuan and Pleiss, Geoff and Sun, Yu and Weinberger, Kilian Q},
  booktitle={International conference on machine learning},
  pages={1321--1330},
  year={2017},
  organization={PMLR}
}

@article{mukhoti2020calibrating,
  title={Calibrating deep neural networks using focal loss},
  author={Mukhoti, Jishnu and Kulharia, Viveka and Sanyal, Amartya and Golodetz, Stuart and Torr, Philip and Dokania, Puneet},
  journal={Advances in neural information processing systems},
  volume={33},
  pages={15288--15299},
  year={2020}
}

@article{xie2025empirical,
  title={An empirical analysis of uncertainty in large language model evaluations},
  author={Xie, Qiujie and Li, Qingqiu and Yu, Zhuohao and Zhang, Yuejie and Zhang, Yue and Yang, Linyi},
  journal={arXiv preprint arXiv:2502.10709},
  year={2025}
}

@article{cecere2025monte,
  title={Monte Carlo Temperature: a robust sampling strategy for LLM's uncertainty quantification methods},
  author={Cecere, Nicola and Bacciu, Andrea and Tob{\'\i}as, Ignacio Fern{\'a}ndez and Mantrach, Amin},
  journal={arXiv preprint arXiv:2502.18389},
  year={2025}
}

@article{xie2024calibrating,
  title={Calibrating language models with adaptive temperature scaling},
  author={Xie, Johnathan and Chen, Annie S and Lee, Yoonho and Mitchell, Eric and Finn, Chelsea},
  journal={arXiv preprint arXiv:2409.19817},
  year={2024}
}

@article{band2024linguistic,
  title={Linguistic calibration of long-form generations},
  author={Band, Neil and Li, Xuechen and Ma, Tengyu and Hashimoto, Tatsunori},
  journal={arXiv preprint arXiv:2404.00474},
  year={2024}
}

@article{lin2022teaching,
  title={Teaching models to express their uncertainty in words},
  author={Lin, Stephanie and Hilton, Jacob and Evans, Owain},
  journal={arXiv preprint arXiv:2205.14334},
  year={2022}
}

@article{mielke2022reducing,
  title={Reducing conversational agents’ overconfidence through linguistic calibration},
  author={Mielke, Sabrina J and Szlam, Arthur and Dinan, Emily and Boureau, Y-Lan},
  journal={Transactions of the Association for Computational Linguistics},
  volume={10},
  pages={857--872},
  year={2022},
  publisher={MIT Press One Broadway, 12th Floor, Cambridge, Massachusetts 02142, USA~…}
}

@article{tao2024trust,
  title={When to trust llms: Aligning confidence with response quality},
  author={Tao, Shuchang and Yao, Liuyi and Ding, Hanxing and Xie, Yuexiang and Cao, Qi and Sun, Fei and Gao, Jinyang and Shen, Huawei and Ding, Bolin},
  journal={arXiv preprint arXiv:2404.17287},
  year={2024}
}

@article{ao2024css,
  title={CSS: contrastive semantic similarity for uncertainty quantification of LLMs},
  author={Ao, Shuang and Rueger, Stefan and Siddharthan, Advaith},
  journal={arXiv preprint arXiv:2406.03158},
  year={2024}
}

@article{lin2024generating,
  title={Generating with confidence: Uncertainty quantification for black-box large language models, 2024},
  author={Lin, Zhen and Trivedi, Shubhendu and Sun, Jimeng},
  journal={URL https://arxiv. org/abs/2305},
  volume={19187},
  year={2024}
}

@article{shafer2008tutorial,
  title={A tutorial on conformal prediction.},
  author={Shafer, Glenn and Vovk, Vladimir},
  journal={Journal of Machine Learning Research},
  volume={9},
  number={3},
  year={2008}
}

@article{ye2024benchmarking,
  title={Benchmarking llms via uncertainty quantification},
  author={Ye, Fanghua and Yang, Mingming and Pang, Jianhui and Wang, Longyue and Wong, Derek and Yilmaz, Emine and Shi, Shuming and Tu, Zhaopeng},
  journal={Advances in Neural Information Processing Systems},
  volume={37},
  pages={15356--15385},
  year={2024}
}

@article{kossen2024semantic,
  title={Semantic entropy probes: Robust and cheap hallucination detection in llms},
  author={Kossen, Jannik and Han, Jiatong and Razzak, Muhammed and Schut, Lisa and Malik, Shreshth and Gal, Yarin},
  journal={arXiv preprint arXiv:2406.15927},
  year={2024}
}

@article{ahdritz2024distinguishing,
  title={Distinguishing the knowable from the unknowable with language models},
  author={Ahdritz, Gustaf and Qin, Tian and Vyas, Nikhil and Barak, Boaz and Edelman, Benjamin L},
  journal={arXiv preprint arXiv:2402.03563},
  year={2024}
}

@article{fortini2025exchangeability,
  title={Exchangeability, prediction and predictive modeling in Bayesian statistics},
  author={Fortini, Sandra and Petrone, Sonia},
  journal={Statistical Science},
  volume={40},
  number={1},
  pages={40--67},
  year={2025},
  publisher={Institute of Mathematical Statistics}
}

@article{fortini2000exchangeability,
  title={Exchangeability, predictive distributions and parametric models},
  author={Fortini, Sandra and Ladelli, Lucia and Regazzini, Eugenio},
  journal={Sankhy{\=a}: The Indian Journal of Statistics, Series A},
  pages={86--109},
  year={2000},
  publisher={JSTOR}
}

@misc{hendrycks2021measuringmassivemultitasklanguage,
      title={Measuring Massive Multitask Language Understanding}, 
      author={Dan Hendrycks and Collin Burns and Steven Basart and Andy Zou and Mantas Mazeika and Dawn Song and Jacob Steinhardt},
      year={2021},
      eprint={2009.03300},
      archivePrefix={arXiv},
      primaryClass={cs.CY},
      url={https://arxiv.org/abs/2009.03300}, 
}

@inproceedings{vazhentsev-etal-2023-hybrid,
    title = "Hybrid Uncertainty Quantification for Selective Text Classification in Ambiguous Tasks",
    author = "Vazhentsev, Artem  and
      Kuzmin, Gleb  and
      Tsvigun, Akim  and
      Panchenko, Alexander  and
      Panov, Maxim  and
      Burtsev, Mikhail  and
      Shelmanov, Artem",
    editor = "Rogers, Anna  and
      Boyd-Graber, Jordan  and
      Okazaki, Naoaki",
    booktitle = "Proceedings of the 61st Annual Meeting of the Association for Computational Linguistics (Volume 1: Long Papers)",
    month = jul,
    year = "2023",
    address = "Toronto, Canada",
    publisher = "Association for Computational Linguistics",
    url = "https://aclanthology.org/2023.acl-long.652/",
    doi = "10.18653/v1/2023.acl-long.652",
    pages = "11659--11681",
    abstract = "Many text classification tasks are inherently ambiguous, which results in automatic systems having a high risk of making mistakes, in spite of using advanced machine learning models. For example, toxicity detection in user-generated content is a subjective task, and notions of toxicity can be annotated according to a variety of definitions that can be in conflict with one another. Instead of relying solely on automatic solutions, moderation of the most difficult and ambiguous cases can be delegated to human workers. Potential mistakes in automated classification can be identified by using uncertainty estimation (UE) techniques. Although UE is a rapidly growing field within natural language processing, we find that state-of-the-art UE methods estimate only epistemic uncertainty and show poor performance, or under-perform trivial methods for ambiguous tasks such as toxicity detection. We argue that in order to create robust uncertainty estimation methods for ambiguous tasks it is necessary to account also for aleatoric uncertainty. In this paper, we propose a new uncertainty estimation method that combines epistemic and aleatoric UE methods. We show that by using our hybrid method, we can outperform state-of-the-art UE methods for toxicity detection and other ambiguous text classification tasks."
}

@article{platt1999probabilistic,
  title={Probabilistic outputs for support vector machines and comparisons to regularized likelihood methods},
  author={Platt, John and others},
  journal={Advances in large margin classifiers},
  volume={10},
  number={3},
  pages={61--74},
  year={1999},
  publisher={Cambridge, MA}
}

@article{scikit-learn,
  title={Scikit-learn: Machine Learning in {P}ython},
  author={Pedregosa, F. and Varoquaux, G. and Gramfort, A. and Michel, V.
          and Thirion, B. and Grisel, O. and Blondel, M. and Prettenhofer, P.
          and Weiss, R. and Dubourg, V. and Vanderplas, J. and Passos, A. and
          Cournapeau, D. and Brucher, M. and Perrot, M. and Duchesnay, E.},
  journal={Journal of Machine Learning Research},
  volume={12},
  pages={2825--2830},
  year={2011}
}

@misc{madhusudhan2024llmsknowanswerinvestigatingabstention,
      title={Do LLMs Know When to NOT Answer? Investigating Abstention Abilities of Large Language Models}, 
      author={Nishanth Madhusudhan and Sathwik Tejaswi Madhusudhan and Vikas Yadav and Masoud Hashemi},
      year={2024},
      eprint={2407.16221},
      archivePrefix={arXiv},
      primaryClass={cs.CL},
      url={https://arxiv.org/abs/2407.16221}, 
}

@misc{kalai2025languagemodelshallucinateabstention,
      title={Why Language Models Hallucinate}, 
      author={Adam Tauman Kalai and Ofir Nachum and Santosh S. Vempala and Edwin Zhang},
      year={2025},
      eprint={2509.04664},
      archivePrefix={arXiv},
      primaryClass={cs.CL},
      url={https://arxiv.org/abs/2509.04664}, 
}

@misc{kirichenko2025abstentionbenchreasoningllmsfail,
      title={AbstentionBench: Reasoning LLMs Fail on Unanswerable Questions}, 
      author={Polina Kirichenko and Mark Ibrahim and Kamalika Chaudhuri and Samuel J. Bell},
      year={2025},
      eprint={2506.09038},
      archivePrefix={arXiv},
      primaryClass={cs.AI},
      url={https://arxiv.org/abs/2506.09038}, 
}

@article{wen2024characterizing,
  title={Characterizing LLM abstention behavior in science QA with context perturbations},
  author={Wen, Bingbing and Howe, Bill and Wang, Lucy Lu},
  journal={arXiv preprint arXiv:2404.12452},
  year={2024}
}

@article{feng2024don,
  title={Don't hallucinate, abstain: Identifying LLM knowledge gaps via multi-LLM collaboration},
  author={Feng, Shangbin and Shi, Weijia and Wang, Yike and Ding, Wenxuan and Balachandran, Vidhisha and Tsvetkov, Yulia},
  journal={arXiv preprint arXiv:2402.00367},
  year={2024}
}

@article{gui2024conformal,
  title={Conformal alignment: Knowing when to trust foundation models with guarantees},
  author={Gui, Yu and Jin, Ying and Ren, Zhimei},
  journal={Advances in Neural Information Processing Systems},
  volume={37},
  pages={73884--73919},
  year={2024}
}

@inproceedings{mozannar2020consistent,
  title={Consistent estimators for learning to defer to an expert},
  author={Mozannar, Hussein and Sontag, David},
  booktitle={International conference on machine learning},
  pages={7076--7087},
  year={2020},
  organization={PMLR}
}

@inproceedings{mozannar2022teaching,
  title={Teaching humans when to defer to a classifier via exemplars},
  author={Mozannar, Hussein and Satyanarayan, Arvind and Sontag, David},
  booktitle={Proceedings of the aaai conference on artificial intelligence},
  volume={36},
  number={5},
  pages={5323--5331},
  year={2022}
}
\bibliographystyle{plain}
\newpage
\appendix
\onecolumn

\begin{center}
\Large
\textbf{Appendix}
\end{center}

\etocdepthtag.toc{mtappendix}
\etocsettagdepth{mtchapter}{none}
\etocsettagdepth{mtappendix}{subsection}
{\small \tableofcontents}

\section{Proofs}
\subsection{Variational Uncertainty Decomposition}\label{appx:variational_decomposition_proofs}
\VariationalDecompOfUncertainty*
\begin{proof} We begin by decomposing the variational estimator $V_a$, noting that from $\mathcal{G}$ we get, $p(\y^*|\x^*,\theta) = p(\y^*|\x^*,\theta, \BU, \BZ, \data)$ and $\p(\theta|\x, \BU, \BZ, \data) = p(\theta, \BU, \BZ, \data)$:
\begin{equation*}
\begin{aligned}
    V_a(\y^* | \x^*, \BZ, \data) :=& -\mathbb{E}_{p(\BU | \BZ, \data)p(\y^* | \x^*, \BU, \BZ, \data)}[\log p(\y^* | \x^*, \BU, \BZ, \data)] \\
    =& -\mathbb{E}_{p(\BU | \BZ, \data)p(\y^* | \x^*, \theta) p(\theta | \BU, \BZ, \data)} \left[\log \frac{p(\y^* | \x^*, \theta) p(\theta | \BU, \BZ, \data)}{p(\theta | \y^*, \x^*, \BU, \BZ, \data)} \right] \quad (*) \\
    =& -\mathbb{E}_{p(\BU | \BZ, \data)p(\y^* | \x^*, \theta) p(\theta | \BU, \BZ, \data)} \left[\log p(\y^* | \x^*, \theta) \right] \\
    &+ \mathbb{E}_{p(\BU | \BZ, \data)p(\y^* | \x^*, \BU, \BZ, \data) p(\theta | \y^*, \x^*, \BU, \BZ, \data)} \left[\log \frac{p(\theta | \y^*, \x^*, \BU, \BZ, \data)}{p(\theta | \BU, \BZ, \data)} \right] \quad (**) \\
    =& \mathbb{E}_{p(\theta | \data)} \left[\mathbb{H}[p(\y^* | \x^*, \theta)] \right] \quad (***)  \\&+ \mathbb{E}_{p(\y^*, \BU | \x^*, \BZ, \data)} \left[ D_\mathrm{KL}[p(\theta | \y^*, \x^*, \BU, \BZ, \data) || p(\theta | \BU, \BZ, \data) ] \right] \\
    \geq& \mathbb{E}_{p(\theta | \data)} \left[\mathbb{H}[p(\y^* | \x^*, \theta)] \right] := U_a(\y^* | \x^*, \data).
\end{aligned}
\end{equation*}
Here steps $(*)$ and $(**)$ are obtained via Bayes' rule and the conditional independence assumption $\y^* \perp \BU | \theta, \x^*, \BZ, \mathcal{D}$ of DAG $\mathcal{G}$. Step $(***)$ is due to the assumption of the likelihood model $p(\y | \x, \theta)$ (and hence $p(\BU | \BZ, \theta)$) which do NOT treat $\x$ (and hence $\BZ$) as a random variable:
\begin{equation*}
\begin{aligned}
p(\theta | \BU, \BZ, \mathcal{D}) &= \frac{p(\BU | \BZ, \theta)p(\theta | \mathcal{D})}{p(\BU | \BZ, \mathcal{D})}, \\ 
\Rightarrow \quad \int p(\theta | \BU, \BZ, \mathcal{D}) p(\BU | \BZ, \mathcal{D}) d \BU &= \int p(\BU | \BZ, \theta)p(\theta | \mathcal{D}) d \BU = p(\theta | \mathcal{D}).
\end{aligned}
\end{equation*}
Note that by definition of mutual information, we have:
\begin{equation*}
\begin{aligned}
\mathbb{E}_{p(\BU | \BZ, \mathcal{D})}[\mathbb{I}[\y^*; \theta | \x^*, \BU, \BZ, \mathcal{D}]] &= \mathbb{E}_{p(\y^*, \BU | \x^*, \BZ, \data)} \left[ D_{\mathrm{KL}}[p(\theta | \y^*, \x^*, \BU, \BZ, \data) || p(\theta | \BU, \BZ, \data) ] \right] \\ 
&= \mathbb{E}_{p(\theta, \BU | \BZ, \data)} \left[ D_{\mathrm{KL}}[p(\y^* | \x^*, \theta, \BU, \BZ, \mathcal{D}) || p(\y^* | \x^*, \BU, \BZ, \data) ] \right] \\
&= \mathbb{E}_{p(\theta, \BU | \BZ, \data)} \left[ D_{\mathrm{KL}}[p(\y^* | \x^*, \theta) || p(\y^* | \x^*, \BU, \BZ, \data) ] \right], \quad ({*}{*}{*}{*})
\end{aligned}
\end{equation*}
where, again, step $({*}{*}{*}{*})$ is due to the conditional independence structure $\y^* \perp \BU | \theta, \x^*, \BZ, \mathcal{D}$ of DAG $\mathcal{G}$.
\end{proof}

\begin{proof}[Alternative Proof]
Firstly, it is useful to define the corresponding definition of the variational approximation to the epistemic uncertainty as:
\begin{align*}\label{eq:4_variational_epstemic_uncertainty}
    V_e(\y^*| \x^*, \BZ, \data) &:= \mathbb{I}(\y^*;\BU|\x^*, \BZ, \data) \\&= \mathbb{H}[\mathbb{E}_{p(\BU | \BZ, \data)}[p(\y^* | \x^*, \BU, \BZ, \data)]] - V_a(\y^* | \x^*, \BZ, \data)\\
    &= \mathbb{H}[p(\y^* | \x^*, \BZ, \data)] - V_a(\y^* | \x^*, \BZ, \data) \\
    &= \mathbb{H}[p(\y^* | \x^*, \data)] - V_a(\y^* | \x^*, \BZ, \data) \tag{$\ast$},
\end{align*}
where $(\ast)$ follows from the conditional independence assumption $\y^*\perp\BZ|\x,\data$. Therefore, we have 
\[V_e(\y^*| \x^*, \BZ, \data) - U_e(\y^*|\x^*,\data) = U_a(\y^*|\x^*,\data) - V_a(\y^* | \x^*, \BZ, \data) \tag{$\ast \ast$}\]
If we have the conditional independence relation $\y^*\perp \BU |\theta,\x, \BZ, \data$, then by the data processing inequality (DPE):
\[
V_e(\y^*| \x^*, \BZ, \data) := \mathbb{I}(\y^*;\BU|\x^*, \BZ, \data)\stackrel{\text{DPE}}{\leq} \mathbb{I}(\y^*;\theta|\x^*, \BZ, \data) \stackrel{(\dagger)}{=} \mathbb{I}(\y^*;\theta|\x^*, \data) =: U_e(\y^*|\x^*,\data),
\]
where $(\dagger)$ follows from the conditional independence relation $(\y^*,\theta)\perp \BZ |\x, \data$.
\end{proof}

\textbf{Remark}. From this information-theoretic perspective, we see that choosing an optimal $\BZ$, is equivalent to maximising the mutual information between $\y^*$ and $\BU$. This further motivates choosing $\BZ$ that repeats $\x^*$ or are perturbations of $\x^*$.

\subsection{Variational Estimates of Variance Decomposition}\label{appx:variance_decomp_proofs}
To prove Theorem \ref{thm:3_variatioanl_variance_decomposition}, we first prove the following lemma.
\begin{lemma}\label{lem:conditional_variance_inequality}
\textit{For any random variables $X,Y,Z$ where the conditional variances $\mathrm{Var}(Y|X)$ and $\mathrm{Var}(Y|X,Z)$ exist,}
\[
\mathbb{E}[\mathrm{Var}(Y|X)] = \mathbb{E}\Big[\mathrm{Var}(\mathbb{E}[Y|X,Z]|X)\Big] +  \mathbb{E}[\mathrm{Var}(Y|X,Z)] \geq \mathbb{E}[\mathrm{Var}(Y|X,Z)].
\]
\end{lemma}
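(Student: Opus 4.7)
The plan is to recognise this as the law of total variance applied conditionally on $X$, followed by taking an outer expectation over $X$. In its standard form, the law of total variance for random variables $A$ and $B$ reads $\mathrm{Var}(A) = \mathbb{E}[\mathrm{Var}(A|B)] + \mathrm{Var}(\mathbb{E}[A|B])$. I would apply this with $A = Y$ and $B = Z$, but with every expectation and variance taken \emph{conditionally on $X$}, i.e. under the law $\mathbb{P}(\cdot \mid X)$. This yields the pointwise identity
\begin{equation*}
\mathrm{Var}(Y|X) = \mathbb{E}\bigl[\mathrm{Var}(Y|X,Z) \bigm| X\bigr] + \mathrm{Var}\bigl(\mathbb{E}[Y|X,Z] \bigm| X\bigr).
\end{equation*}

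Next I would take an expectation over $X$ on both sides. By the tower property, $\mathbb{E}\bigl[\mathbb{E}[\mathrm{Var}(Y|X,Z) \mid X]\bigr] = \mathbb{E}[\mathrm{Var}(Y|X,Z)]$, which immediately gives the equality claimed in the lemma:
\begin{equation*}
\mathbb{E}[\mathrm{Var}(Y|X)] = \mathbb{E}\bigl[\mathrm{Var}(\mathbb{E}[Y|X,Z] \mid X)\bigr] + \mathbb{E}[\mathrm{Var}(Y|X,Z)].
\end{equation*}
The inequality part is then immediate from the non-negativity of variance: the term $\mathbb{E}[\mathrm{Var}(\mathbb{E}[Y|X,Z] \mid X)] \geq 0$, so dropping it yields $\mathbb{E}[\mathrm{Var}(Y|X)] \geq \mathbb{E}[\mathrm{Var}(Y|X,Z)]$.

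There is no real obstacle here beyond being careful about the conditioning. The only point that warrants attention is justifying the conditional law of total variance rigorously (i.e. verifying the relevant integrability assumptions are implied by the stated existence of $\mathrm{Var}(Y|X)$ and $\mathrm{Var}(Y|X,Z)$), but this is a standard measure-theoretic check rather than a substantive difficulty. Once the conditional decomposition is established, the lemma follows by one application of the tower rule and one invocation of non-negativity of variance. This lemma will then be directly applied in the proof of Theorem~\ref{thm:3_variatioanl_variance_decomposition} by instantiating $Y \leftarrow \y^*$, $X \leftarrow (\x^*, \BZ, \data)$, and $Z \leftarrow \BU$, together with the conditional independence $\y^* \perp \BU \mid \theta, \x^*, \BZ, \data$ from DAG $\mathcal{G}$ to identify the dominating side with $U_a^\Sigma(\y^* | \x^*, \data)$.
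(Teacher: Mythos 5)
Your proof is correct and rests on the same decomposition as the paper's: the identity $\mathbb{E}[\mathrm{Var}(Y|X)] = \mathbb{E}[\mathrm{Var}(\mathbb{E}[Y|X,Z]|X)] + \mathbb{E}[\mathrm{Var}(Y|X,Z)]$ followed by non-negativity of the first term. The only difference is presentational: you invoke the law of total variance under the conditional measure $\mathbb{P}(\cdot\mid X)$ and then apply the tower rule, whereas the paper derives the same identity from scratch by expanding second moments and using the $L^2$-orthogonality of $\mathbb{E}(Y|X,Z)-\mathbb{E}(Y|X)$ against $\mathbb{E}(Y|X)$ (i.e.\ it proves the conditional law of total variance by hand rather than citing it).
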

\begin{proof}
By the law of total expectation, $\mathbb{E}[\mathbb{E}(Y^2|X)] = \mathbb{E}[\mathbb{E}(Y^2|X,Z)] = \mathbb{E}[Y^2]$. Therefore,
\begin{align*}
    \mathbb{E}[\mathrm{Var}(Y|X)] - \mathbb{E}[\mathrm{Var}&(Y|X,Z)] = \mathbb{E}[\mathbb{E}(Y^2|X) - \mathbb{E}(Y|X)^2] - \mathbb{E}[\mathbb{E}(Y^2|X,Z) - \mathbb{E}(Y|X,Z)^2] \\
    &= \underbrace{\mathbb{E}[\mathbb{E}(Y^2|X)] - \mathbb{E}[\mathbb{E}(Y^2|X,Z)]}_{=0} - \mathbb{E}[\mathbb{E}(Y|X)^2] + \mathbb{E}[\mathbb{E}(Y|X,Z)^2] \\
    &= \mathbb{E}[\mathbb{E}(Y|X,Z)^2] - \mathbb{E}[\mathbb{E}(Y|X)^2].
\end{align*}
To show that the LHS is positive we first decompose $\mathbb{E}(Y|X,Z)$ as \[\mathbb{E}(Y|X,Z) = \big(\mathbb{E}(Y|X,Z) - \mathbb{E}(Y|X)\big) + \mathbb{E}(Y|X).\] Now, the expectation of the product of these terms is 0 as
\begin{align*}
   \mathbb{E}\big[\big(\mathbb{E}(Y|X,Z) - \mathbb{E}(Y|X)\big)\cdot\mathbb{E}(Y|X)\big] &= \mathbb{E}\Big[\mathbb{E}\big[\big(\mathbb{E}(Y|X,Z) - \mathbb{E}(Y|X)\big)\cdot\mathbb{E}(Y|X)|X\big]\Big] \\
   &= \mathbb{E}\Big[\mathbb{E}\big[\big(\mathbb{E}(Y|X,Z) - \mathbb{E}(Y|X)\big)|X\big]\cdot\mathbb{E}(Y|X)\Big] \\
   &= \mathbb{E}\Big[\big(\mathbb{E}(Y|X) - \mathbb{E}(Y|X)\big)\cdot\mathbb{E}(Y|X)\Big] \tag{$\ast$}\\
   &= \mathbb{E}\Big[0\cdot \mathbb{E}(Y|X)\Big] \\
   &= 0,
\end{align*}
where $(\ast)$ follows from the fact that $\sigma(X)\subset\sigma(X,Z)$. Therefore,
\begin{align*}
    \mathbb{E}[\mathbb{E}&(Y|X,Z)^2] =  \mathbb{E}\Big[\Big(\big(\mathbb{E}(Y|X,Z) - \mathbb{E}(Y|X)\big) + \mathbb{E}(Y|X)\Big)^2\Big] \\
    &= \mathbb{E}\Big[\underbrace{\big(\mathbb{E}(Y|X,Z) - \mathbb{E}(Y|X)\big)^2}_{=\mathrm{Var}(\mathbb{E}[Y|X,Z]|X)}\Big] + 2 \underbrace{\mathbb{E}\big[\big(\mathbb{E}(Y|X,Z) - \mathbb{E}(Y|X)\big)\cdot\mathbb{E}(Y|X)\big]}_{=0} + \mathbb{E}[\mathbb{E}(Y|X)^2] \\
    &= \mathbb{E}\Big[\mathrm{Var}(\mathbb{E}[Y|X,Z]|X) \Big] +  \mathbb{E}[\mathbb{E}(Y|X)^2].
\end{align*}
Finally, this gives \[\mathbb{E}[\mathrm{Var}(Y|X)] - \mathbb{E}[\mathrm{Var}(Y|X,Z)]  =\mathbb{E}[\mathbb{E}(Y|X,Z)^2] - \mathbb{E}[\mathbb{E}(Y|X)^2] = \mathbb{E}\Big[\mathrm{Var}(\mathbb{E}[Y|X,Z]|X) \Big]\geq 0,\]
where the final inequality follows from the non-negativity of variance.
\end{proof}
\VariationalDecompOfVariance*
\begin{proof}
    By the definition of $V^\Sigma_a$,
\begin{align*}
    V_a^\Sigma(\y^*|\x^*,\BZ,\data) &= \mathbb{E}_{p(\BU | \BZ, \data)}[\mathrm{Var}[\y^* | \x^*, \BU, \BZ, \data]] \\
    &= \mathbb{E}_{p(\BU | \x^*,\BZ, \data)}[\mathrm{Var}[\y^* | \x^*, \BU, \BZ, \data]] \\
    &\geq \mathbb{E}_{p(\BU, \theta | \x^*,\BZ, \data)}[\mathrm{Var}[\y^* | \x^*, \BU, \BZ, \theta, \data]] \tag{$\ast$} \\
    &= \mathbb{E}_{p(\BU, \theta | \x^*,\BZ, \data)}[\mathrm{Var}[\y^* | \x^*, \theta]] \tag{$\ast\ast$}\\
    &= \mathbb{E}_{p(\theta | \x^*,\BZ, \data)}[\mathrm{Var}[\y^* | \x^*, \theta]] \\
    &= \mathbb{E}_{p(\theta | \data)}[\mathrm{Var}[\y^* | \x^*, \theta]] \\
    &=  U^\Sigma_a(\y^* | \x^*, \data).
\end{align*}
Here, $(\ast)$ follows from Lemma \ref{lem:conditional_variance_inequality} and $(\ast\ast)$ follows from the conditional independence relation $\y^* \perp \BZ, \BU, \data | \x^*, \theta$.
\end{proof}

\paragraph{Remark.} From Lemma \ref{lem:conditional_variance_inequality}, we also obtain that the discrepancy between $V_a^\Sigma(\y^*|\x^*,\BZ,\data)$ and $U^\Sigma_a(\y^* | \x^*, \data)$ is:
\begin{align*}
\mathbb{E}\Big[\mathrm{Var}(\mathbb{E}[\y^*|\theta,\x^*, &\BU, \BZ, \data]|\x^* ,\BU, \BZ, \data) \Big | \x^*, \BZ, \data\Big] \\ &= \mathbb{E}_{p(\BU|\BZ,\data)}\Big[\mathrm{Var}_{p(\theta|\BU,\BZ, \data)}(\mathbb{E}[\y^*|\theta,\x^*]| \BU, \BZ, \data) \Big | \BZ, \data \Big].     
\end{align*}

\section{Theoretical Examples}\label{appx:theoretical_examples}
\subsection{Bayesian Linear Regression}
Consider a linear regression model with homogeneous output noise variance. Namely, we assume a normal prior $p(\theta) = \mathcal{N}(\theta; \bm{0}, \lambda^{-1} \mathbf{I}_d)$, and the likelihood model is $p(\y | \x, \theta) := \mathcal{N}(\y; \theta^\top \x, \sigma^2)$. Denote $\X = [\x_1, ..., \x_n]^\top \in \mathbb{R}^{n \times d}$ and $\BZ = [\bz_1, ..., \bz_m]^\top \in \mathbb{R}^{m \times d}$. Now consider the exact posterior predictive distributions which can be shown as:
\begin{equation*}
\begin{aligned}
    p(\theta | \data) &= \mathcal{N}(\theta; \bm{\mu}, \Lambda^{-1}), \quad \Lambda := \sigma^{-2} \X^\top \X + \lambda \mathbf{I}_d, \quad \bm{\mu} := \Lambda^{-1}\X^T y,\\
    p(\y^* | \x^*, \data) &= \mathcal{N}(\y^*; \bm{\mu}^\top \x^*, (\x^*)^\top\Lambda^{-1} \x^* + \sigma^2).
\end{aligned}
\end{equation*}
Then using the closed-form expressions for the entropy of a Gaussian distribution, it is straightforward to show that for arbitrary $\y^*, \x^*$ and $\data$:
\begin{equation*}
\begin{aligned}
U_a(\y^* | \x^*, \data) =& \frac{1}{2}(1 + \log 2\pi \sigma^2), \\
U_e(\y^* | \x^*, \data) =& \frac{1}{2} \log ( (\x^*)^\top\Lambda^{-1} \x^* + \sigma^2) - \frac{1}{2} \log \sigma^2, 
\end{aligned}
\end{equation*}

Adding the auxiliary data $\BZ, \BU$:
\begin{equation*}
\begin{aligned}
    p(\theta | \BU, \BZ, \data) &= \mathcal{N}(\theta; \bm{\mu}(\BZ), \Lambda^{-1}(\BZ)), \quad \Lambda(\BZ) := \sigma^{-2} (\X^\top \X + \BZ^\top \BZ) + \lambda \mathbf{I}_d,\\
    p(\y^* | \x^*, \BU, \BZ, \data) &= \mathcal{N}(\y^*; \bm{\mu}(\BZ)^\top \x^*, (\x^*)^\top\Lambda^{-1}(\BZ) \x^* + \sigma^2 \mathbf{I}_d).
\end{aligned}
\end{equation*}
Since the variance of $p(\y^* | \x^*, \BU, \BZ, \data)$ does not depend on $\y^*$ and $\BU$, this leads to
\begin{equation*}
\begin{aligned}
V_a(\y^* | \x^*, \BZ, \data) &= \frac{1}{2}(1 + \log 2 \pi) + \frac{1}{2} \log ((\x^*)^\top\Lambda^{-1}(\BZ) \x^* + \sigma^2), \\
V_e(\y^* | \x^*, \data) =& \frac{1}{2} \log ( (\x^*)^\top\Lambda^{-1} \x^* + \sigma^2 ) - \frac{1}{2} \log ((\x^*)^\top\Lambda^{-1}(\BZ) \x^* + \sigma^2),
\end{aligned}
\end{equation*}
It is easy to show for all possible $\BZ$:
\begin{equation*}
    V_a(\y^* | \x^*, \BZ, \data) - U_a(\y^* | \x^*, \data) = \frac{1}{2} \log ( \sigma^{-2} (\x^*)^\top\Lambda^{-1}(\BZ) \x^* + 1) \geq 0.
\end{equation*}
Now consider the optimum of the variational estimate:
\begin{equation*}
    V_a(\y^* | \x^*, \data) :=  \frac{1}{2}(1 + \log 2 \pi \sigma) + \min_{\BZ} \frac{1}{2} \log (\sigma^{-2}(\x^*)^\top\Lambda^{-1}(\BZ) \x^* + 1),
\end{equation*}
where $\Lambda(\BZ) := \sigma^{-2} (\X^\top \X + \BZ^\top \BZ) + \lambda \mathbf{I}_d.$ Now, if $\gamma$ is the minimum eigenvalue of $(\X^\top \X + \BZ^\top \BZ)$ and $\gamma > 0$, then $(\x^*)^\top\Lambda^{-1} \x^* \leq \frac{1}{\gamma}\|\x^*\|^2_2$. If $m\geq d$, we can choose $\z_j$ (e.g. unit vectors) such that $\lambda > 0$, and then scaling $\z_j$ by a constant ensures $\gamma \to \infty$ and $(\x^*)^\top\Lambda^{-1} \x^* \to 0$. Therefore, for appropriately chosen $\BZ$, $V_a(\y^* | \x^*, \BZ, \data) \rightarrow U_a(\y^* | \x^*, \data)$.

\subsection{Gaussian Process Regression}
Here we assume a Gaussian process model \cite{rasmussen2003gaussian} with a kernel function as the prior covariance:
\begin{equation*}
    y = f(\x) + \sigma \epsilon, \quad \epsilon \sim \mathcal{N}(0, 1), \quad f(\cdot) \sim \mathcal{GP}(0, k(\cdot, \cdot)).
\end{equation*}
Here we assume 1D outputs w.l.o.g.~and use notations $\y, y$ interchangeably. For regression problems we have closed form solution to the posterior predictive (with $\data = (\X, \Y)$, we omit the formulation of the posterior mean $\mu(\X, \Y)$ and focus the discussion on the posterior variance only):
\begin{equation*}
    p(y^* | \x^*, \X, \Y) = \mathcal{N}(y^*; \mu(\X, \Y), k(\x^*, \x^*) - \BK_{*\X}(\BK_{\X\X} + \sigma^2 \mathbf{I})^{-1}\BK_{\X *} + \sigma^2 ),
\end{equation*}
leading to the following uncertainty estimates:
\begin{equation*}
\begin{aligned}
U_a(\y^* | \x^*, \data) =& \frac{1}{2}(1 + \log 2\pi \sigma^2), \\
U_e(\y^* | \x^*, \data) =& \frac{1}{2} \log (k(\x^*, \x^*) - \BK_{*\X}(\BK_{\X\X} + \sigma^2 \mathbf{I})^{-1}\BK_{\X *} + \sigma^2) - \frac{1}{2} \log \sigma^2.
\end{aligned}
\end{equation*}
Now consider sparse variational Gaussian process (SVGP) \cite{hensman2013gaussian} with inducing inputs/outputs $\BZ, \bu$ and an approximating distribution $q(\bu) := \mathcal{N}(\bu; \bm{m}, \BS)$. Then we have the approximate posterior predictive as:
\begin{equation*}
    q(y^*) = \mathcal{N}(y^*; \mu(\x^*), k(\x^*, \x^*) - \BK_{* \BZ} \BK _{\BZ \BZ}^{-1}(\BK _{\BZ \BZ} - \BS) \BK _{\BZ \BZ}^{-1} \BK_{ \BZ *} + \sigma^2),
\end{equation*}
so that the decomposed uncertainty estimates from an SVGP are
\begin{equation*}
\begin{aligned}
\tilde{U}_a(\y^* | \x^*; q) =& \frac{1}{2}(1 + \log 2\pi \sigma^2) = U_a(\y^* | \x^*; q), \\
\tilde{U}_e(\y^* | \x^*; q) =& \frac{1}{2} \log (k(\x^*, \x^*) - \BK_{* \BZ} \BK _{\BZ \BZ}^{-1}(\BK _{\BZ \BZ} - \BS) \BK _{\BZ \BZ}^{-1} \BK_{ \BZ *} + \sigma^2) - \frac{1}{2} \log \sigma^2.
\end{aligned}
\end{equation*}
For regression problems we have the optimal $\BS = \BK _{\BZ \BZ} (\BK _{\BZ \BZ} + \sigma^{-2} \BK_{\BZ \X} \BK_{\X \BZ})^{-1} \BK _{\BZ \BZ}$ \cite{hensman2013gaussian}, and therefore
\begin{equation*}
\begin{aligned}
\tilde{U}_e(\y^* | \x^*; q) =& \frac{1}{2} \log (k(\x^*, \x^*) - \BK_{* \BZ} (\BK _{\BZ \BZ}^{-1} - (\BK _{\BZ \BZ} + \sigma^{-2} \BK_{\BZ \X} \BK_{\X \BZ})^{-1}) \BK_{ \BZ *} + \sigma^2) - \frac{1}{2} \log \sigma^2. 
\end{aligned}
\end{equation*}
On the other hand, using the variational uncertainty decomposition method, we can show that
\begin{equation*}
\begin{aligned}
p(\y^* | \x^*, \BU, \BZ, \X, \y) &= \mathcal{N}(\y^*; \bm{\mu}(\BZ, \BU), k(\x^*, \x^*) - \BK_{*\X}(\BK_{\X\X} + \sigma^2 \mathbf{I})^{-1}\BK_{\X *} - \Delta(\x^*, \BZ) + \sigma^2 ), \\
\Delta(\x^*, \BZ) &= \bm{A}^\top(\BK_{\BZ \BZ} + \sigma^2\mathbf{I} - \BK_{\BZ\X}(\BK_{\X \X} + \sigma^2\mathbf{I})^{-1}\BK_{\X\BZ} )^{-1}\bm{A}, \\ 
\bm{A} &= \BK_{\BZ *} - \BK_{\BZ\X}(\BK_{\X \X} + \sigma^2\mathbf{I})^{-1} \BK_{\X *},
\end{aligned}
\end{equation*}
leading to the following uncertainty estimates (with $\data = (\X, \y)$ and $C := \frac{1}{2}(1 + \log 2\pi)$):
\begin{equation*}
\begin{aligned}
V_a(\y^* | \x^*, \BZ, \data) =& C + \frac{1}{2} \log (k(\x^*, \x^*) - \BK_{*\X}(\BK_{\X\X} + \sigma^2 \mathbf{I})^{-1}\BK_{\X *} - \Delta(\x^*, \BZ) + \sigma^2), \\
V_e(\y^* | \x^*, \BZ, \data) =& C + \frac{1}{2} \log (k(\x^*, \x^*) - \BK_{*\X}(\BK_{\X\X} + \sigma^2 \mathbf{I})^{-1}\BK_{\X *} + \sigma^2) - V_a(\y^* | \x^*, \BZ, \data).
\end{aligned}
\end{equation*}
Note that if we choose $\BZ = \x^*$ then we have
\begin{equation*}
\begin{aligned}
V_a(\y^* | \x^*, \x^*, \data) &= U_a(\y^* | \x^*, \data) + \frac{1}{2} \log \left(2 - \frac{\sigma^2}{k(\x^*, \x^*) - \BK_{*\X}(\BK_{\X \X} + \sigma^2\mathbf{I})^{-1}\BK_{\X*} + \sigma^2} \right), \\
V_e(\y^* | \x^*, \x^*, \data) &= U_e(\y^* | \x^*, \data) - \frac{1}{2} \log \left(2 - \frac{\sigma^2}{k(\x^*, \x^*) - \BK_{*\X}(\BK_{\X \X} + \sigma^2\mathbf{I})^{-1}\BK_{\X*} + \sigma^2} \right).
\end{aligned}
\end{equation*}
This means if the test input $\x^*$ is close to the training data $\X$, then $k(\x^*, \x^*) - \BK_{*\X}(\BK_{\X \X} + \sigma^2\mathbf{I})^{-1}\BK_{\X*}$ will be close to zero, and then $V_e(\y^* | \x^*, \x^*, \data) \approx U_e(\y^* | \x^*, \data)$ provides a good estimate of the epistemic uncertainty.

\subsection{Gaussian Bandits}
In Gaussian bandit setting, suppose we have independence of rewards between arms. Furthermore, for an arm $i$, we assume the following conjugate Gaussian model:
\begin{enumerate}
\item Gaussian Prior: $p(\theta_i) = \mathcal{N}(0, \sigma_0^2)$
\item Gaussian Likelihood: $p(r_i \mid \theta_i) = \mathcal{N}(\theta_i, \sigma^2)$
\end{enumerate}
Then for $k_t$ observations of rewards from arm $i$, we have:
\begin{enumerate}
    \item Total variance: $U^{\Sigma} = \sigma^2 + \left[\frac{1}{\sigma_0^2} + \frac{k_t}{\sigma^2} \right]^{-1}$
    \item True aleatoric variance: $U_a^{\Sigma} = \sigma^2$
    \item True epistemic variance (total $-$ aleatoric): $U_e^{\Sigma} = \left[\frac{1}{\sigma_0^2} + \frac{k_t}{\sigma^2} \right]^{-1}$
\end{enumerate}
However, with the VUD method, suppose we have $n$ further auxiliary observations (i.e., predicted reward values from the model) for arm $i$, then, the corresponding variational estimate of aleatoric uncertainty is: \[ V_a^{\Sigma} = \sigma^2 + \left[\frac{1}{\sigma_0^2} + \frac{k_t + n}{\sigma^2} \right]^{-1} \geq U_a^{\Sigma}.\]

This gives the gap between the variational estimate and the exact aleatoric variance as: \[V_a^{\Sigma} - U_a^{\Sigma} =  \left[\frac{1}{\sigma_0^2} + \frac{k_t + n}{\sigma^2} \right]^{-1} = \mathcal{O}\left(\frac{1}{k_t + n}\right) = \mathcal{O}\left(\frac{1}{n}\right).\]
\section{Sampling Methods for Auxiliary Data}\label{appx:z_sampling_methods}
In this section, we discuss in detail the methods used to sample auxiliary queries $\BZ$ to find the best variational estimate of the aleatoric uncertainty and variance. As noted in Section \ref{sec: 3_variational_uncertainty_decomp}, we restrict $\BZ$ to a single query in the $\x$ domain to reduce the search space.

\subsection{Methods}\label{appx:z_sampling_methods_definition}

\textbf{Bayesian Optimisation}. The optimisation problem \eqref{eq:3_V_a definition} can be directly optimised via Bayesian Optimisation. However, this is a constrained optimisation problem where $\BZ$ needs to satisfy an "approximately Bayesian" criterion \eqref{eq:threshold_eq} which we discuss in Section \ref{sec:methods_kl_filtering}. To overcome this issue, we treat the problem as an unconstrained Bayesian optimisation task to obtain auxiliary examples $\{\z_j\}_{j=1}^m$ and then apply the criterion to remove auxiliary examples that do not satisfy \eqref{eq:threshold_eq}. In the synthetic examples we consider, the covariates $\x_i$ are real and continuous. Therefore, we use a Gaussian process with an RBF kernel to model the objective function and take the log expected improvement as the acquisition function. In order to provide a warm start to the Bayesian optimisation process, we provide 5 initial samples that are randomly sampled. 

\textbf{Perturb}. Given the covariates $\x^*$ of the data point we wish to decompose the uncertainty for, we can choose $\BZ=\{\z_j\}_{i=1}^m$ to be ``close'' to $\x^*$. To perturb a categorical covariate $\x^*[k]$, we sample uniformly from the list of categories with probability $p$ and keep the original covariate with probability $1-p$. For a real covariate $\x^*[k']$, we sample from a normal distribution, similarly to random sampling, but we choose the mean as $x^*_k$ and the standard deviation as a scaled population standard deviation estimate of the covariate $\gamma \cdot\sigma_{k'}^\data$ where $\gamma = 0.1$.

\textbf{Repeated}. Given the test covariates $\x^*$, we set $\BZ=\x^*$. Since we repeat the covariates, we only evaluate 1 auxiliary query per test example, and therefore the KL filtering procedure is omitted.

\textbf{Random Sampling}. The most basic sampling procedure to generate auxiliary queries $\BZ$ is to randomly sample in the input domain. If a covariate $\x^*[k]$ is a categorical variable, we sample uniformly from the list of categories. If a covariate $\x^*[k']$ is a real variable, we assume a normal distribution with mean and standard deviation given by the population mean and standard deviation estimates of the covariate, $\mu^\data_{k'}$ and $\sigma^\data_{k'}$ from the in-context data $\data$.

\subsection{Ablations on Logistic Regression Data}
We compare the performance of the four approaches to choose $\BZ$ outlined in Section \ref{appx:z_sampling_methods_definition} for 15 auxiliary examples (with the exception of the Repeated where we have a single auxiliary example). We plot the uncertainty decompositions for the $\BZ$ sampling approaches and the corresponding KL divergence for the chosen $\BZ$ that minimises \eqref{eq:3_V_a definition} in Figures \ref{fig:ablation_z-choice}, \ref{fig:ablation_z-choice_qwen7b} and \ref{fig:ablation_z-choice_llama8b}. In Tables \ref{tbl:z_ablation_Va_rank} and \ref{tbl:z_ablaion_kl_rank} we quantify the performance of each of the sampling methods by computing the mean rank of each method over the test samples. For the 3 LLMs that we consider, we consistently observe that Repeated has the lowest $V_a$, followed by Perturbations. However, Perturbations has the highest KL divergence, which indicates that this method is less aligned with the Bayesian assumptions that we make.

\begin{figure}[htbp]
    \centering
    \begin{subfigure}[t]{0.49\textwidth}
        \centering
        \includegraphics[width=\textwidth]{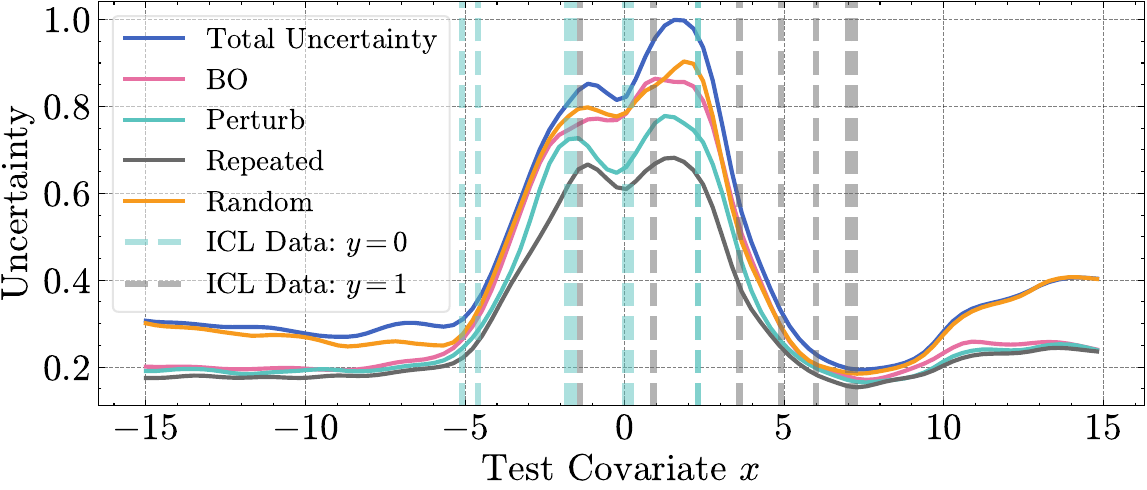}
        \caption{Effect of $\BZ$ on Uncertainty Estimates}
    \end{subfigure}
    \begin{subfigure}[t]{0.49\textwidth}
        \centering
        \includegraphics[width=\textwidth]{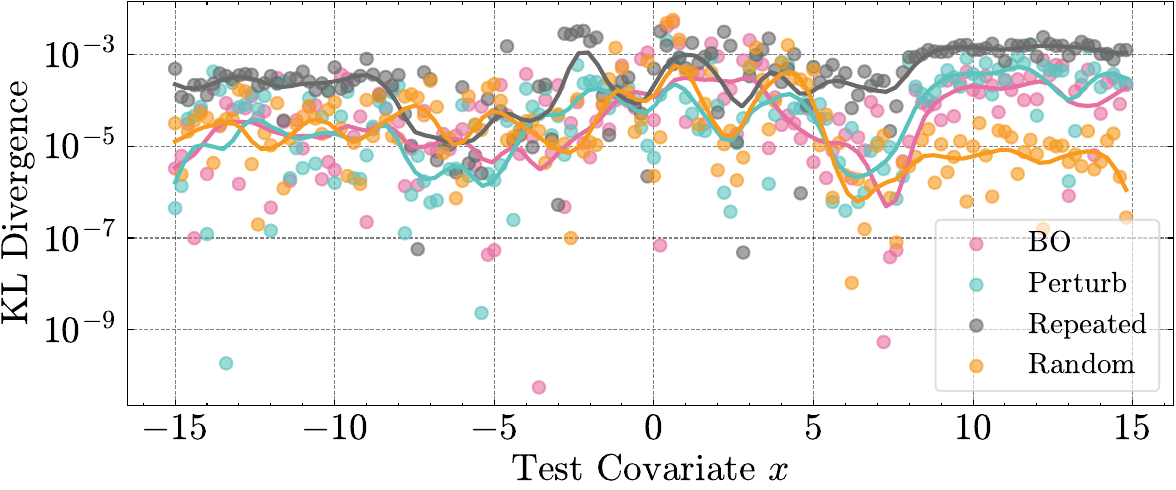}
        \caption{KL Divergence of optimal selected $\BZ$}
    \end{subfigure}
    \caption{$V_a$ across $\BZ$ sampling methods (\texttt{Qwen2.5-7B}).}
    \label{fig:ablation_z-choice_qwen7b}
    \vspace{-4mm}
\end{figure}

\begin{figure}[htbp]
    \centering
    \begin{subfigure}[t]{0.49\textwidth}
        \centering
        \includegraphics[width=\textwidth]{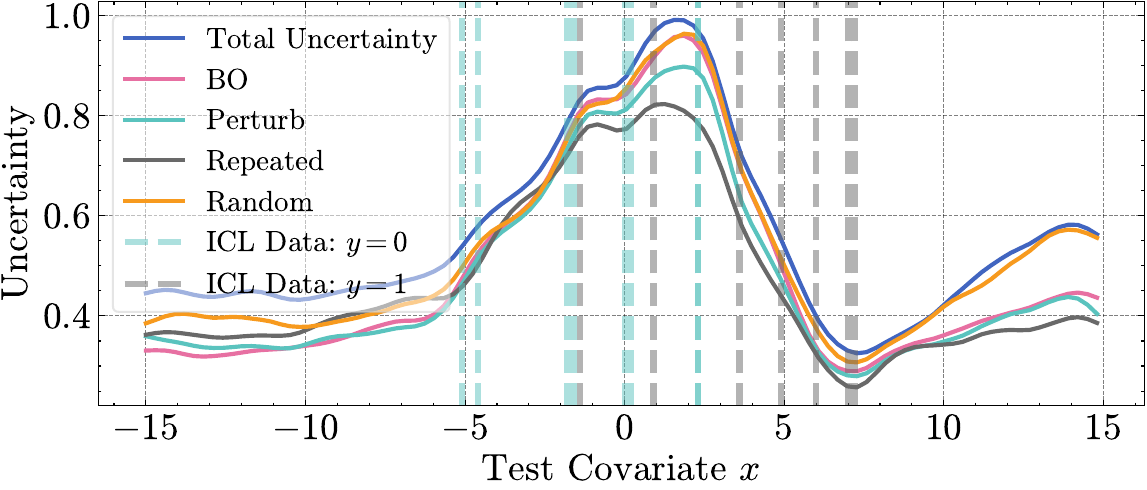}
        \caption{Effect of $\BZ$ on Uncertainty Estimates}
    \end{subfigure}
    \begin{subfigure}[t]{0.49\textwidth}
        \centering
        \includegraphics[width=\textwidth]{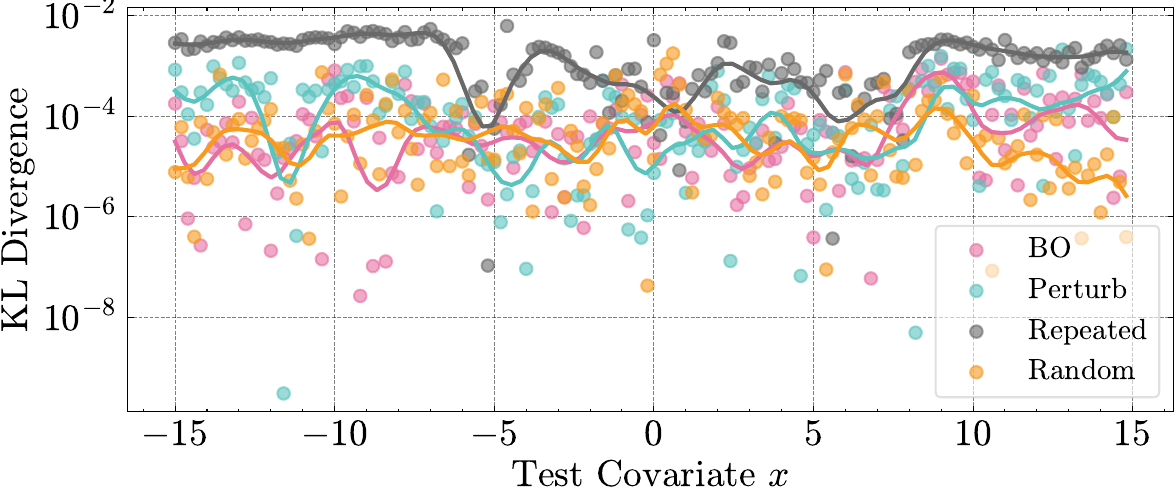}
        \caption{KL Divergence of optimal selected $\BZ$}
    \end{subfigure}
    \caption{$V_a$ across $\BZ$ sampling methods (\texttt{Llama-3.1-8B}).}
    \label{fig:ablation_z-choice_llama8b}
\end{figure}

\begin{table*}[htbp]
    \centering
    \vskip -0.1in
    \caption{$V_a$ rank for different sampling methods}
    \begin{normalsize}
    \begin{threeparttable}
    \begin{sc}
    \resizebox{\textwidth}{!}{
    \begin{tabular}{ccccc}
    \toprule
    Models & {Bayesian Optimisation} & {Perturbations} & {Repeated Task} & {Random Sampling}  \\
    \midrule
    Qwen2.5-7B & $2.93$ & $2.01$ & $1.29$ & $3.77$  \\ 
    Qwen2.5-14B & $3.09$ & $2.03$ & $1.16$ & $3.68$ \\ 
    Llama-3.1-8B & $2.41$ & $1.92$ & $2.09$ & $3.57$ \\
    \bottomrule
    \end{tabular}}
    \end{sc}
    \end{threeparttable}
    \end{normalsize}
    \label{tbl:z_ablation_Va_rank}
\end{table*}

\begin{table*}[htbp]
    \centering
    \vskip -0.1in
    \caption{KL divergence rank for different sampling methods}
    \begin{normalsize}
    \begin{threeparttable}
    \begin{sc}
    \resizebox{\textwidth}{!}{
    \begin{tabular}{ccccc}
    \toprule
    Models & {Bayesian Optimisation} & {Perturbations} & {Repeated Task} & {Random Sampling}  \\
    \midrule
    Qwen2.5-7B & $2.27$ & $2.31$ & $3.43$ & $1.99$  \\ 
    Qwen2.5-14B & $1.98$ & $2.27$ & $3.51$ & $3.51$ \\ 
    Llama-3.1-8B & $1.93$ & $2.41$ & $3.77$ & $1.89$ \\
    \bottomrule
    \end{tabular}}
    \end{sc}
    \end{threeparttable}
    \end{normalsize}
    \label{tbl:z_ablaion_kl_rank}
\end{table*}
\section{Promoting Exchangeability in In-Context Learning}\label{appx:exchangeability_icl}

In this section, we expand upon the concept and definitions of exchangeability that we discuss in Section \ref{sec:background} and the methods to encourage exchangeability in Section \ref{sec:methods_kl_filtering}.

\subsection{Definition of Exchangeability} 
A finite sequence of random variables $(X_i)_{i=1}^n$ is \emph{exchangeable} if for any permutation $\rho: [n] \to [n]$,
\begin{equation}
    (X_1, \dots, X_n) \overset{d}{=} (X_{\rho(1)}, \dots, X_{\rho(n)}),
\end{equation} (where $\overset{d}{=}$ refers to equal in distribution) \cite{fortini2025exchangeability}. Similarly, an infinite sequence $(X_n)_{n\geq1}$ is \emph{exchangeable} if for any finite permutation $\rho$, $(X_n)_{n\geq 1} \overset{d}{=} (X_{\rho(n)})_{n \geq 1}$. One of the most consequential results for exchangeable sequences is de Finetti's representation theorem which is stated (in measure-theoretic form) as follows:
\begin{theorem}[de Finetti's representation theorem] \textit{Let $(X_n)_{n\geq 1}$ be an infinitely exchangeable sequence and denote $\mathbb{P}$ its probability law. Then, there exists a unique random distribution $\tilde{F}$ with law $\pi$ such that for all $n \geq 1$ and measurable sets $A_1, \dots, A_n$,}
\begin{equation}
\mathbb{P}(X_1 \in A_1, \dots, X_n \in A_n) = \int \prod_{i=1}^n F(A_i) \pi(dF).    
\end{equation}
\end{theorem}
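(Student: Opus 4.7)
The plan is to construct the random distribution $\tilde{F}$ explicitly as the almost-sure limit of the empirical measures $F_n := \frac{1}{n}\sum_{i=1}^n \delta_{X_i}$ and then show that, conditional on $\tilde{F}$, the $X_i$ are i.i.d.\ with law $\tilde{F}$; this is the standard ergodic/martingale proof of de Finetti's theorem. Throughout I implicitly assume the $X_i$ take values in a Polish (or standard Borel) space so that a countable measure-determining class is available.

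First, fix a bounded measurable test function $f$ and define $S_n(f) := \frac{1}{n}\sum_{i=1}^n f(X_i)$. By exchangeability, $S_n(f) = \mathbb{E}[f(X_1) \mid \mathcal{E}_n]$, where $\mathcal{E}_n := \sigma(S_k(g) : k \geq n,\ g \text{ bounded measurable})$ is the decreasing filtration generated by the symmetric functions of the tail. The reverse martingale convergence theorem then yields $S_n(f) \to \tilde{F}(f) := \mathbb{E}[f(X_1) \mid \mathcal{E}_\infty]$ almost surely and in $L^1$, and the Hewitt--Savage zero-one law identifies $\mathcal{E}_\infty$ with the exchangeable $\sigma$-algebra. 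Running this argument over a countable class of bounded continuous functions separating points, and invoking Prokhorov tightness, one obtains a single null set off which $F_n$ converges weakly to a random probability measure $\tilde{F}$. Define $\pi$ as the law of $\tilde{F}$ on the space of probability measures.

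Next, I would verify the mixture representation. Fix $n$, bounded measurable $f_1, \ldots, f_n$, and $N \geq n$. Exchangeability lets me rewrite $\mathbb{E}\bigl[\prod_{i=1}^n f_i(X_i)\bigr]$ as the average over all injections $\rho : [n] \to [N]$ of $\mathbb{E}\bigl[\prod_{i=1}^n f_i(X_{\rho(i)})\bigr]$, which differs from $\mathbb{E}\bigl[\prod_{i=1}^n F_N(f_i)\bigr]$ only by the $O(1/N)$ bias from sampling without replacement. Letting $N \to \infty$ and applying bounded convergence with $F_N \Rightarrow \tilde{F}$ yields
\[
\mathbb{E}\!\left[\prod_{i=1}^n f_i(X_i)\right] = \mathbb{E}\!\left[\prod_{i=1}^n \tilde{F}(f_i)\right] = \int \prod_{i=1}^n F(f_i)\, \pi(dF).
\]
Specialising to $f_i = \mathbf{1}_{A_i}$ gives the stated representation. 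Uniqueness of $\pi$ follows because $\tilde{F}$ is, by construction, an almost-surely determined measurable functional of the sequence $(X_n)_{n\geq 1}$, so any representing law $\pi'$ would have to coincide with the pushforward of $\mathbb{P}$ under this same empirical-limit map.

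The main obstacle is the passage from the pointwise-in-$f$ reverse-martingale limit to the existence of a single random probability measure $\tilde{F}$ valid for all test functions simultaneously. This step requires carefully selecting a countable measure-determining class and establishing almost-sure Prokhorov tightness of $(F_n)$, which is where some topological regularity of the state space (typically Polish) must be assumed for the theorem to hold as stated; without it, the conditional i.i.d.\ structure can fail to be realised by a bona fide random measure.
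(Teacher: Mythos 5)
The paper does not actually prove this statement: it is quoted verbatim as the classical de Finetti representation theorem (with citations to de Finetti and to Fortini et al.) and used as a black box, so there is no in-paper argument to compare against. Your proposal is the standard modern proof via reverse martingales and empirical measures, and its outline is correct: the identity $S_n(f)=\mathbb{E}[f(X_1)\mid\mathcal{E}_n]$ for the decreasing filtration generated by the empirical measures, reverse-martingale convergence, the upgrade to a genuine random probability measure $\tilde{F}$ via a countable measure-determining class on a Polish space, the sampling-without-replacement estimate showing $\mathbb{E}[\prod_i f_i(X_i)]=\mathbb{E}[\prod_i \tilde{F}(f_i)]+O(1/N)$, and uniqueness of $\pi$ from the fact that under any representing mixture the empirical measures converge a.s.\ to the mixing component, so $\pi$ must be the pushforward of $\mathbb{P}$ under the empirical-limit map. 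You are also right to flag that the Polish/standard-Borel hypothesis is genuinely needed (the theorem can fail on pathological measurable spaces), a caveat the paper's bald statement omits. One small correction: the Hewitt--Savage zero-one law asserts triviality of the exchangeable $\sigma$-algebra for \emph{i.i.d.}\ sequences; it is not the tool that identifies $\mathcal{E}_\infty$ with the exchangeable $\sigma$-algebra here. That identification is an elementary (and in fact dispensable) lemma --- your argument only needs the a.s.\ limit $\tilde{F}(f)=\lim_n S_n(f)$ to exist and be measurable, so the misattribution is cosmetic rather than a gap.
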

Note that this standard measure-theoretic definition \cite{fortini2025exchangeability} differs from the one introduced in Section \ref{sec:background} where we have a supervised learning setting of covariate-label pairs $\{(\x_i, \y_i)\}_{i=1}^n$. The results that we use later in Section \ref{sec:exchangeability_predictive_rule} to encourage exchangeability are also in a measure-theoretic form and therefore, in the following section, we bridge the gap between the supervised learning setting and the measure-theoretic language.

\subsection{Bridging the Gap}
So that we can consider the pair $(\x_i, \y_i)$ as the observations of a random sequence taking values in $\mathcal{X} \times \mathcal{Y}$, we make the modelling assumption that $(X_i)_{i\geq1}$  is an i.i.d. sequence of random variables with law $Q$ and density $q$. Then, the random variable $Y_{n+1}$ is generated from the LLM given $X_{n+1}$ and $\{(X_i,Y_i)\}_{i=1}^n$. Denoting random variables $K_n = (X_n, Y_n)$ with realisations $\bm{k}_{1:n} = \{(\x_n, \y_n)\}_{i=1}^n$, and measurable sets $A$ and $B$ in the sigma algebras $\sigma(X)$ and $\sigma(Y)$ respectively, the following proposition justifies the definition of exchangeability that we use in Section \ref{sec:background}.
\begin{proposition}
    Let $(X_n)_{n\geq1}$ be an i.i.d. sequence and $(K_n)_{n\geq1}$ be an exchangeable sequence, where $K_n = (X_n, Y_n)$. Then there exists a random distribution $\tilde{F}$ with joint (random) density $f_{X,Y}$ and law $\pi$ such that the conditional density of $Y_{1:n}|X_{1:n}$ can be expressed as
    \begin{equation*}
        p(\y_{1:n}|\x_{1:n}) = \int \prod_{i=1}^n  f_{Y|X}(\y_i |\x_i) \pi(dF).
    \end{equation*}
\end{proposition}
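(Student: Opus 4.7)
The plan is to apply de Finetti's representation theorem directly to the exchangeable sequence $(K_n)_{n \geq 1}$ on the product space $\mathcal{X} \times \mathcal{Y}$, and then reduce the resulting joint representation to a conditional one by exploiting the i.i.d.\ assumption on $(X_n)_{n \geq 1}$.

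First, since $(K_n)_{n \geq 1}$ is exchangeable, de Finetti's theorem guarantees a unique random distribution $\tilde{F}$ on $\mathcal{X} \times \mathcal{Y}$ with law $\pi$ such that
\begin{equation*}
p(\bm{k}_{1:n}) = \int \prod_{i=1}^n f_{X,Y}(\x_i, \y_i) \, \pi(dF),
\end{equation*}
where $f_{X,Y}$ denotes the (random) joint density of $\tilde{F}$. By the standard disintegration theorem, this joint density factorises as $f_{X,Y}(\x, \y) = f_{Y|X}(\y|\x) f_X(\x)$, yielding
\begin{equation*}
p(\x_{1:n}, \y_{1:n}) = \int \prod_{i=1}^n f_{Y|X}(\y_i|\x_i) f_X(\x_i) \, \pi(dF).
\end{equation*}

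Next, I would observe that the projection $(X_n)_{n \geq 1}$ is exchangeable (as a marginal of an exchangeable sequence), and by assumption it is also i.i.d.\ with law $Q$ and density $q$. By the uniqueness clause of de Finetti's theorem applied to $(X_n)$, the pushforward of $\pi$ onto the $X$-marginal of $\tilde{F}$ must coincide with the Dirac measure $\delta_Q$. Consequently, $f_X = q$ holds $\pi$-almost surely, so the product $\prod_{i=1}^n q(\x_i)$ factors outside the integral:
\begin{equation*}
p(\x_{1:n}, \y_{1:n}) = \left( \prod_{i=1}^n q(\x_i) \right) \int \prod_{i=1}^n f_{Y|X}(\y_i|\x_i) \, \pi(dF).
\end{equation*}

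Finally, dividing by the marginal $p(\x_{1:n}) = \prod_{i=1}^n q(\x_i)$ would yield the claimed representation for $p(\y_{1:n} | \x_{1:n})$. The main obstacle I anticipate lies in the uniqueness argument of step two: one must carefully verify that because $(X_n)$ admits both a degenerate de Finetti representation (from being i.i.d.) and the representation induced by marginalising $\pi$, these must coincide, thereby forcing the $\pi$-almost sure identification of $f_X$ with $q$. Ensuring measurability of the regular conditional density $f_{Y|X}$ as a function of $F$ is also a technical concern, but is standard in Polish spaces via Kolmogorov's disintegration theorem.
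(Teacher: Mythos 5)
Your proposal is correct and follows essentially the same route as the paper's proof: apply de Finetti to the joint sequence $(K_n)$, factorise the random joint density as $f_{Y|X}f_X$, use the i.i.d.\ assumption on $(X_n)$ together with de Finetti uniqueness to conclude $f_X \overset{a.s.}{=} q$, and divide by $p(\x_{1:n}) = \prod_i q(\x_i)$. Your justification of the identification step is in fact slightly cleaner than the paper's --- you invoke uniqueness of the de Finetti measure for the marginal sequence $(X_n)$ (whose mixing measure must be $\delta_Q$) rather than arguing that $f_X = q$ is ``a valid solution'' to the marginal integral equation --- but this is the same idea made precise, not a different proof.
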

\begin{proof}
    Since $(K_n)_{n\geq1}$ is exchangeable, by de Finetti's we have 
    \begin{align*}
        \mathbb{P}(K_1 \in (A_1, B_1), \dots, K_n \in (A_n, B_n)) &= \int \prod_{i=1}^n F(A_i,B_i) \pi(dF)\\
        \Leftrightarrow \quad\mathbb{P}(X_1\in A_1, Y_1 \in B_1, \dots,X_n \in A_n, Y_n \in B_n) &= \int \prod_{i=1}^n F(A_i,B_i) \pi(dF) \\
        \Leftrightarrow \quad\mathbb{P}(X_1\in A_1, Y_1 \in B_1, \dots,X_n \in A_n, Y_n \in B_n)  &= \int \prod_{i=1}^n \int_{A_i} \int_{B_i} f_{Y|X}(\y_i|\x_i) f_X(\x_i) d\y_i d\x_i\pi(dF),
    \end{align*}
    where $f_{X,Y}$ is the joint (random) density of the random distribution $\tilde{F}$ with (random) marginal $f_X$ and (random) conditional distribution $f_{Y|X}$. Now, letting $B_i = \mathcal{Y}$, we obtain the marginal probabilities as follows:
    \begin{align*}
        \mathbb{P}(X_1\in A_1,\dots,X_n \in A_n)  &=  \int \prod_{i=1}^n\int_{A_i} \underbrace{\int_{\mathcal{Y}} f_{Y|X}(\y_i|\x_i) d\y_i}_{=1} f_X(\x_i)  d\x_i\pi(dF)\\
        \prod_{i=1}^n\mathbb{P}(X_i\in A_i) &= \int \prod_{i=1}^n  \int_{A_i}f_X(\x_i)  d\x_i \pi(dF) \\
        \prod_{i=1} \int_{A_i} q(\x_i)d\x_i &= \int \prod_{i=1}^n \int_{A_i} f_X(\x_i)  d\x_i \pi(dF), \tag{$\dagger$}
    \end{align*}
    where the second line follows from the independence of $X_i$. As $(\dagger)$ holds for any measurable set $A_i \in \sigma(X)$, we see that $f_X  \overset{a.s}=q$ is a valid solution to $(\dagger)$. But by the uniqueness of the law $\pi$ in de Finetti's representation theorem, we can indeed conclude that $f_X\overset{a.s.}{=}q$. Therefore, substituting $f_X = q$ into the de Finetti's representation, the conditional probability density of $Y_{1:n}|X_{1:n}$ can be expressed as:
    \begin{align*}
        p(\y_{1:n}|\x_{1:n}) &= p(\x_{1:n}, \y_{1:n})/p(\x_{1:n})\\
        &= \int \prod_{i=1}^n f_{X,Y}(\x_i,\y_i) \pi(dF) /p(\x_{1:n}) \\
        &= \int \prod_{i=1}^n f_{Y|X}(\y_i |\x_i) q(\x_i) \pi(dF) /p(\x_{1:n}) \\
        &= \int \bigg(\prod_{i=1}^n  f_{Y|X}(\y_i |\x_i) \bigg) \bigg(\prod_{i=1}^n  q(\x_i) \bigg) \pi(dF) /p(\x_{1:n}) \\
        &=  \int \prod_{i=1}^n  f_{Y|X}(\y_i |\x_i) \pi(dF) \cdot\underbrace{\bigg(\prod_{i=1}^n  q(\x_i) \bigg)}_{=p(\x_{1:n})} /p(\x_{1:n}) \\
        &= \int \prod_{i=1}^n  f_{Y|X}(\y_i |\x_i) \pi(dF).
    \end{align*}
\end{proof}

\subsection{Exchangeability from Predictive Rules}\label{sec:exchangeability_predictive_rule}
In our problem setting, we have the predictive rule for $(K_n)_{n\geq 1}$
\begin{align*}
P_n((A,B)|\bm{k}_{1:n}) & \equiv\mathbb{P}(K_{n+1} \in (A, B)|K_1 = \bm{k}_1, \dots,K_n = \bm{k}_n) \\
&\equiv \mathbb{P}(X_{n+1} \in A, Y_{n+1} \in B| \bm{k}_1 \dots, \bm{k}_n) \\
&\equiv \int_{(A,B)} \mathbb{P}(d(\x_{n+1}, \y_{n+1}) | \bm{k}_{1:n}) \\
&\equiv \int_A \int_B \mathbb{P}(d\y_{n+1} | \bm{k}_{1:n}, \x_{n+1}) \mathbb{P}(d\x_{n+1} | \bm{k}_{1:n}) \\
&\equiv \int_A \int_B \mathbb{P}(d\y_{n+1} | \bm{k}_{1:n}, \x_{n+1}) Q(d\x_{n+1}),
\end{align*}
where the final equality follows from the independence of $X_{n+1}$ and probability $\mathbb{P}(\y_{n+1} | \bm{k}_1 \dots, \bm{k}_n, \x_{n+1})$ is given by the LLM. The following theorem by Fortini, Ladelli and Regazzini \cite{fortini2000exchangeability, fortini2025exchangeability} gives necessary and sufficient conditions for an exchangeable sequence defined by a predictive rule.

\begin{theorem}[Theorem 2.3 \cite{fortini2025exchangeability}, Theorem 3.1 and Proposition 3.2 \cite{fortini2000exchangeability}] \textit{Let $(K_n)_{n\geq1} \sim\mathbb{P}$ be an infinite sequence of random variables with predictive rule $(P_n)_{n\geq0}$. Then $(K_n)_{n\geq1}$ is exchangeable if and only if, for every $n\geq0$, the following conditions hold:}
\begin{enumerate}
    \item[i)] \textit{For every $C \in \sigma(K)$, $P_n(C|\bm{k}_{1:n})$ is a symmetric function of $x_1, \dots, x_n$;}
    \item[ii)] \textit{The set function $(C,D) \to \int_C P_{n+1}(D|\bm{k}_{1:n+1})dP_n(\bm{k}_{n+1}|\bm{k}_{1:n})$ is symmetric in $C$ and $D$, where $C, D \in \sigma(K)$.}
\end{enumerate}
\end{theorem}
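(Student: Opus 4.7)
The plan is to prove the equivalence by unravelling the joint distribution of any finite segment $(K_1,\ldots,K_n)$ via the predictive rule into the iterated product
\[
\mathbb{P}(K_1\in A_1,\ldots,K_n\in A_n) = \int_{A_1}\!\!\cdots\!\int_{A_n} P_0(dk_1)\,P_1(dk_2\mid k_1)\cdots P_{n-1}(dk_n\mid \bm{k}_{1:n-1}),
\]
and then reducing exchangeability (invariance under every permutation of $(A_1,\ldots,A_n)$) to invariance under \emph{adjacent transpositions}, since the symmetric group $S_n$ is generated by the transpositions $(i,i+1)$.

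For the ``only if'' direction, I would assume exchangeability of $(K_n)_{n\geq 1}$ and deduce (i) and (ii) one at a time. Condition (i) follows by observing that exchangeability of $(K_1,\ldots,K_n,K_{n+1})$ (for any $n$) combined with symmetry of the permutation in the first $n$ coordinates gives $\mathbb{P}(K_{n+1}\in C\mid K_{1:n}=\bm{k}_{1:n}) = \mathbb{P}(K_{n+1}\in C\mid K_{1:n}=\sigma(\bm{k}_{1:n}))$ for every permutation $\sigma$, which is precisely the symmetry of $P_n(C\mid\cdot)$. Condition (ii) follows from the exchangeability of the pair $(K_{n+1},K_{n+2})$ conditional on $K_{1:n}$: the set function in (ii) is exactly the joint conditional law of $(K_{n+1},K_{n+2})$ given $\bm{k}_{1:n}$, and exchangeability forces this to be symmetric under swap of the two coordinates.

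For the harder ``if'' direction, I would fix $n\geq 1$, fix an index $1\leq i\leq n-1$, and show that swapping $A_i$ and $A_{i+1}$ in the iterated integral above leaves it unchanged. The iterated integrand splits into three pieces: the factors indexed by $j<i$ (which do not involve $k_i,k_{i+1}$), the middle block $P_{i-1}(dk_i\mid\bm{k}_{1:i-1})\,P_i(dk_{i+1}\mid\bm{k}_{1:i})$ which by condition (ii) is a symmetric kernel in $(k_i,k_{i+1})$ given $\bm{k}_{1:i-1}$, and the tail factors $P_{j-1}(dk_j\mid\bm{k}_{1:j-1})$ for $j>i+1$, each of which depends on $\bm{k}_{1:j-1}$ through a symmetric function by condition (i). Therefore the whole integrand is symmetric under the swap of the two dummy variables $k_i\leftrightarrow k_{i+1}$, and relabelling the dummy variables in the integral gives equality after swapping $A_i$ and $A_{i+1}$. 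Composing adjacent swaps yields invariance under an arbitrary permutation, so $(K_1,\ldots,K_n)$ is exchangeable for every $n$; this is precisely infinite exchangeability.

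The main obstacle is keeping the measure-theoretic bookkeeping honest in the ``if'' direction: the kernel in condition (ii) is the joint conditional measure of two consecutive random variables given the whole past, and one must verify that condition (i) (symmetry in past arguments) is exactly strong enough to commute the swap of $k_i,k_{i+1}$ with the remaining kernels $P_{j-1}(\cdot\mid\bm{k}_{1:j-1})$ for $j>i+1$. A secondary care point is passing from exchangeability of every finite initial segment $(K_1,\ldots,K_n)$ to the infinite exchangeability of $(K_n)_{n\geq 1}$, which is standard via Kolmogorov consistency but worth stating explicitly.
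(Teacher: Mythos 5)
This statement is quoted verbatim from Fortini, Ladelli and Regazzini; the paper itself supplies no proof of it (it is used as an imported tool to motivate permutation ensembling and KL filtering), so there is no in-paper argument to compare against. Judged on its own, your reconstruction is essentially the standard proof of the original result and the outline is sound: the ``only if'' direction reads (i) as a.s.\ symmetry of the one-step predictive law and (ii) as a.s.\ exchangeability of the two-step-ahead conditional law of $(K_{n+1},K_{n+2})$ given $K_{1:n}$, and the ``if'' direction correctly factors the Ionescu--Tulcea iterated integral around an adjacent transposition $(i,i+1)$, using (ii) once for the middle two-variable kernel and (i) for every downstream kernel $P_{j-1}(\cdot\mid\bm{k}_{1:j-1})$ with $j>i+1$, then generates $S_n$ from adjacent swaps. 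Two points deserve explicit care in a full write-up. First, in the ``if'' direction the swap-invariance granted by (ii) is stated only on rectangles $C\times D$; you need a $\pi$--$\lambda$ (monotone class) step to upgrade it to invariance of integrals of arbitrary bounded product-measurable functions, which is what the relabelling of dummy variables actually uses, since the inner integrand $\mathbf{1}_{A_i}(k_i)\mathbf{1}_{A_{i+1}}(k_{i+1})G(\bm{k}_{1:i+1})$ is not an indicator of a rectangle. Second, in the ``only if'' direction conditional distributions are only determined up to null sets, so symmetry of $P_n$ can only be asserted for a version (or almost everywhere), which is how the cited theorem must be read; your sketch silently treats the kernels as canonically defined. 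Neither issue breaks the argument, but both are exactly the ``bookkeeping'' you flag and should be discharged explicitly.
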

\paragraph{Permutation Ensembling} In our predictive rule, we approximately satisfy i) via the Monte Carlo approximation (\ref{eq:posterior_from_permutation}) as i) is equivalent to ensuring $\mathbb{P}(d\y_{n+1} | \bm{k}_{1:n}, \x_{n+1})$ is symmetric in $\bm{k}_{1:n}$.
\paragraph{KL-Filtering} Condition ii) essentially requires that \[\mathbb{P}(K_{n+1}\in C,K_{n+2} \in D|\bm{k_{1:n}}) = \mathbb{P}(K_{n+1}\in D, K_{n+2} \in C| \bm{k_{1:n}}).\] The set function in ii) can be expressed as follows in terms of $(\x_{n+1}, \x_{n+2}, \y_{n+1}, \y_{n+2})$
\begin{align*}
    &\int_{A_{n+1}, B_{n+1}} P_{n+1}((A_{n+2}, B_{n+2})|\bm{k}_{1:n+1})dP_n(\bm{k}_{n+1}|\bm{k}_{1:n}) \\
    = &\int_{A_{n+1}} \int_{B_{n+1}} \int_{A_{n+2}} \int_{B_{n+2}} \mathbb{P}(d\y_{n+2} | \bm{k}_{1:n}\cup \{ (\x_{n+1}, \y_{n+1})\}, \x_{n+2}) Q(d\x_{n+2}) \mathbb{P}(d\y_{n+1} | \bm{k}_{1:n}, \x_{n+1}) Q(d\x_{n+1}) \\
    =  &\int_{A_{n+1}} \int_{A_{n+2}} \int_{B_{n+1}} \int_{B_{n+2}} \mathbb{P}(d\y_{n+2} | \bm{k}_{1:n} \cup \{ (\x_{n+1}, \y_{n+1})\}, \x_{n+2})  \mathbb{P}(d\y_{n+1} | \bm{k}_{1:n}, \x_{n+1}) Q(d\x_{n+2}) Q(d\x_{n+1}).
\end{align*}
This expression is computationally infeasible to check but a necessary condition for this is \[\mathbb{P}(K_{n+1}\in C|\bm{k_{1:n}}) = \mathbb{P}(K_{n+2} \in C| \bm{k_{1:n}}),\] where we take $D = (\mathcal{X},\mathcal{Y})$. This is equivalent to 
\begin{align*}
    \int_A \int_B \mathbb{P}(d(\x_{n+1},\y_{n+1})|\bm{k}_{1:n}) &= \int_A \int_B \mathbb{P}(d(\x_{n+2},\y_{n+2})|\bm{k}_{1:n})  \\
    \int_A \int_B \mathbb{P}(d\y_{n+1}|\bm{k}_{1:n}, \x_{n+1})Q(d\x_{n+1}) &= \int_A \int_B \mathbb{P}(d\y_{n+2}|\bm{k}_{1:n}, \x_{n+1})Q(d\x_{n+2}). \tag{$\dagger\dagger$}\\
\end{align*}
A sufficient condition for $(\dagger \dagger)$ is the equality of the laws $\mathbb{P}(Y_{n+1} \in \cdot|\bm{k}_{1:n}, X_{n+1}=\x) = \mathbb{P}(Y_{n+2} \in \cdot|\bm{k}_{1:n}, X_{n+2} = \x)$, thus motivating the KL filtering condition (\ref{eq:threshold_eq}). 

\textbf{Effect of Permutation}. In Figure \ref{appx_fig:permutation_ablation}, we plot the KL divergence from $p(\y^* | \x^*, \data)$ to $p(\y^* | \x^*, \BU, \BZ^\dagger, \data)$ (where $\BZ^\dagger = \textrm{argmin}_{\BZ} V_a(\y^* | \x^*, \BZ, \data)$) when we permute and do not permute the in-context labels. We see that permuting the in-context labels results in lower KL divergences, which suggests the behaviour is more Bayesian.

\begin{figure}[htbp]
    \centering
    \begin{subfigure}[t]{0.32\textwidth}
        \centering
        \includegraphics[width=\linewidth]{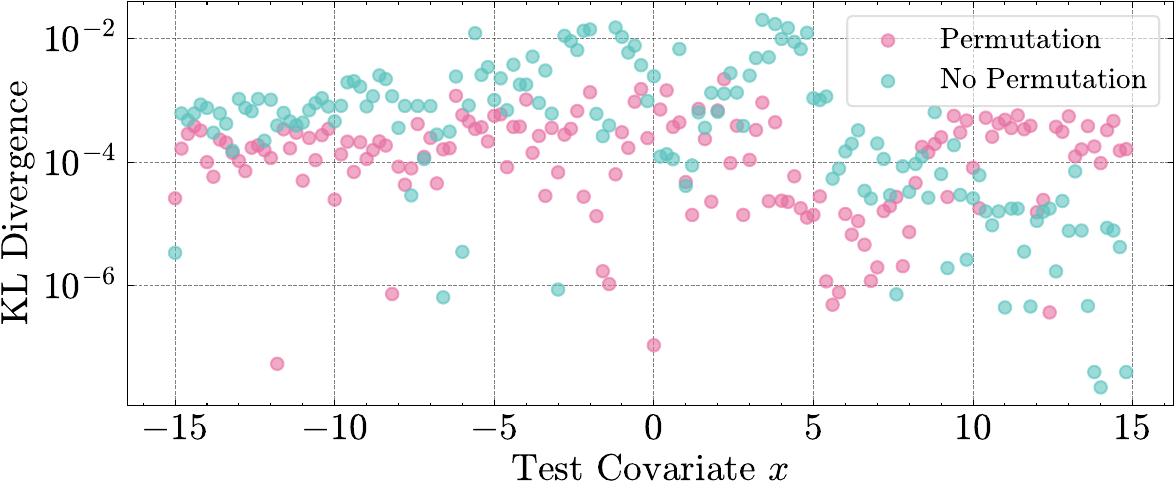}
        \caption{\texttt{Qwen2.5-14B}}
    \end{subfigure}
    \hfill
    \begin{subfigure}[t]{0.32\textwidth}
        \centering
        \includegraphics[width=\linewidth]{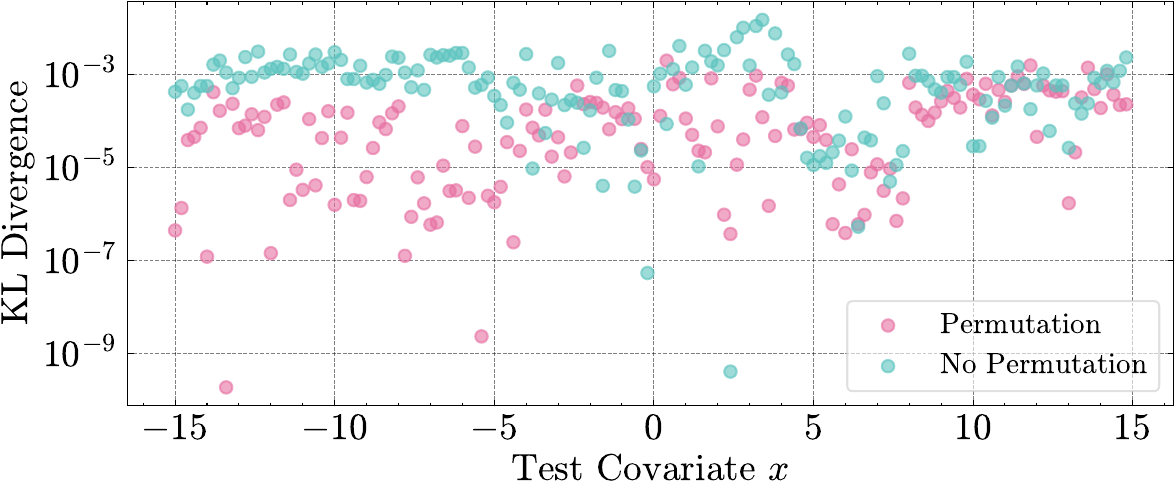}
        \caption{\texttt{Qwen2.5-7B}}
    \end{subfigure}
    \hfill
    \begin{subfigure}[t]{0.32\textwidth}
        \centering
        \includegraphics[width=\linewidth]{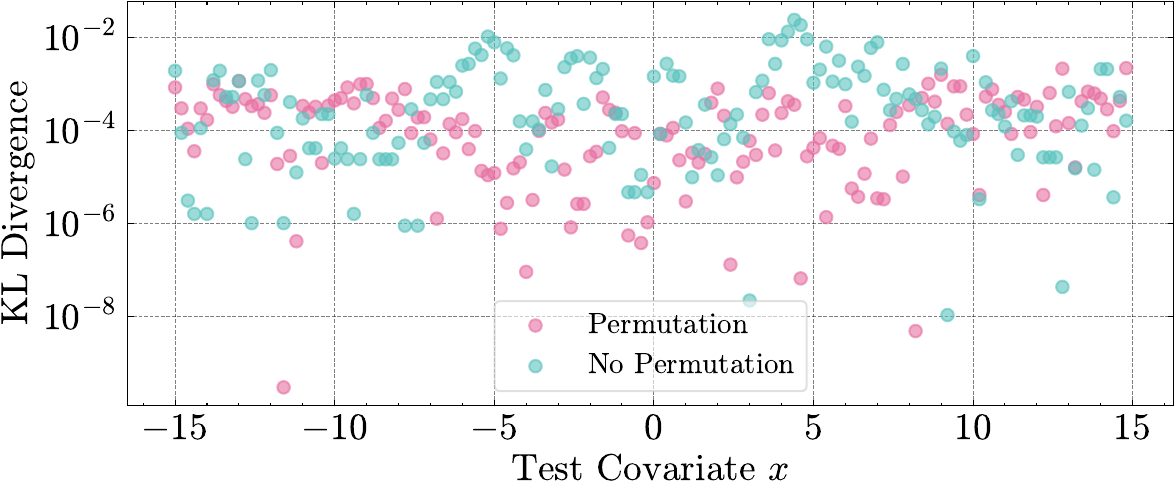}
        \caption{\texttt{Llama-3.1-8B}}
    \end{subfigure}
    \caption{Permutation Ablation for Logistic Regression Dataset.}
    \label{appx_fig:permutation_ablation}
\end{figure}

\subsection{Determining $\epsilon$ Threshold for KL-Filtering}\label{appx:choosing_epsilon}
The choice of $\epsilon$ controls the level of approximation permitted in the uncertainty decomposition method. A small $\epsilon$ ensures that the auxiliary data $\BZ$ that we choose obey our Bayesian assumption but at the cost of rejecting more $\BZ$ and obtaining a larger variational upper bound to the aleatoric uncertainty or variance. Furthermore, as shown in Figure \ref{fig:ablation_z-choice}, the range of KL values for the different auxiliary examples may vary when we vary $\x^*$. Therefore, to guarantee that we have enough valid auxiliary examples, we set $\epsilon$ as the $r$\textsuperscript{th} smallest element in the set of KL divergences $\{\epsilon_j\}_{j=1}^m$ where $\epsilon_j := D_{\text{KL}}[p(\y^*|\x^*,\mathcal{D}), p(\y^*|\x^*, \z_j, \mathcal{D})]$. Therefore, we can control the strictness of the filtering by varying $r$, where a smaller $r$ gives a stricter decomposition. 

\clearpage
\section{Algorithms and Pseudocode}

\subsection{Pseudocode for Variational Uncertainty Decomposition Algorithm}\label{appx:algorithm_pseudocode}

Algorithm \ref{alg:multi_class_classification} is pseudocode for multi-class classification problems and Algorithm \ref{alg:regression} is the pseudocode for regression. They are similar in approach but vary during the marginalisation step: for classification, we can compute $p(\y^*|\x^*,\bm{u} = k,\z_j,\data)$ for each class $k$, and directly compute the marginal distribution using the tower property. However, for regression, this is computationally infeasible so we use a Monte Carlo estimate for the conditional entropy $\mathbb{E}_{p(\bm{u}|\z_j,\data)}[\mathbb{H}[p(\y^*|\x^*,\bm{u},\z_j,\data)]]]$, over different samples of $\bm{u}$. To obtain the marginal distribution, we bootstrap samples from the mixture of Gaussians $\{p(\y^*|\x^*,\mathcal{D} \cup \{\z_j,\bm{u}^{(j)}_t\})\}_{t=1}^T$ and fit a Gaussian to these samples (as described in Algorithm \ref{alg:norm_approx}).

\begin{algorithm}[H]
    \setstretch{0.75}
    \caption{Multi-Class Classification for Aleatoric Uncertainty Estimation}
    \label{alg:multi_class_classification}
    \begin{algorithmic}[1]
        \REQUIRE Test input $\x^*$; ICL Dataset $\mathcal{D}=\{\x_i,\y_i\}_{i=1}^n$ where $\y_i \in[K]$
        \STATE $p(\y^*|\x^*,\mathcal{D}) \leftarrow \text{\textsc{ClassDist}}(\x^*, \mathcal{D})$
        \STATE $H_{\text{total}} \leftarrow \mathbb{H}[p(\y^*|\x^*,\mathcal{D})]$
        \FOR{$j = 1, \dots, m$}
            \STATE $\z_{j} \leftarrow \text{\textsc{NewAux}}(\x^*,\z_{[1:j-1]})$ \hfill \COMMENT{Get new auxiliary variable}
            \STATE $p(\bm{u}|\z_j,D) \leftarrow \text{\textsc{ClassDist}}(\z_j, \mathcal{D})$
            \FOR{$k = 1, \dots,K$}
                \STATE $p(\y^*|\x^*,\mathcal{D} \cup \{\z_j,k\}) \leftarrow \text{\textsc{ClassDist}}(\x^*, \mathcal{D} \cup \{\z_j,k\})$
                \STATE $H_{kt} \leftarrow \mathbb{H}[p(\y^*|\x^*,\mathcal{D} \cup \{\z_j,k\})]$
            \ENDFOR
            \STATE $p(\y^*|\x^*, \z_j, \mathcal{D}) \leftarrow \sum_{k=1}^K p(\y^*|\x^*,\mathcal{D} \cup \{\z_j,k\}) \cdot p(\bm{u} = k|\z_j,\data)$
            \STATE $H_j \leftarrow\sum_{k=1}^K H_{kt} \cdot p(\bm{u} = k|\z_j,\data)$
            \STATE $\epsilon_j \leftarrow D_{\text{KL}}[p(\y^*|\x^*,\mathcal{D})\parallel p(\y^*|\x^*, \z_j, \mathcal{D})]$
        \ENDFOR
        \STATE Compute threshold $\epsilon$ (see Appendix \ref{appx:choosing_epsilon})
        \STATE $V_a \leftarrow \min\big( \min(\{H_j: \epsilon_j < \epsilon\}), H_{\text{total}} \big)$
        \STATE \textbf{return} $V_a$
    \end{algorithmic}
\end{algorithm}

\begin{algorithm}[H]
    \setstretch{0.75}
    \caption{Regression for Aleatoric Uncertainty Estimation}
    \label{alg:regression}
    \begin{algorithmic}[1]
        \REQUIRE Test input $\x^*$; ICL Dataset $\mathcal{D}=\{\x_i,\y_i\}_{i=1}^n$ where $\y_i \in \mathbb{R}$
        \STATE $p_\mathcal{N}(\y^*|\x^*,\mathcal{D}) \leftarrow \text{\textsc{RegDist}}(\x^*, \mathcal{D})$
        \STATE $H_{\text{total}} \leftarrow \mathbb{H}[\p(\y^*|\x^*,\mathcal{D})]$
        \FOR{$j = 1, \dots, m$}
            \STATE $\z_{j} \leftarrow \text{\textsc{NewAux}}(\x^*,\z_{[1:j-1]})$ \hfill \COMMENT{Get new auxiliary variable}
            \STATE $U^{(j)} \leftarrow \{\bm{u}^{(j)}_t\}_{t=1}^T$ where $\bm{u}^{(j)}_t \sim \text{\textsc{RegDist}}(\z_j, \mathcal{D})$
            \FOR{$t = 1, \dots, T$}
                \STATE $p_\mathcal{N}(\y^*|\x^*,\mathcal{D} \cup \{\z_j,\bm{u}^{(j)}_t\}) \leftarrow \text{\textsc{RegDist}}(\x^*, \mathcal{D} \cup \{\z_j,\bm{u}^{(j)}_t\})$
                \STATE $H_{jt} \leftarrow \mathbb{H}[p(\y^*|\x^*,\mathcal{D} \cup \{\z_j,\bm{u}^{(j)}_t\})]$
            \ENDFOR
            \STATE $p_\mathcal{N}(\y^*|\x^*, \z_j, \mathcal{D}) \leftarrow \textsc{NormApprox}(\{p_\mathcal{N}(\y^*|\x^*,\mathcal{D} \cup \{\z_j,\bm{u}^{(j)}_t\})\}_{t=1}^T)$ 
            \STATE $H_j \leftarrow \frac{1}{T} \sum_t H_{jt}$
            \STATE $\epsilon_j \leftarrow D_{\text{KL}}[p_\mathcal{N}(\y^*|\x^*,\mathcal{D})\parallel p_\mathcal{N}(\y^*|\x^*, \z_j, \mathcal{D})]$
        \ENDFOR
        \STATE Compute threshold $\epsilon$ (see Appendix \ref{appx:choosing_epsilon})
        \STATE $V_a \leftarrow \min\big( \min(\{H_j: \epsilon_j < \epsilon\}), H_{\text{total}} \big)$
        \STATE \textbf{return} $V_a$
    \end{algorithmic}
\end{algorithm}

Note that these algorithms can also be extended to the decomposition of total variance by replacing the entropic uncertainty terms with the corresponding variance terms.

\subsection{Computing Approximate Posterior Predictive Distributions}\label{appx:compute_post_pred_dist}
\textbf{Classification}. Algorithm \ref{alg:class_dist} describes the process of obtaining the logits for a predictive task $p(\y^*|\x^*,\data)$ given in-context learning data $\mathcal{D}=\{(\x_i,\y_i)\}_{i=1}^n$ and the covariates of the predictive task $\x^*$. We permute the ICL data and take an average of the predictive distribution to obtain a Monte Carlo estimate of a conditional permutation-invariant distribution (which we discuss further in Appendix \ref{appx:exchangeability_icl}. Furthermore, by the construction of the prompt, the we only need to obtain the logits for the first token that is generated, which remains constant with respect to the choice of LLM seed.

\begin{algorithm}
\caption{Compute Permutation Invariant Classification Distribution $\bz$
: \text{\textsc{ClassDist}}}\label{alg:class_dist}
\begin{algorithmic}[1]
\REQUIRE Test input $\x^*$; ICL Dataset $\mathcal{D}=\{\x_i,\y_i\}_{i=1}^n$ where $\y_i \in[K]$
\STATE \textbf{function} $\text{\textsc{ClassDist}}(\x^*, \mathcal{D})$
\FOR{$l = 1,\dots,L$}
\STATE $\sigma_l \sim S_K$
\STATE $\bm{p}_y^{(l)} \leftarrow \text{\textsc{Llm}}( \textsc{Prompt} (\x_{\sigma_l(1)}, \y_{\sigma_l(1)}, \dots, \x_{\sigma_l(K)}, \y_{\sigma_l(K)}, \x^*))$ \hfill \Comment{Class prob. of next token}
\ENDFOR
\STATE $\bar{\bm{p}}_y \leftarrow \frac{1}{L}\sum_l \bm{p}^{(l)}$
\STATE \textbf{return} $\bar{\bm{p}}_y$
\end{algorithmic}
\end{algorithm}

\textbf{Regression}. In Algorithm \ref{alg:reg_dist}, we outline the procedure for constructing an approximate distribution for $p(\y^*|\x^*, \data)$. Similarly to the classification case, we permute the ICL data. However, as $\y^*$ can take any value in $\mathbb{R}$, the tokenisation of $\y^*$ may require more than one token and as the logits of a token depend on the previous tokens generated, the logits of the tokens will vary with the choice of LLM seed. Standard approaches to approximate the distribution require a forward pass over every value that $\y^*$ takes \cite{requeima2024llm} which is prohibitively expensive. Therefore, for each permutation, we sample a single $\y^*$ (varying the LLM seed for every permutation) and fit a normal distribution to these samples via moment matching (namely, estimating the mean and standard deviation of the sample and using these estimates as the parameters of a normal distribution). 

\textbf{Variance Reduction}. To reduce the variance of the estimated mean and standard deviation, we use a trimmed mean, removing the top $k$ and bottom $k$ of our samples, and the interquartile range to estimate the mean and standard deviation respectively \cite{wilcox2011introduction}. In our experiments, we set $k=1$.

\textbf{Marginalisation}. In Algorithm \ref{alg:regression}, we are required to compute the marginal distribution $p_\mathcal{N}(\y^*|\x^*,\z_j,\mathcal{D})$ given the Gaussian distributions $\{p(\y^*|\x^*,\mathcal{D} \cup \{\z_j,\bm{u}^{(j)}_t\})\}_{t=1}^T$. We compute this marginal distribution by bootstrap sampling from the distributions $p(\y^*|\x^*,\mathcal{D} \cup \{\z_j,\bm{u}^{(j)}_t\})$ and fitting a Gaussian distribution to the bootstrap samples via moment matching. This procedure is outlined in Algorithm \ref{alg:norm_approx}.

\textbf{\begin{algorithm}
    \caption{Approximate Permutation Invariant Regression Distribution: \text{\textsc{RegDist}}. }\label{alg:reg_dist}
    \begin{algorithmic}[1]
    \REQUIRE Test input $\x^*$; ICL Dataset $\mathcal{D}=\{\x_i,\y_i\}_{i=1}^n$ where $\y_i \in \mathbb{R}$
    \STATE \textbf{function} $\text{\textsc{RegDist}}(\x, \mathcal{D})$
    \FOR{$l = 1,\dots,L$}
    \STATE $\sigma_l \sim S_K$
    \STATE $\y^{(l)} \leftarrow \text{\textsc{Llm}}( \textsc{Prompt} (\x_{\sigma_l(1)}, \y_{\sigma_l(1)}, \dots, \x_{\sigma_l(K)}, \y_{\sigma_l(K)}, \x^*))$ \hfill \Comment{Sample next prediction}
    \ENDFOR
    \STATE $\Y \leftarrow \{\y^{(l)}\}_{l=1}^L$ \hfill \COMMENT{Trimming optional}
    \STATE \textbf{return} $\mathrm{Normal}\big(\mathrm{mean}(\Y), \mathrm{std}(\Y)\big)$
    \end{algorithmic}
\end{algorithm}
}

{\begin{algorithm}
    \caption{Approximate Marginalisation of Mixture Distributions: \text{\textsc{NormApprox}}. }\label{alg:norm_approx}
    \begin{algorithmic}[1]
    \REQUIRE Distributions $\{p_t(\y)\}_{t=1}^T$
    \STATE \textbf{function} \text{\textsc{NormApprox}} $(\{p_t(\y)\}_{t=1}^T)$
    \FOR{$r = 1,\dots,R$}
    \STATE $t_r \sim \mathcal{U}\{1,T\}$ \hfill \Comment{Uniform discrete distribution from 1 to $T$}
    \STATE $\y_{\mathcal{B}}^{(r)} \sim p_{t_r}(\y)$ \hfill \Comment{Sample next prediction}
    \ENDFOR
    \STATE $\Y_\mathcal{B} \leftarrow \{\y_{\mathcal{B}}^{(r)}\}_{R=1}^L$ 
    \STATE \textbf{return} $\mathrm{Normal}\big(\mathrm{mean}(\Y_\mathcal{B}), \mathrm{std}(\Y_\mathcal{B})\big)$
    \end{algorithmic}
\end{algorithm}
}

\subsection{Profiling View of VUD Algorithm}
The memory requirements and speed of applying VUD to a particular prediction task depends on the specific choice of LLMs and the hardware on which it is deployed. Therefore, for a clearer outline of computational costs, we outline the number of API calls for each step of the algorithm as this is the most significant bottleneck in the algorithm. In Table \ref{tab:profiling_view_classification} and \ref{tab:profiling_view_regression}, we provide a profiling view for VUD in a classification task and a regression task respectively.

\begin{table}[h]
    \centering
    \caption{Profiling View for Classification Task with $k$ classes using $n$ auxiliary data, and $L$ permutations per distribution.}
    \begin{tabular}{ccc}
    \toprule
    \textbf{Function} & \textbf{LLM Calls Per Function} & \textbf{Num Function Calls} \\
    \midrule
    $p(\y^*|\x^*,\mathcal{D})$ & $L$ & 1 \\
    $p(\BU | \BZ_i, \mathcal{D})$ & $L$ & $n$ \\ 
    $p(\y^*|\x^*, \BU=u ,\mathcal{D})$ & $L$ & $nk$ \\ 
    \bottomrule
    \end{tabular}
    \label{tab:profiling_view_classification}
\end{table}

\begin{table}[h]
    \centering
    \caption{Profiling View for Regression Task with $n$ auxiliary data, and $L$ samples to evaluate each distribution with $m$ discarded samples (due to trimming of mean in Algorithm \ref{alg:reg_dist}).}
    \begin{tabular}{ccc}
    \toprule
    \textbf{Function} & \textbf{LLM Calls Per Function} & \textbf{Num Function Calls} \\
    \midrule
    $p(\y^*|\x^*,\mathcal{D})$ & $L+m$ & 1 \\
    $p(\BU | \BZ_i, \mathcal{D})$ & $L+m$ & $n$ \\ 
    $p(\y^*|\x^*, \BU=u ,\mathcal{D})$ & $L+m$ & $nL$ \\ 
    \bottomrule
    \end{tabular}
    \label{tab:profiling_view_regression}
\end{table}
\section{Further Related Work}\label{appx:further_related_works}

\textbf{Bayesian Interpretations of In-Context Learning}. 
Works in recent years \cite{xie2022explanationincontextlearningimplicit, panwar2023context, muller2021transformers} suggested that the behaviour of transformers during in-context learning emulates Bayesian inference. In our work, this Bayesian behaviour of ICL is a key assumption that is necessary for the validity of the variational uncertainty decomposition algorithm. However, there is also evidence to suggest that this Bayesian behaviour is only approximate during long-term generation in LLMs, invalidating the Bayesian assumption \cite{falck2024incontextlearninglargelanguage, liu2024towards}. In light of these previous works, our innovation lies in the attempt to promote permutation-invariant generation and filter non-Bayesian generation from auxiliary data to maintain the Bayesian assumption that we make.

\textbf{Permutation Invariance and Exchangeability in LLMs}. The generation in language models is dependent on the position of tokens \cite{lu2021fantastically, zhao2021calibrate}. This is a clear violation of exchangeability, which is necessary for the application of de Finetti's theorem. \cite{zhang2023deep} assumes the exchangeability of LLM generation to apply de Finetti which allows for the estimation of the topic distributions from LLMs. However, they do not apply permutations during ICL to the context. \cite{ye2024exchangeable} discusses the importance of exchangeability for quantifying uncertainty in ICL. They investigate methods to promote permutation invariance during pre-training and fine-tuning or architectural modifications to the transformer through causal masking. Whilst they suggest using permuted data as a data augmentation technique during training, our permutation invariant conditional generation is purely applied during inference. Our approach incurs a greater cost during inference time but does not require fine-tuning of the LLM. 

\textbf{Martingale Posteriors}.
The Martingale posterior \cite{falck2024incontextlearninglargelanguage, fong2023martingale, lee2023martingale} construct a generalised notion of posterior distribution by the following steps: (1) defining a sequence of predictive distributions $\{p^{n}(\y^*|\x^*, \{(\x_i, \y_i) \}_{i=1}^n) \}$ for all $n \geq 1$, (2) sequentially generating $\y_{j} \sim p^{j}(\y_j|\x_j, \{(\x_i, \y_i) \}_{i<j})$ for $j = n+1, ..., N$ with $N >> n$, and (3) computing a \emph{proxy latent parameter} $\psi = g(\{(\x_i, \y_i) \}_{i=1}^n \cup \{(\x_j, \y_j) \}_{j=n+1}^N)$ via some function $g$. Technically, this defines the following form of Martingale posterior ($\mathcal{D} = \{(\x_i, \y_i) \}_{i=1}^n$):
\begin{equation*}
\begin{aligned}
q^N(\psi | \mathcal{D}, \{\x_j \}_{j=n+1}^N) = \int \delta(\psi = g(\{(\x_i, \y_i) \}_{i=1}^n \cup \{(\x_j, \y_j) \}_{j=n+1}^N)) \\ \times \prod_{j=n+1}^N p^{j}(\y_j|\x_j, \{(\x_i, \y_i) \}_{i<j}) d \y_{n+1:N}.
\end{aligned}
\end{equation*}
If a \emph{proxy likelihood model} $q(\y^* | \x^*, \psi)$ is further specified, then the predictive Martingale posterior can be defined as \citep{lee2023martingale}
\begin{equation}
    q^N(\y^* | \x^*, \mathcal{D}, \{\x_j \}_{j=n+1}^N) = \int q(\y^* | \x^*, \psi) q^N(\psi | \mathcal{D}, \{\x_j \}_{j=n+1}^N) d \psi.
\end{equation}
Therefore an uncertainty decomposition (as presented in Section \ref{sec:background}) by conditioning on the proxy latent parameter $\psi$ is plausible. We can compute the ``Martingale version'' of total uncertainty as $\mathbb{H}[q^N(\y^* | \x^*, \mathcal{D}, \{\x_j \}_{j=n+1}^N)]$ and the aleatoric uncertainty as $\mathbb{E}_{q^N(\psi | \mathcal{D}, \{\x_j \}_{j=n+1}^N)}[\mathbb{H}[q(\y^* | \x^*, \psi)]]$. Epistemic uncertainty can then be obtained via simple subtraction arithmetic.

The (predictive) Martingale posterior generalises conventional (predictive) Bayesian posterior as it does not require $\{p^{n}(\y^*|\x^*, \{(\x_i, \y_i) \}_{i=1}^n) \}$ to be consistent and correspond to the probability of an exchangeable sequence; instead it requires convergence properties of the $\{p^{n}(\y^*|\x^*, \{(\x_i, \y_i) \}_{i=1}^n) \}$ distributions and the $g$ function when $N \rightarrow \infty$, where we refer to \citep{fong2023martingale} for details. In practice, to obtain robust estimations of Martingale posteriors, $N$ is often substantially larger than $n$, incurring significant computational cost, and the computation of $\{\y_j \}_{j=n+1}^N$ samples cannot be parallelised.

To make a critical comparison to our proposed concept of variational uncertainty decomposition, we note that in general Martingale posterior is also different from the conventional Bayesian posterior, even when there exists an exchangeable model such that $p^{n}(\y^*|\x^*, \{(\x_i, \y_i) \}_{i=1}^n) = p(\y^*|\x^*, \{(\x_i, \y_i) \}_{i=1}^n)$ for all $n \geq 1$. The key reason is because the corresponding Bayesian model $p(\y|\x, \theta)p(\theta)$ is \emph{implicitly} defined via de Finetti's theorem applied to $p(\y^*|\x^*, \{(\x_i, \y_i) \}_{i=1}^n)$, meaning that its latent parameter $\theta$ is an ``unknown unknown'', i.e., the format of $\theta$ (e.g., dimensionality, value domain, etc) cannot be explicitly specified. Hence in general $\psi$ and $\theta$ are two different random variables in different domains (and thus the name ``proxy'' for $\psi$ in our terminology). Consequently, the uncertainty decomposition results based on $\psi$ are no longer faithful directly to the implicit Bayesian model $p(\y|\x, \theta)p(\theta)$, and their estimation gaps, when referencing to the implicit Bayesian model's uncertainties $U_a$ and $U_e$, are yet to be established. On the contrary, our proposed variational estimators $V_a$ and $V_e$ are faithful bounds to $U_a$ and $U_e$, respectively, and we have identified the exact mathematical expression of the estimation gap in Section \ref{sec: 3_variational_uncertainty_decomp}, which can be interpreted as residual information gain and/or remaining disagreement in fantasy. 

\textbf{Uncertainty Quantification for LLMs}. Reliable and robust uncertainty quantification is an area of growing importance in the field of language models \cite{shorinwa2025survey, abbasli2025comparing}. A common approach, which we employ in this paper, is using token-level probabilities \cite{kadavath2022language, ling2024uncertainty, fadeeva2024fact} by analysing the probabilities of the tokens generated by a language model. These probabilities can be further calibrated by adapting standard methods for uncertainty quantification in deep learning such as temperature scaling \cite{guo2017calibration, xie2025empirical, cecere2025monte, xie2024calibrating, xiao2021hallucination}, focal loss training \cite{mukhoti2020calibrating, xie2025empirical}, and conformal prediction \cite{shafer2008tutorial, ye2024benchmarking}. Alternatively, careful prompting can elicit qualitative or quantitative verbalisations of the uncertainty in a statement made by the language model \cite{band2024linguistic, lin2022teaching, mielke2022reducing, tao2024trust}. In situations where an LLM generates open-ended answers to a question, token-level methods struggle to accurately capture the uncertainty of a response as it is possible to generate diverse responses in natural language that are semantically equivalent or similar. To address this, semantic similarity methods cluster similar responses together and report the combined uncertainty from each cluster \cite{ao2024css, kuhn2023semantic, lin2024generating}. Recent works have also taken a mechanistic interpretability approach to uncertainty quantification by using probes to analyse the hidden states of the LLM to diagnose when a model is uncertain \cite{kossen2024semantic, ahdritz2024distinguishing}. 

\textbf{Uncertainty Decomposition for LLM In-Context Predictions}. Uncertainty decomposition for LLMs has also been explored in previous works; however, the definitions of aleatoric and epistemic uncertainty vary from the traditional definitions in prior Bayesian literature. \citep{hou2024decomposinguncertaintylargelanguage} considers the aleatoric uncertainty of a response as the ambiguity in the input. Therefore, given a distribution of "clarifications" $q(\mathbf{C}|\x^*)$ for a particular prompt, the \emph{epistemic} uncertainty is defined as the \emph{mean} conditional uncertainty of a particular clarification $\mathbb{E}_{q(\mathbf{C}|\x^*)}[\mathbb{H}[\y^*|\x^* \oplus \mathbf{C}]]$. In contrast, we seek to find the minimal conditional entropy given auxiliary data, which acts as an upper bound to the underlying Bayesian \emph{conditional entropy}. Furthermore, the focus of \citep{hou2024decomposinguncertaintylargelanguage} is primarily zero-shot and few-shot prediction, whereas we consider tasks where a training dataset is provided in context.
Ling et al.~\citep{ling2024uncertainty} approaches uncertainty decomposition of in-context learning by also employing the interpretation that ICL performs Bayesian inference. However, they define epistemic uncertainty as the conditional entropy $\mathbb{E}_{p(\theta|\mathcal{D})}[H[\y^*|\x^*,\theta]]$ and aleatoric uncertainty as the mutual information $\mathbb{I}(\y^*; \theta|\x^*,\data)$. Both \citep{hou2024decomposinguncertaintylargelanguage} and \cite{ling2024uncertainty} reverse the traditional definitions of Bayesian uncertainty decomposition \cite{kendall2017uncertaintiesneedbayesiandeep} and therefore, we do not use these methods as baselines.

\textbf{Bayesian Approaches to Transformers}. In this work, we view in-context learning as implicit Bayesian inference. However, prior work has connected the transformer architecture with Bayesian inference more explicitly via Bayes-by-backprop approaches \cite{sankararaman2022bayesformer, malinin2020uncertainty, blundell2015weight}. In particular, low-rank adaptation \cite{yang2024bayesianlowrankadaptationlarge, balabanov2024uncertainty, onal2024gaussian} has allowed for parameter-efficient avenues for Bayesian deep learning in transformers. Alternatively, neural processes have been integrated with transformers \cite{nguyen2022transformer} to provide another approach to Bayesian uncertainty quantification in transformers. A connection between attention and sparse GP posterior mean is also established in \citep{chen2023calibrating}, which further builds a deep Gaussian process with transformer-type architectures.

\textbf{Applications to In-Context Exploration}. Techniques used to quantify uncertainty in LLM predictions can be used to drive in-context exploration-exploitation tasks.
In reinforcement learning and bandit tasks, efficient exploration algorithms such as Upper Confidence Bound \citep{lattimore2020bandit, azar2017minimax} and Thompson Sampling (TS) \citep{osband2013more, osband2016deep, sasso2023posterior} require modelling the epistemic posterior distribution over possible outcomes either implicitly, through visitation counts, or explicitly, for example via ensembles.
By modelling the epistemic uncertainty, the agent is able to reason about potential outcomes with uncertainty due to lack of data and explore in promising directions. Previous work that analyses the in-context exploration capabilities of LLMs includes \cite{krishnamurthy2024largelanguagemodelsexplore}, where the exploration capabilities of LLMs are compared to those of standard algorithms on small-scale tasks, and \cite{monea2024llms}, which investigates the exploration capability of LLMs on natural language bandit tasks.
The work in \cite{nie2024evolve} further explores and benchmarks LLMs' abilities on a number of bandit tasks and offers ways to improve the efficiency of exploration by introducing algorithmic enhancements that better align LLMs with the exploration-exploitation task.
This line of work focusing on bandits is complemented by \cite{wu2023smartplay}, which extends the benchmarking to include multi-step tasks in addition to bandits.
Finally, the work in \cite{arumugam2025toward} adapts the TS heuristic to the LLM setting, enabling LLM agents to tackle sequential decision-making tasks analogous to that of the full reinforcement learning setting. Uncertainty-aware exploration has also been used in active-learning settings to obtain smoother decision boundaries of LLMs by identifying the data points that will give smoother boundaries \cite{zhao2024probing}.

\textbf{Abstention}. The ability to defer the prediction of a language model is important in high-risk and sensitive applications of language models such as medical settings \cite{gui2024conformal, mozannar2022teaching}. In particular, abstention from answering questions is closely related to the domain of selective classification, where we learn a corresponding selection function $g:\mathcal{X}\to \{0,1\}$ alongside the standard classifier $f:\mathcal{X} \to \mathcal{Y}$, where $g(x)=0$ and $g(x)=1$ results in the classifier prediction being `rejected' and `accepted' respectively \cite{geifman2017selective, ren2022out, mozannar2020consistent}. The goal is to minimise the loss of accepted samples (risk) and maximise number of accepted samples (coverage). In our setting, we use the uncertainty estimate as a proxy for our selection function and use the ranked samples to determine the threshold of the score, thereby seeking to minimise the risk for a specified coverage.

\textbf{OOD Detection}. Detecting out-of-distribution (OOD) inputs is critical for real-world applications such as medical diagnosis and autonomous driving, where models can make confidently wrong predictions on inputs far from the training distribution. Foundational work demonstrated that softmax confidence often fails under distributional shift, establishing simple baselines for OOD detection in deep neural networks \cite{hendrycks2018deep}. However, epistemic uncertainty has been shown to be useful in OOD and hallucination detection \cite{xiao2021hallucination, kendall2017uncertaintiesneedbayesiandeep}. This led to uncertainty-based methods which estimate epistemic uncertainty such as deep ensembles \cite{lakshminarayanan2017simple}, where the uncertainty is measured through model diversity, and prior networks where distributional uncertainty is used in addition to epistemic uncertainty \cite{malinin2018predictive}. In NLP, pre-trained language models have been used for OOD detection \cite{hendrycks2020pretrained} through non-Bayesian approaches such as contrastive learning \cite{zhou2021contrastive}, unsupervised detection with transformers \cite{xu2021unsupervised}, and conditional generation strategies to improve OOD discriminability \cite{ren2022out}. Extensions to multimodal settings further explore OOD detection in vision-language tasks \cite{ming2022delving}.

\textbf{Mutual Information Estimators}. The quantity of mutual information for which we provide a lower bound has many applications including Bayesian experimental design \cite{rainforth2024modern}, independent component analysis \cite{lee1998independent}, neuroscience \cite{palmer2015predictive} and causality \cite{hlavavckova2007causality}. However, mutual information between two variables is considered challenging to estimate \cite{walters2009estimation} as it requires access to the joint distribution of the variables, which is often unavailable. Variational methods are a popular approach used to lower-bound mutual information \cite{poole2019variational}, and in particular, MINE \cite{belghazi2018mine} and InfoNCE \cite{oord2018representation} are methods based on variational lower bounds to the mutual information. However, when estimating $\mathbb{I}[\y^*;\theta|\x^*,\mathcal{D}]$, these methods require access to samples from both random variables $\y^*$ and $\theta$, but in our problem setting, the latent parameter $\theta$ is implicitly defined and thus cannot be sampled. Still, our approach provides a variational lower bound to the conditional mutual information quantity in this challenging setting, where our innovation sidesteps the access requirement of the $\theta$ variable by constructing optimisable probes via a Markov chain $\y^* \leftarrow \theta \rightarrow \BU$, enabling data processing inequality arguments and allowing lower-bound optimisation similar to MINE.
\section{Experiments}\label{appx:expts}

\subsection{Code Implementation}\label{app: implementation_information}

The following delineates the foundation of our experiments:
\begin{itemize}
    \item Codebase: Python \& PyTorch
    \item CPU: AMD EPYC 7443P
    \item GPU: NVIDIA A6000 48GB
\end{itemize}

We leverage Qwen2.5-14B/14B-Instruct/7B \cite{qwen2025qwen25technicalreport} and Llama-3.1-8B \cite{touvron2023llamaopenefficientfoundation} in our experiments. The following delineates the configurations of our LLM.
\begin{itemize}
    \item Temperature: 1.0
    \item Log Probs: 10
    \item Max Tokens: 10 (Qwen2.5-14B/7B and Llama-3.1-8B), 512 (Qwen2.5-14B-Instruct)
\end{itemize}

\subsection{Further Baselines}
\label{appx:martingale_exp}
In this section, we include further comparisons to Martingale posterior distributions \cite{fong2023martingale}, an alternative uncertainty decomposition method for implicitly defined Bayesian models on an exchangeable sequence. However, one of the disadvantages of the Martingale posterior method is that we need to make distributional assumptions on the form of the proxy likelihood $q(\y|\x,\psi)$ (see Appendix \ref{appx:further_related_works} for further discussion). This can become particularly restrictive as the choice of likelihood model can greatly impact the estimated total uncertainty as we show in the following Figures \ref{fig:martingale_log_qwen14b}-\ref{fig:martingale_log_llama8b} for the ``Logistic Regression'' Dataset and Figures \ref{fig:two_moons_qwen14b_martingale_linear}-\ref{fig:two_moons_llama8b_martingale_kernel} for the ``Moons 1'' Dataset and the Table \ref{tbl:L2_distance_logistic_regression} and \ref{tbl:L2_distance_moons} (further details on these datasets can be found in Appendix \ref{appx:synthetic_tasks}).

As both datasets are (binary) classification problems, we consider four proxy likelihoods of the form  $q(\y|\x, \psi) \propto   p_\psi(\x)^y(1 - p_\psi(\x))^y$, which we denote as `linear', `quadratic', `cubic', and `kernel':
\begin{itemize}
    \item \textbf{Linear}: $p_\psi(\x) = \sigma(\psi_0 + \sum_{i} \psi_i x_i )$ where $\sigma$ is the standard logistic function (sigmoid). This is the probability distribution that is used to generate the dataset but may not reflect the true internal likelihood of the LLM. Here, $\psi \in \mathbb{R}^{d+1}$ (where $d$ is the dimension of $\x$).
    \item \textbf{Quadratic}: $p_\psi(\x) = \sigma(\psi_0 + \sum_{i} \psi_i x_i + \sum_{i\leq j} \psi_{ij} x_i x_j)$ where $\sigma$ is the standard logistic function. Here, $\psi \in \mathbb{R}^{\frac{(d+1)(d+2)}{2}}$.
    \item \textbf{Cubic}: $p_\psi(\x) = \sigma(\psi_0 + \sum_{i} \psi_i x_i + \sum_{i\leq j} \psi_{ij}x_i x_j + \sum_{i\leq j \leq k} \psi_{ijk}x_i x_j x_k)$ where $\sigma$ is the standard logistic function. Here, $\psi \in \mathbb{R}^{\frac{(d+1)(d+2)(d+3)}{6}}$.
    \item \textbf{Kernel}: $p_\psi(\x) = \sigma(\psi_A + \psi_Bf_\psi(\x))$, where $\sigma$ is the logistic regression function, $\psi_A$ and $\psi_B$ are parameters to scale the logits, and $f_\psi(\x)$ are the logits. The logits $f_\psi(\x)$ are of the form $$f_\psi(\x) = \psi_0+ \sum_i y_i\psi_i k(\x_i,\x),$$ where $\{(x_i, y_i)\}_i$ is the training data set of initial examples and the generated, and the kernel $k$ is the RBF kernel. To obtain the estimate for the proxy latent parameter $\psi$, we follow the Platt-scaling method \cite{platt1999probabilistic} which is implemented in the \texttt{scikit-learn} package \texttt{SVC} \cite{scikit-learn}. Here, $\psi \in \mathbb{R}^{N+3}$ (where $N$ is the size of the combined initial training examples and the generated sample path from the LLM).
\end{itemize}

To sequentially generate the next sample in the sequence $(\x_j, \y_{j}) \sim p^{j}((\x_j,\y_j)|\{(\x_i, \y_i) \}_{i<j})$, we permute the order of $\{(\x_i, \y_i) \}_{i<j}$ in the prompt for the LLM. This technique of permuting the observations in the sample path is used in previous work in the context of Martingale posteriors and LLM \cite{falck2024incontextlearninglargelanguage} and emulates the permutation-invariant sampling approach that we use for our VUD method. However, in our method, we sample the posterior for multiple permutations and compute an average to obtain our estimate for $p(\y^*|\x^*, \mathcal{D})$, whereas at each step in the sample path for the Martingale posterior, we only perform one permutation.

To compute the proxy latent parameter $\hat{\psi}$ given the sampled data, we follow a similar approach to the method outlined by Falck et al. \cite{falck2024incontextlearninglargelanguage} as $\hat{\psi} =  g(\{(\x_i, \y_i) \}_{i=1}^n \cup \{(\x_j, \y_j) \}_{j=n+1}^N)=\mathrm{argmax}_{\theta \in \Theta} \sum_{i=1}^N \log p(y_j|x_j,\psi)$. From this formulation, we can interpret the posterior samples as the maximum likelihood estimate of the latent parameter given the sampled data.

\begin{table*}[htbp]
    \centering
    \caption{L2 Distance between total uncertainty given by Martingale posterior distribution and empirically  observed total uncertainty. Logistic Regression Dataset.}
    \vspace{-2mm}
    \label{tbl:L2_distance_logistic_regression}
    \begin{normalsize}
    \begin{threeparttable}
    \begin{sc}
    \resizebox{0.7\textwidth}{!}{
    \begin{tabular}{cccc}
    \toprule
     Likelihood Model & \texttt{Qwen2.5-14B} & \texttt{Qwen2.5-7B} & \texttt{Llama-3.1-8B} \\
     \hline
     Linear Features & 1.347 & 1.096 & 1.064 \\
     Quadratic Features & 3.624 &  4.334 & 1.381 \\
     Cubic Features & 1.000 & 0.873 & 0.589\\
     Kernel-Based & 7.823 & 9.229 & 4.731 \\
    \bottomrule
    \end{tabular}}
    \end{sc}
    \end{threeparttable}
    \end{normalsize}
\end{table*}

\begin{table*}[htbp]
    \centering
    \caption{L2 Distance between total uncertainty given by Martingale posterior distribution and empirically  observed total uncertainty. Moons Dataset.}
    \vspace{-2mm}
    \label{tbl:L2_distance_moons}
    \begin{normalsize}
    \begin{threeparttable}
    \begin{sc}
    \resizebox{0.7\textwidth}{!}{
    \begin{tabular}{cccc}
    \toprule
     Likelihood Model & \texttt{Qwen2.5-14B} & \texttt{Qwen2.5-7B} & \texttt{Llama-3.1-8B} \\
     \hline
     Linear Features & 2.379 & 1.789 & 1.515 \\
     Quadratic Features & 2.796 & 2.781 & 2.819 \\
     Cubic Features & 2.697 & 2.254 & 2.781 \\
     Kernel-Based & 1.530 & 1.213 & 1.254 \\
    \bottomrule
    \end{tabular}}
    \end{sc}
    \end{threeparttable}
    \end{normalsize}
\end{table*}

\begin{figure}[H]
    \centering
    \begin{subfigure}[t]{0.49\textwidth}
        \centering
        \includegraphics[width=\textwidth]{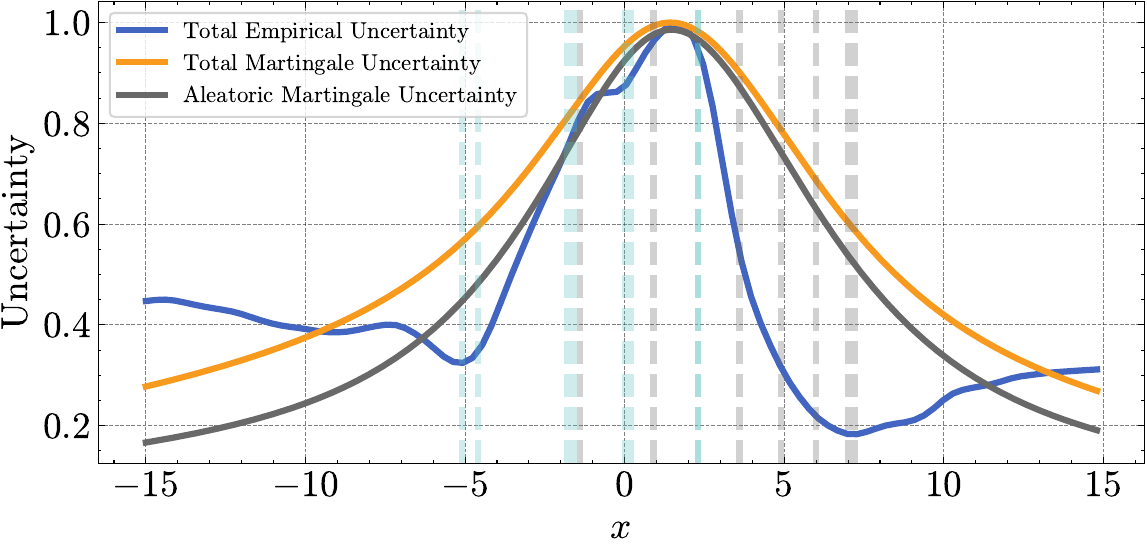}
        \caption{Linear Features}
        \vspace{2mm}
    \end{subfigure}
    \begin{subfigure}[t]{0.49\textwidth}
        \centering
        \includegraphics[width=\textwidth]{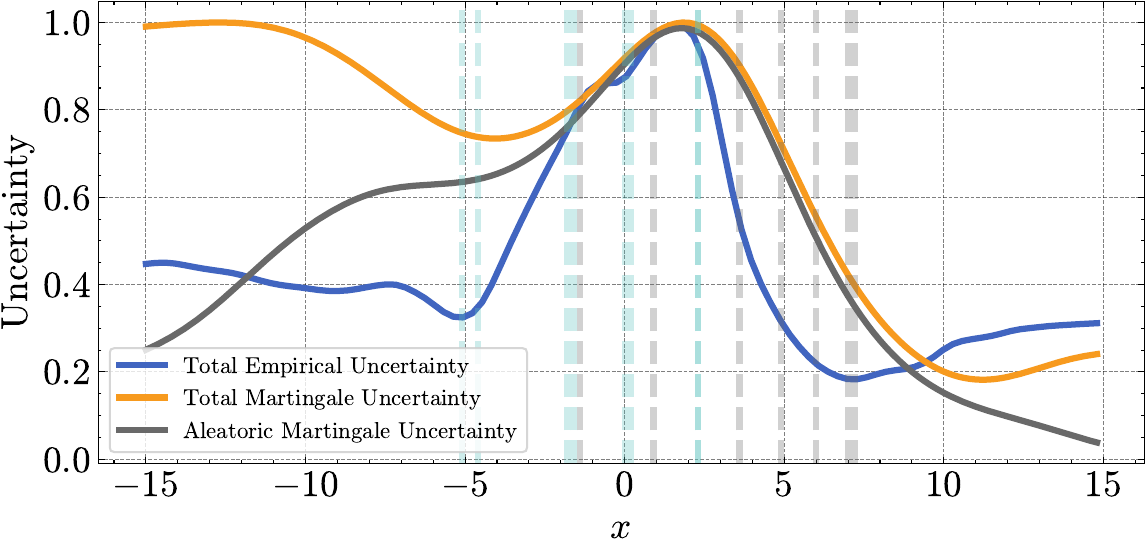}
        \caption{Quadratic Features}
        \vspace{2mm}
    \end{subfigure}
    \begin{subfigure}[t]{0.49\textwidth}
    \centering
    \includegraphics[width=\textwidth]{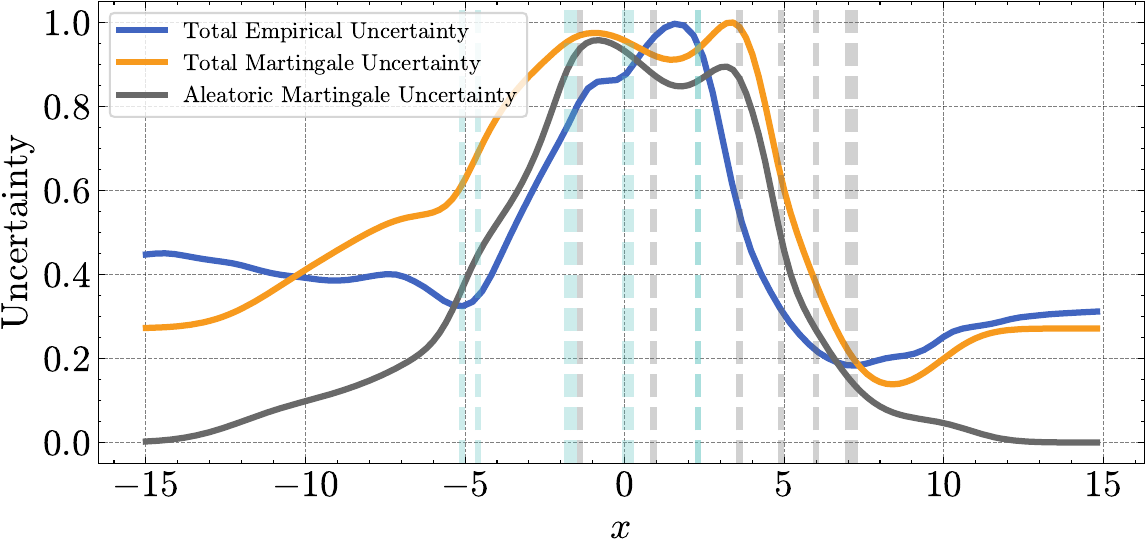}
    \caption{Cubic Features}
    \end{subfigure}
    \begin{subfigure}[t]{0.49\textwidth}
        \centering
        \includegraphics[width=\textwidth]{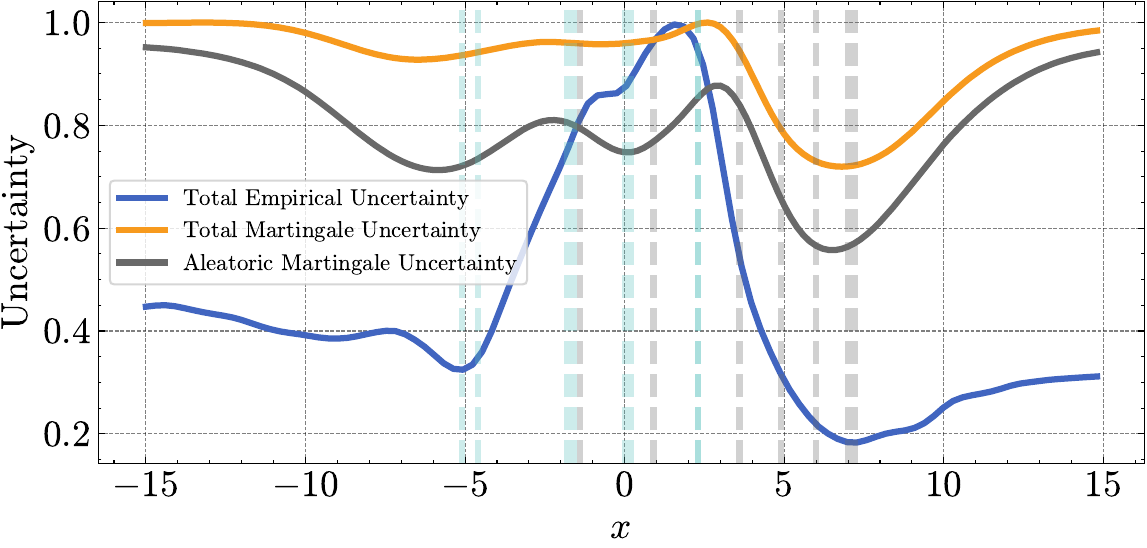}
        \caption{RBF Kernel}
    \end{subfigure}
    \caption{Martingale Posterioir Uncertainty Decompositions for Logistic Regression (\texttt{Qwen2.5-14B})}
    \vspace{-4mm}
    \label{fig:martingale_log_qwen14b}
\end{figure}

\begin{figure}[H]
    \centering
    \begin{subfigure}[t]{0.49\textwidth}
        \centering
        \includegraphics[width=\textwidth]{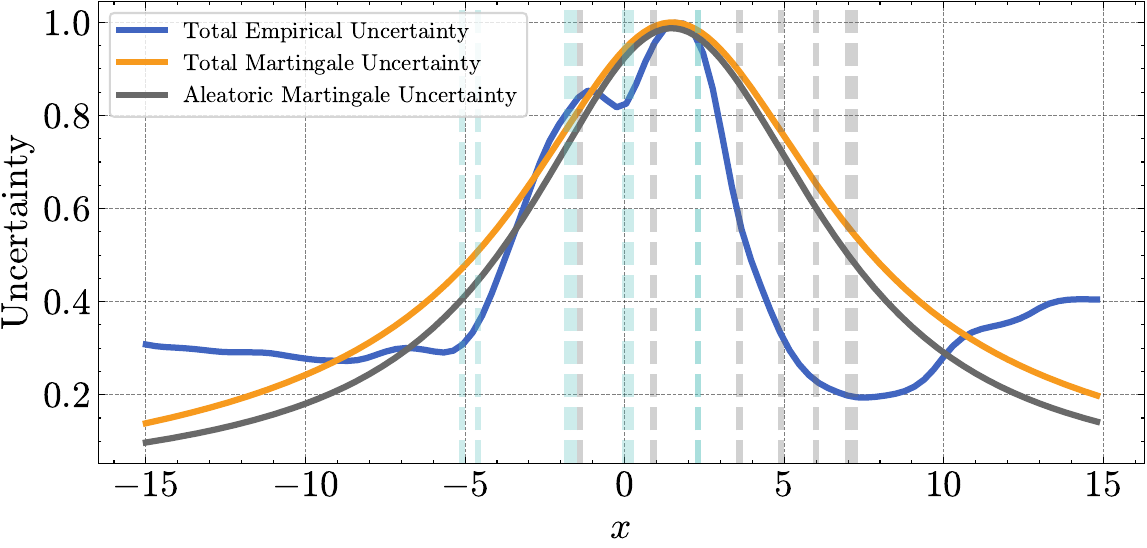}
        \caption{Linear Features}
        \vspace{2mm}
    \end{subfigure}
    \begin{subfigure}[t]{0.49\textwidth}
        \centering
        \includegraphics[width=\textwidth]{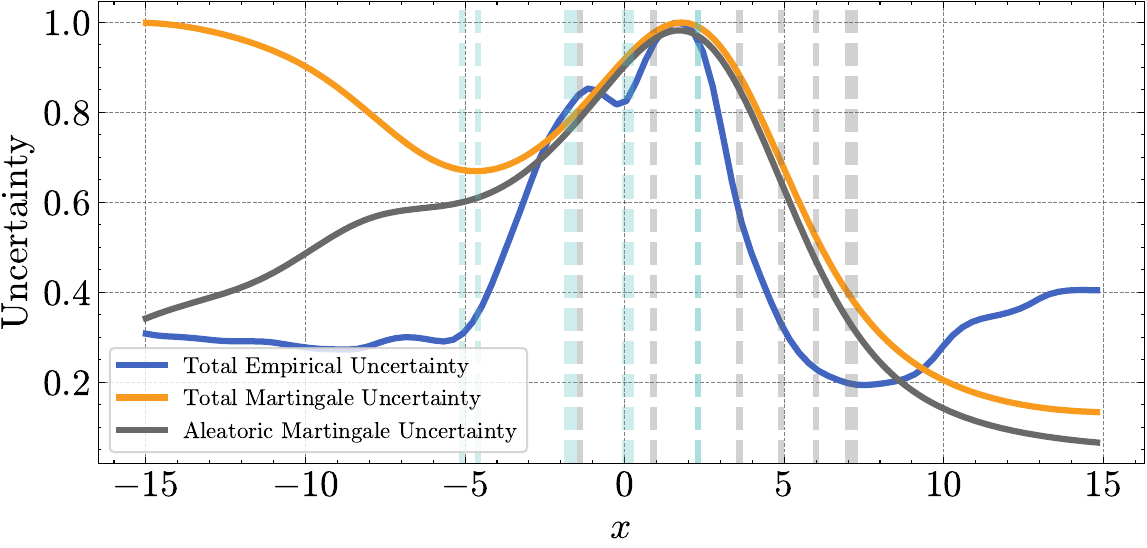}
        \caption{Quadratic Features}
        \vspace{2mm}
    \end{subfigure}
    \begin{subfigure}[t]{0.49\textwidth}
    \centering
    \includegraphics[width=\textwidth]{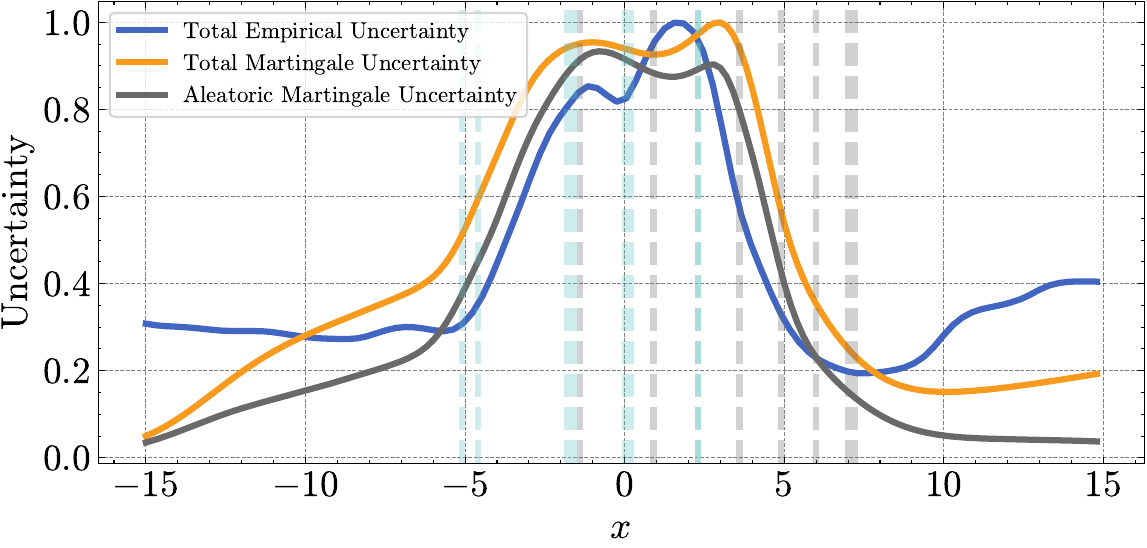}
    \caption{Cubic Features}
    \end{subfigure}
    \begin{subfigure}[t]{0.49\textwidth}
        \centering
        \includegraphics[width=\textwidth]{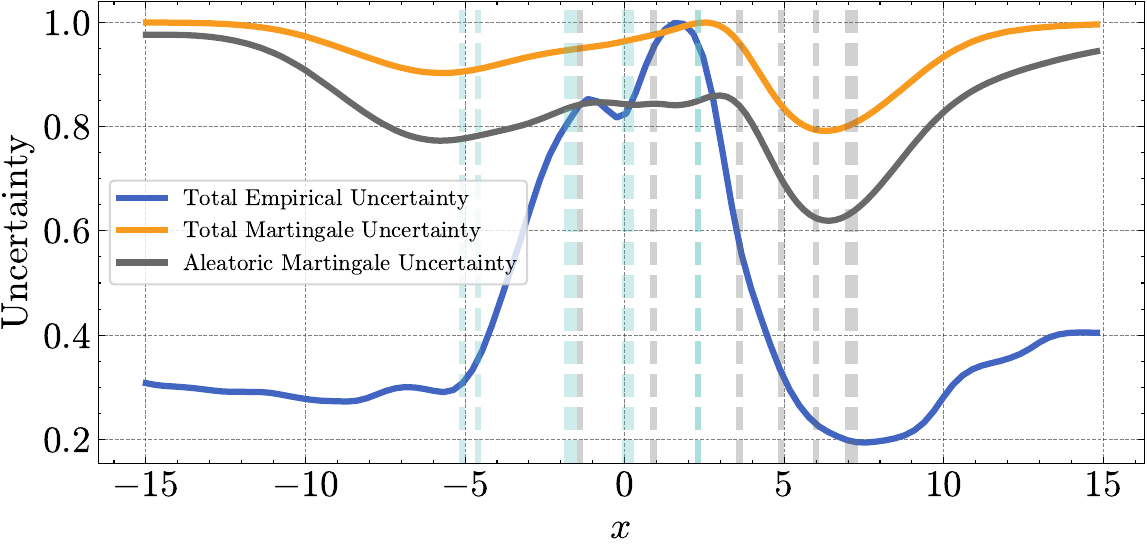}
        \caption{RBF Kernel}
    \end{subfigure}
    \caption{Martingale Posterioir Uncertainty Decompositions for Logistic Regression (\texttt{Qwen2.5-7B})}
    \vspace{-4mm}
\end{figure}

\begin{figure}[H]
    \centering
    \begin{subfigure}[t]{0.49\textwidth}
        \centering
        \includegraphics[width=\textwidth]{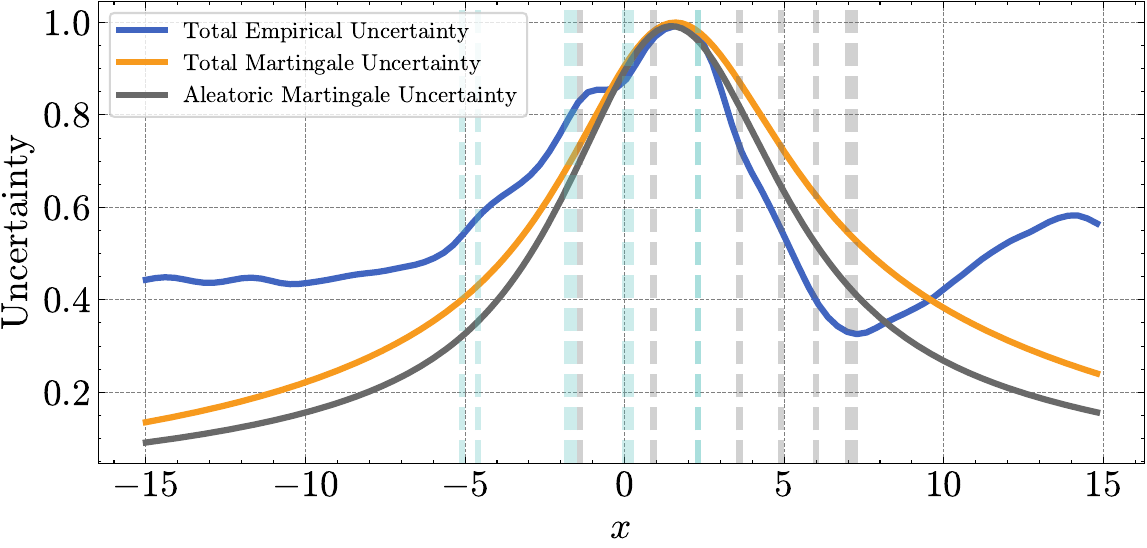}
        \caption{Linear Features}
        \vspace{2mm}
    \end{subfigure}
    \begin{subfigure}[t]{0.49\textwidth}
        \centering
        \includegraphics[width=\textwidth]{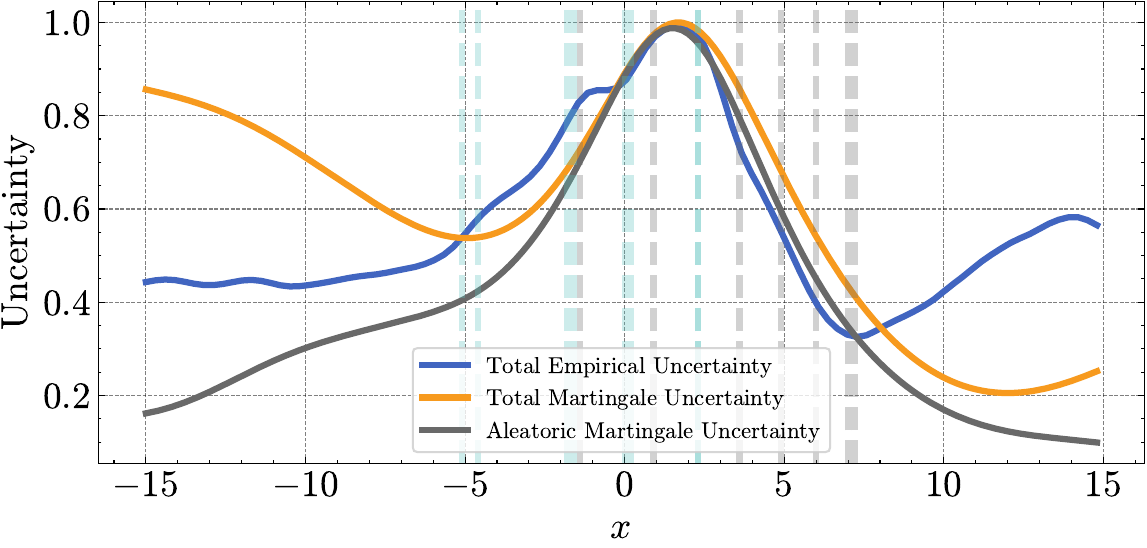}
        \caption{Quadratic Features}
        \vspace{2mm}
    \end{subfigure}
    \begin{subfigure}[t]{0.49\textwidth}
    \centering
    \includegraphics[width=\textwidth]{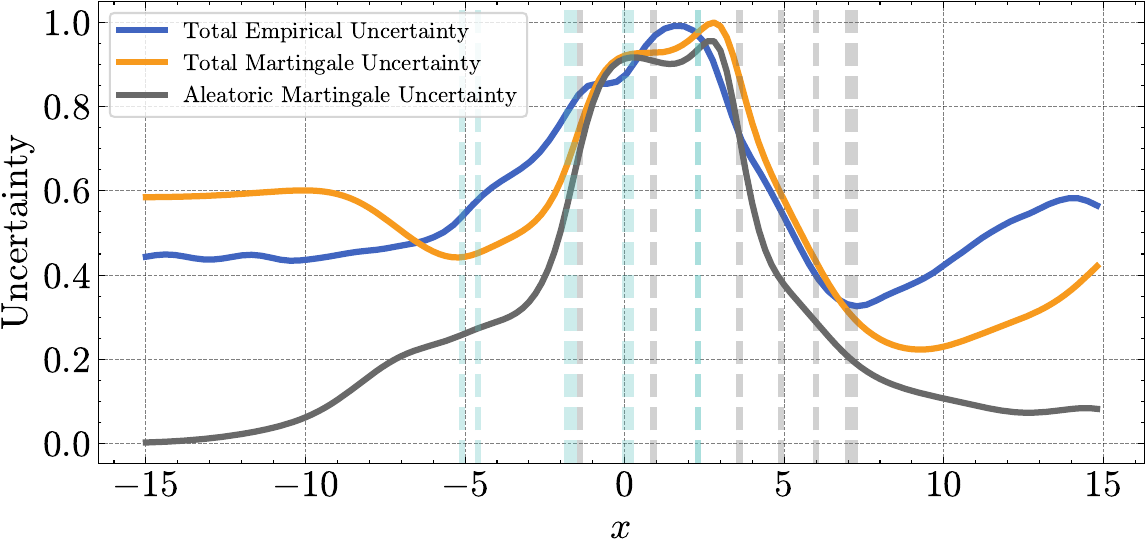}
    \caption{Cubic Features}
    \end{subfigure}
    \begin{subfigure}[t]{0.49\textwidth}
        \centering
        \includegraphics[width=\textwidth]{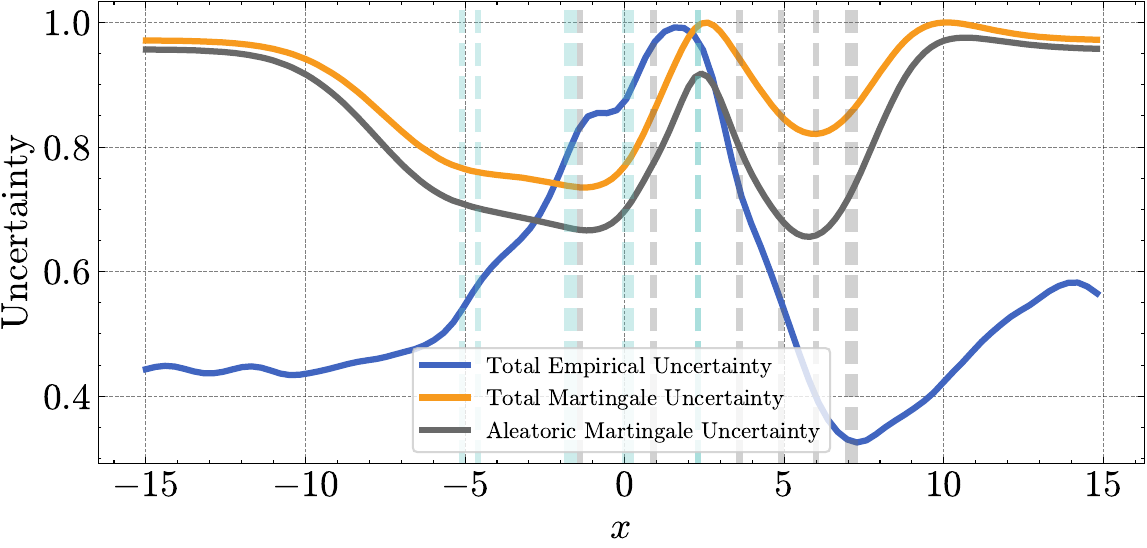}
        \caption{RBF Kernel}
    \end{subfigure}
    \caption{Martingale Posterioir Uncertainty Decompositions for Logistic Regression (\texttt{Llama-3.1-8B})}
    \vspace{-4mm}
    \label{fig:martingale_log_llama8b}
\end{figure}

\begin{figure}[H]
    \centering
    \begin{subfigure}[t]{0.325\textwidth}
        \centering
        \includegraphics[width=\linewidth]{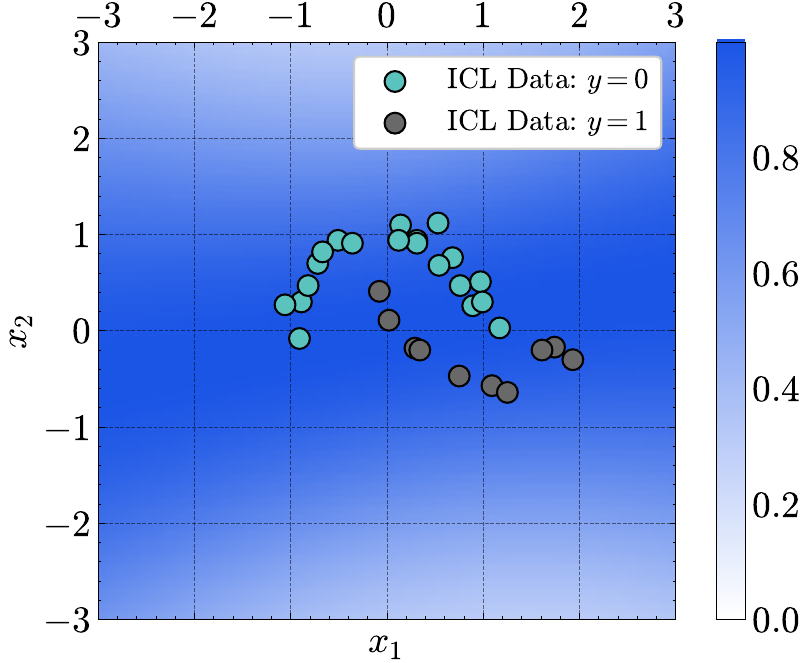}
        \caption{Total Uncertainty}
    \end{subfigure}
    \begin{subfigure}[t]{0.325\textwidth}
        \includegraphics[width=\linewidth]{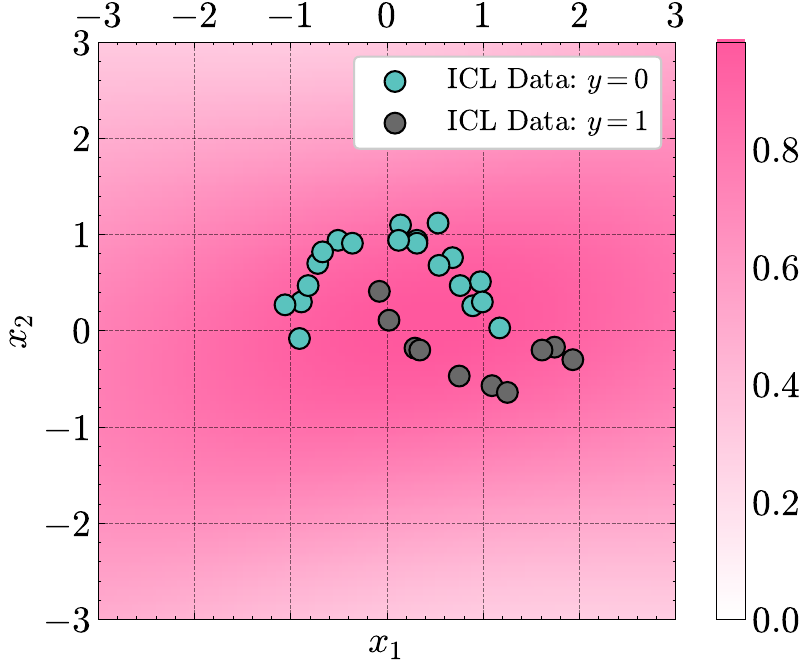}
        \caption{Aleatoric Uncertainty}
    \end{subfigure}
    \begin{subfigure}[t]{0.325\textwidth}
        \includegraphics[width=\linewidth]{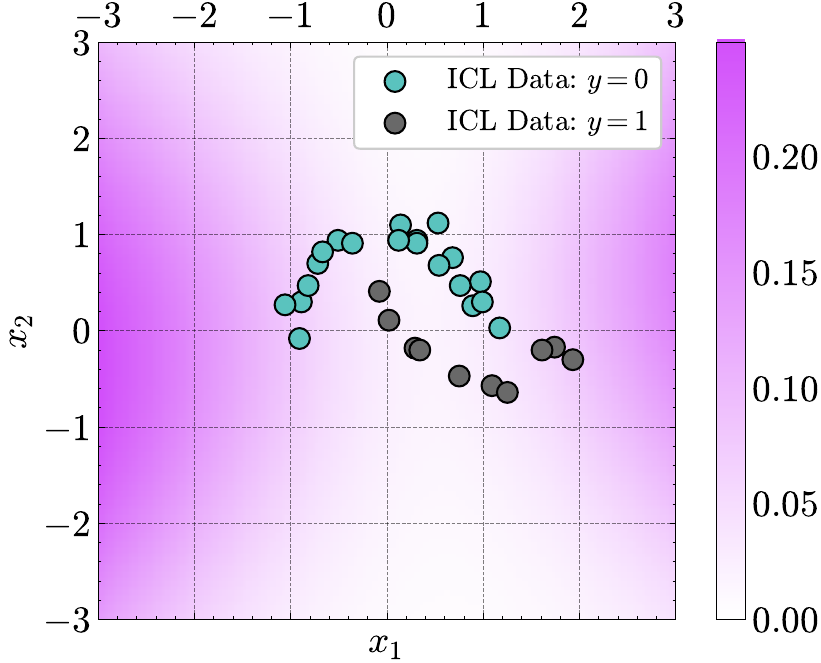}
        \caption{Epistemic Uncertainty}
    \end{subfigure}
    \centering
    \caption{Martingale Posterior Uncertainty Decomposition for "Moons 1" Dataset - Linear Features (\texttt{Qwen2.5-14B}).}
    \label{fig:two_moons_qwen14b_martingale_linear}
    \vspace{-4mm}
\end{figure}

\begin{figure}[H]
    \centering
    \begin{subfigure}[t]{0.325\textwidth}
        \centering
        \includegraphics[width=\linewidth]{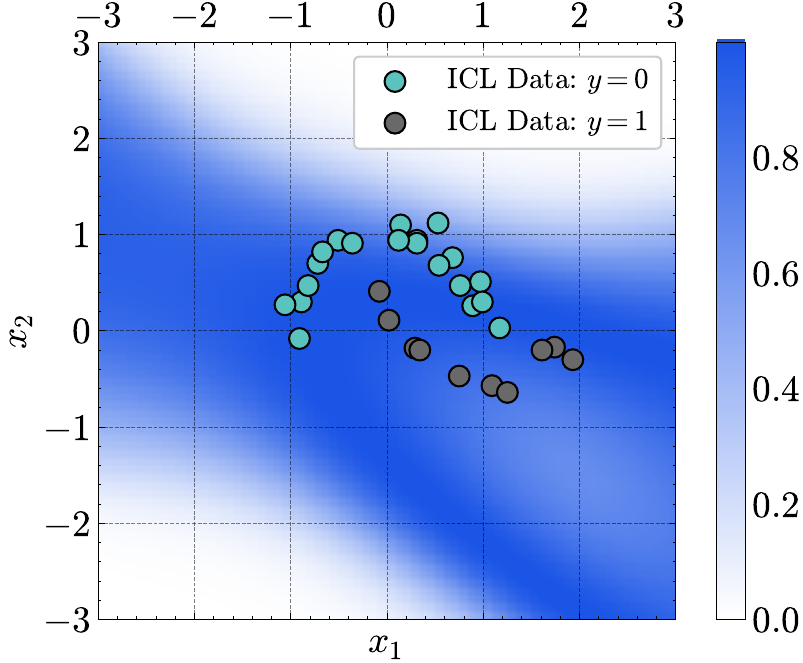}
        \caption{Total Uncertainty}
    \end{subfigure}
    \begin{subfigure}[t]{0.325\textwidth}
        \includegraphics[width=\linewidth]{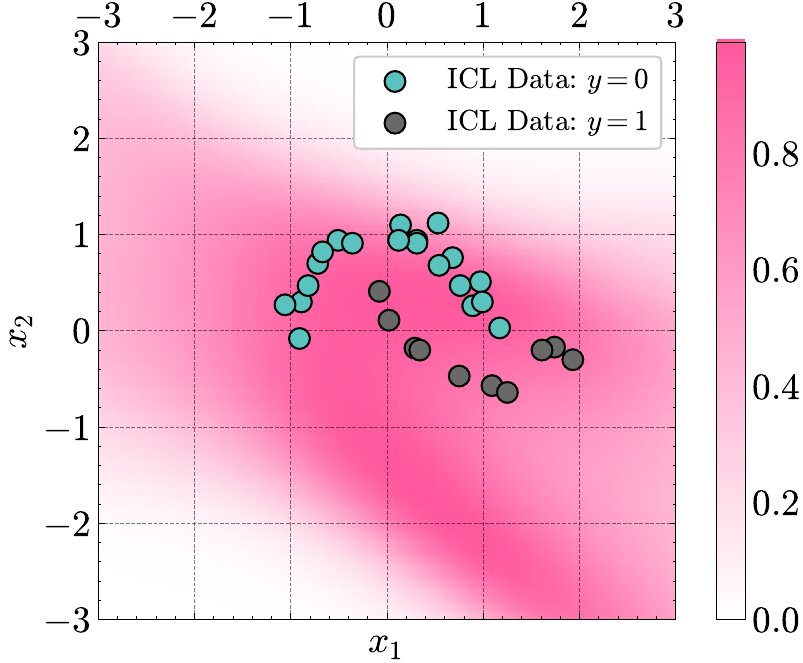}
        \caption{Aleatoric Uncertainty}
    \end{subfigure}
    \begin{subfigure}[t]{0.325\textwidth}
        \includegraphics[width=\linewidth]{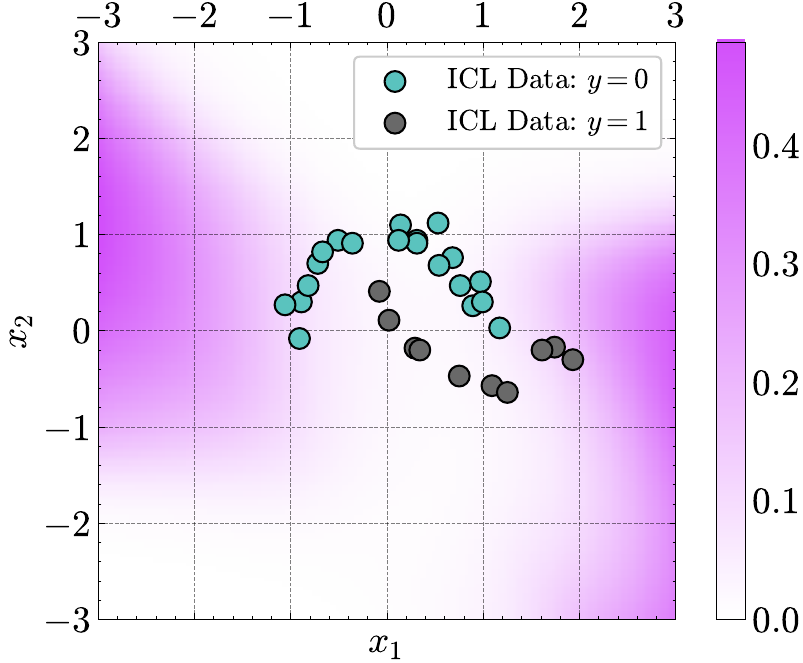}
        \caption{Epistemic Uncertainty}
    \end{subfigure}
    \centering
    \caption{Martingale Posterior Uncertainty Decomposition for "Moons 1" Dataset - Quadratic Features (\texttt{Qwen2.5-14B}).}
    \label{fig:two_moons_qwen14b_martingale_quadratic}
    \vspace{-4mm}
\end{figure}

\begin{figure}[H]
    \centering
    \begin{subfigure}[t]{0.325\textwidth}
        \centering
        \includegraphics[width=\linewidth]{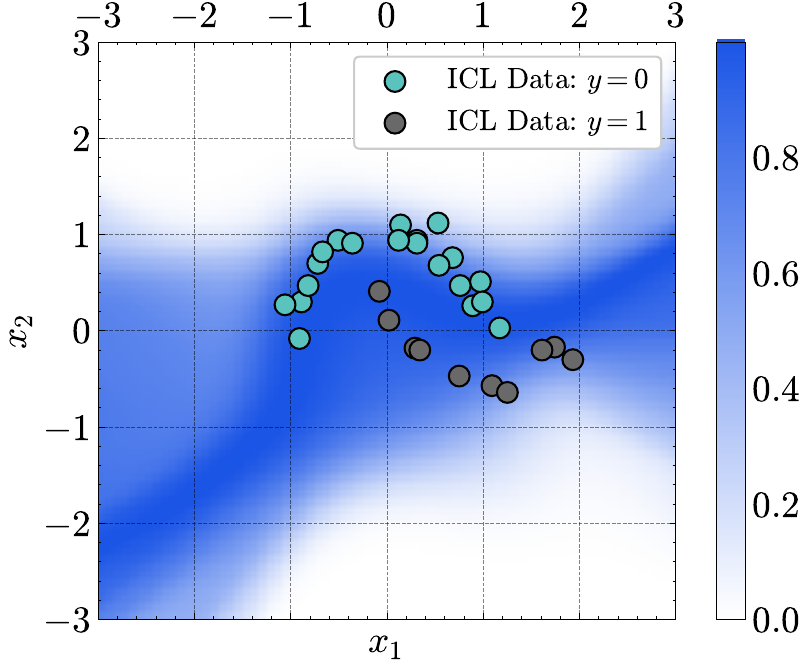}
        \caption{Total Uncertainty}
    \end{subfigure}
    \begin{subfigure}[t]{0.325\textwidth}
        \includegraphics[width=\linewidth]{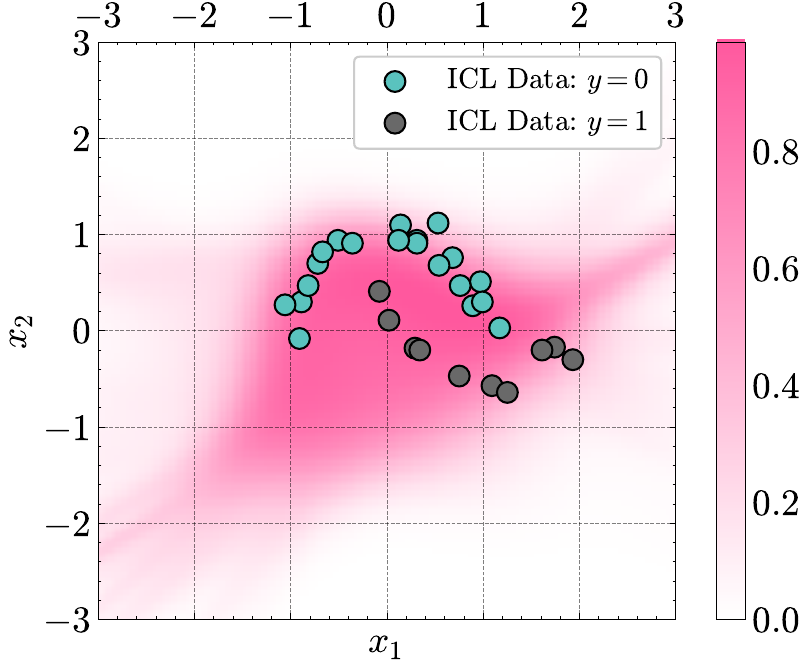}
        \caption{Aleatoric Uncertainty}
    \end{subfigure}
    \begin{subfigure}[t]{0.325\textwidth}
        \includegraphics[width=\linewidth]{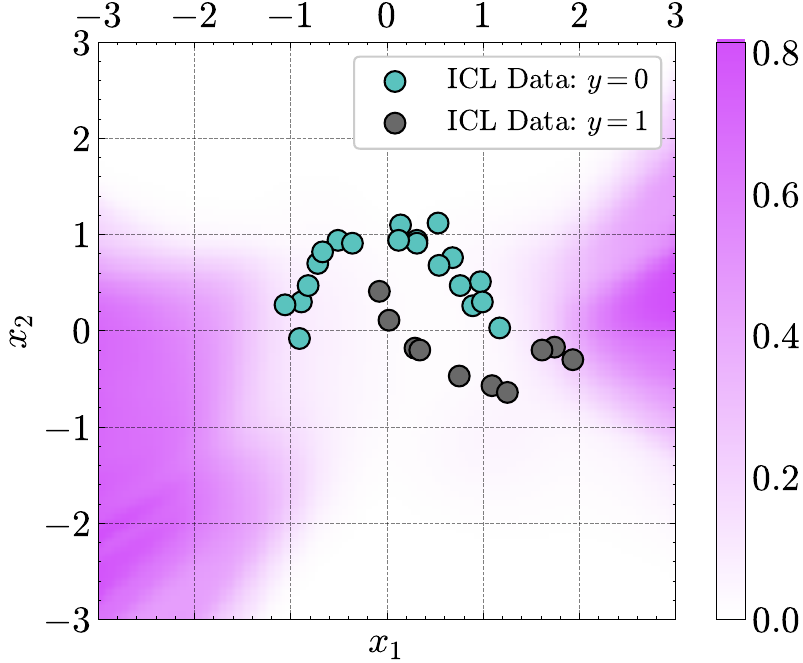}
        \caption{Epistemic Uncertainty}
    \end{subfigure}
    \centering
    \caption{Martingale Posterior Uncertainty Decomposition for "Moons 1" Dataset - Cubic Features (\texttt{Qwen2.5-14B}).}
    \label{fig:two_moons_qwen14b_martingale_cubic}
    \vspace{-4mm}
\end{figure}

\begin{figure}[H]
    \centering
    \begin{subfigure}[t]{0.325\textwidth}
        \centering
        \includegraphics[width=\linewidth]{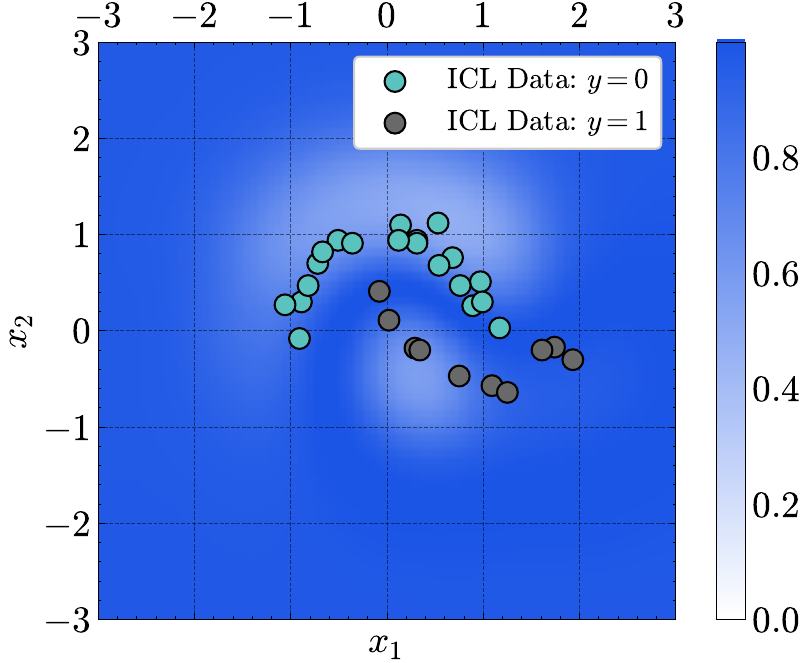}
        \caption{Total Uncertainty}
    \end{subfigure}
    \begin{subfigure}[t]{0.325\textwidth}
        \includegraphics[width=\linewidth]{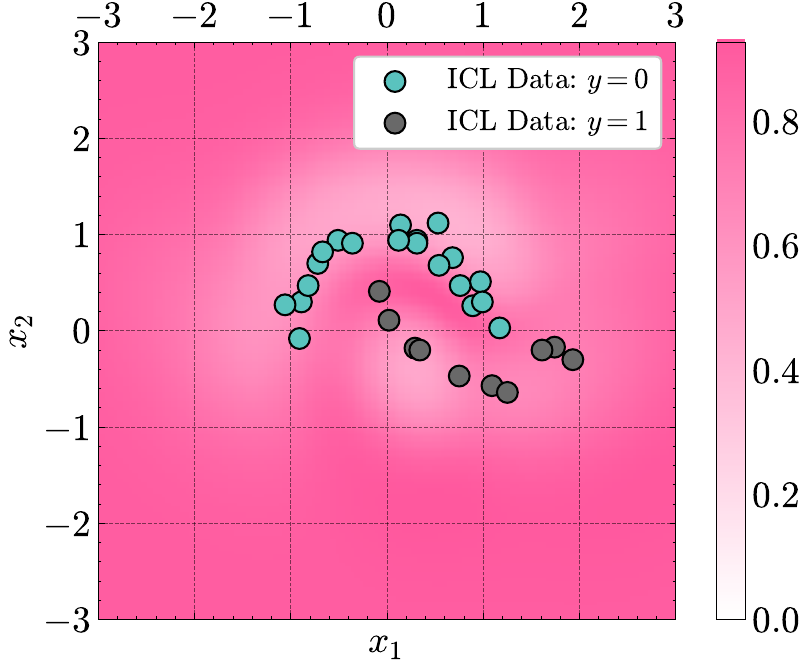}
        \caption{Aleatoric Uncertainty}
    \end{subfigure}
    \begin{subfigure}[t]{0.325\textwidth}
        \includegraphics[width=\linewidth]{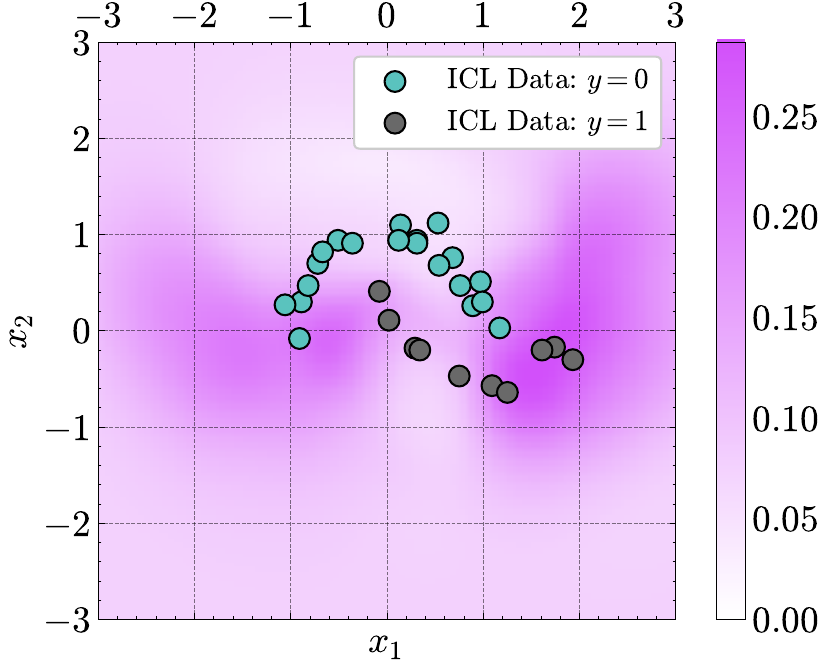}
        \caption{Epistemic Uncertainty}
    \end{subfigure}
    \centering
    \caption{Martingale Posterior Uncertainty Decomposition for "Moons 1" Dataset - Kernel-Based Likelihood (\texttt{Qwen2.5-14B}).}
    \label{fig:two_moons_qwen14b_martingale_kernel}
    \vspace{-4mm}
\end{figure}

\begin{figure}[H]
    \centering
    \begin{subfigure}[t]{0.325\textwidth}
        \centering
        \includegraphics[width=\linewidth]{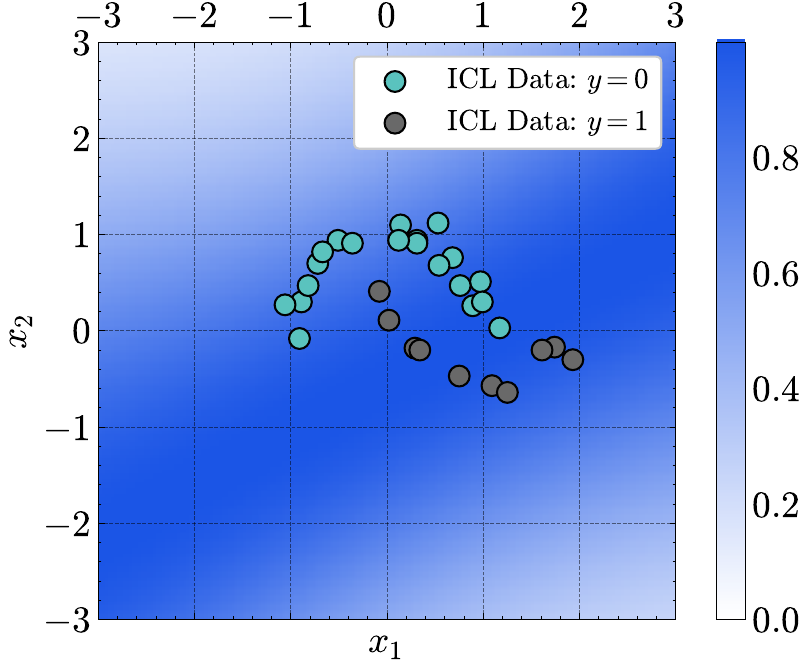}
        \caption{Total Uncertainty}
    \end{subfigure}
    \begin{subfigure}[t]{0.325\textwidth}
        \includegraphics[width=\linewidth]{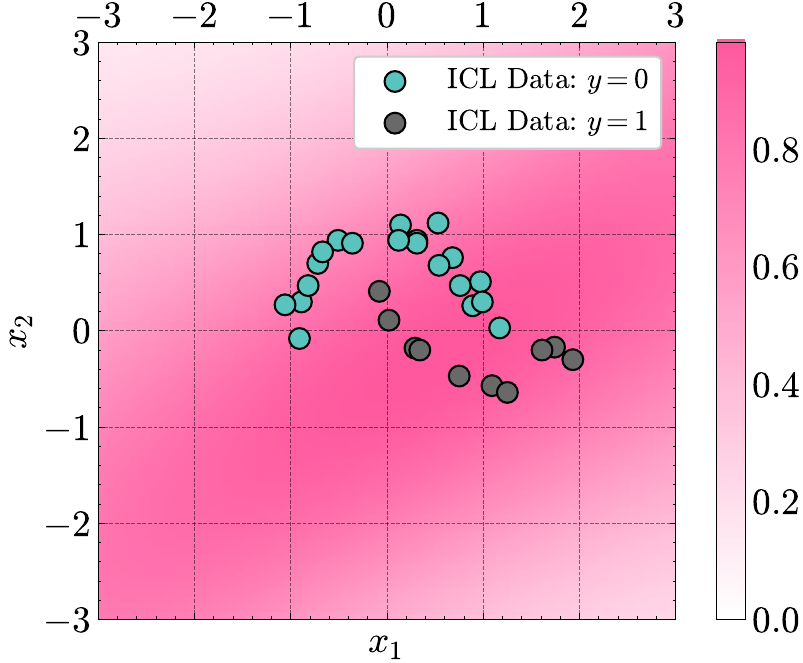}
        \caption{Aleatoric Uncertainty}
    \end{subfigure}
    \begin{subfigure}[t]{0.325\textwidth}
        \includegraphics[width=\linewidth]{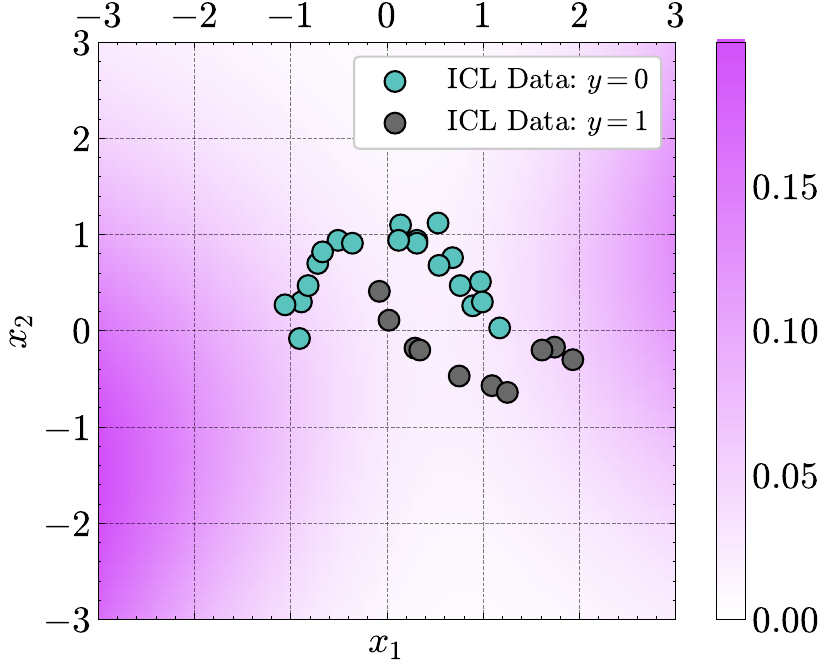}
        \caption{Epistemic Uncertainty}
    \end{subfigure}
    \centering
    \caption{Martingale Posterior Uncertainty Decomposition for "Moons 1" Dataset - Linear Features (\texttt{Qwen2.5-7B}).}
    \label{fig:two_moons_qwen7b_martingale_linear}
    \vspace{-4mm}
\end{figure}

\begin{figure}[H]
    \centering
    \begin{subfigure}[t]{0.325\textwidth}
        \centering
        \includegraphics[width=\linewidth]{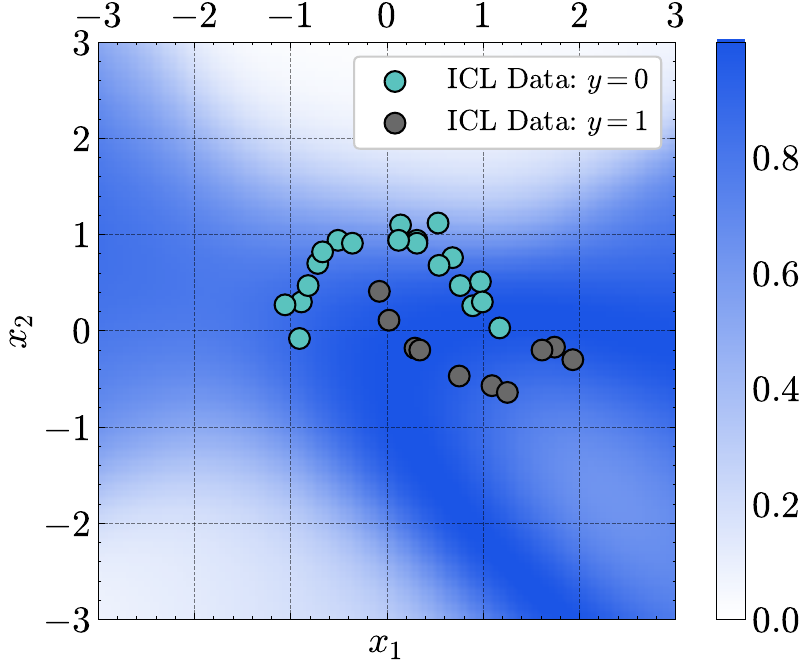}
        \caption{Total Uncertainty}
    \end{subfigure}
    \begin{subfigure}[t]{0.325\textwidth}
        \includegraphics[width=\linewidth]{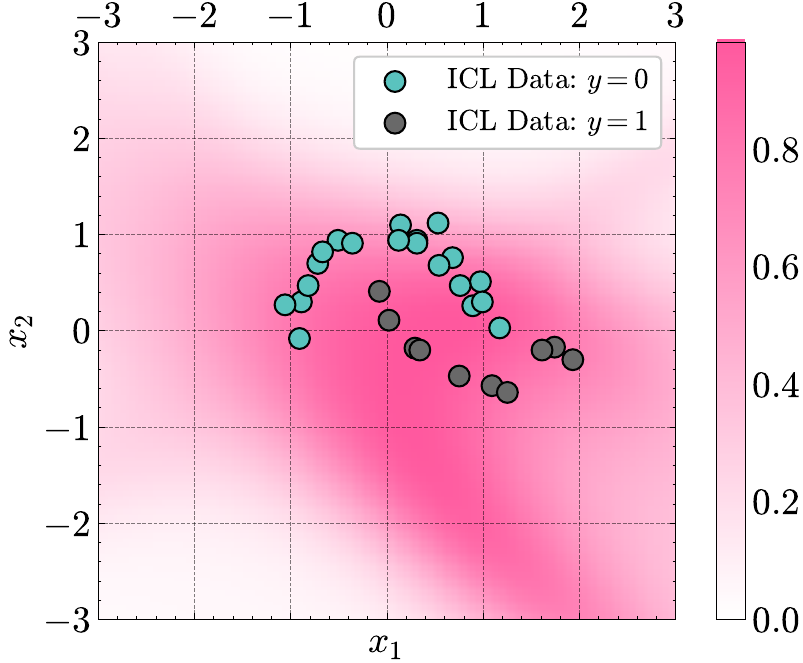}
        \caption{Aleatoric Uncertainty}
    \end{subfigure}
    \begin{subfigure}[t]{0.325\textwidth}
        \includegraphics[width=\linewidth]{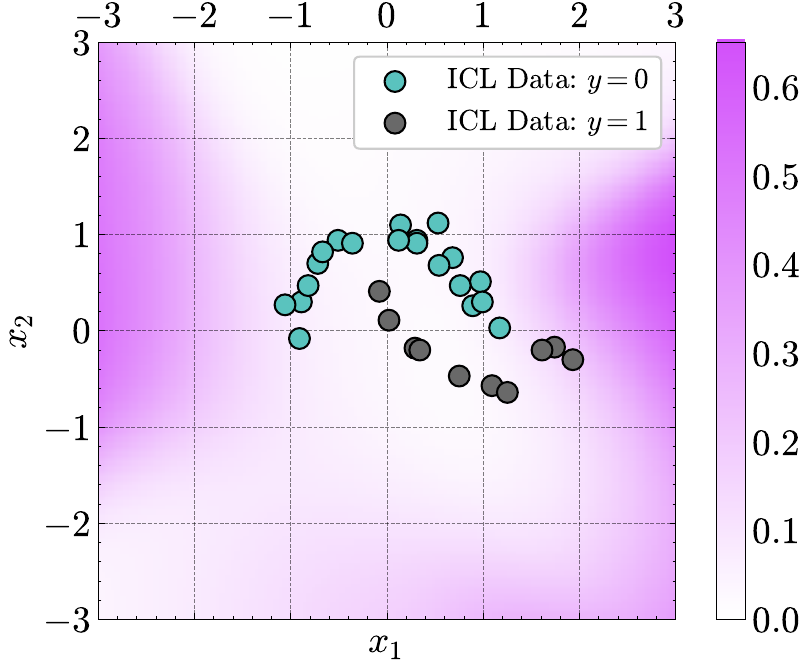}
        \caption{Epistemic Uncertainty}
    \end{subfigure}
    \centering
    \caption{Martingale Posterior Uncertainty Decomposition for "Moons 1" Dataset - Quadratic Features (\texttt{Qwen2.5-7B}).}
    \label{fig:two_moons_qwen7b_martingale_quadratic}
    \vspace{-4mm}
\end{figure}

\begin{figure}[H]
    \centering
    \begin{subfigure}[t]{0.325\textwidth}
        \centering
        \includegraphics[width=\linewidth]{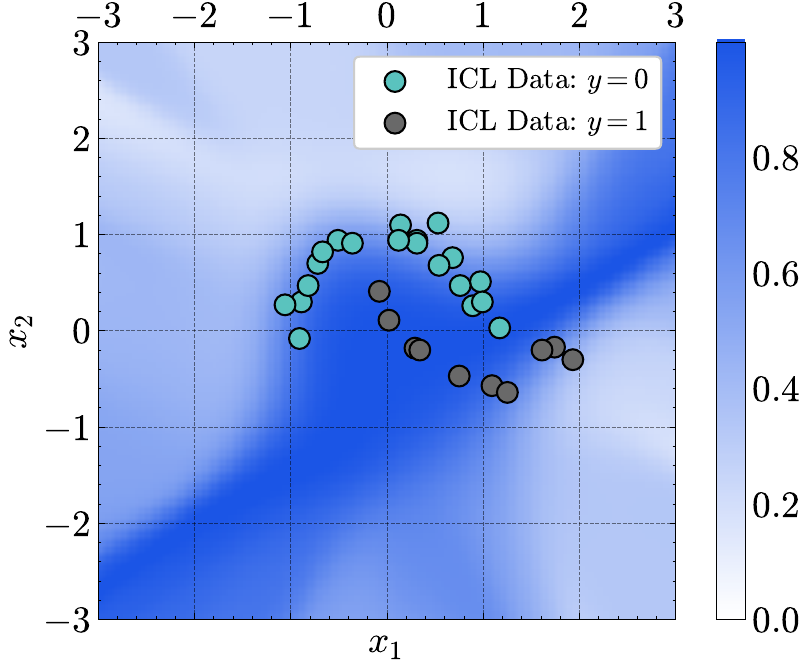}
        \caption{Total Uncertainty}
    \end{subfigure}
    \begin{subfigure}[t]{0.325\textwidth}
        \includegraphics[width=\linewidth]{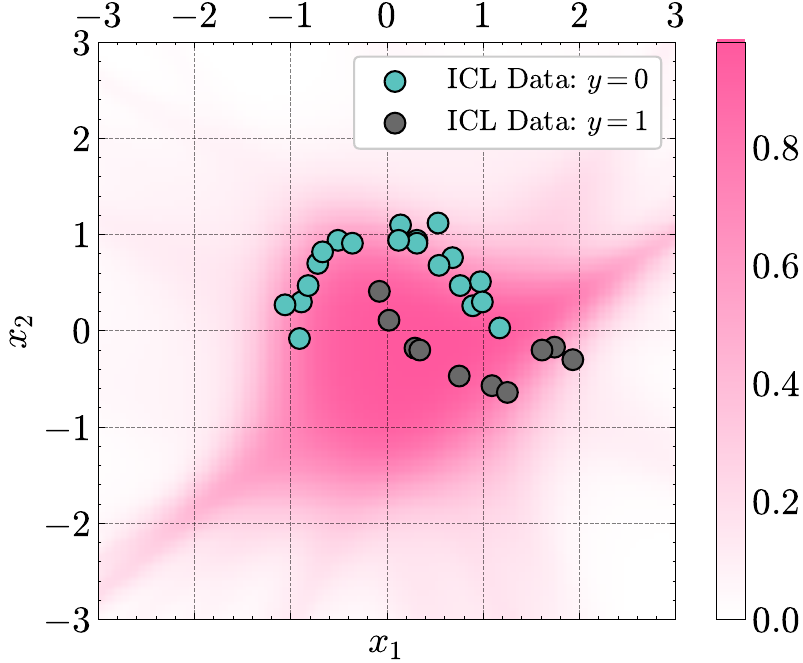}
        \caption{Aleatoric Uncertainty}
    \end{subfigure}
    \begin{subfigure}[t]{0.325\textwidth}
        \includegraphics[width=\linewidth]{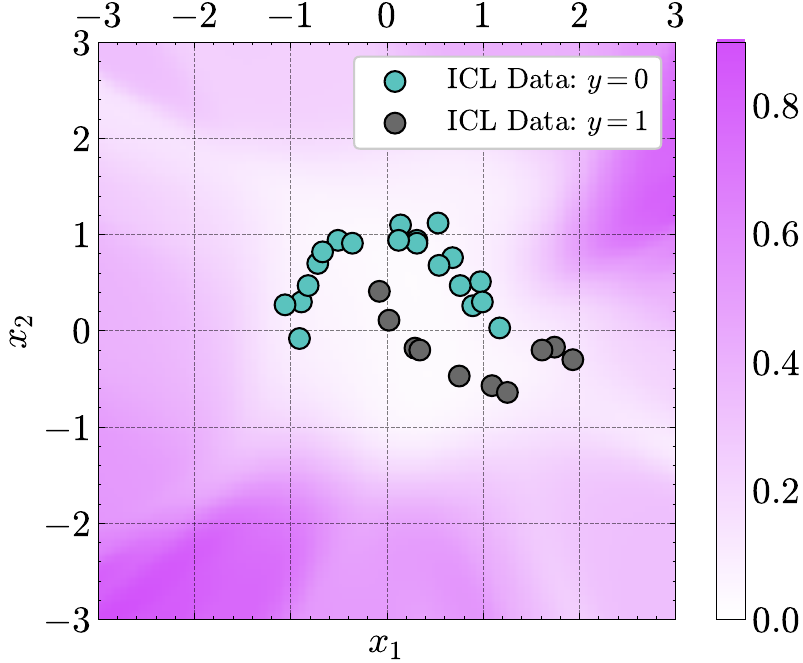}
        \caption{Epistemic Uncertainty}
    \end{subfigure}
    \centering
    \caption{Martingale Posterior Uncertainty Decomposition for "Moons 1" Dataset - Cubic Features (\texttt{Qwen2.5-7B}).}
    \label{fig:two_moons_qwen7b_martingale_cubic}
    \vspace{-4mm}
\end{figure}

\begin{figure}[H]
    \centering
    \begin{subfigure}[t]{0.325\textwidth}
        \centering
        \includegraphics[width=\linewidth]{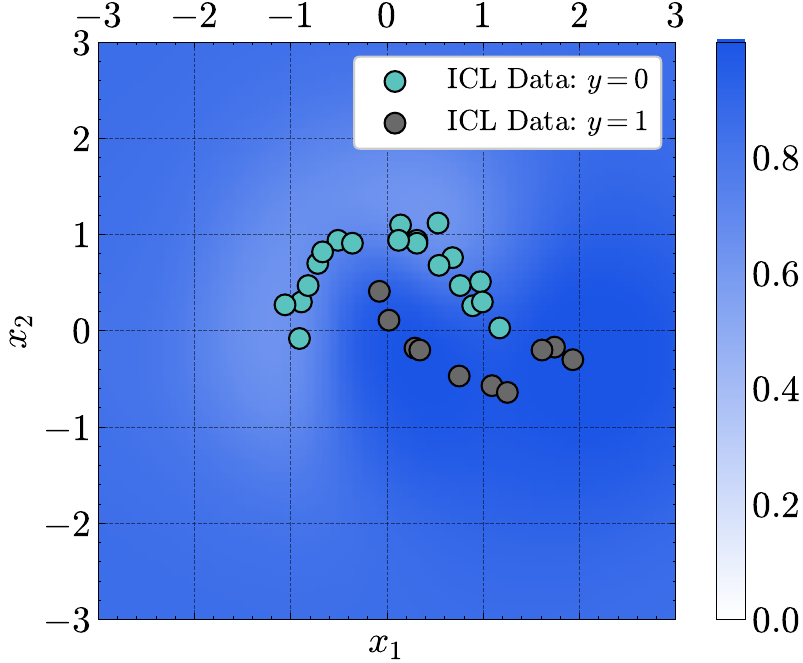}
        \caption{Total Uncertainty}
    \end{subfigure}
    \begin{subfigure}[t]{0.325\textwidth}
        \includegraphics[width=\linewidth]{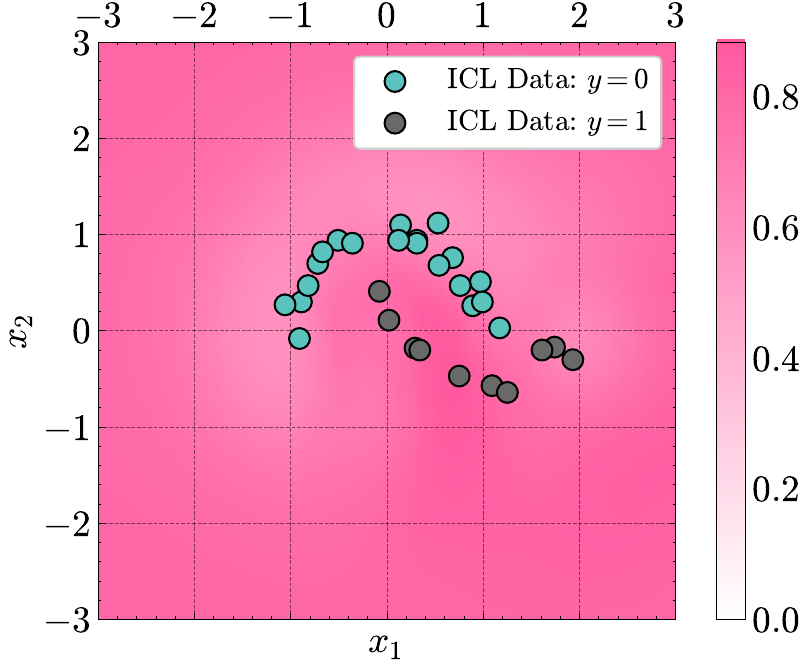}
        \caption{Aleatoric Uncertainty}
    \end{subfigure}
    \begin{subfigure}[t]{0.325\textwidth}
        \includegraphics[width=\linewidth]{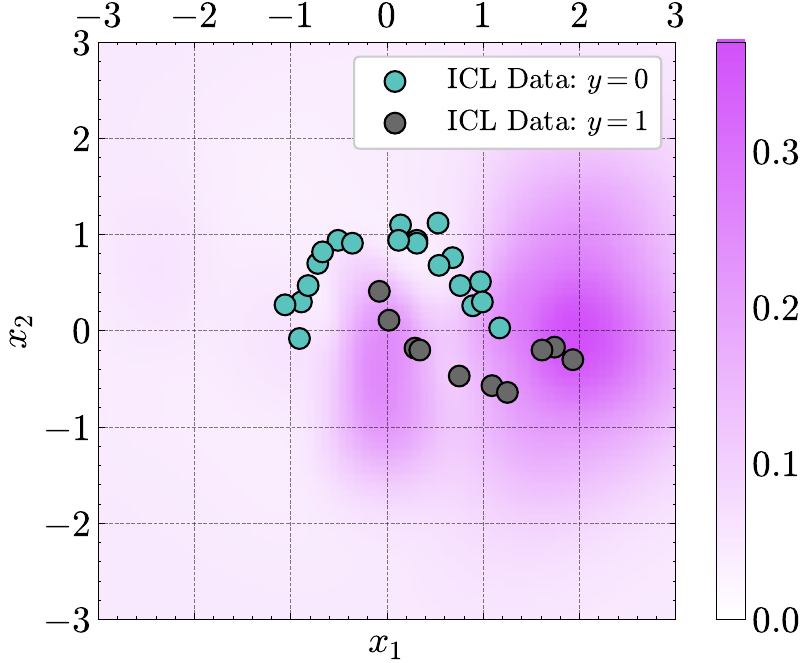}
        \caption{Epistemic Uncertainty}
    \end{subfigure}
    \centering
    \caption{Martingale Posterior Uncertainty Decomposition for "Moons 1" Dataset - Kernel-Based Likelihood (\texttt{Qwen2.5-7B}).}
    \label{fig:two_moons_qwen7b_martingale_kernel}
    \vspace{-4mm}
\end{figure}

\begin{figure}[H]
    \centering
    \begin{subfigure}[t]{0.325\textwidth}
        \centering
        \includegraphics[width=\linewidth]{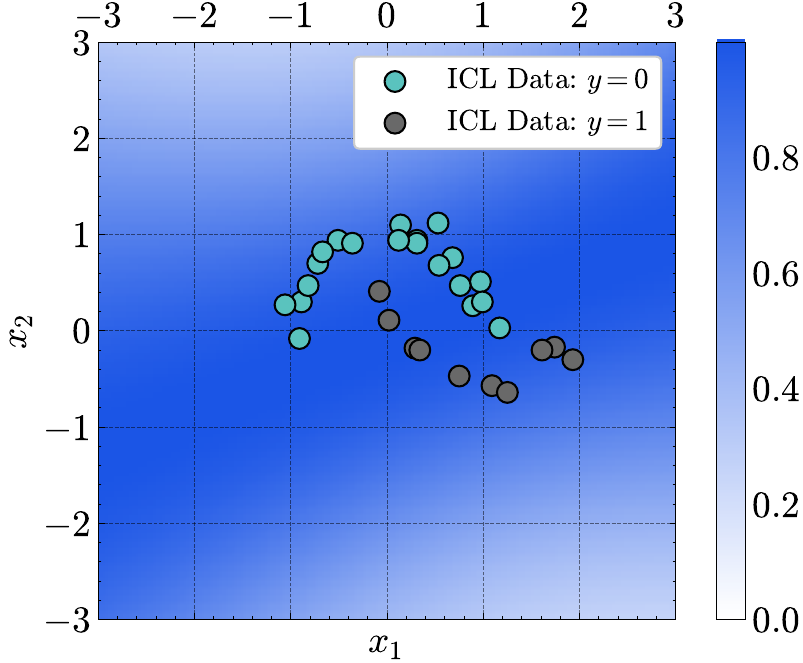}
        \caption{Total Uncertainty}
    \end{subfigure}
    \begin{subfigure}[t]{0.325\textwidth}
        \includegraphics[width=\linewidth]{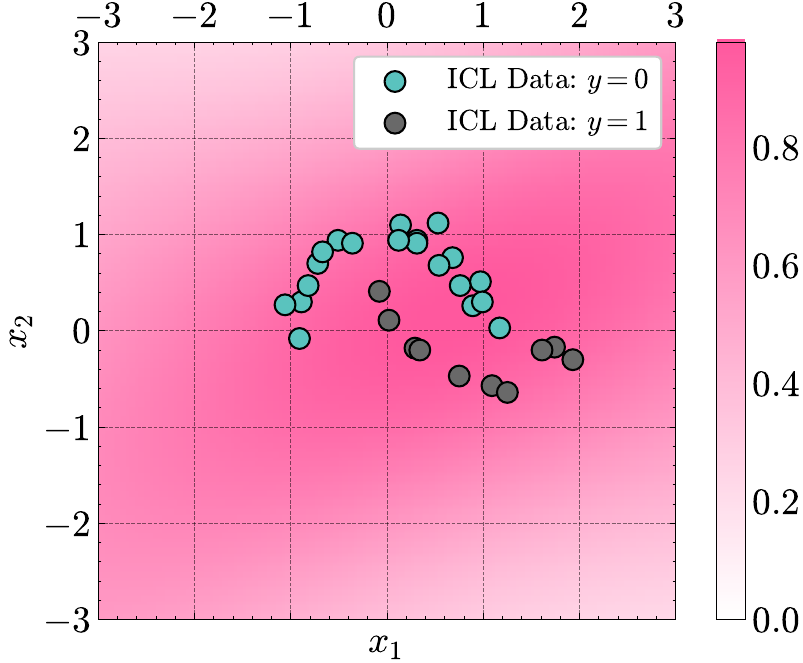}
        \caption{Aleatoric Uncertainty}
    \end{subfigure}
    \begin{subfigure}[t]{0.325\textwidth}
        \includegraphics[width=\linewidth]{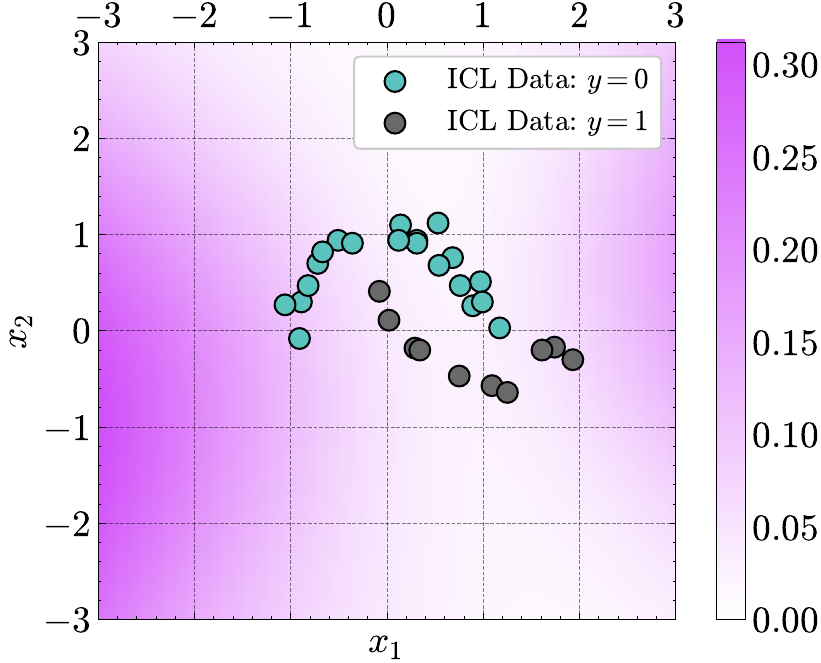}
        \caption{Epistemic Uncertainty}
    \end{subfigure}
    \centering
    \caption{Martingale Posterior Uncertainty Decomposition for "Moons 1" Dataset - Linear Features (\texttt{Llama-3.1-8B}).}
    \label{fig:two_moons_llama8b_martingale_linear}
    \vspace{-4mm}
\end{figure}

\begin{figure}[H]
    \centering
    \begin{subfigure}[t]{0.325\textwidth}
        \centering
        \includegraphics[width=\linewidth]{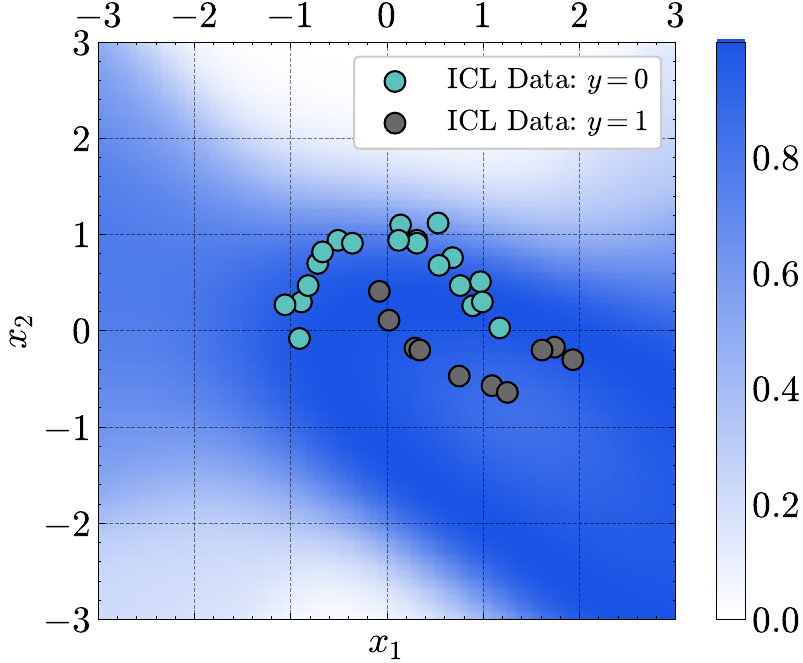}
        \caption{Total Uncertainty}
    \end{subfigure}
    \begin{subfigure}[t]{0.325\textwidth}
        \includegraphics[width=\linewidth]{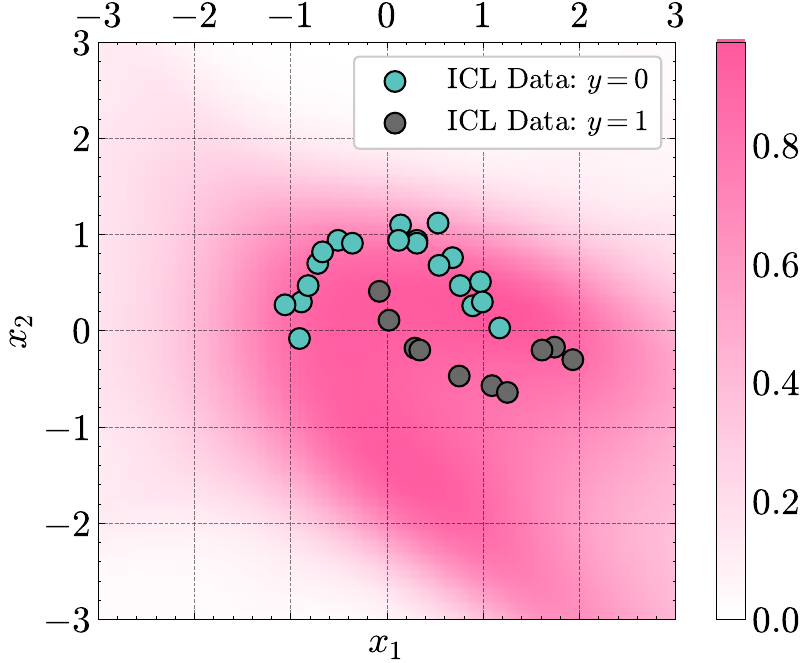}
        \caption{Aleatoric Uncertainty}
    \end{subfigure}
    \begin{subfigure}[t]{0.325\textwidth}
        \includegraphics[width=\linewidth]{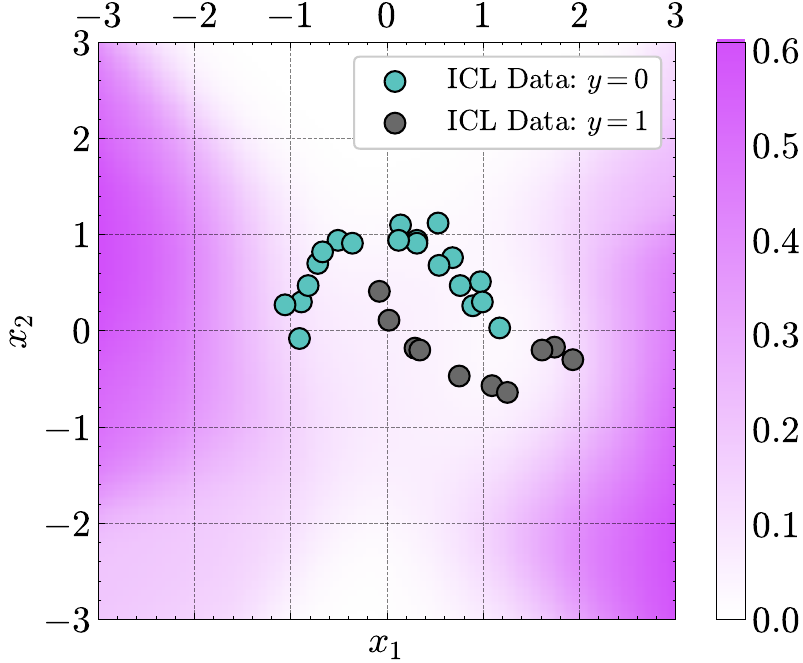}
        \caption{Epistemic Uncertainty}
    \end{subfigure}
    \centering
    \caption{Martingale Posterior Uncertainty Decomposition for "Moons 1" Dataset - Quadratic Features (\texttt{Llama-3.1-8B}).}
    \label{fig:two_moons_llama8b_martingale_quadratic}
    \vspace{-4mm}
\end{figure}

\begin{figure}[H]
    \centering
    \begin{subfigure}[t]{0.325\textwidth}
        \centering
        \includegraphics[width=\linewidth]{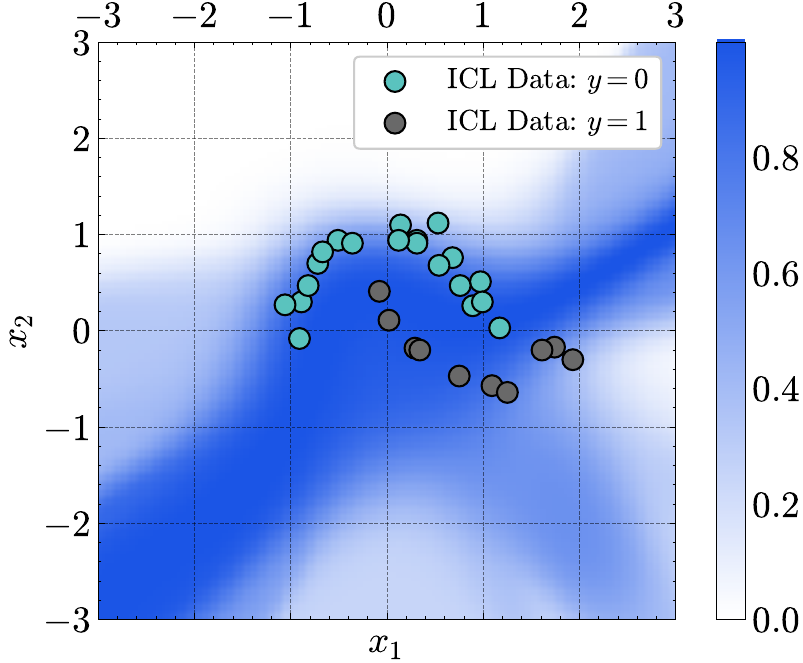}
        \caption{Total Uncertainty}
    \end{subfigure}
    \begin{subfigure}[t]{0.325\textwidth}
        \includegraphics[width=\linewidth]{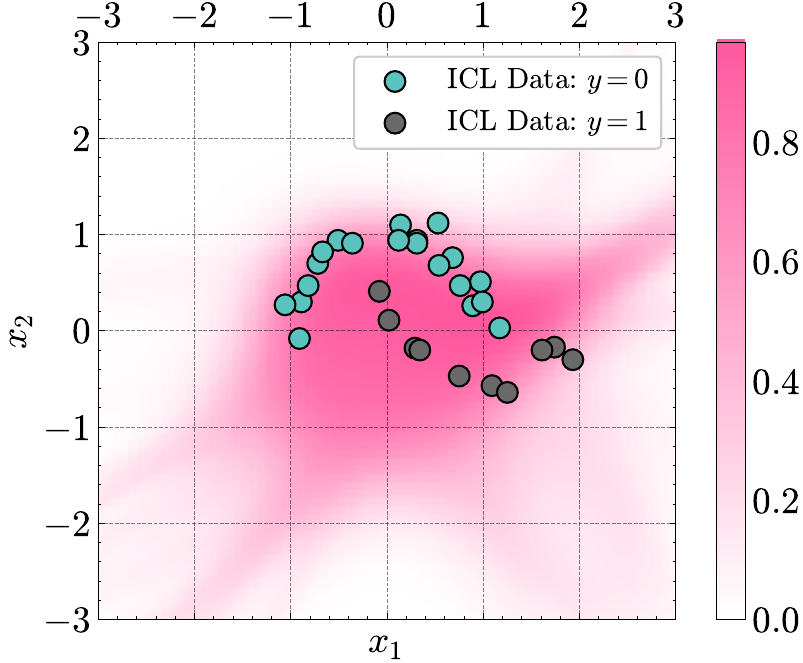}
        \caption{Aleatoric Uncertainty}
    \end{subfigure}
    \begin{subfigure}[t]{0.325\textwidth}
        \includegraphics[width=\linewidth]{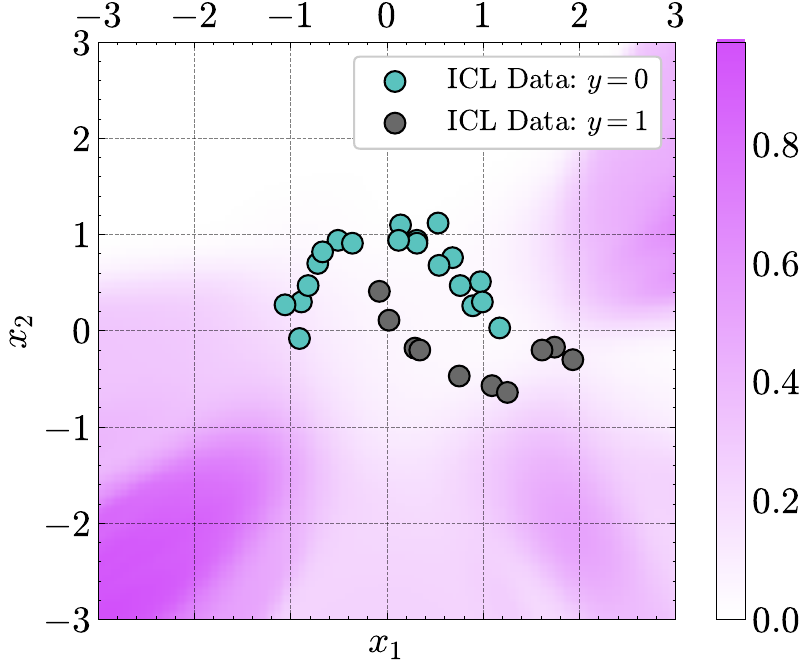}
        \caption{Epistemic Uncertainty}
    \end{subfigure}
    \centering
    \caption{Martingale Posterior Uncertainty Decomposition for "Moons 1" Dataset - Cubic Features (\texttt{Llama-3.1-8B}).}
    \label{fig:two_moons_llama8b_martingale_cubic}
    \vspace{-4mm}
\end{figure}

\begin{figure}[H]
    \centering
    \begin{subfigure}[t]{0.325\textwidth}
        \centering
        \includegraphics[width=\linewidth]{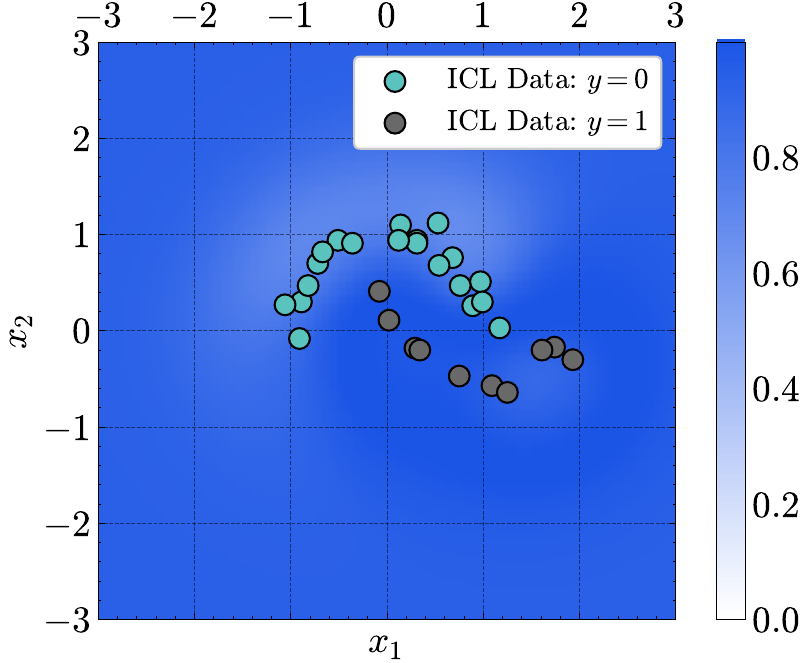}
        \caption{Total Uncertainty}
    \end{subfigure}
    \begin{subfigure}[t]{0.325\textwidth}
        \includegraphics[width=\linewidth]{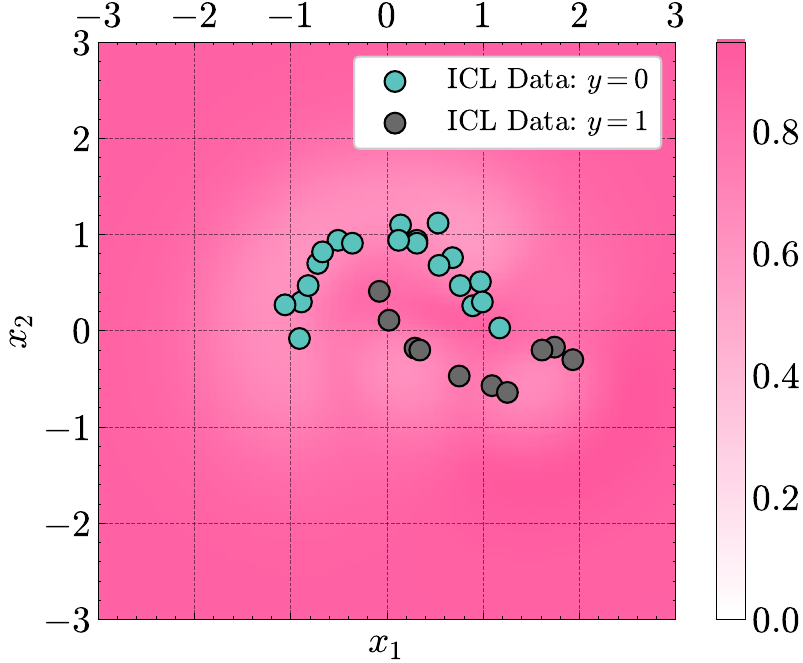}
        \caption{Aleatoric Uncertainty}
    \end{subfigure}
    \begin{subfigure}[t]{0.325\textwidth}
        \includegraphics[width=\linewidth]{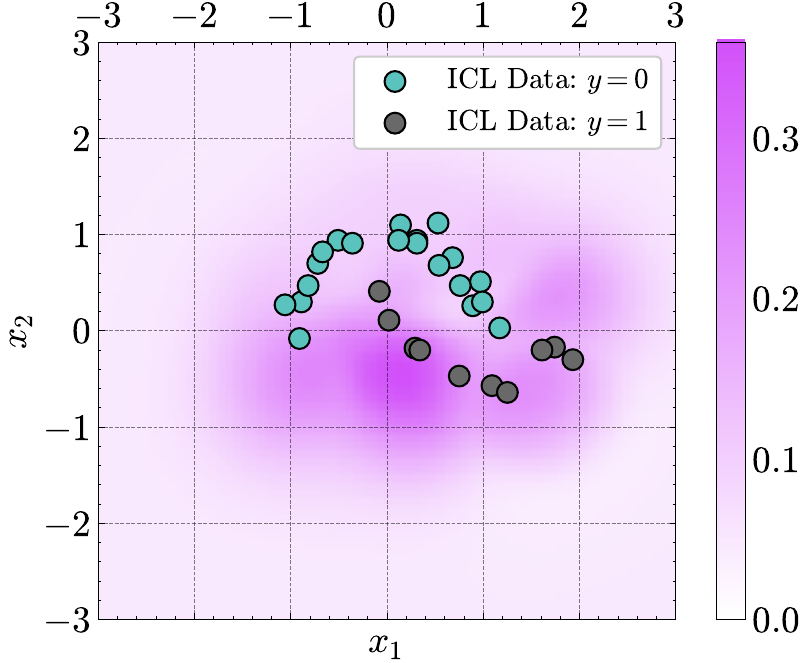}
        \caption{Epistemic Uncertainty}
    \end{subfigure}
    \centering
    \caption{Martingale Posterior Uncertainty Decomposition for "Moons 1" Dataset - Kernel-Based Likelihood (\texttt{Llama-3.1-8B}).}
    \label{fig:two_moons_llama8b_martingale_kernel}
    \vspace{-4mm}
\end{figure}

\subsection{Synthetic Toy Experiments}\label{appx:synthetic_tasks}

We qualitatively evaluate the decompositions of the variational uncertainty decomposition algorithm on a variety of synthetic classification and regression settings. In this section, we give details on the ground-truth distributions used to create the synthetic datasets.

\textbf{Logistic Regression}. We consider a 1-D logistic regression problem with coefficient $\beta=0.25$ and bias $\beta_0=-0.5$. The covariates are generated from a Gaussian distribution with mean $1.5$ and standard deviation $3$. In our visualisations, we use Perturbations with $15$ auxiliary data points and perturbation scale $\lambda=0.1$ to decompose the uncertainty for the logistic regression task. In Figures \ref{subfig-a:log_lin_reg} and \ref{fig:log_reg_D_15}, we plot the uncertainty decomposition for an ICL dataset of size $|\mathcal{D}|=15$ and in Figure \ref{appx_fig:log_reg_D_75}, we plot the decomposition for $|\mathcal{D}|=75$. We plot $\x^*$ values in the range $[-15,15)$ with step size $0.2$. In Figures \ref{fig:epistemic_train-size}, \ref{fig:log_reg_D_ablation_qwen7b} and \ref{fig:log_reg_D_ablation_llama8b}, we plot the epistemic and aleatoric uncertainties as the dataset size increases for in-distribution ($x=0,5$; solid lines) and out-of-distribution ($\x^*=-15,-10,-5,10,15$; dotted lines) points. As the uncertainty at a given $\x^*$ is dependent on the particular dataset, we average the uncertainty at $\x^*$ over $10$ datasets of the same size $d$ to obtain the estimate of the mean aleatoric uncertainty at $d$.

\begin{figure}[H]
    \centering
    \begin{subfigure}[t]{0.485\textwidth}
        \centering
        \includegraphics[width=\textwidth]{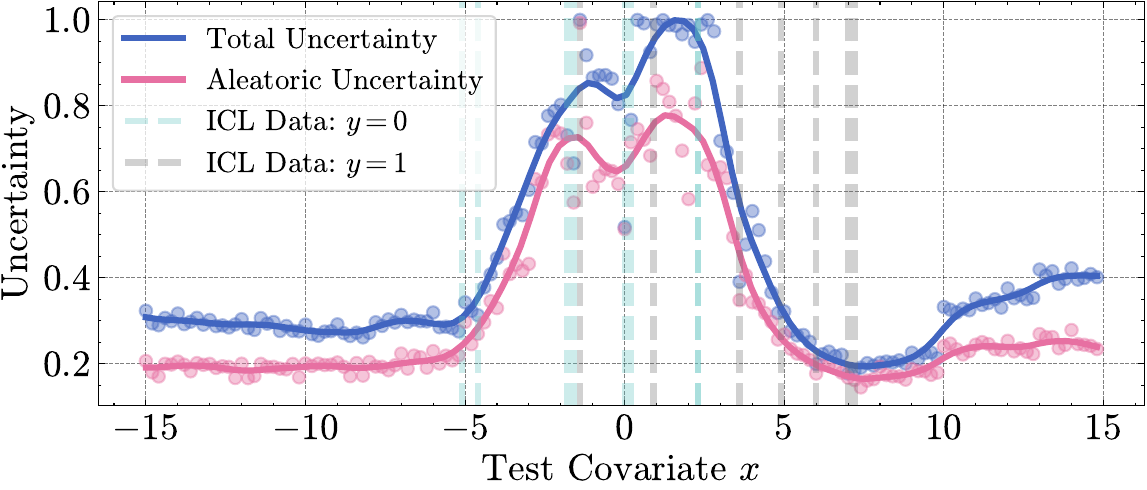}
        \caption{\texttt{Qwen2.5-7B}}
    \end{subfigure}
    \hfill
    \begin{subfigure}[t]{0.49\textwidth}
        \centering
        \includegraphics[width=\textwidth]{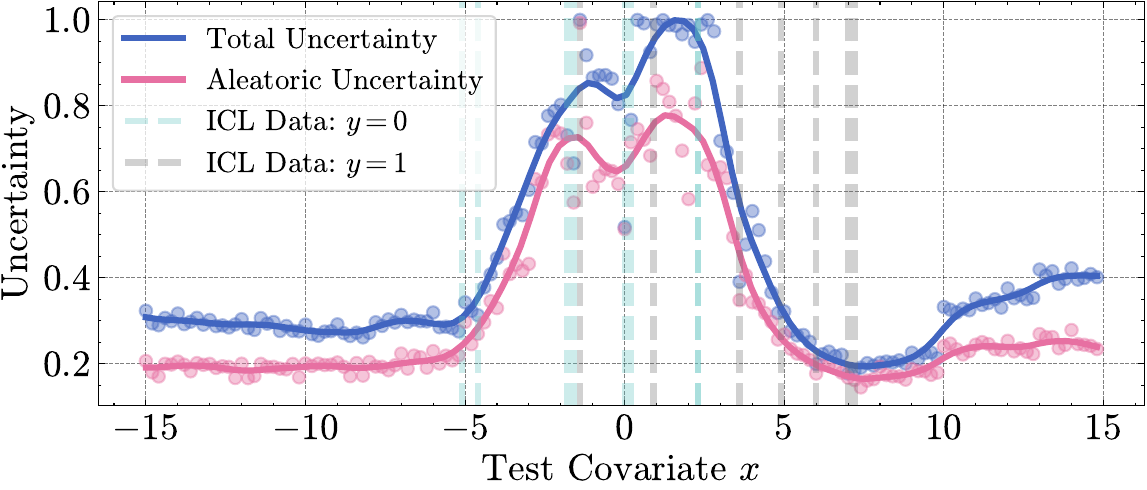}
        \caption{\texttt{Llama-3.1-8B}}
    \end{subfigure}
    \caption{Uncertainty Decomposition for Logistic Regression $|\mathcal{D}|=15$.}
    \label{fig:log_reg_D_15}
    \vspace{-4mm}
\end{figure}

\begin{figure}[H]
    \centering
    \begin{subfigure}[t]{0.305\textwidth}
        \centering
        \includegraphics[width=\linewidth]{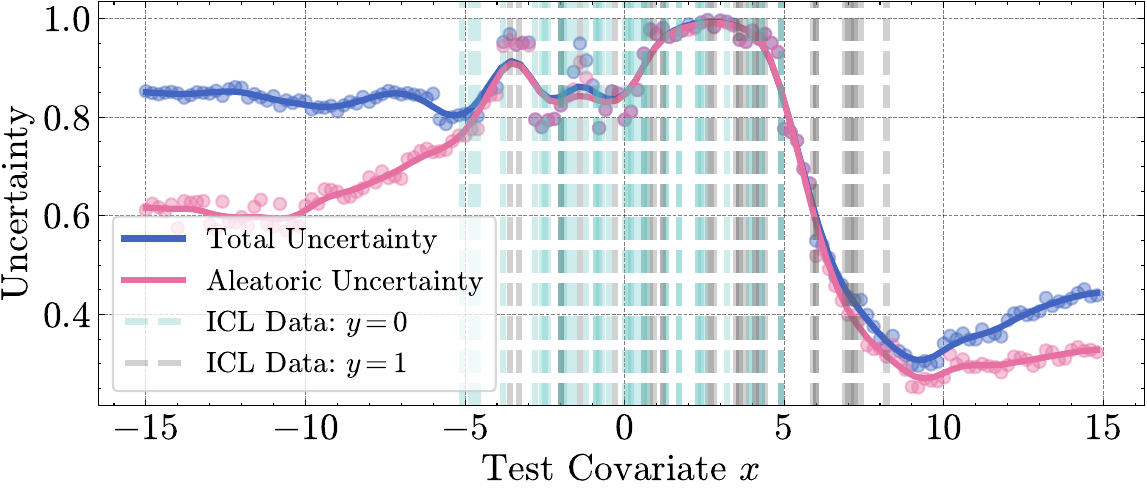}
        \caption{\texttt{Qwen2.5-14B}}
    \end{subfigure}
    \hfill
    \begin{subfigure}[t]{0.305\textwidth}
        \centering
        \includegraphics[width=\linewidth]{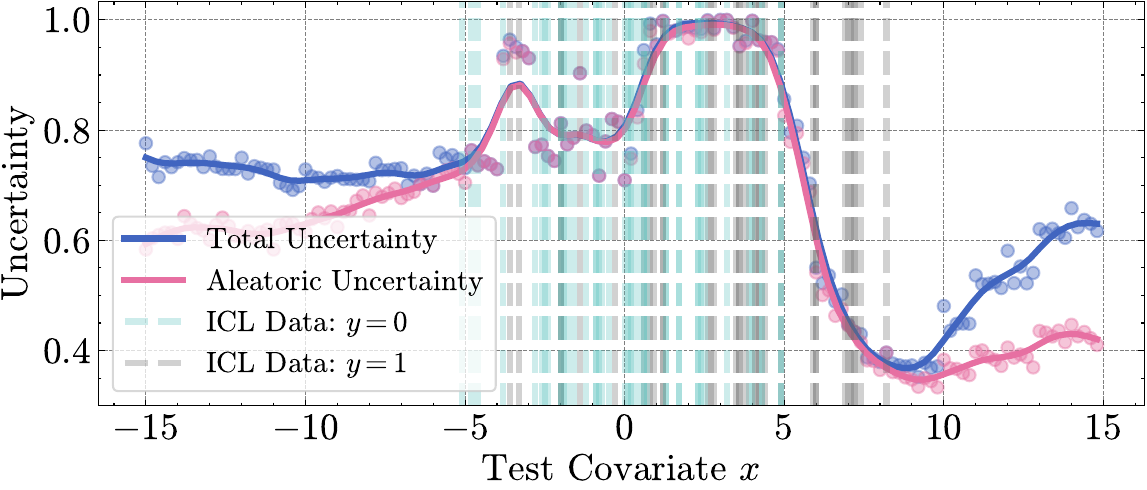}
        \caption{\texttt{Qwen2.5-7B}}
    \end{subfigure}
    \hfill
    \begin{subfigure}[t]{0.305\textwidth}
        \centering
        \includegraphics[width=\linewidth]{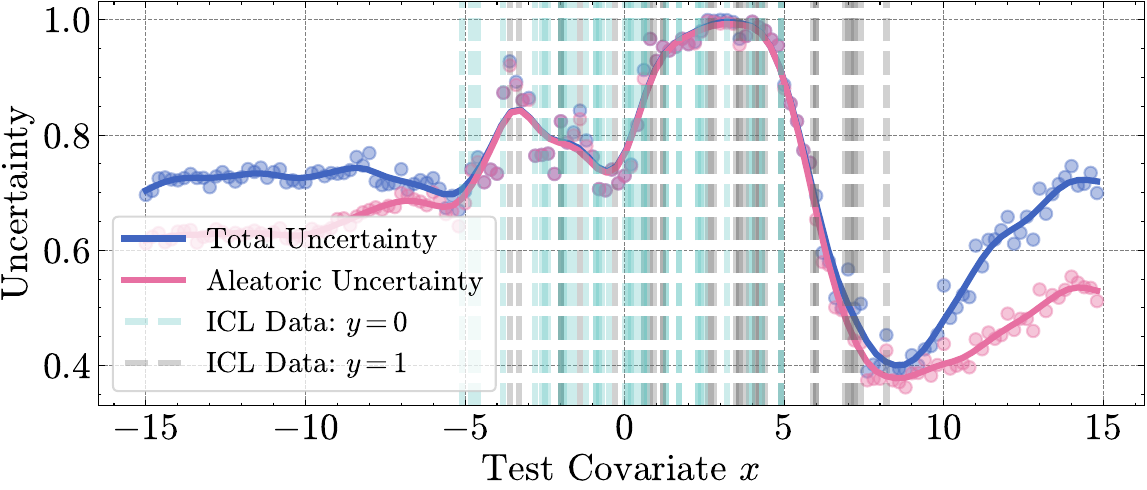}
        \caption{\texttt{Llama-3.1-8B}}
    \end{subfigure}
    \caption{Logistic Regression with $|\mathcal{D}| = 75$.}
    \label{appx_fig:log_reg_D_75}
    \vspace{-4mm}
\end{figure}

\begin{figure}[H]
    \centering
    \begin{subfigure}[t]{0.45\textwidth}
        \centering
        \includegraphics[width=\textwidth]{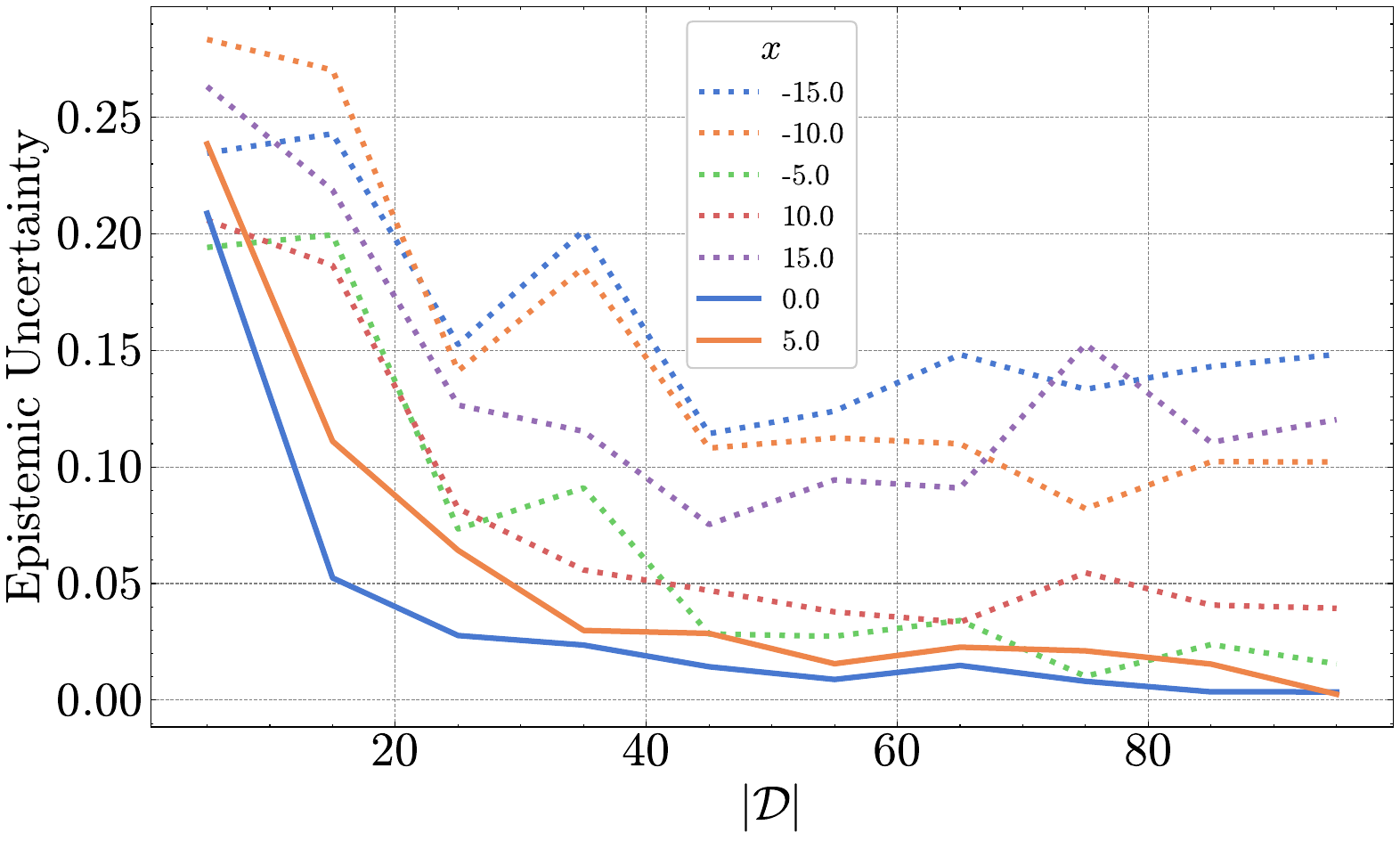}
        \caption{Epistemic Uncertainty vs Size of Training Set}
    \end{subfigure}
    \begin{subfigure}[t]{0.445\textwidth}
        \centering
        \includegraphics[width=\textwidth]{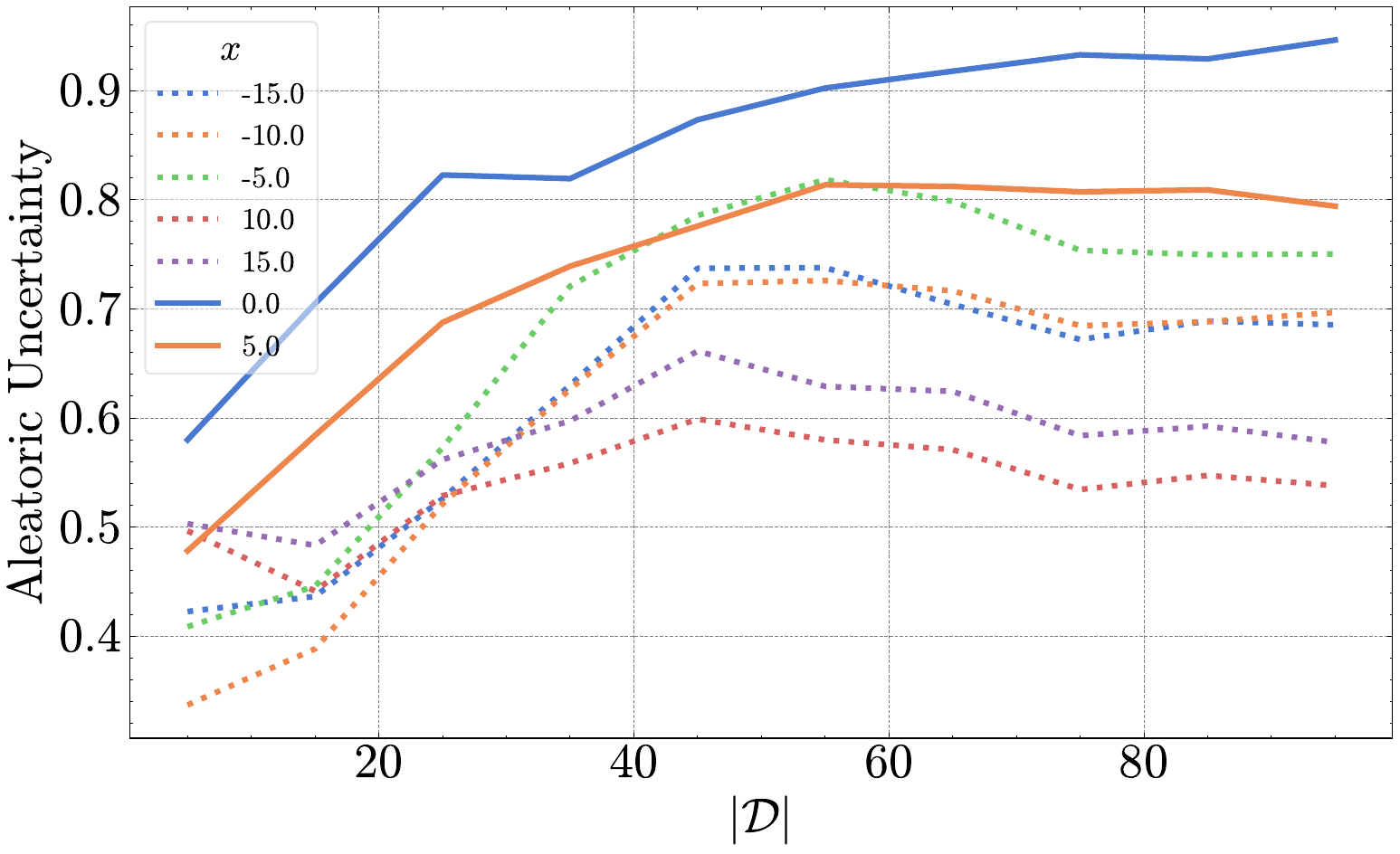}
        \caption{Aleatoric Uncertainty vs Size of Training Set}
    \end{subfigure}
    \caption{Epistemic Uncertainty and Aleatoric Uncertainty vs Dataset Size (\texttt{Qwen2.5-7B}).}
    \label{fig:log_reg_D_ablation_qwen7b}
    \vspace{-4mm}
\end{figure}

\begin{figure}[H]
    \centering
    \begin{subfigure}[t]{0.45\textwidth}
        \centering
        \includegraphics[width=\textwidth]{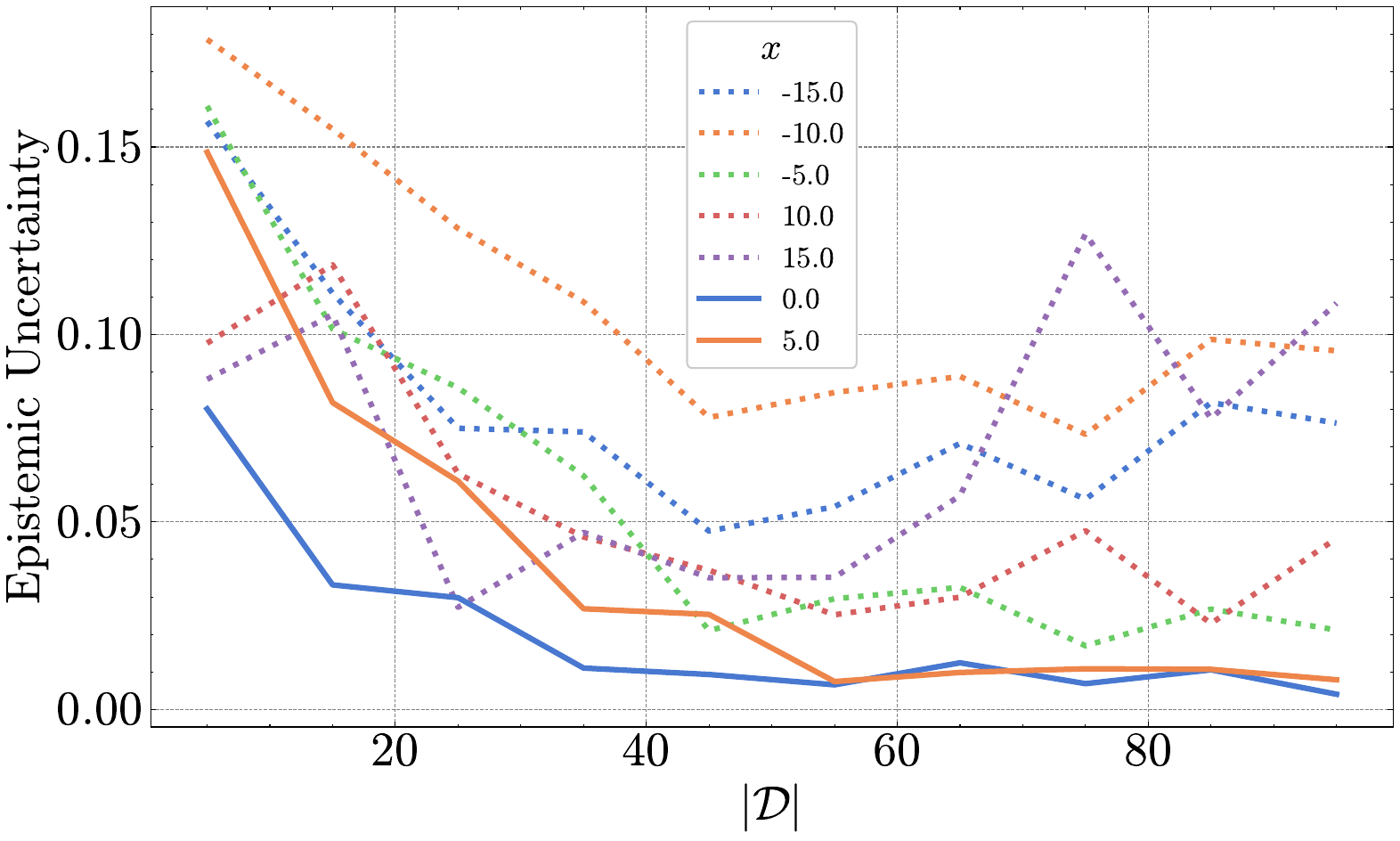}
        \caption{Epistemic Uncertainty vs Size of Training Set}
    \end{subfigure}
    \begin{subfigure}[t]{0.445\textwidth}
        \centering
        \includegraphics[width=\textwidth]{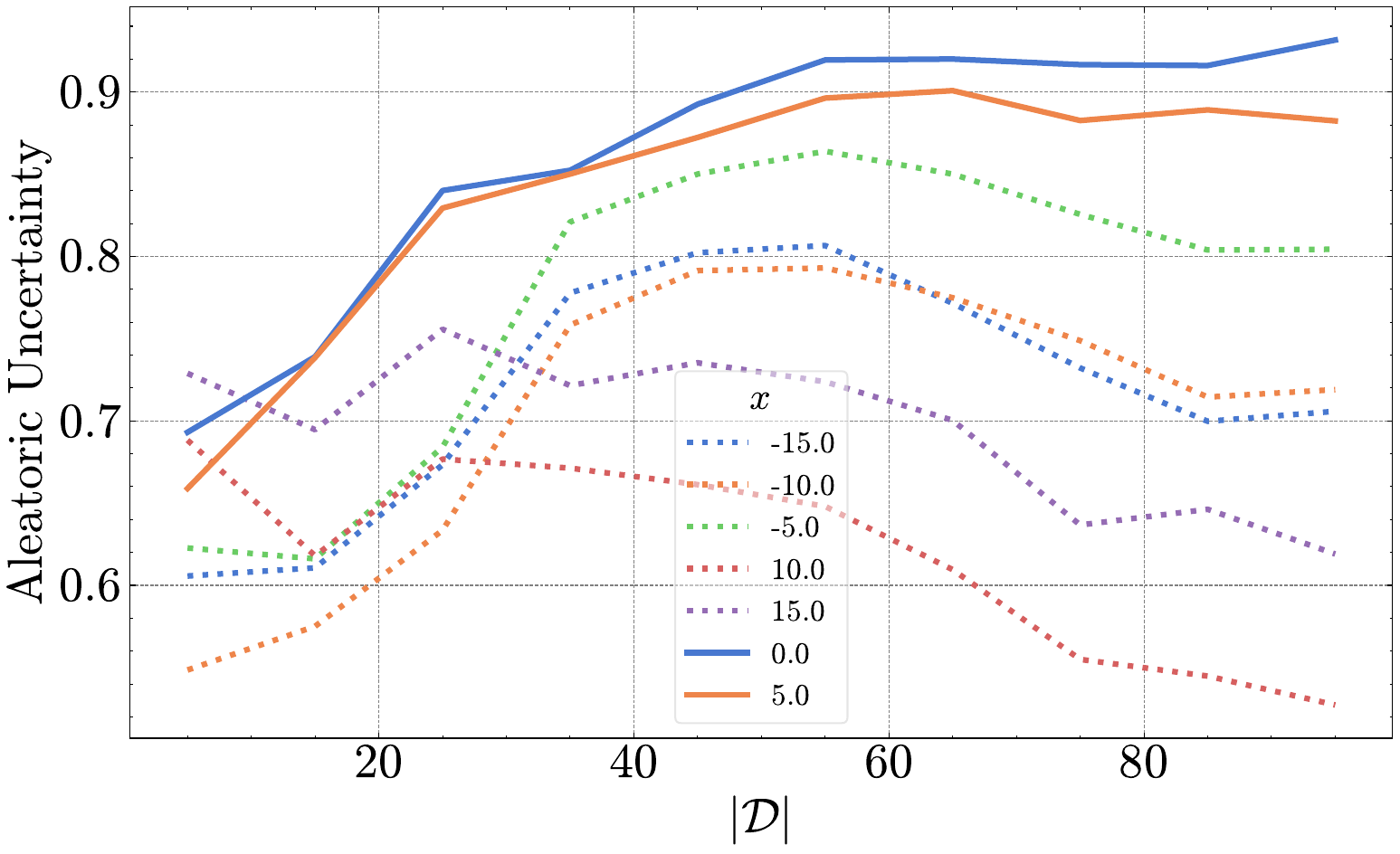}
        \caption{Aleatoric Uncertainty vs Size of Training Set}
    \end{subfigure}
    \caption{Epistemic Uncertainty and Aleatoric Uncertainty vs Dataset Size (\texttt{Llama-3.1-8B}).}
    \label{fig:log_reg_D_ablation_llama8b}
    \vspace{-4mm}
\end{figure}

\textbf{Linear Regression}. We consider a 1-D linear regression problem with coefficient $\beta=-1$, bias $\beta_0 = 3$ and Gaussian noise with zero mean and standard deviation $\sigma=2$. The covariates are generated from a Gaussian distribution with mean $1$ and standard deviation $2$. We use Perturbations with $5$ auxiliary data points and perturbation scale $\lambda=0.1$ to decompose the uncertainty for the logistic regression task. We reduce the number of auxiliary data points due to the increased computational cost of computing distributions for regression problems. In order to obtain smoother uncertainty computations, we average the uncertainties obtained over $3$ sampled datasets of size $|\mathcal{D}|=15$. We compute uncertainties for $\x^*$ in the range [$-15$,$15$) with step-size $0.2$ and plot the obtained decompositions for entropic uncertainty and variance in Figures \ref{subfig-b:log_lin_reg}, \ref{fig:lin_reg_qwen7b_llama8b} and \ref{fig:lin_reg_avg_var}. We also provide an example decomposition for the uncertainty and variance for a single seed for completion in Figures \ref{appx_fig:lin_reg_seed_0_qwen14b}, \ref{appx_fig:lin_reg_seed_0_qwen7b} and \ref{appx_fig:lin_reg_seed_0_llama8b}.

\begin{figure}[H]
    \centering
    \begin{subfigure}[t]{0.42\textwidth}
        \centering
        \includegraphics[width=\textwidth]{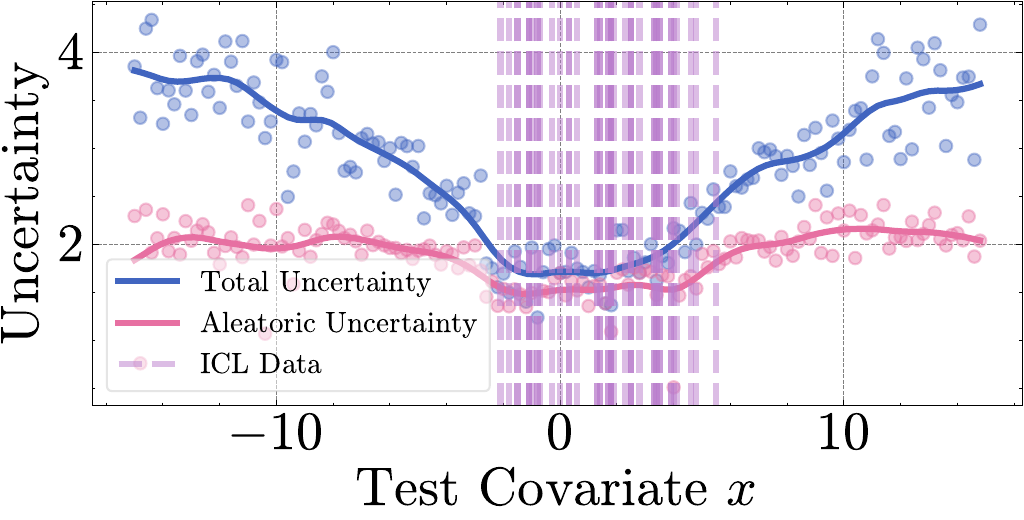}
        \caption{\texttt{Qwen2.5-7B}}
    \end{subfigure}
    \begin{subfigure}[t]{0.42\textwidth}
        \centering
        \includegraphics[width=\textwidth]{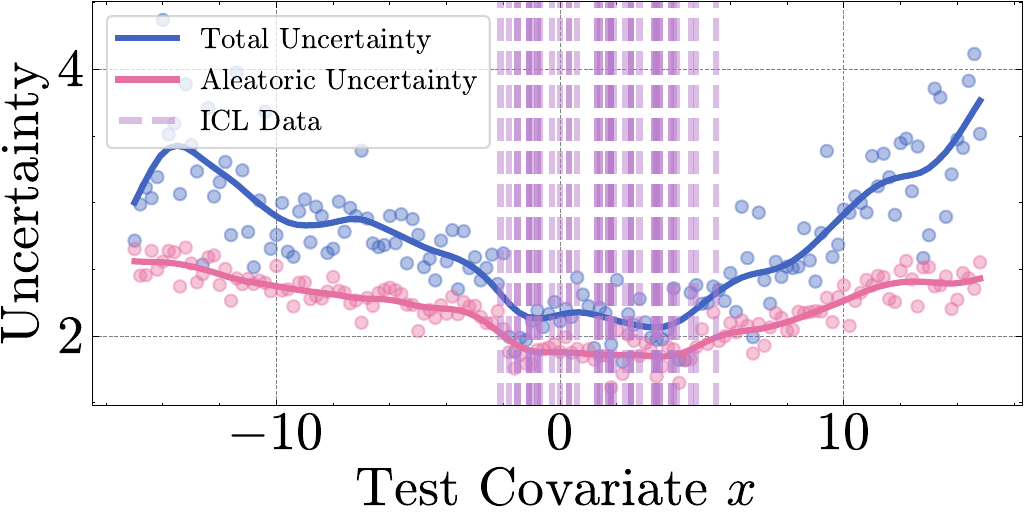}
        \caption{\texttt{Llama-3.1-8B}}
    \end{subfigure}
    \caption{Linear Regression (Entropic) Uncertainty Decomposition.}
    \label{fig:lin_reg_qwen7b_llama8b}
    \vspace{-4mm}
\end{figure}

\begin{figure}[H]
    \centering
    \begin{subfigure}[t]{0.32\textwidth}
        \centering
        \includegraphics[width=\linewidth]{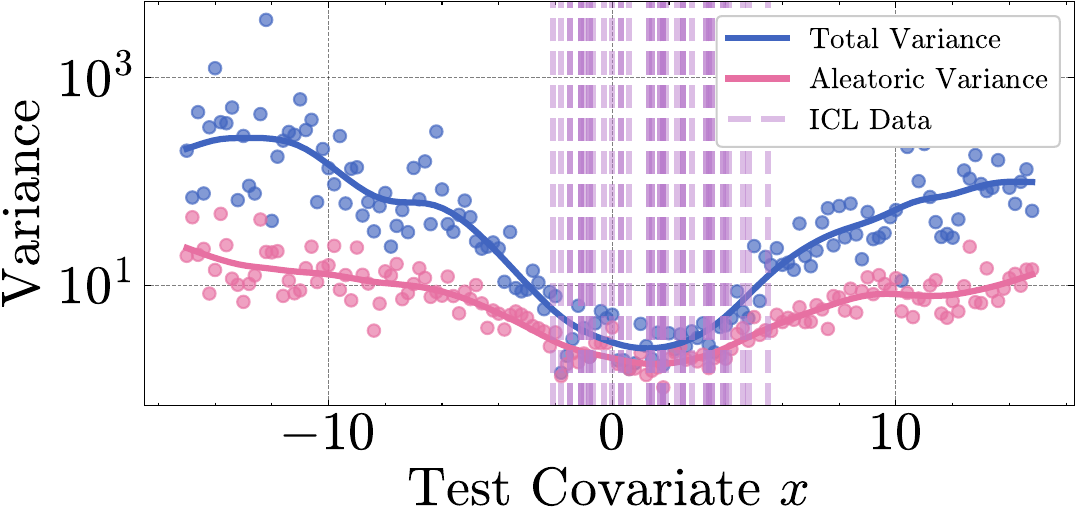}
        \caption{\texttt{Qwen2.5-14B}}
    \end{subfigure}
    \begin{subfigure}[t]{0.32\textwidth}
        \centering
        \includegraphics[width=\linewidth]{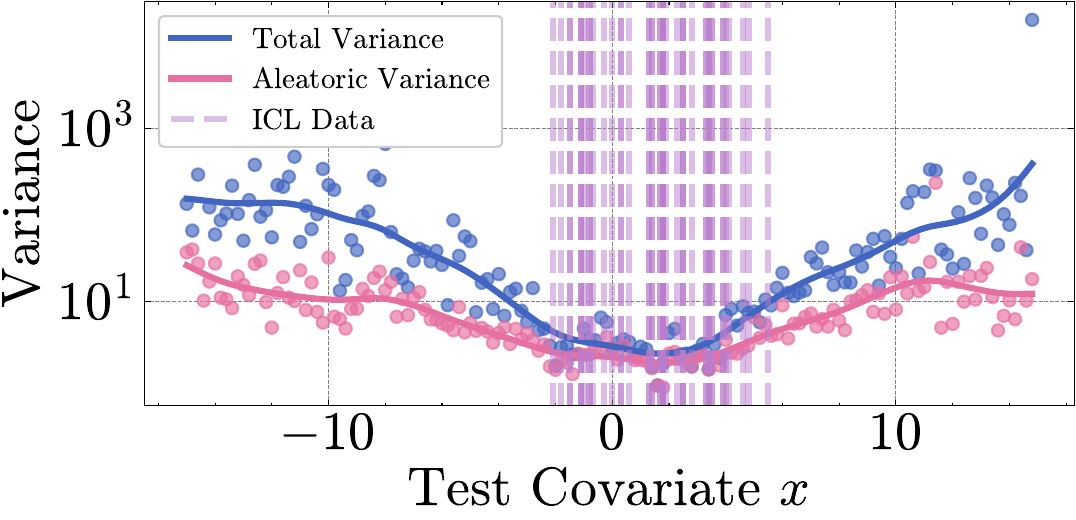}
        \caption{\texttt{Qwen2.5-7B}}
    \end{subfigure}
    \begin{subfigure}[t]{0.32\textwidth}
        \centering
        \includegraphics[width=\linewidth]{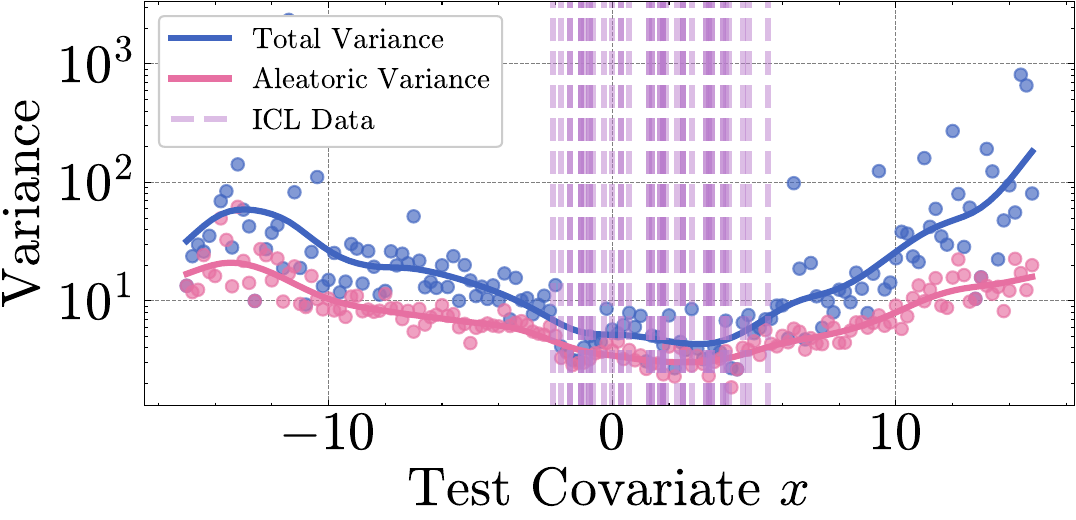}
        \caption{\texttt{Llama-3.1-8B}}
    \end{subfigure}
    \caption{Linear Regression Variance Decomposition.}
    \label{fig:lin_reg_avg_var}
    \vspace{-4mm}
\end{figure}

\begin{figure}[H]
    \centering
    \begin{subfigure}[t]{0.32\textwidth}
        \centering
        \includegraphics[width=\linewidth]{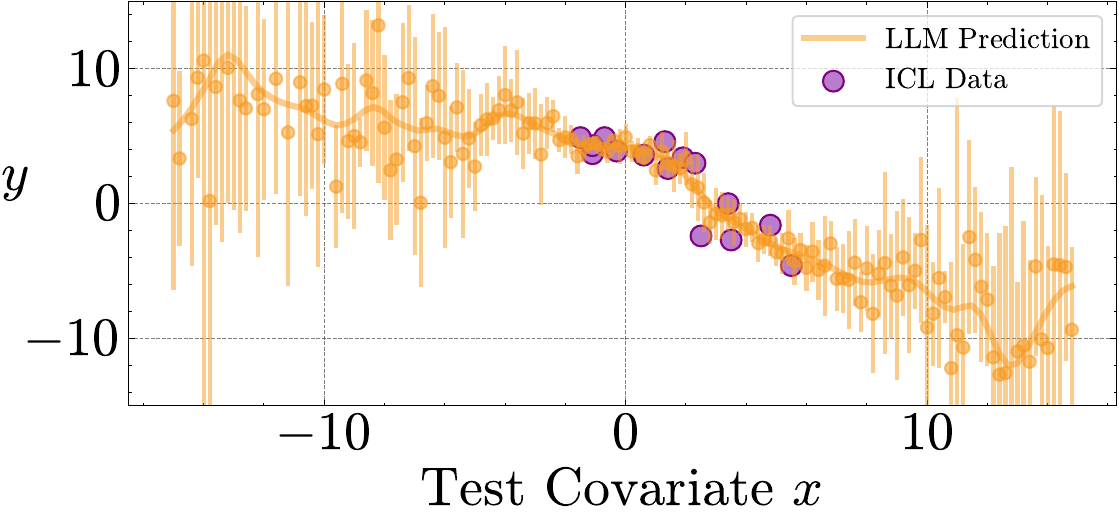}
        \caption{Predicted Mean and Std}
    \end{subfigure}
    \begin{subfigure}[t]{0.31\textwidth}
        \centering
        \includegraphics[width=\linewidth]{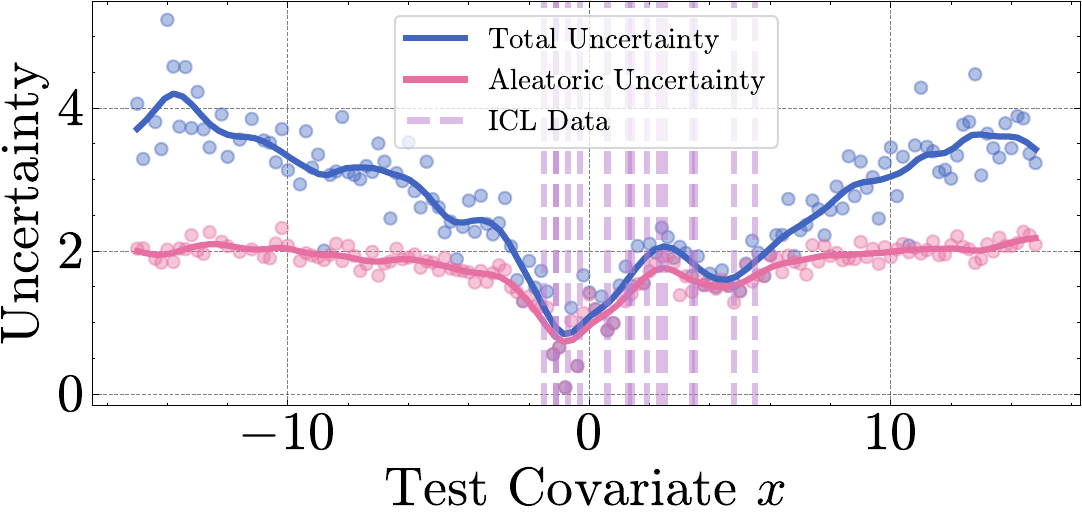}
        \caption{Entropic Uncertainty Decomp.}
    \end{subfigure}
    \begin{subfigure}[t]{0.325\textwidth}
        \centering
        \includegraphics[width=\linewidth]{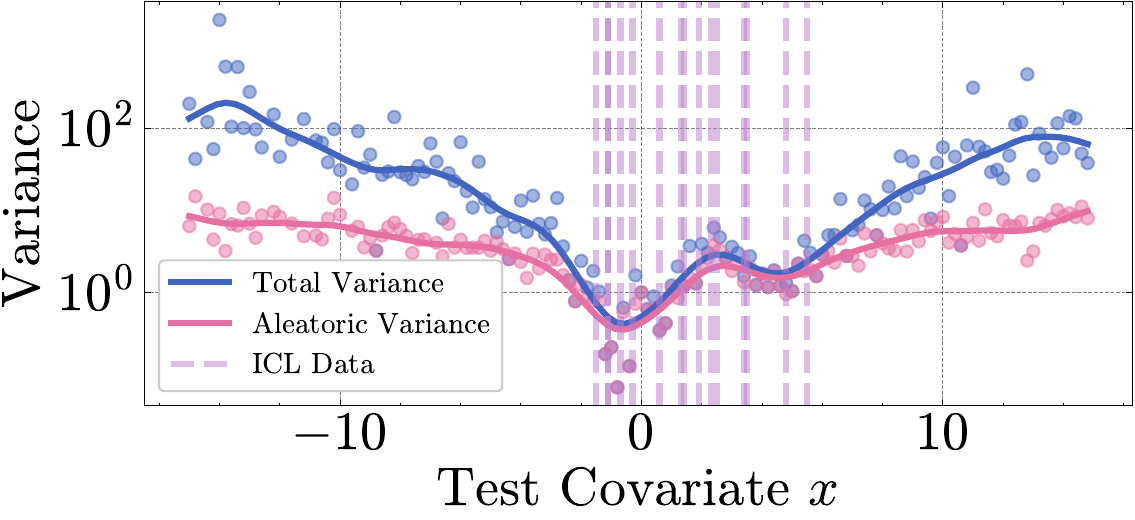}
        \caption{Variance Decomposition}
    \end{subfigure}
    \caption{Uncertainty Decompositions for Linear Regression (\texttt{Qwen2.5-14B}).}
    \label{appx_fig:lin_reg_seed_0_qwen14b}
    \vspace{-4mm}
\end{figure}

\begin{figure}[H]
    \centering
    \begin{subfigure}[t]{0.32\textwidth}
        \centering
        \includegraphics[width=\linewidth]{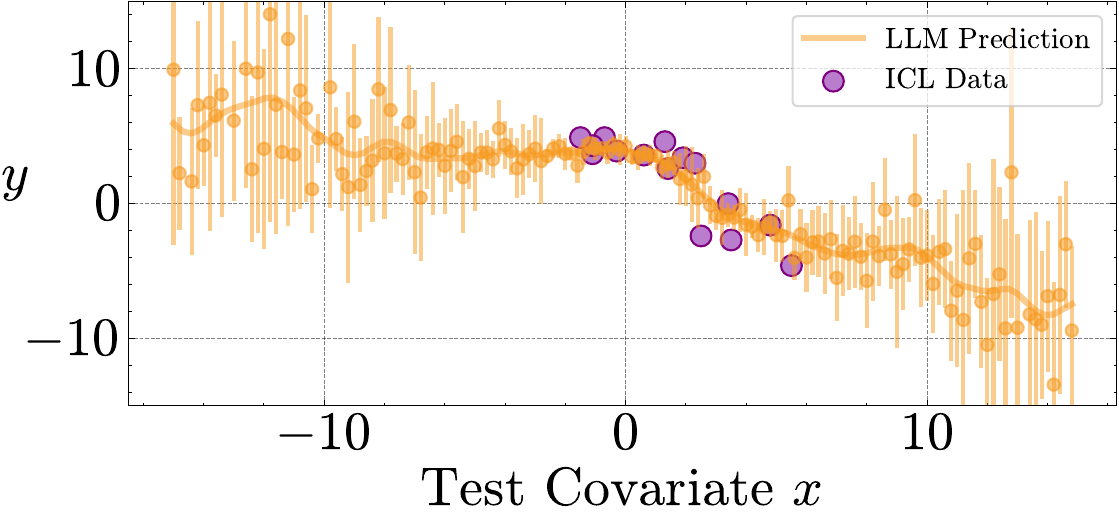}
        \caption{Predicted Mean and Std.}
    \end{subfigure}
    \begin{subfigure}[t]{0.31\textwidth}
        \centering
        \includegraphics[width=\linewidth]{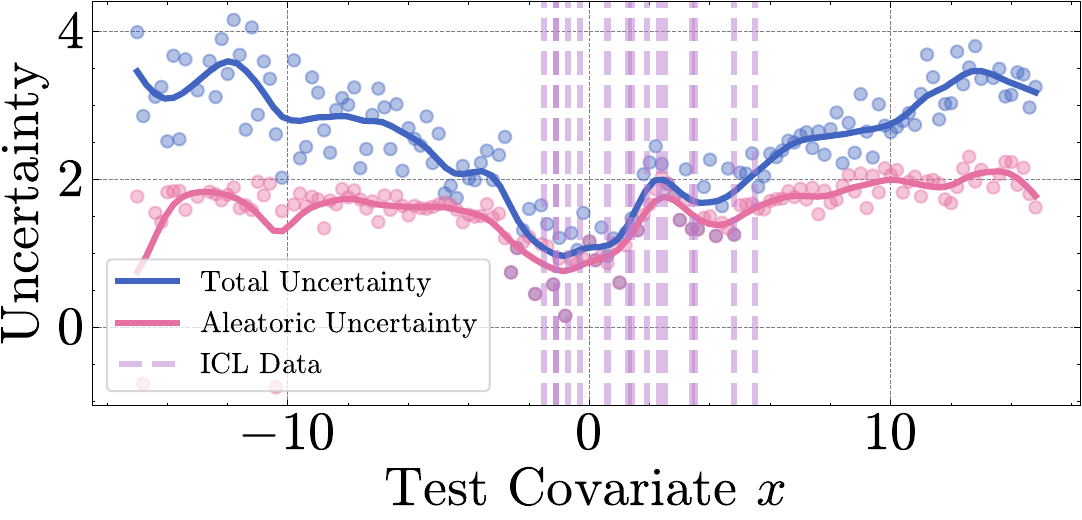}
        \caption{Entropic Uncertainty Decomp.}
    \end{subfigure}
    \begin{subfigure}[t]{0.325\textwidth}
        \centering
        \includegraphics[width=\linewidth]{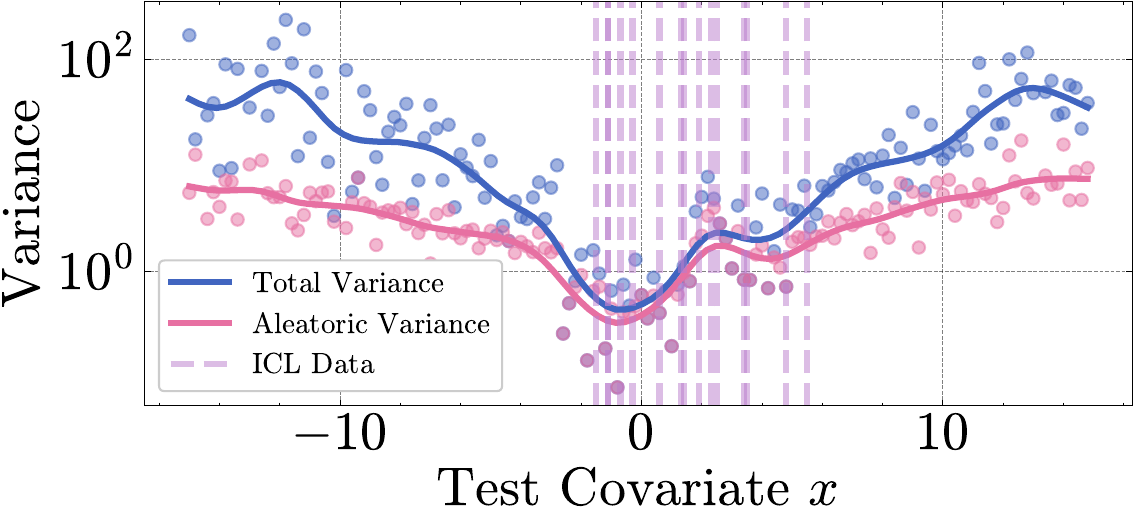}
        \caption{Variance Decomposition}
    \end{subfigure}
    \caption{Uncertainty Decompositions for Linear Regression (\texttt{Qwen2.5-7B}).}
    \label{appx_fig:lin_reg_seed_0_qwen7b}
    \vspace{-4mm}
\end{figure}

\begin{figure}[H]
    \centering
    \begin{subfigure}[t]{0.32\textwidth}
        \centering
        \includegraphics[width=\linewidth]{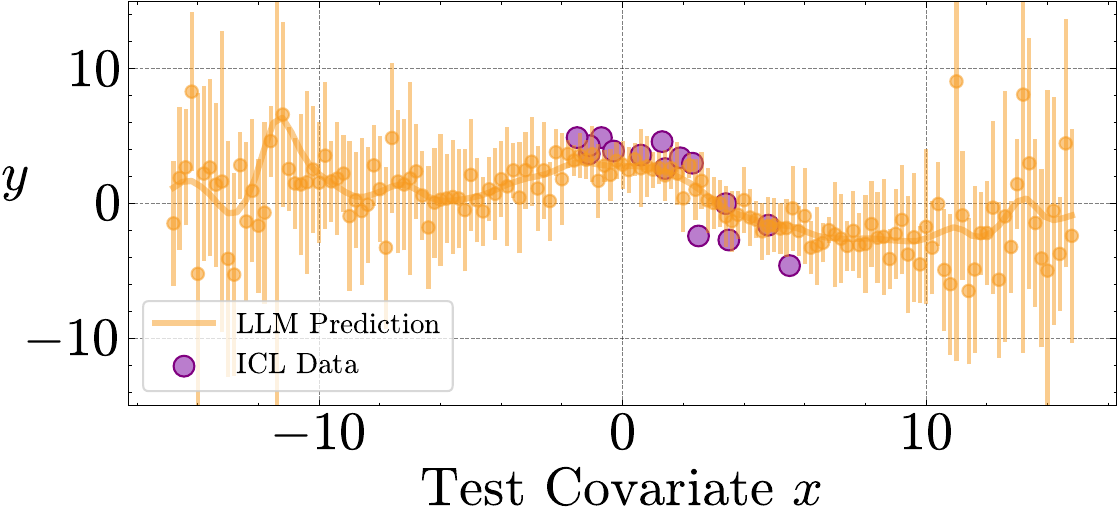}
        \caption{Predicted Mean and Std.}
    \end{subfigure}
    \hfill
    \begin{subfigure}[t]{0.31\textwidth}
        \centering
        \includegraphics[width=\linewidth]{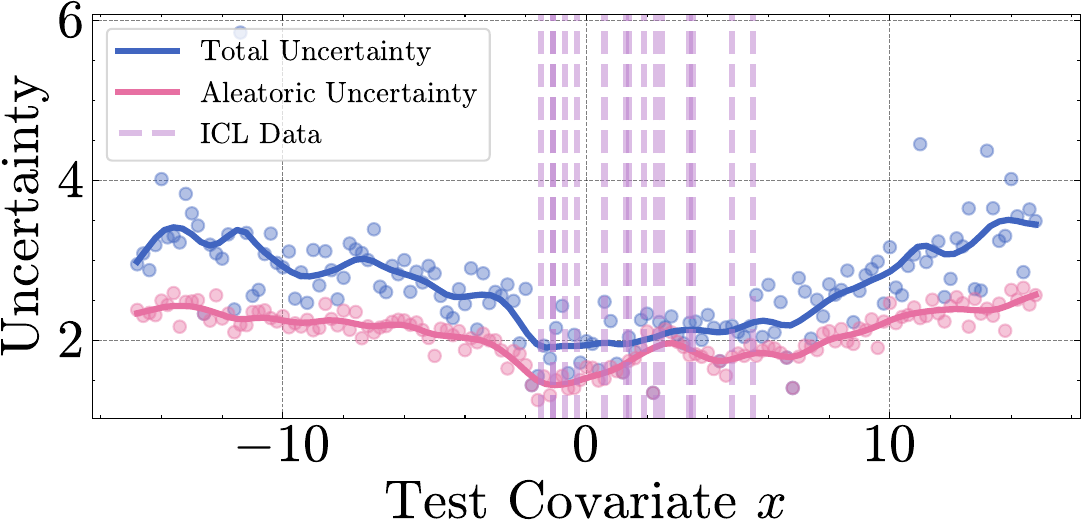}
        \caption{Entropic Uncertainty Decomp.}
    \end{subfigure}
    \hfill
    \begin{subfigure}[t]{0.325\textwidth}
        \centering
        \includegraphics[width=\linewidth]{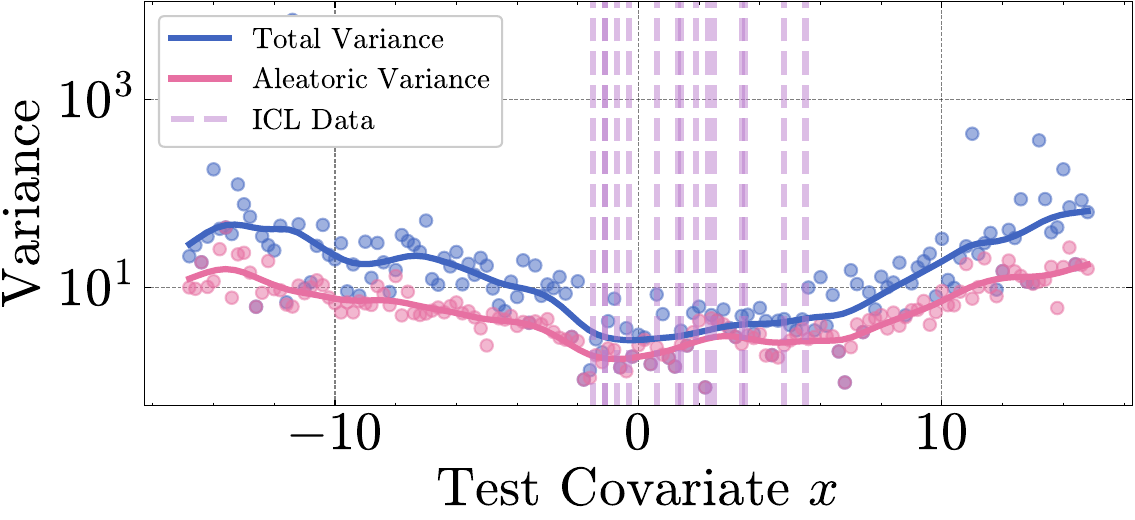}
        \caption{Variance Decomposition}
    \end{subfigure}
    \caption{Uncertainty Decompositions for Linear Regression (\texttt{Llama-3.1-8B}).}
    \label{appx_fig:lin_reg_seed_0_llama8b}
    \vspace{-4mm}
\end{figure}

\textbf{Heteroscedastic ``Gaps'' Regression}. We model the ``gaps'' as the combination of $3$ linear regression datasets. The parameters of the $3$ clusters are in Table \ref{tbl:datasets_gaps_parameters}. To generate the small in-context learning dataset, we sample from this combined dataset. In our visualisations, we use Perturbations with $5$ auxiliary data points and perturbation scale $\lambda=0.1$. We sample a single dataset of size $|\mathcal{D}|=30$. We compute uncertainties for $\x^*$ in range $[-15,15)$ with step size $0.2$ and plot the obtained decompositions in Figures \ref{fig:gaps_qwen14b}, \ref{fig:gaps_qwen7b} and \ref{fig:gaps_llama8b}.  

\begin{table*}[htbp]
    \centering
    \caption{Heteroscedastic ``Gaps'' Dataset Parameters}
    \vspace{-2mm}
    \label{tbl:datasets_gaps_parameters}
    \begin{normalsize}
    \begin{threeparttable}
    \begin{sc}
    \resizebox{0.7\textwidth}{!}{
    \begin{tabular}{ccccccc}
    \toprule
     Cluster & Dataset Size & Coefficient & Bias & Noise & $\mathbb{E}[x]$  & $\mathrm{Var}[x]$ \\
     \hline
     1 & 50 & 0.75 & 1.0 & 0.1 & -7 & 0.75 \\
     2 & 50 & 0.75 & 1.0 & 0.1 & -1 & 0.75 \\
     3 & 100 & 0 & -0.5 & 2 & 5 & 1 \\
    \bottomrule
    \end{tabular}}
    \end{sc}
    \end{threeparttable}
    \end{normalsize}
\end{table*}

\begin{figure}[H]
    \centering
    \begin{subfigure}[t]{0.32\textwidth}
        \centering
        \includegraphics[width=\linewidth]{figures/linear_noise_2_exp_1/predicted_mean_linear_noise_2_exp_1_v2.pdf}
        \caption{Predicted Mean and Std.}
    \end{subfigure}
    \hfill
    \begin{subfigure}[t]{0.32\textwidth}
        \centering
        \includegraphics[width=\linewidth]{figures/linear_noise_2_exp_1/uncertainty_decomposition_linear_noise_2_exp_1_v2.pdf}
        \caption{Entropic Uncertainty Decomp}
    \end{subfigure}
    \begin{subfigure}[t]{0.33\textwidth}
        \centering
        \includegraphics[width=\linewidth]{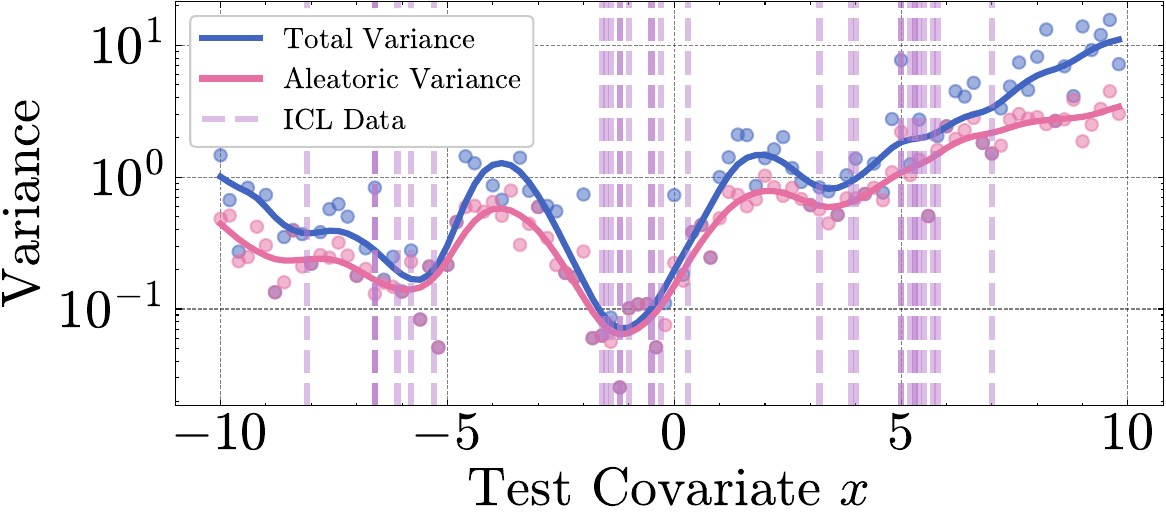}
        \caption{Variance Decomposition}
    \end{subfigure}
    \caption{Uncertainty Decomp. for Regression Tasks with Gaps in ICL Data. (\texttt{Qwen2.5-14B})}
    \label{fig:gaps_qwen14b}
    \vspace{-4mm}
\end{figure}

\begin{figure}[H]
    \centering
    \begin{subfigure}[t]{0.32\textwidth}
        \centering
        \includegraphics[width=\linewidth]{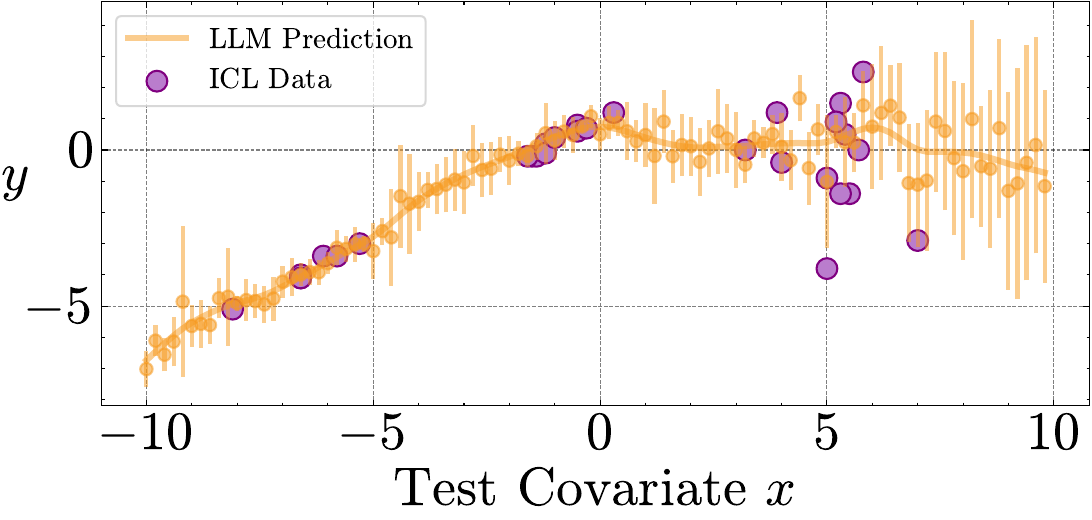}
        \caption{Predicted Mean and Std.}
    \end{subfigure}
    \hfill
    \begin{subfigure}[t]{0.31\textwidth}
        \centering
        \includegraphics[width=\linewidth]{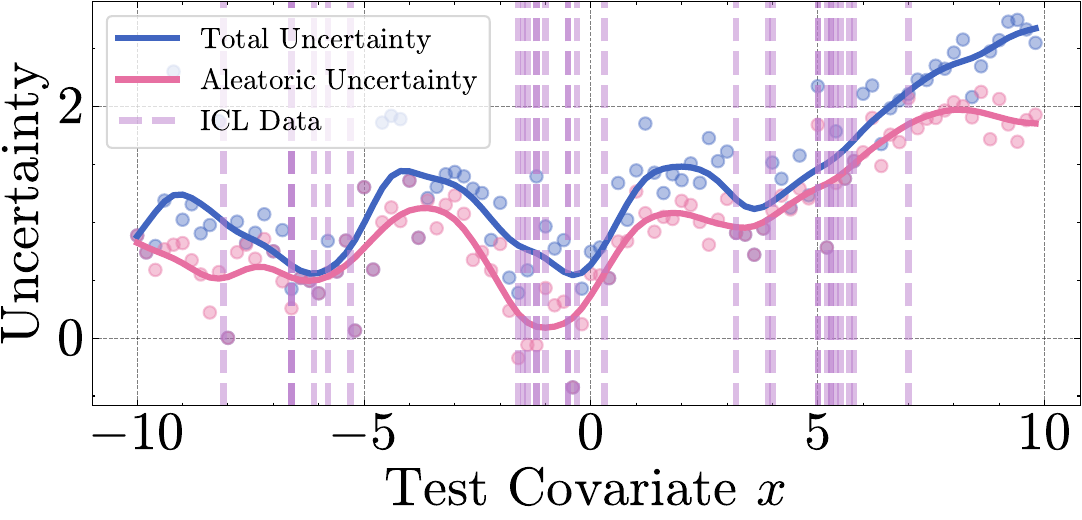}
        \caption{Entropic Uncertainty Decomp.}
    \end{subfigure}
    \hfill
    \begin{subfigure}[t]{0.34\textwidth}
        \centering
        \includegraphics[width=\linewidth]{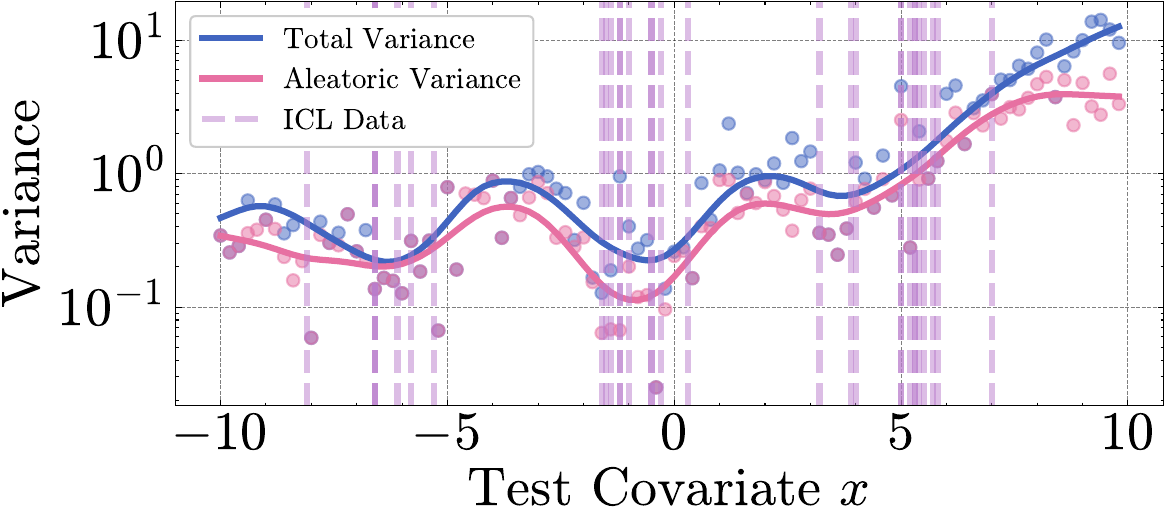}
        \caption{Variance Decomposition}
    \end{subfigure}    
    \caption{Uncertainty Decomp. for Regression Tasks with Gaps in ICL Data (\texttt{Qwen2.5-7B}).}
    \label{fig:gaps_qwen7b}
    \vspace{-4mm}
\end{figure}

\begin{figure}[H]
    \centering
    \begin{subfigure}[t]{0.32\textwidth}
        \centering
        \includegraphics[width=\linewidth]{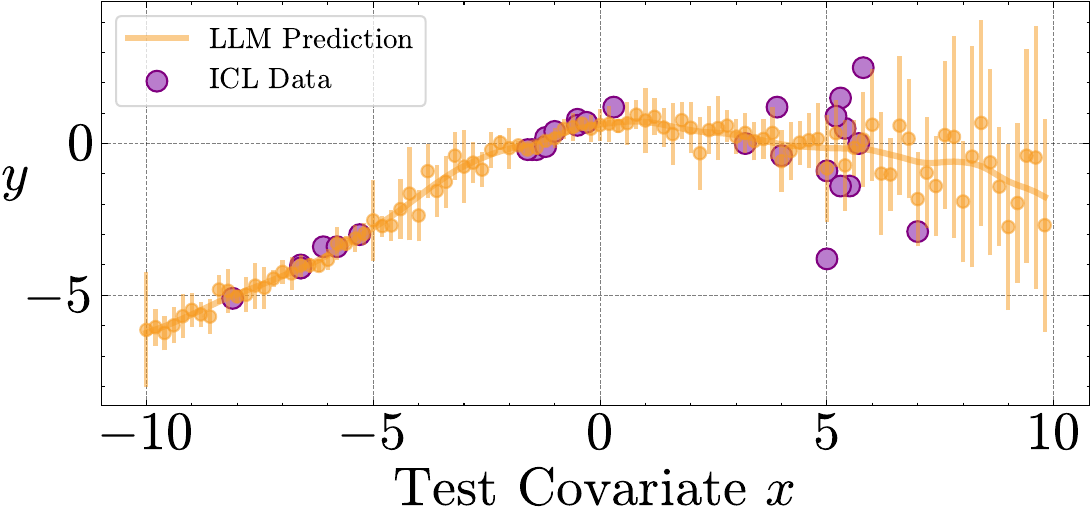}
        \caption{Predicted Mean and Std.}
    \end{subfigure}
    \hfill
    \begin{subfigure}[t]{0.31\textwidth}
        \centering
        \includegraphics[width=\linewidth]{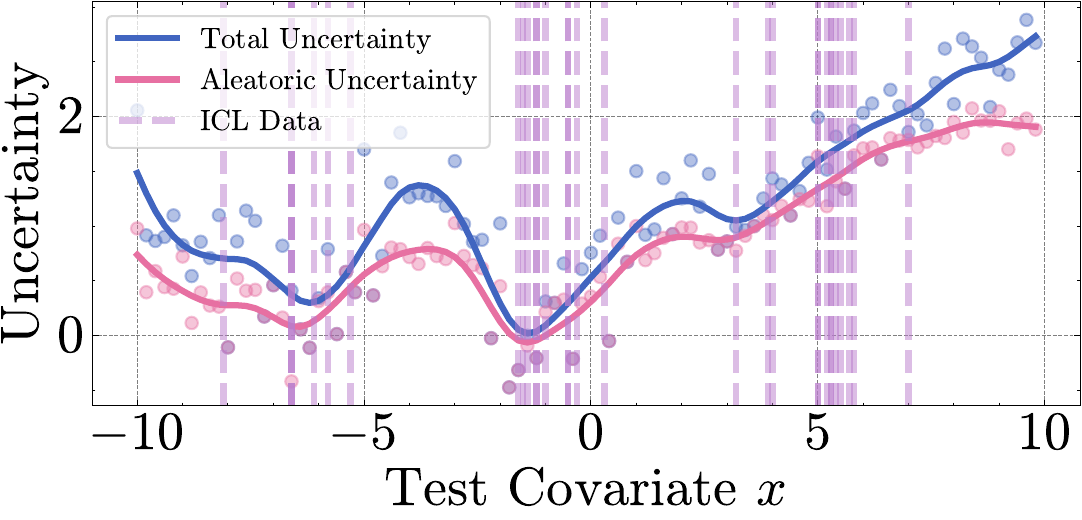}
        \caption{Entropic Uncertainty Decomp.}
    \end{subfigure}
    \hfill
    \begin{subfigure}[t]{0.34\textwidth}
        \centering
        \includegraphics[width=\linewidth]{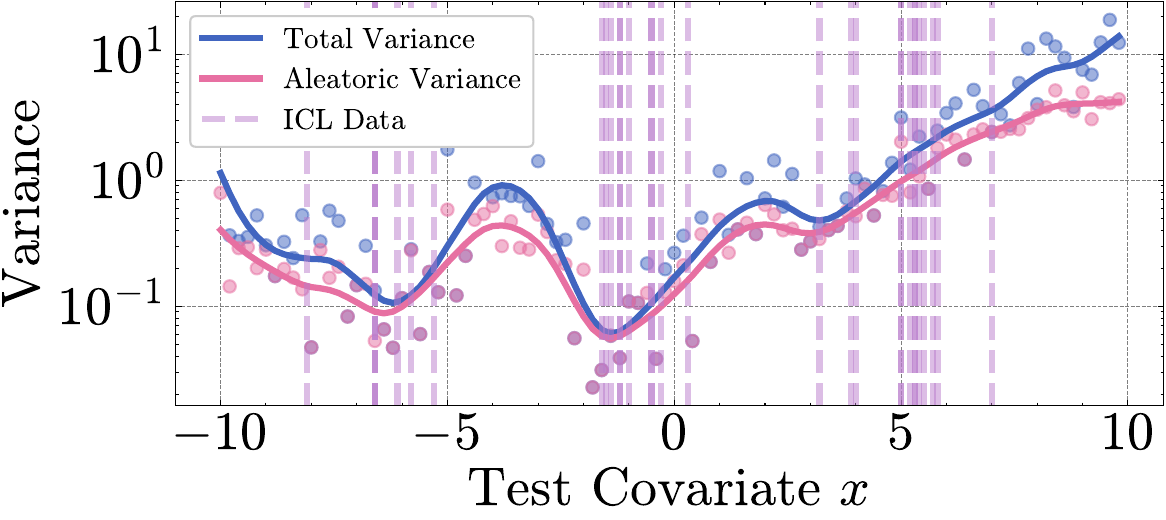}
        \caption{Variance Decomposition}
    \end{subfigure}
    \caption{Uncertainty Decomp. for Regression Tasks with Gaps in ICL Data (\texttt{Llama-3.1-8B})}
    \label{fig:gaps_llama8b}
    \vspace{-4mm}
\end{figure}

\textbf{Moons Dataset}. We use the \texttt{make\_moons} two-moons dataset generator from \href{https://scikit-learn.org/stable/}{\texttt{scikit-learn}}. We set the noise parameter in the ``Moons 1'' and ``Moons 2'' datasets to $\sigma=0.1$ and $\sigma=0.4$ respectively. Figure \ref{fig:two_moons} in the main text shows the decomposition for ``Moons 1'' dataset. We use Perturbations with $15$ auxiliary data points and perturbation scale $\lambda=0.1$. For the "Moons 1" dataset, we sample a single dataset of size $|\mathcal{D}|=30$ and compute uncertainties for $\x^*$ in the range $[-1.5, 2.5)\times[-1.5,2.5)$ with step-size 0.2 for each interval. The decompositions are given in Figures \ref{fig:two_moons}, \ref{fig:two_moons_qwen7b} and \ref{fig:two_moons_llama8b}. For the "Moons 2" dataset, we sample a single dataset of size $|\mathcal{D}|=30$ and compute uncertainties for $\x^*$ in the range $[-3.0, 3.5)\times[-2.5,3.0)$ with step-size 0.2 for each interval. The decompositions are given in Figures \ref{fig:two_moons_2_qwen14b}, \ref{fig:two_moons_2_qwen7b} and \ref{fig:two_moons_2_llama8b}.

\begin{figure}[H]
    \centering
    \begin{subfigure}[t]{0.325\textwidth}
        \centering
        \includegraphics[width=\linewidth]{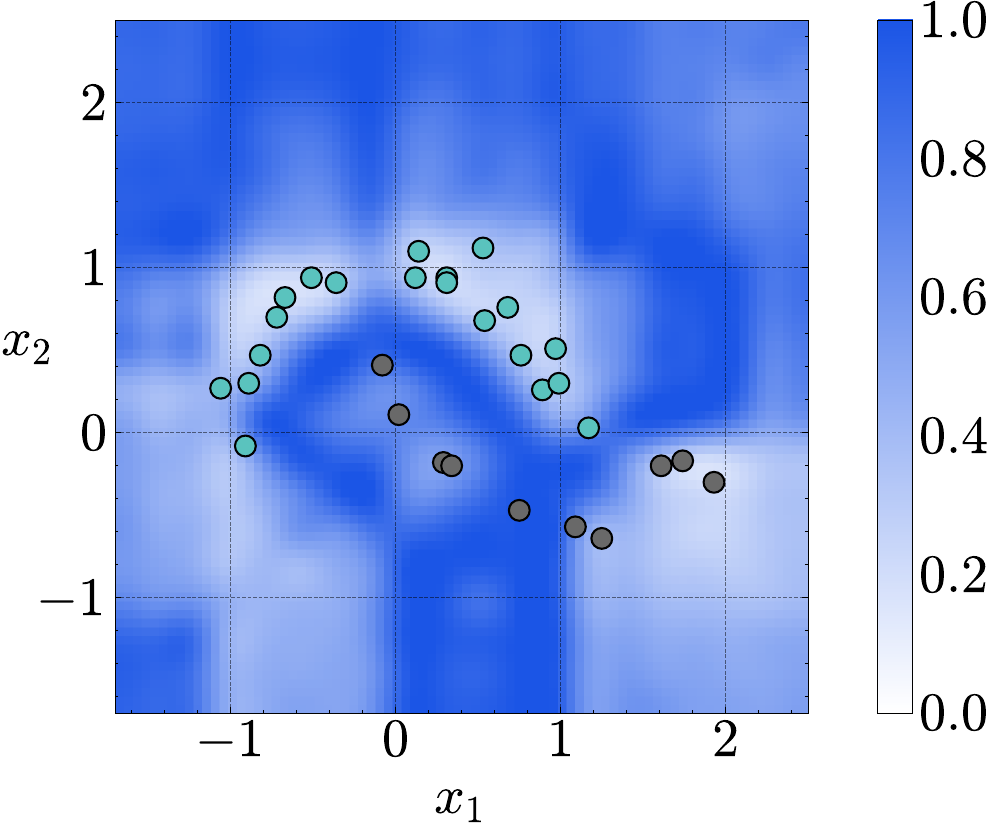}
        \caption{Total Uncertainty}
    \end{subfigure}
    \begin{subfigure}[t]{0.325\textwidth}
        \includegraphics[width=\linewidth]{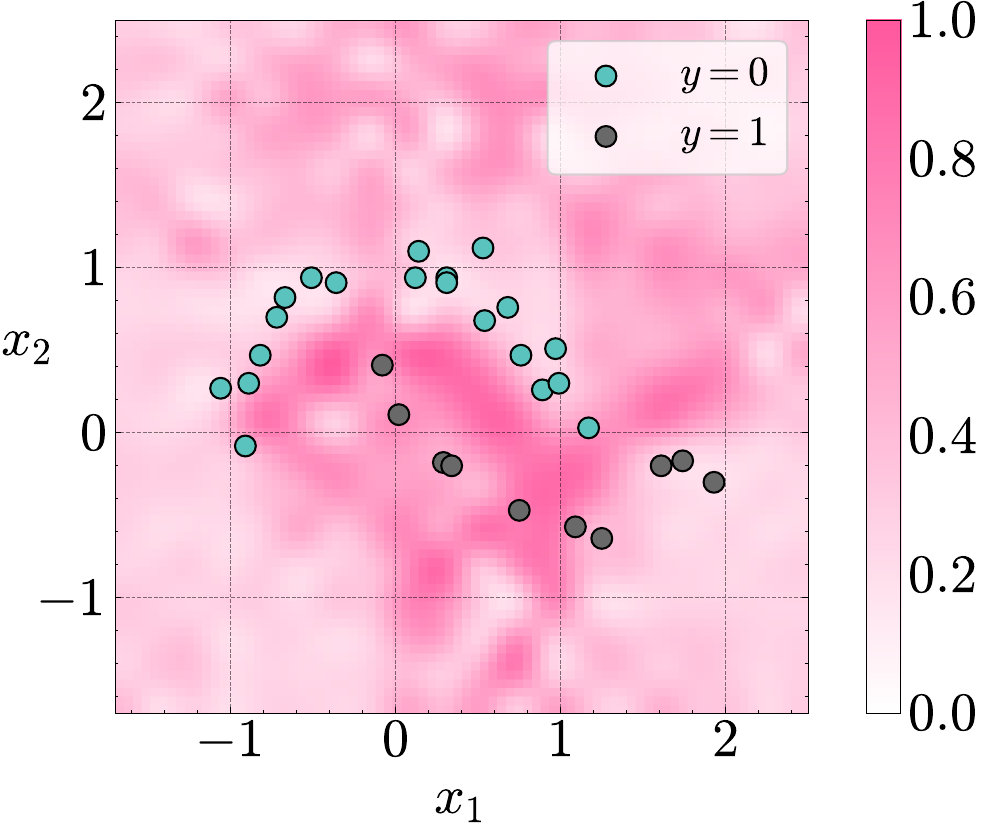}
        \caption{Aleatoric Uncertainty}
    \end{subfigure}
    \begin{subfigure}[t]{0.325\textwidth}
        \includegraphics[width=\linewidth]{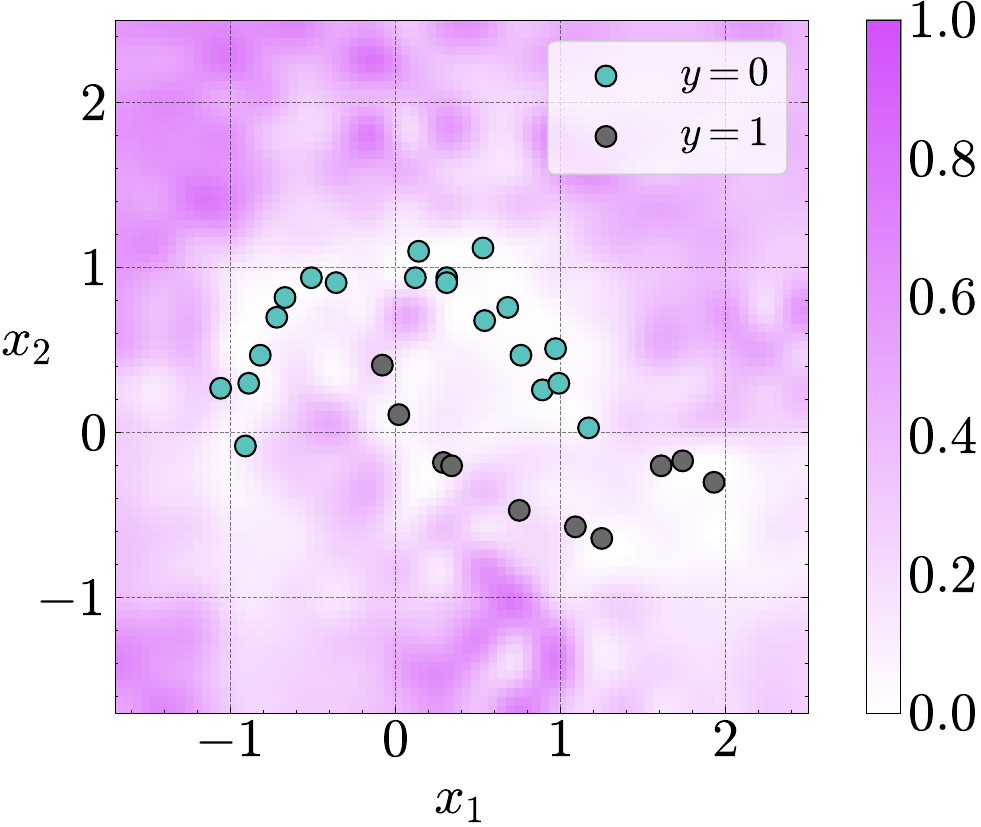}
        \caption{Epistemic Uncertainty}
    \end{subfigure}
    \centering
    \caption{Uncertainty Decomposition for "Moons 1" Dataset (\texttt{Qwen2.5-7B}).}
    \label{fig:two_moons_qwen7b}
    \vspace{-4mm}
\end{figure}

\begin{figure}[H]
    \centering
    \begin{subfigure}[t]{0.325\textwidth}
        \centering
        \includegraphics[width=\linewidth]{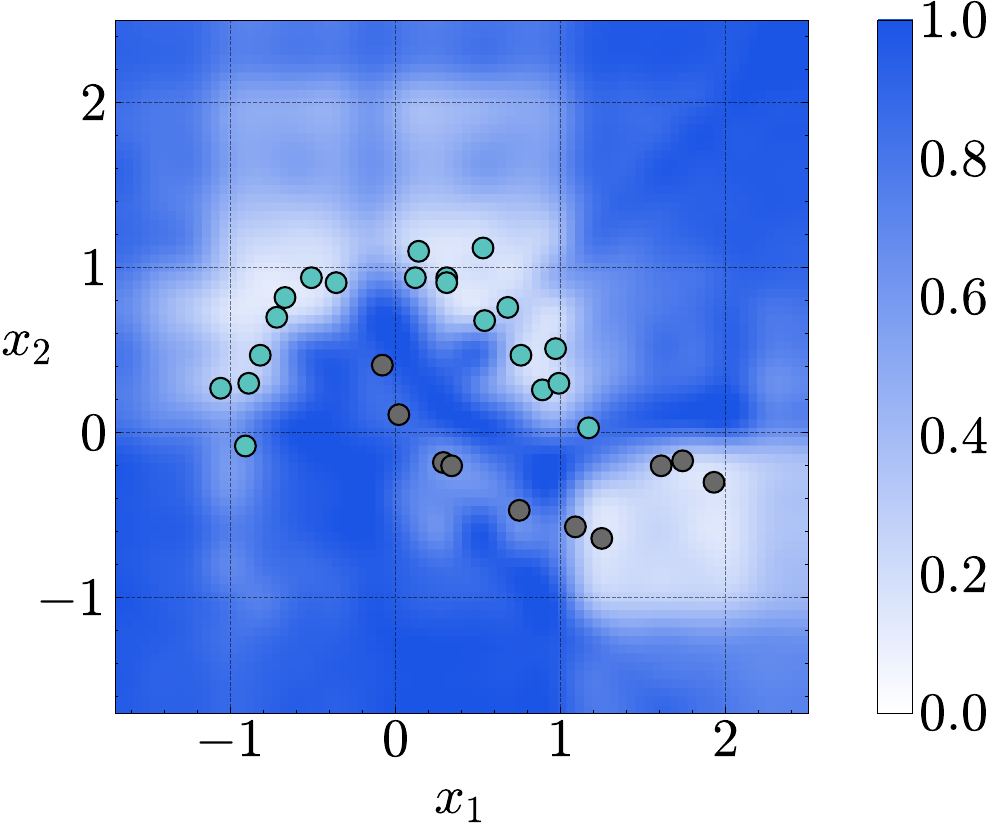}
        \caption{Total Uncertainty}
    \end{subfigure}
    \begin{subfigure}[t]{0.325\textwidth}
        \includegraphics[width=\linewidth]{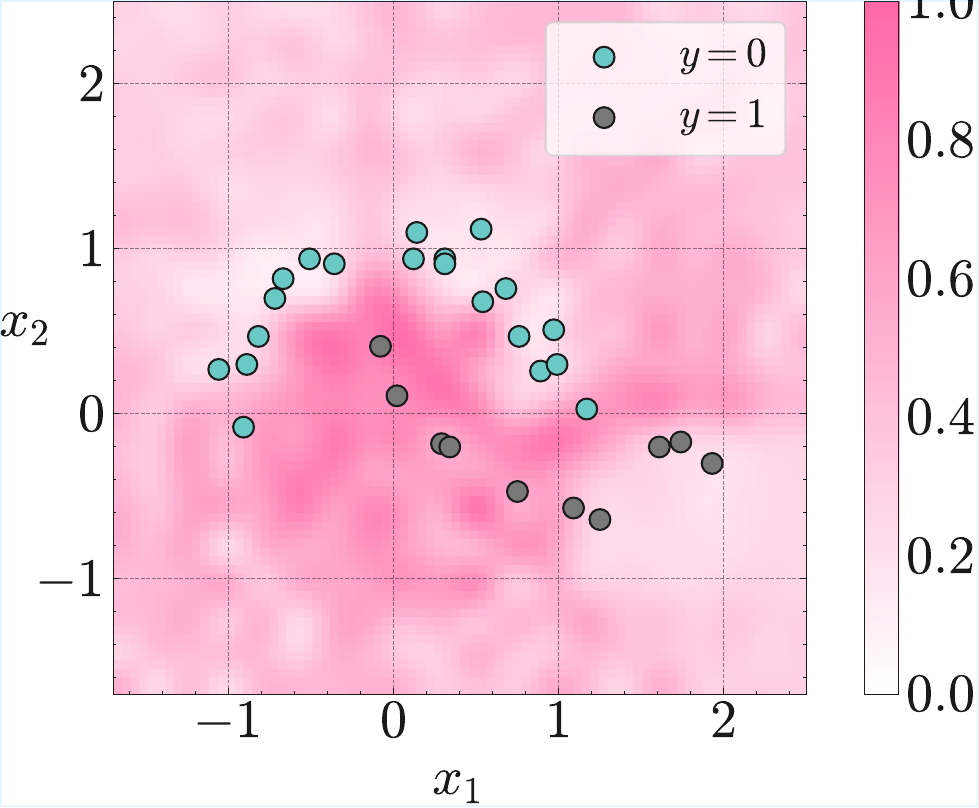}
        \caption{Aleatoric Uncertainty}
    \end{subfigure}
    \begin{subfigure}[t]{0.325\textwidth}
        \includegraphics[width=\linewidth]{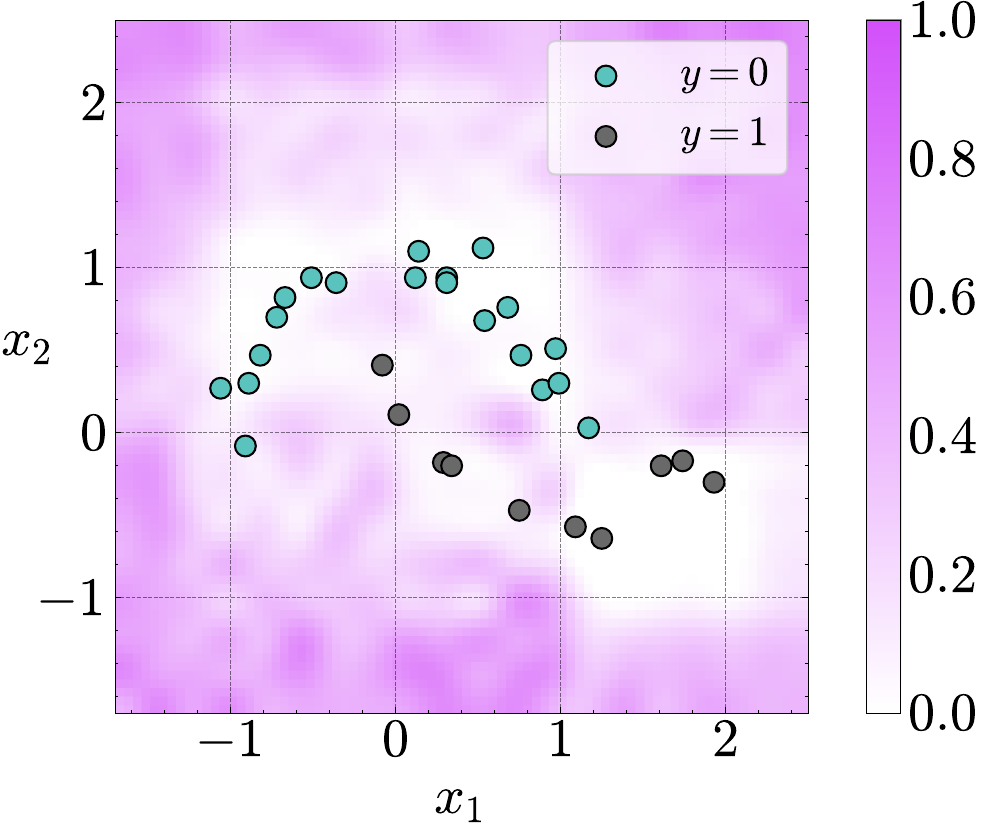}
        \caption{Epistemic Uncertainty}
    \end{subfigure}
    \centering
    \caption{Uncertainty Decomposition for "Moons 1" Dataset (\texttt{Llama-3.1-8B}).}
    \label{fig:two_moons_llama8b}
    \vspace{-4mm}
\end{figure}

\begin{figure}[H]
    \centering
    \begin{subfigure}[t]{0.325\textwidth}
        \centering
        \includegraphics[width=\linewidth]{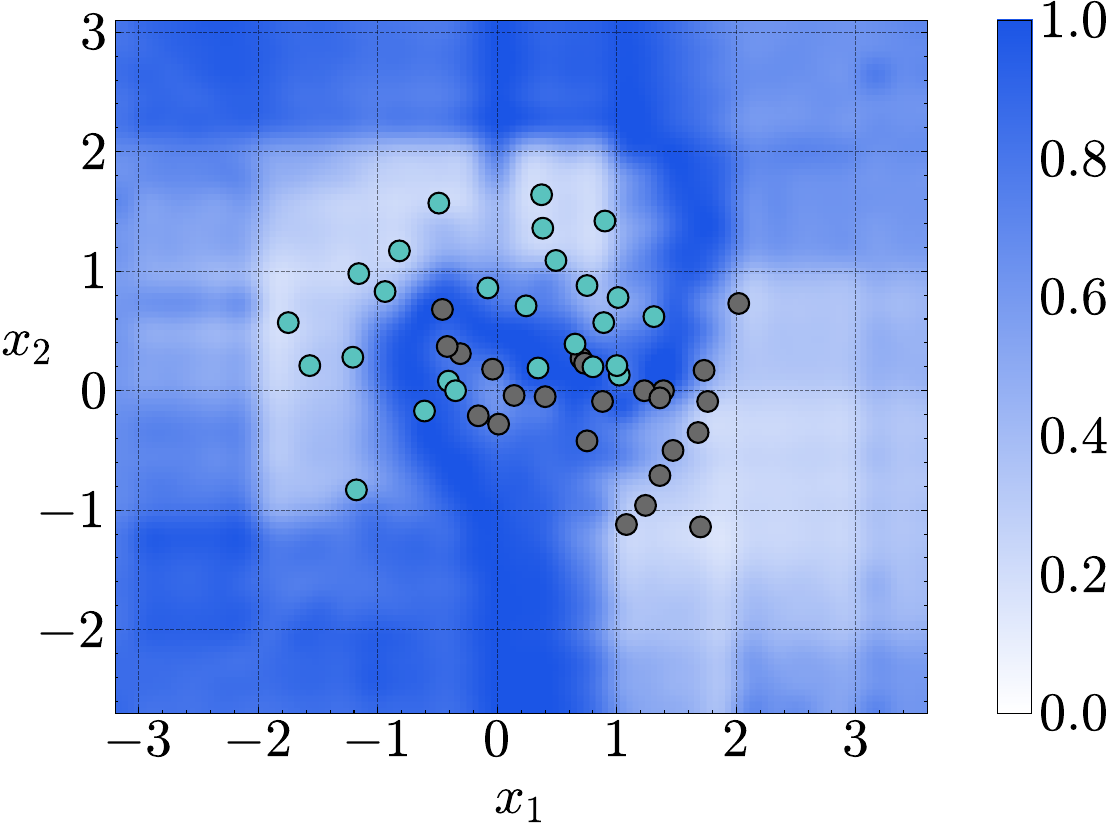}
        \caption{Total Uncertainty}
    \end{subfigure}
    \begin{subfigure}[t]{0.325\textwidth}
        \includegraphics[width=\linewidth]{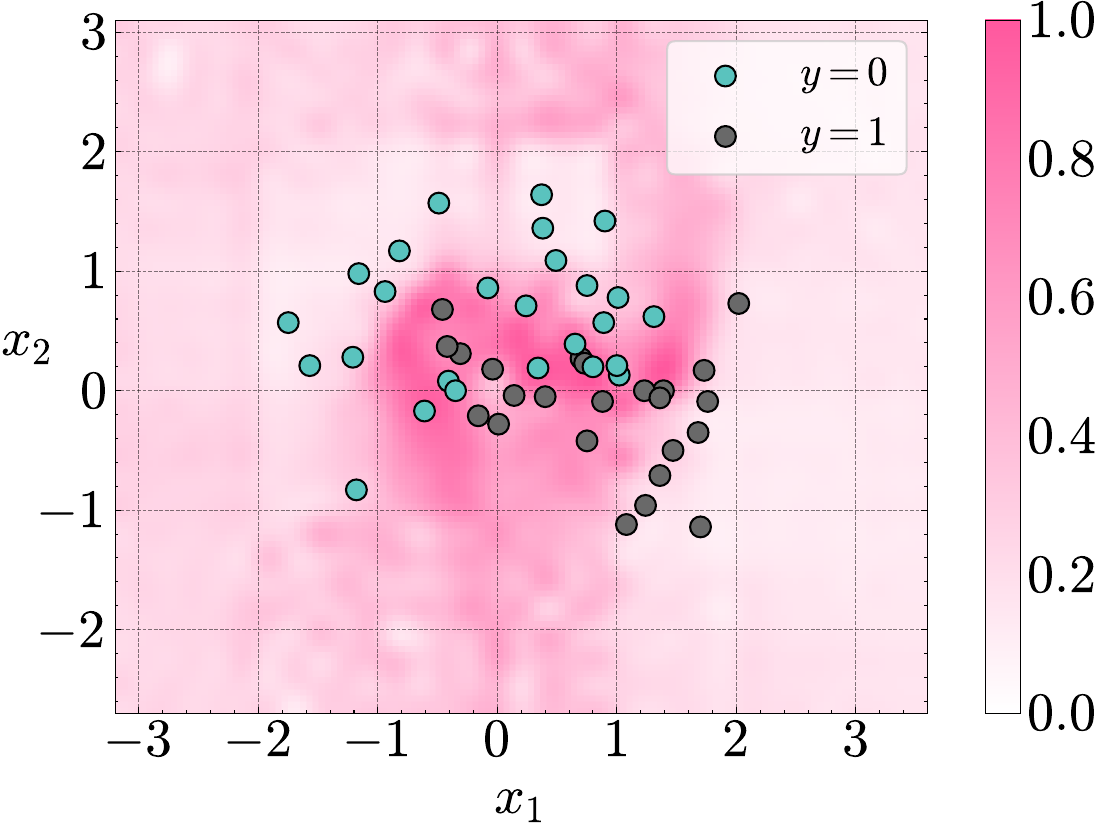}
        \caption{Aleatoric Uncertainty}
    \end{subfigure}
    \begin{subfigure}[t]{0.325\textwidth}
        \includegraphics[width=\linewidth]{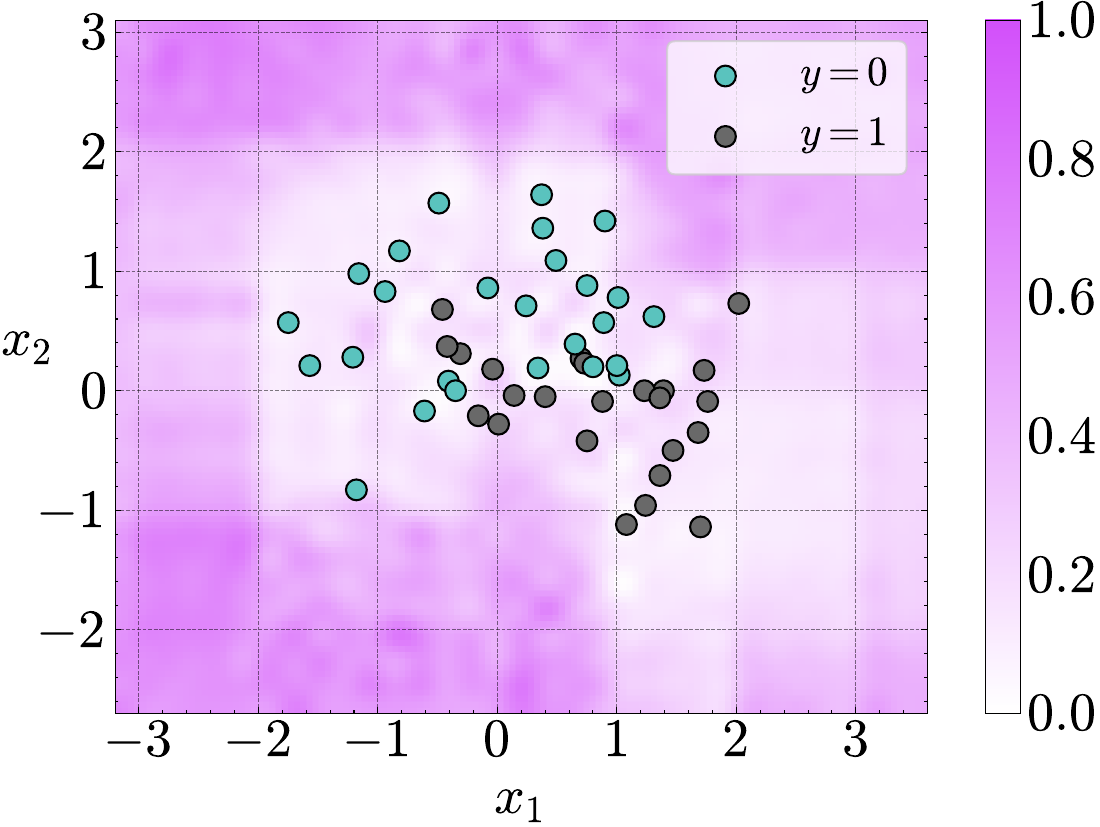}
        \caption{Epistemic Uncertainty}
    \end{subfigure}
    \centering
    \caption{Uncertainty Decomposition for "Moons 2" Dataset (\texttt{Qwen2.5-14B}).}
    \label{fig:two_moons_2_qwen14b}
    \vspace{-4mm}
\end{figure}

\begin{figure}[H]
    \centering
    \begin{subfigure}[t]{0.325\textwidth}
        \centering
        \includegraphics[width=\linewidth]{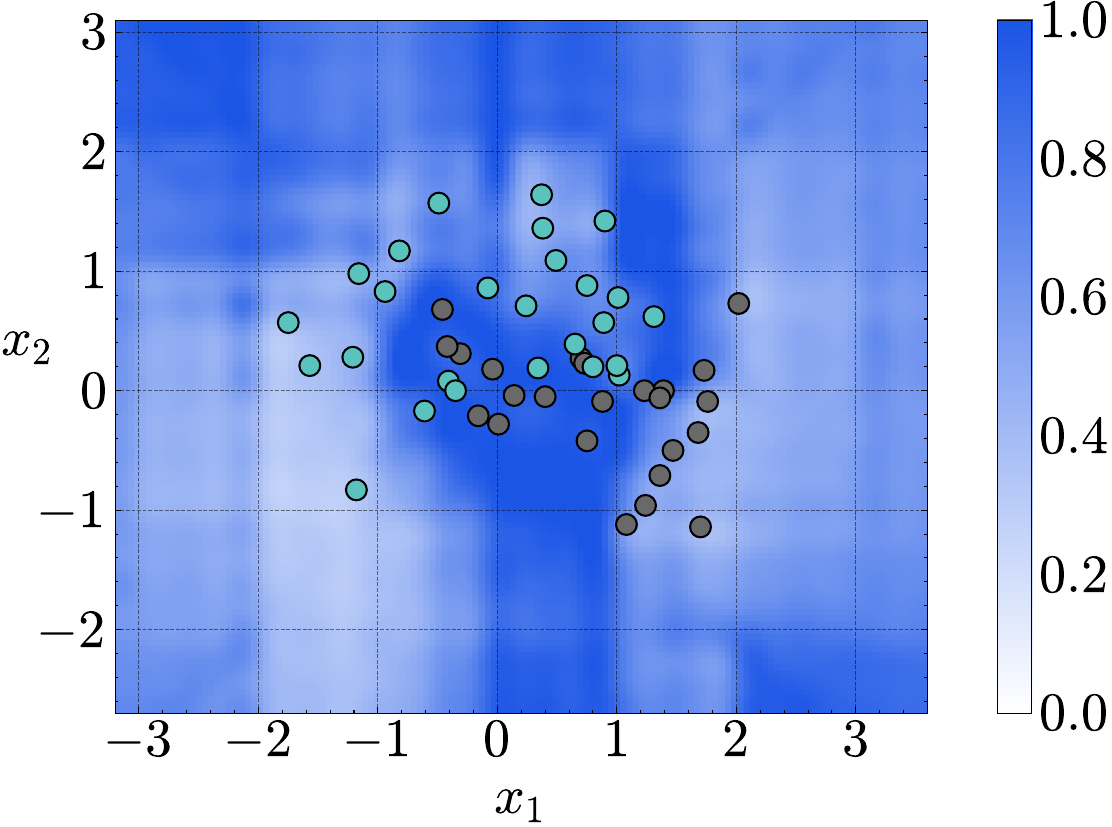}
        \caption{Total Uncertainty}
    \end{subfigure}
    \begin{subfigure}[t]{0.325\textwidth}
        \includegraphics[width=\linewidth]{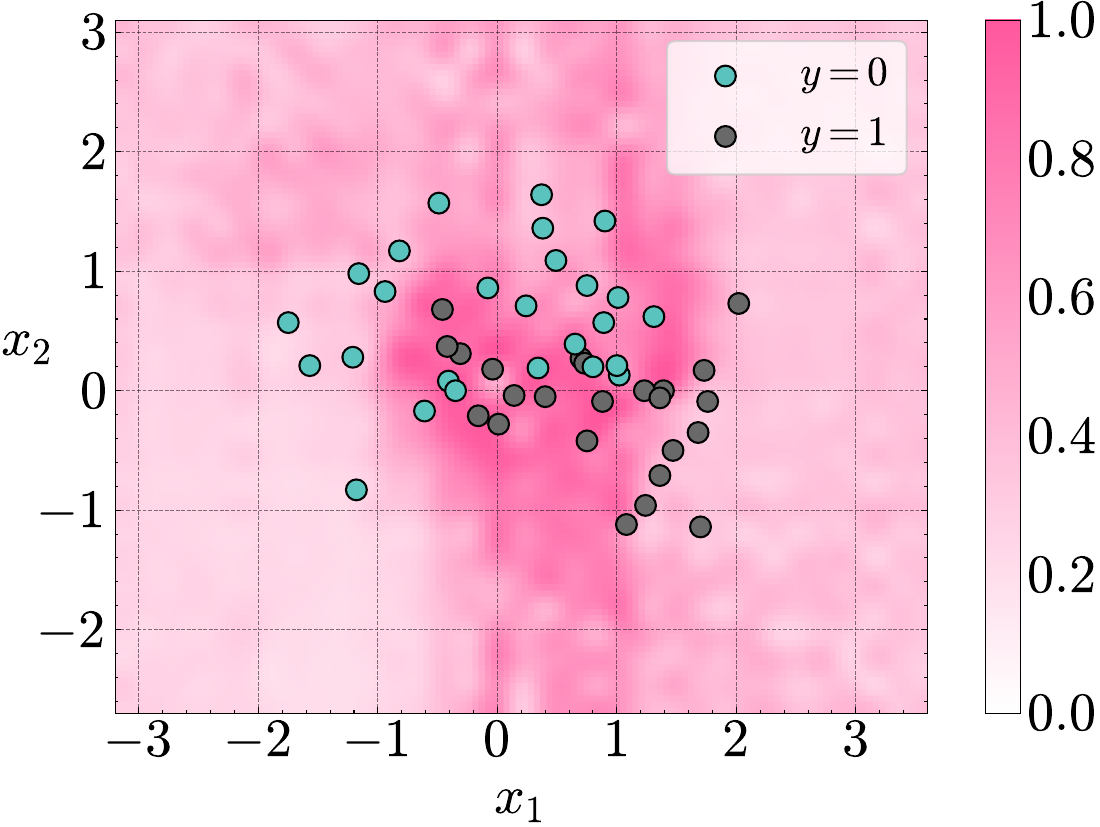}
        \caption{Aleatoric Uncertainty}
    \end{subfigure}
    \begin{subfigure}[t]{0.325\textwidth}
        \includegraphics[width=\linewidth]{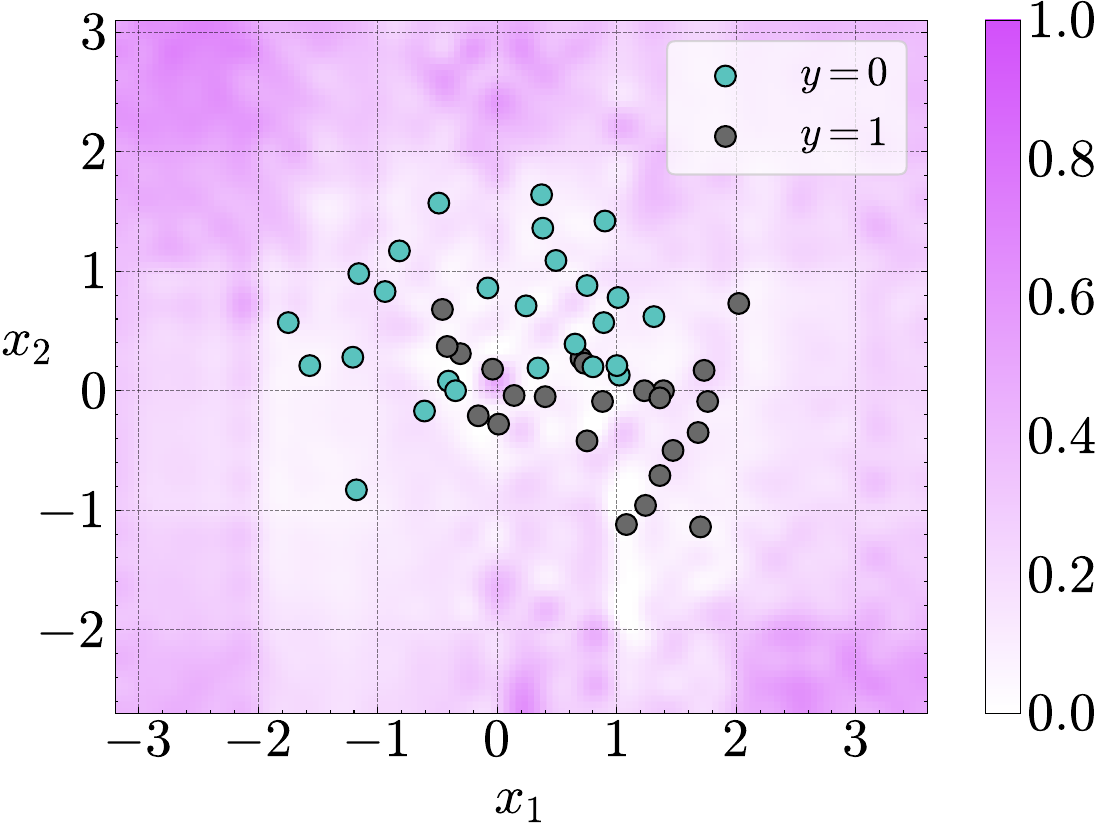}
        \caption{Epistemic Uncertainty}
    \end{subfigure}
    \centering
    \caption{Uncertainty Decomposition for "Moons 2" Dataset (\texttt{Qwen2.5-7B}).}
    \label{fig:two_moons_2_qwen7b}
    \vspace{-4mm}
\end{figure}

\begin{figure}[H]
    \centering
    \begin{subfigure}[t]{0.325\textwidth}
        \centering
        \includegraphics[width=\linewidth]{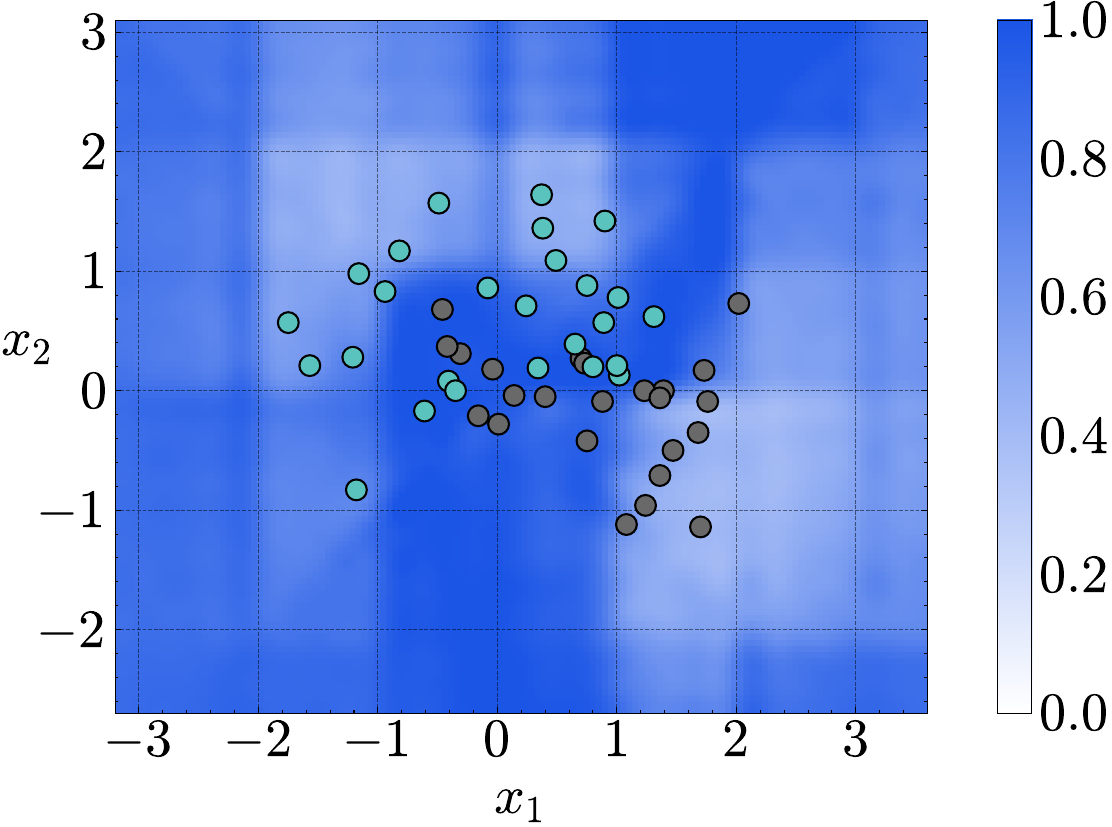}
        \caption{Total Uncertainty}
    \end{subfigure}
    \begin{subfigure}[t]{0.325\textwidth}
        \includegraphics[width=\linewidth]{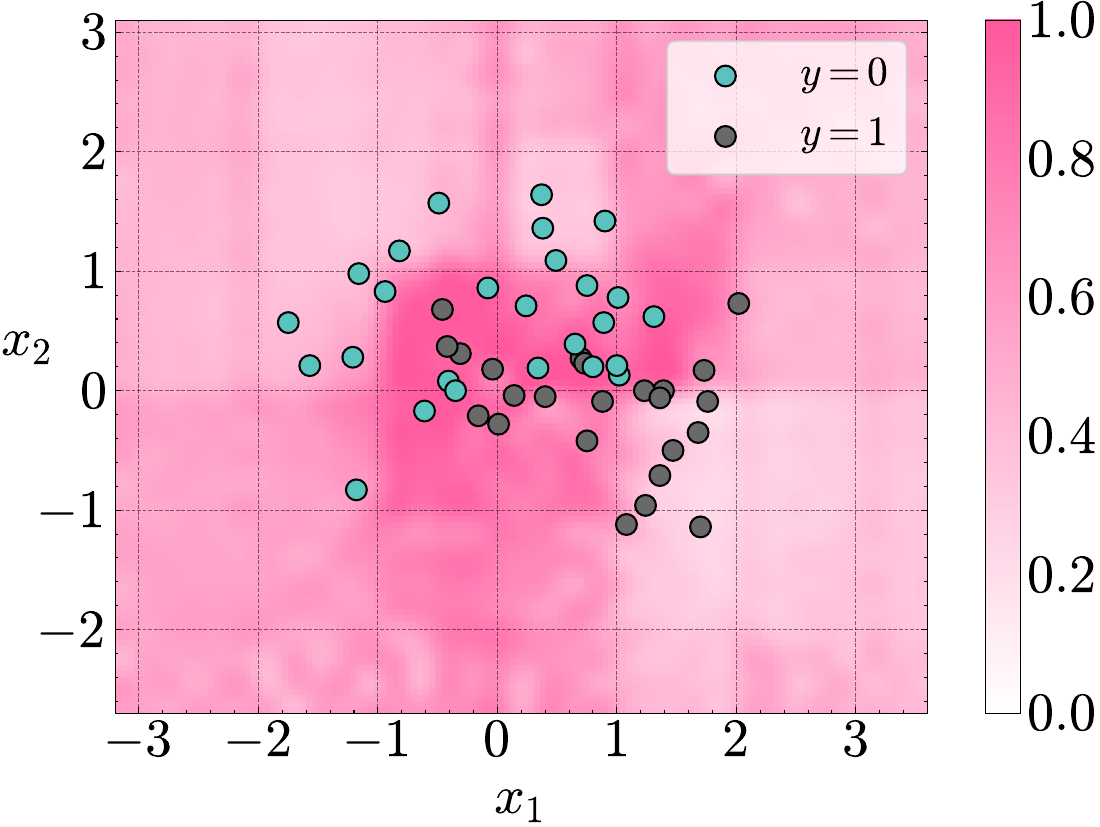}
        \caption{Aleatoric Uncertainty}
    \end{subfigure}
    \begin{subfigure}[t]{0.325\textwidth}
        \includegraphics[width=\linewidth]{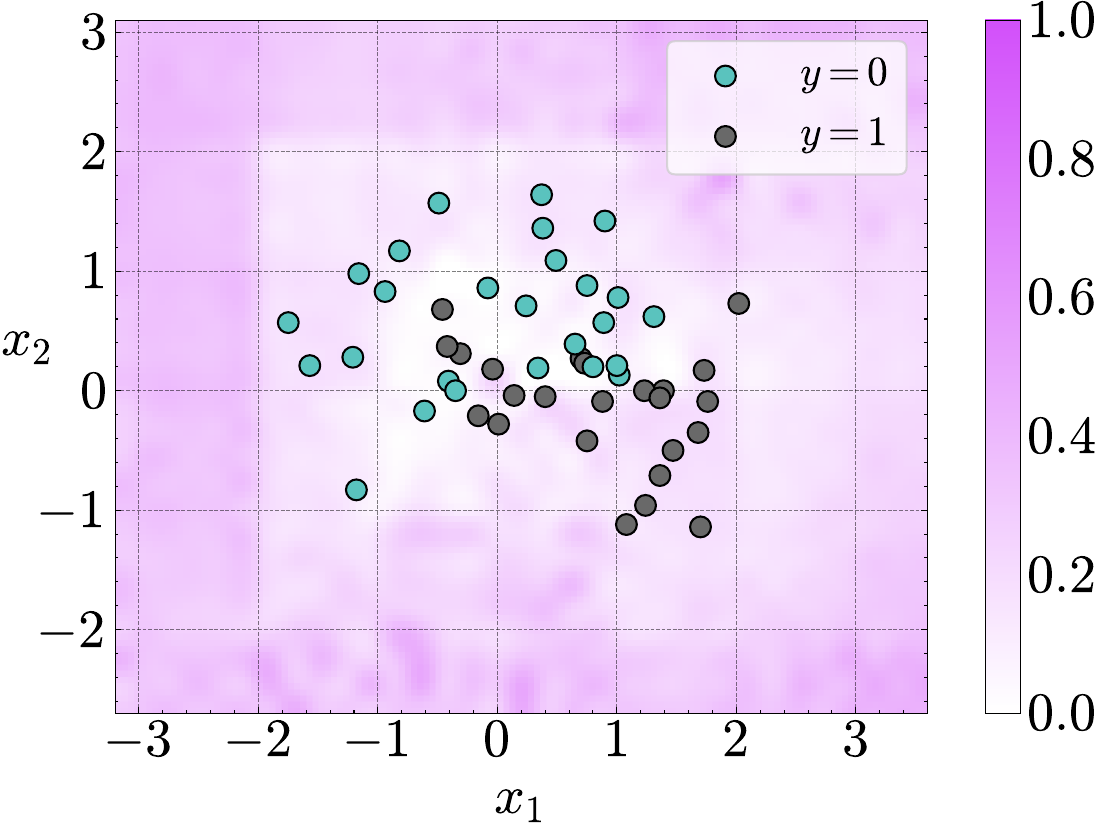}
        \caption{Epistemic Uncertainty}
    \end{subfigure}
    \centering
    \caption{Uncertainty Decomposition for "Moons 2" Dataset (\texttt{Llama-3.1-8B}).}
    \label{fig:two_moons_2_llama8b}
    \vspace{-4mm}
\end{figure}

\textbf{Spirals Dataset}. We use an \href{https://github.com/corneauf/N-Arm-Spiral-Dataset}{$n$-arm spiral dataset generator} to generate the spirals. We set the number of arms to $3$ and noise to be $1.2$. We also scale the covariate down by a factor of $4$ so that all the points would appear in $[-4,4]\times [-4,4]$. Due to the complexity of this task, we sample a dataset of size $|\mathcal{D}|=200$ and we compute uncertainties for $\x^*$ in the range of $[-4, 4)\times[-4,4)$ with interval 0.1. To mitigate the cost of increases prompt size and the number of test data points, we use Repeated to obtain $\BZ$. The decomposition for Qwen2.5-14B is given in Figure \ref{fig:sprials_task}. We provide decompositions for Qwen2.5-7B and Llama-3.1-8B are shown in Figure \ref{fig:sprials_task_qwen7b} and \ref{fig:sprials_task_llama8b}.

\begin{figure}[H]
    \centering
    \begin{subfigure}[t]{0.1945\textwidth}
        \centering
        \includegraphics[width=\linewidth]{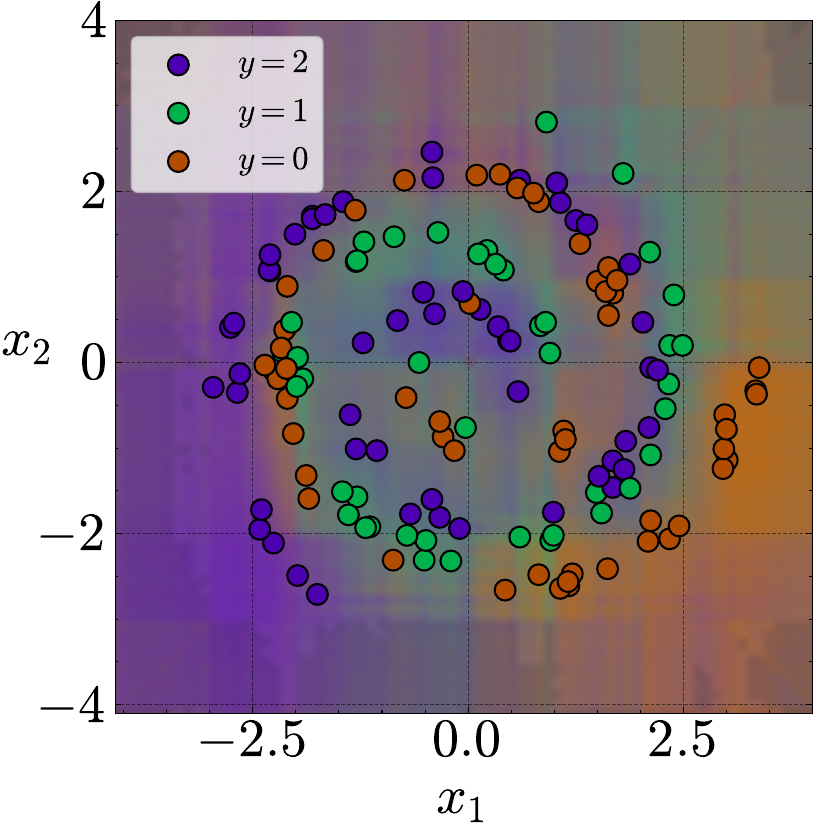}
        \caption{Posterior Prob.}
    \end{subfigure}
    \begin{subfigure}[t]{0.24\textwidth}
        \centering
        \includegraphics[width=\linewidth]{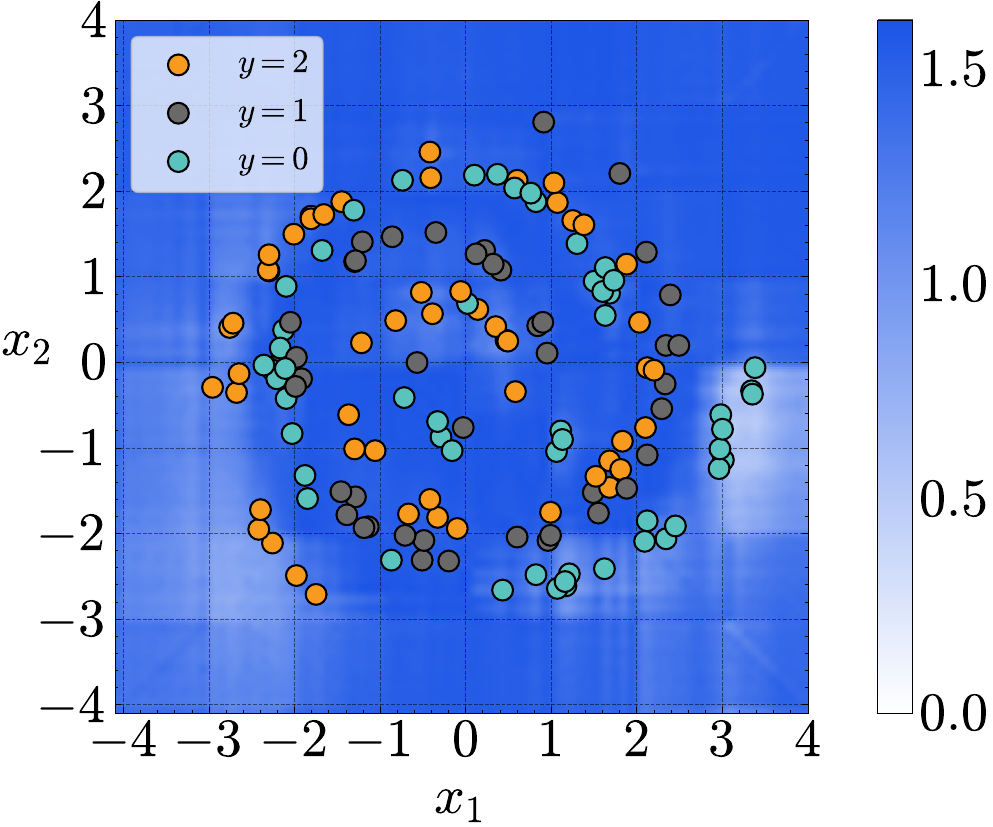}
        \caption{Total Uncertainty}
    \end{subfigure}
    \begin{subfigure}[t]{0.24\textwidth}
        \centering
        \includegraphics[width=\linewidth]{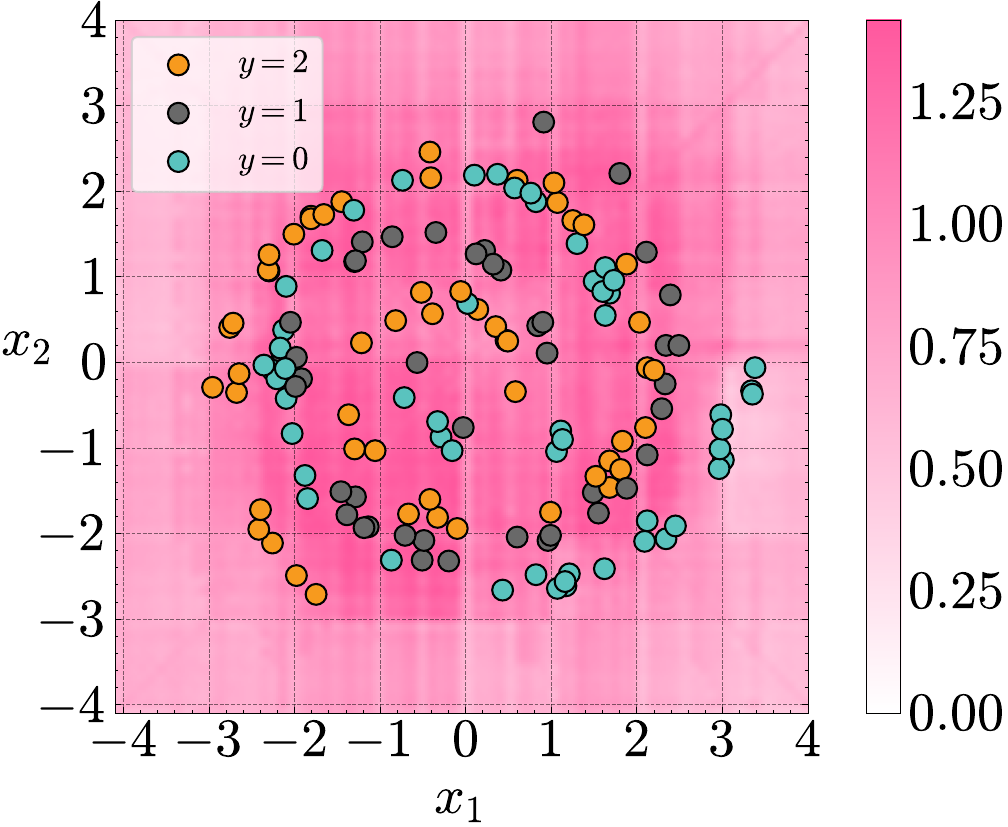}
        \caption{Aleatoric Uncertainty}
    \end{subfigure}
    \begin{subfigure}[t]{0.24\textwidth}
        \centering
        \includegraphics[width=\linewidth]{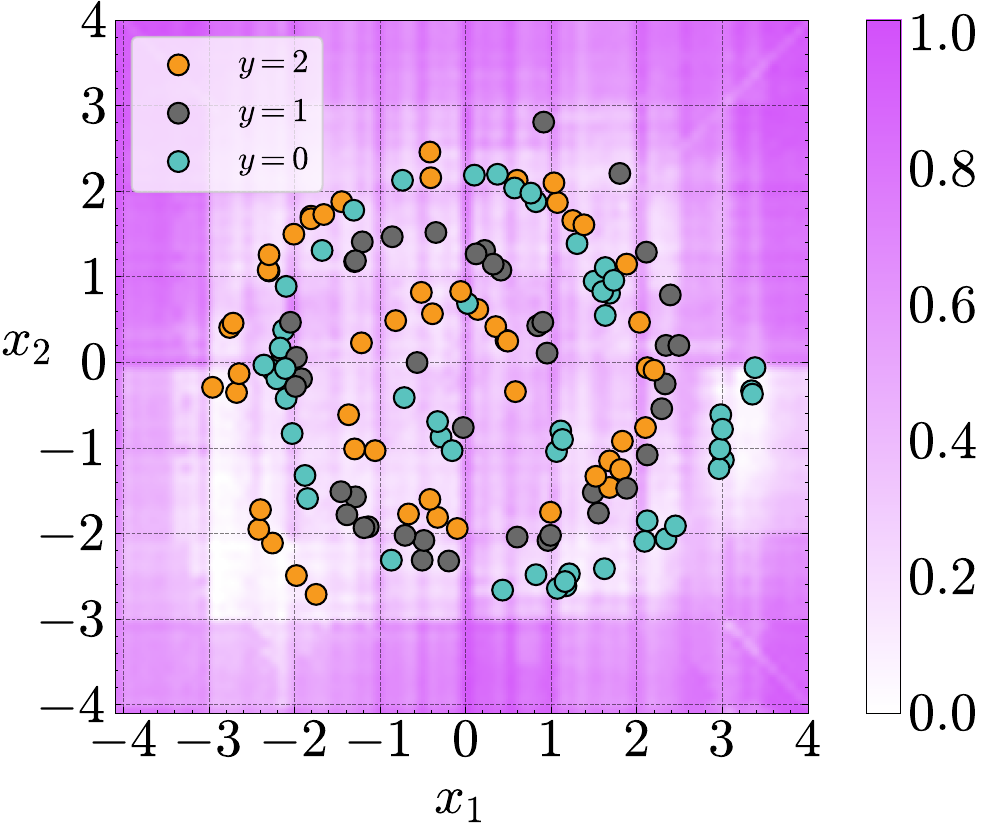}
        \caption{Epistemic Uncertainty}
    \end{subfigure}
    \caption{Uncertainty Decompositions for Spirals Classification Task (\texttt{Qwen2.5-7B})}
    \label{fig:sprials_task_qwen7b}
    \vspace{-4mm}
\end{figure}

\begin{figure}[H]
    \centering
    \begin{subfigure}[t]{0.1945\textwidth}
        \centering
        \includegraphics[width=\linewidth]{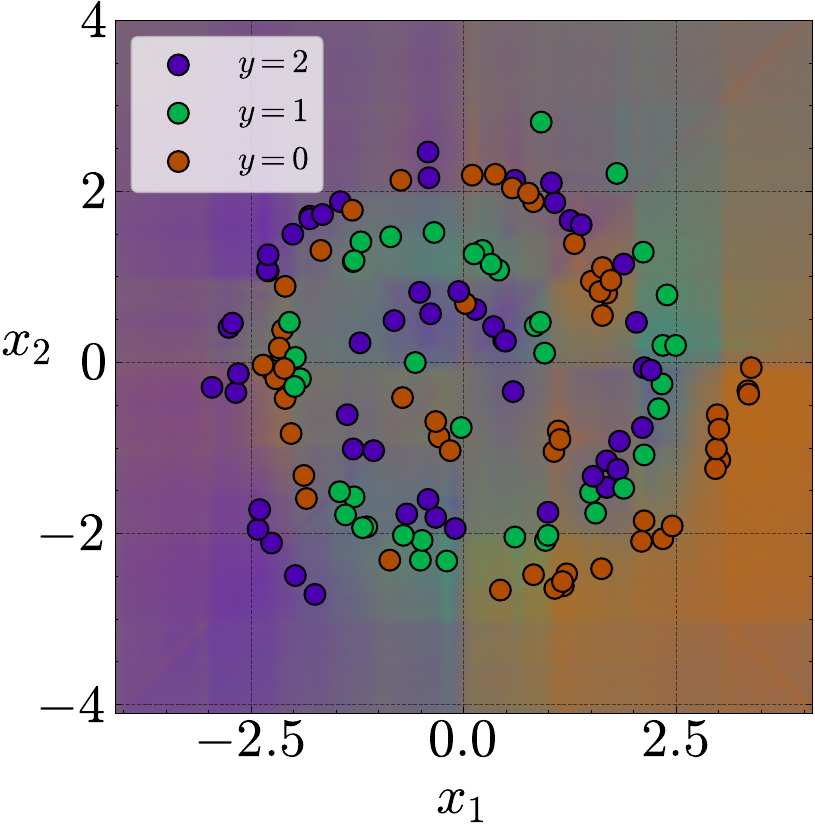}
        \caption{Posterior Prob.}
    \end{subfigure}
    \begin{subfigure}[t]{0.24\textwidth}
        \centering
        \includegraphics[width=\linewidth]{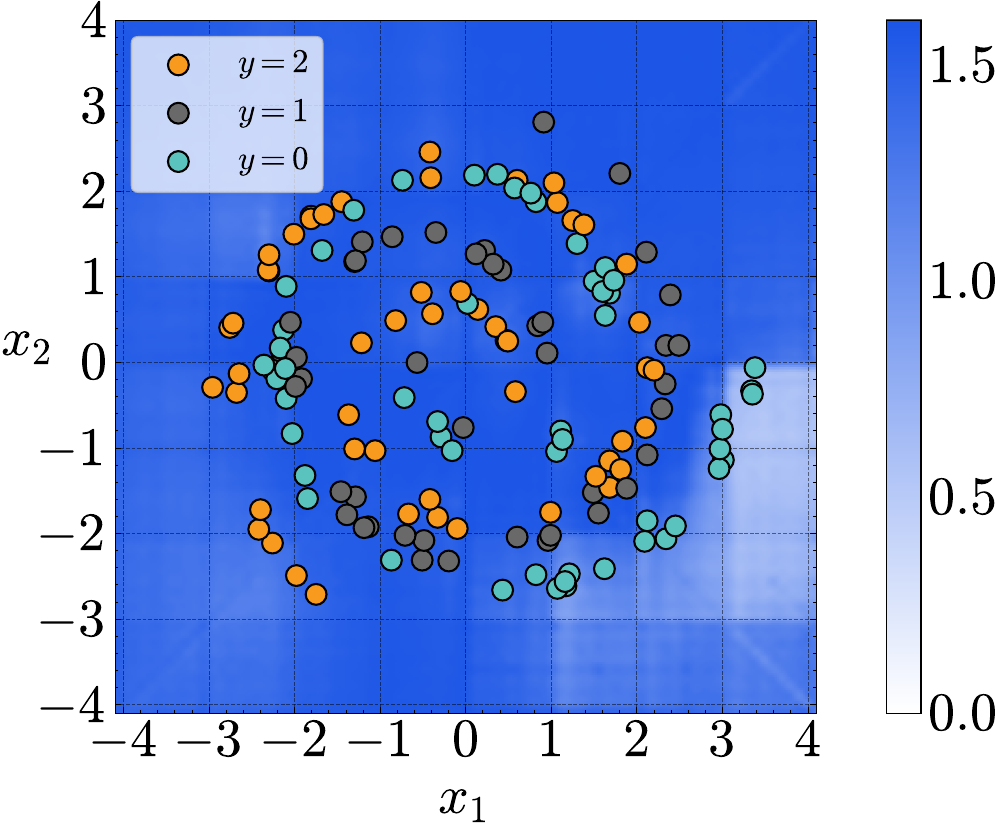}
        \caption{Total Uncertainty}
    \end{subfigure}
    \begin{subfigure}[t]{0.24\textwidth}
        \centering
        \includegraphics[width=\linewidth]{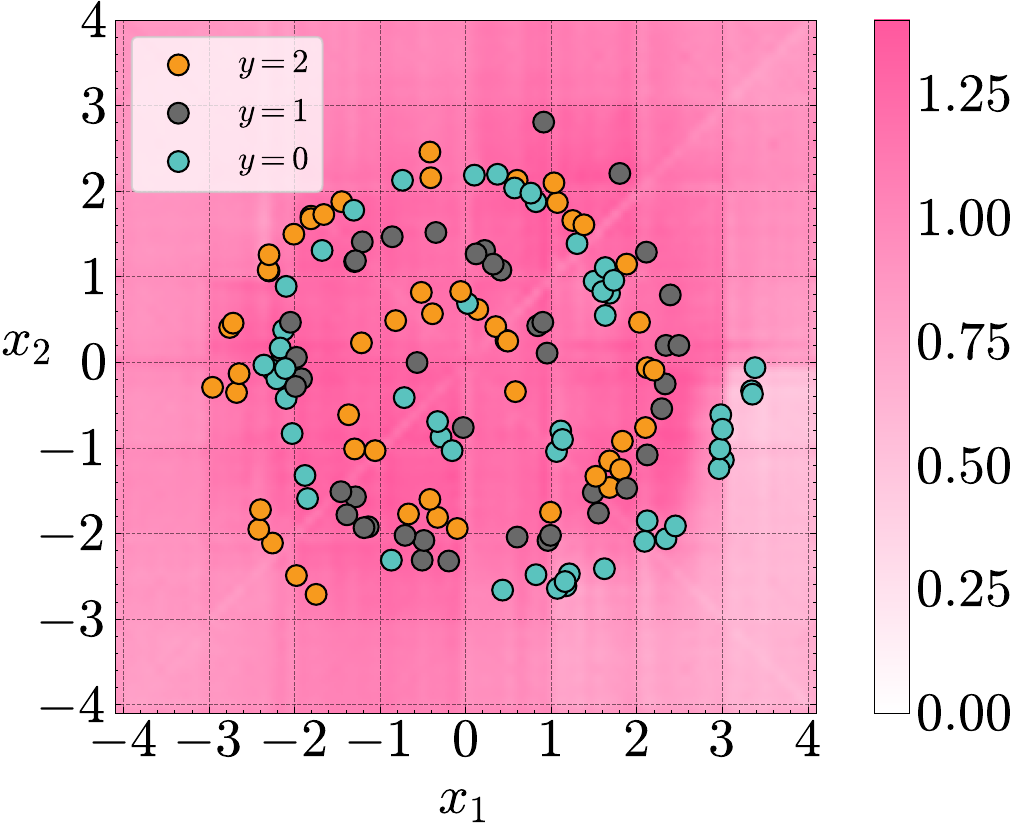}
        \caption{Aleatoric Uncertainty}
    \end{subfigure}
    \begin{subfigure}[t]{0.24\textwidth}
        \centering
        \includegraphics[width=\linewidth]{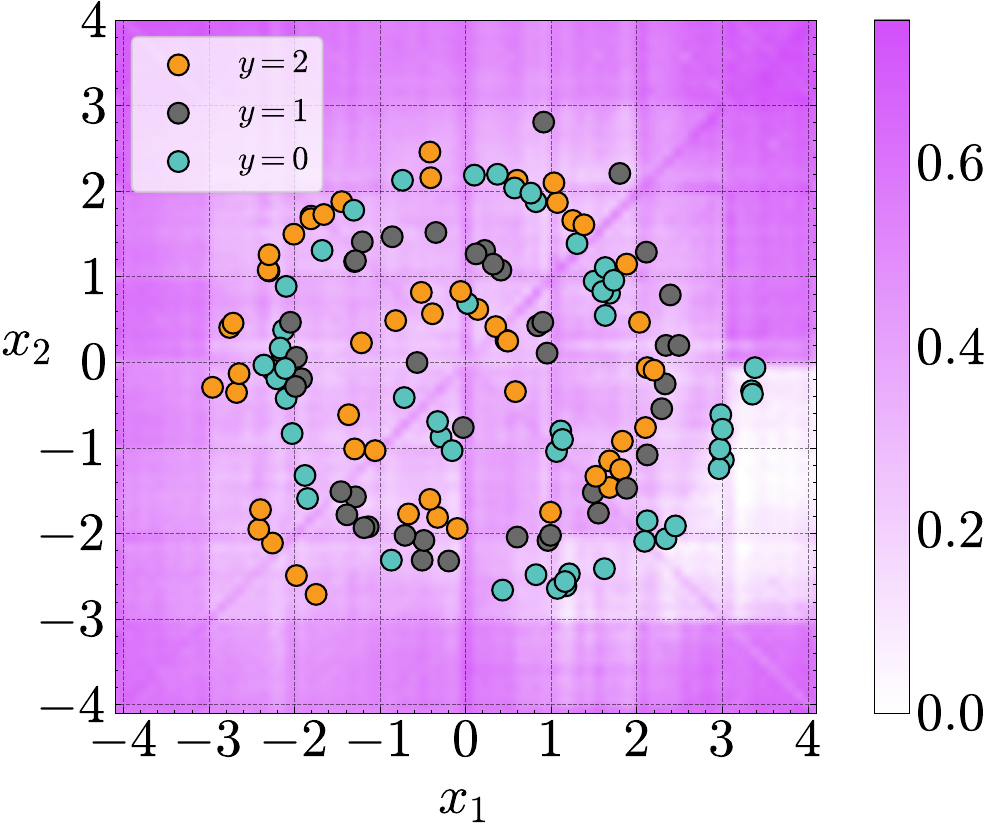}
        \caption{Epistemic Uncertainty}
    \end{subfigure}
    \caption{Uncertainty Decompositions for Spirals Classification Task (\texttt{Llama-3.1-8B})}
    \label{fig:sprials_task_llama8b}
    \vspace{-4mm}
\end{figure}

\subsection{Bandits}\label{appx:bandit_experiments}

In a bandit problem, we have multiple trials (or equivalently rounds), where the agent must choose an action (or equivalently an arm) which gives a reward. The agent has access to the actions made and rewards obtained for the previous trials. We denote run or seed to refer to a particular chain of trials. For all the bandit experiments, we run the algorithm for $T=200$ trials.

\textbf{LLM-UCB Algorithm}. 
In a UCB algorithm, we have: \[a_t = \mathrm{argmax}_{a\in \mathcal{A}} \{Q_t(a)+ \alpha U_t(a)\},\] where $Q_t(a)$ is the expected reward from action (i.e. arm) at $t$, $U_t(a)$ is the uncertainty in the reward from action $a$ at $t$ and $\alpha$ is the exploration rate \cite{lattimore2020bandit}. In the LLM-UCB algorithm that we use to compare the epistemic and total variance decomposition in Section \ref{sec:4_experiments}, we set $Q_t(a) =  p(r|a,\mathcal{D_t})$, where $\mathcal{D}_t = \{(a_i,r_i)\}_{i=1}^{t-1}$ is the prior action, reward pairs already observed in a run. In the epistemic variance setting $U_t(a) = \mathrm{Var}[r|a,\mathcal{D}_t]] - \min_Z \mathbb{E}_{U}[\mathrm{Var}[r|a,Z,\mathcal{D}_t]]$ and in the total variance setting $U_t(a) = \mathrm{Var}[r|a,\mathcal{D}_t]]$. For each $\alpha$ and $p$, we run 10 seeds.

\textbf{Non-LLM Benchmark}. 
We use the standard UCB1 algorithm and the Greedy algorithm \cite{lattimore2020bandit} as a non-LLM benchmark to ensure that the LLM-UCB algorithm has comparable performance to standard bandit algorithms. An exploration rate of $\alpha=0.75$ is used for the UCB1 algorithm. We run 5000 seeds for both UCB1 and Greedy for each $\alpha$ and $p$.

\textbf{Instruction Prompting Benchmark}. 
In \cite{krishnamurthy2024largelanguagemodelsexplore}, an instruction-tuned LLM is prompted to attempt the Buttons bandit task and there is a thorough investigation of the impact of the prompt configuration on the LLM's performance. The authors conclude that the most successful prompt configuration is: $BSS\tilde{C}0$, which consists of: a suggestive framing that the LLM is solving a bandit task; a summarised history of prior actions (including average rewards per action and counts per action); reinforced chain-of-thought prompting; and a temperature parameter of 0. For fair comparison of model performance, we use Qwen2.5-14B-Instruct, Qwen2.5-7B-Instruct, and Llama-3.1-8B-Instruct \cite{qwen2025qwen25technicalreport, touvron2023llamaopenefficientfoundation} to benchmark the performance of the LLM-UCB algorithm for the base models Qwen2.5-14B, Qwen2.5-7B, and Llama-3.1-8B respectively. See Appendix \ref{appx:prompts_bandit} for an example prompt. For each $\alpha$ and $p$, we run $10$ seeds.

\textbf{Role of $p$ and aleatoric variance}. The means of the optimal and suboptimal arm(s) in the Buttons setting are $p_a^*=p+\frac{\delta}{2}$ and $p_a=p-\frac{\delta}{2}$ respectively. Now, the variance for a Bernoulli random variable of mean $q$ is $q(1-q)$. This is a quadratic with a maximum at $q=\frac{1}{2}$. Therefore, if $p>\frac{1}{2}$, \[|p_a - \frac{1}{2}| = |(p - \frac{1}{2}) - \frac{\Delta}{2}|<|(p-\frac{1}{2})| + |\frac{\Delta}{2}| = p-\frac{1}{2}+\frac{\Delta}{2}=|p_a^*- \frac{\Delta}{2}|.\] Therefore, the true variance of the suboptimal arm is higher than the true variance of the optimal arm.

\textbf{Choice of $\alpha$}. In our experiments, we choose $\alpha=2,5$. In UCB1 smaller choices of $\alpha$ are typically chosen \cite{lattimore2020bandit}, however this is primarily due to the slow decay of $U_t(a)$ in the UCB1 algorithm. The decrease in epistemic uncertainty with the number of trials is significantly faster, and therefore, we use higher $\alpha$. Since the total uncertainty is the sum of the epistemic uncertainty and the aleatoric uncertainty, the difference in the uncertainties is $\alpha$ multiplied by aleatoric uncertainty.

\textbf{Metrics}. We use multiple metrics to assess the performance of the bandit algorithms. Suffix-fail frequency and $K\cdot\mathrm{MinFrac}$ are metrics introduced in \cite{krishnamurthy2024largelanguagemodelsexplore} to assess the performance of bandit runs.
\begin{itemize}
    \item Mean regret: For a run of $T$ trials, the mean regret is defined as $\frac{1}{T}\sum_{i=1}^n\mathbb{E}[r(a_t)] -\mu^*$, where $\mu^*$ is the optimal reward and $\mathbb{E}[r(a_t)]$ is the mean reward for arm $a_t$. We report the mean and standard deviation across the different seeds.
    \item Mean worst-case regret: We take the mean and standard deviation over the 30\% of seeds with the highest mean regret. For algorithms where there is a large discrepancy between the mean regret and worst case mean regret, this indicates that the variability in the performance of the bandit algorithm is high.
    \item Median reward: For each seed run, we compute the mean reward $\frac{1}{T}\sum_{i=1}^T r_t$. We then report the median mean reward across all the seeds.
    \item Suffix-fail frequency: For a given run, there is a $t$-suffix failure, if the optimal arm is not chosen in the trials $[t,T]$. The suffix fail frequency $\mathrm{SuffFailFreq}(t)$ is the proportion of $t$-suffix failures across all the seeds. This metric measures a particular failure mode of bandit-algorithms due to lack of exploration, where as a result, the optimal arm is not chosen.
    \item $K\cdot \mathrm{MinFrac}$: For a given run $j$, let $S^{(j)}_a$ be the action counts. Given $T$ runs, $J$ seeds, and $K$ arms, $K\cdot \mathrm{MinFrac} = \frac{K}{TJ}\sum_{j=1}\min_aS_a^{(j)}$. This metric measures \emph{uniform-like} failures of bandit algorithms, where due to excessive exploration, the algorithm behaves closely to one that uniformly chooses an action.
\end{itemize}

\textbf{Results}. In Tables \ref{tbl:buttons-bandit_qwen7b} and \ref{tbl:buttons-bandit_llama8b}, we provide the results for the Qwen2.5-7B and Llama-3.1-8B models. We also plot the average cumulative regret across different seeds for $p=0.5,0.6,0.7$ and $\alpha=2,5$ in Figures \ref{fig:bandits_seed_start}-\ref{fig:bandits_seed_end}. Each line in these figures corresponds to the cumulative regret for a particular seed. Here, we seed that in general, the algorithm that uses the epistemic variance estimate generally has more consistent performance than the algorithm that uses total variance.

\begin{table*}[htbp]
    \centering
    \caption{Buttons Bandit Problem. TV is Total Variance, EV is Epistemic Variance. (\texttt{Qwen2.5-7B})}
    \vspace{-2mm}
    \label{tbl:buttons-bandit_qwen7b}
    \begin{normalsize}
    \begin{threeparttable}
    \begin{sc}
    \resizebox{\textwidth}{!}{
    \begin{tabular}{ccccccc}
    \toprule
     & {Method} & {Mean Worst-Case Regret $\downarrow$} & {Mean Regret $\downarrow$} & {Median Reward $\uparrow$} & {$\mathrm{SuffFailFreq}(T/2)$ $\downarrow$} & {$K \cdot  \mathrm{MinFrac}$ $\downarrow$}  \\
    \hline
    \multirow{7}{*}{\rotatebox[origin=c]{90}{$p=0.5$}} & UCB1 & 0.128{\tiny$\pm$.019} & 0.094{\tiny$\pm$.027} & 0.510 & 0.0 & 0.29 \\ 
    & Greedy  & 0.199{\tiny$\pm$.000} & 0.101{\tiny$\pm$.092} & 0.525 & 0.460 & 0.03 \\ 
    & Instruct Baseline  & 0.161{\tiny$\pm$.020} & 0.107{\tiny$\pm$.043} & 0.495 & 0.0 & 0.26 \\
    \cdashline{2-7}
    \addlinespace[0.1cm]
     & TV ($\alpha=2$) & 0.175{\tiny$\pm$.027} &0.068{\tiny$\pm$.074} & \textbf{ 0.565} & 0.1 & \textbf{0.03} \\
     & EV ($\alpha=2$)  & \textbf{0.144{\tiny$\pm$.042}} & \textbf{0.091{\tiny$\pm$.044}} & \textbf{0.535} & \textbf{0.0} & \textbf{0.24} \\
    \cdashline{2-7}
    \addlinespace[0.1cm]
     & TV ($\alpha=5$) & 0.196{\tiny$\pm$.003} &  0.075{\tiny$\pm$.081} & 0.545 & 0.2 & \textbf{0.04} \\
     & EV ($\alpha=5$)  & \textbf{0.160{\tiny$\pm$.010}} &\textbf{0.132{\tiny$\pm$.020}} & \textbf{0.463} & \textbf{0.0} & 0.57 \\
    \midrule
     \multirow{7}{*}{\rotatebox[origin=c]{90}{$p=0.6$}} & UCB1 & 0.127{\tiny$\pm$.018} & 0.094{\tiny$\pm$.027} & 0.610 & 0.0 & 0.28 \\ 
    & Greedy & 0.199{\tiny$\pm$.000} & 0.092{\tiny$\pm$.090} & 0.645 & 0.396 & 0.03 \\ 
    & Instruct Baseline & 0.111{\tiny$\pm$.007} & 0.076{\tiny$\pm$.043} & 0.620 & 0.0 & 0.18 \\
    \cdashline{2-7}
    \addlinespace[0.1cm]
    & TV ($\alpha=2$) & 0.199{\tiny$\pm$.000} & 0.090{\tiny$\pm$.089} & \textbf{ 0.627} & 0.3 & \textbf{0.03} \\
    & EV ($\alpha=2$) & \textbf{0.088{\tiny$\pm$.002}} & \textbf{0.061{\tiny$\pm$.026}} & \textbf{0.627} & \textbf{0.0} & \textbf{0.12} \\
    \cdashline{2-7}
    \addlinespace[0.1cm]
     & TV ($\alpha=5$) & 0.198{\tiny$\pm$.001} & 0.167{\tiny$\pm$.032} & 0.570 & 0.5 & \textbf{0.07} \\
     & EV ($\alpha=5$) & \textbf{0.156{\tiny$\pm$.016}} & \textbf{0.117{\tiny$\pm$.030}} & \textbf{0.583} & \textbf{0.0} & 0.43 \\
    \midrule
     \multirow{7}{*}{\rotatebox[origin=c]{90}{$p=0.7$}} & UCB1 & 0.122{\tiny$\pm$.017} & 0.094{\tiny$\pm$.027} & 0.710 & 0.0 & 0.27 \\ 
    & Greedy & 0.199{\tiny$\pm$.000} & 0.085{\tiny$\pm$.089} & 0.760 & 0.369 & 0.03 \\ 
    & Instruct Baseline & 0.132{\tiny$\pm$.043} & 0.087{\tiny$\pm$.040} & 0.703 & 0.0 & 0.18 \\
    \cdashline{2-7}
    \addlinespace[0.1cm]
     & TV ($\alpha=2$ ) & 0.198{\tiny$\pm$.001} & 0.088{\tiny$\pm$.091} & \textbf{ 0.728} & 0.4 & \textbf{0.02} \\
     & EV ($\alpha=2$) & \textbf{0.141{\tiny$\pm$.040}} & \textbf{0.070{\tiny$\pm$.056}} & \textbf{0.720} & \textbf{0.0} & \textbf{0.09} \\
    \cdashline{2-7}
    \addlinespace[0.1cm]
    & TV ($\alpha=5$) & 0.195{\tiny$\pm$.004} & 0.149{\tiny$\pm$.073} & 0.608 & 0.8 & \textbf{0.04} \\
    & EV ($\alpha=5$) & \textbf{0.143{\tiny$\pm$.014}} & \textbf{0.116{\tiny$\pm$.026}} & \textbf{0.667} & \textbf{0.0} & 0.38 \\
    \bottomrule
    \end{tabular}}
    \end{sc}
    \end{threeparttable}
    \end{normalsize}
\end{table*}

\begin{table*}[htbp]
    \centering
    \caption{Buttons Bandit Problem. TV is Total Variance, EV is Epistemic Variance. (\texttt{Llama-3.1-8B})}
    \vspace{-2mm}
    \label{tbl:buttons-bandit_llama8b}
    \begin{normalsize}
    \begin{threeparttable}
    \begin{sc}
    \resizebox{\textwidth}{!}{
    \begin{tabular}{ccccccc}
    \toprule
     & {Method} & {Mean Worst-Case Regret $\downarrow$} & {Mean Regret $\downarrow$} & {Median Reward $\uparrow$} & {$\mathrm{SuffFailFreq}(T/2)$ $\downarrow$} & {$K \cdot  \mathrm{MinFrac}$ $\downarrow$}  \\
    \hline
    \multirow{7}{*}{\rotatebox[origin=c]{90}{$p=0.5$}} & UCB1 & 0.128{\tiny$\pm$.019} & 0.094{\tiny$\pm$.027} & 0.510 & 0.0 & 0.29 \\ 
    & Greedy  & 0.199{\tiny$\pm$.000} & 0.101{\tiny$\pm$.092} & 0.525 & 0.460 & 0.03 \\ 
    & Instruct Baseline  & 0.161{\tiny$\pm$.020} & 0.107{\tiny$\pm$.043} & 0.495 & 0.0 & 0.26 \\
    \cdashline{2-7}
    \addlinespace[0.1cm]
     & TV ($\alpha=2$) & 0.160{\tiny$\pm$.055} & 0.071{\tiny$\pm$.071} & \textbf{ 0.557} & 0.2 & \textbf{0.05} \\
     & EV ($\alpha=2$)  & \textbf{0.149{\tiny$\pm$.009}} & \textbf{0.097{\tiny$\pm$.043}} & 0.505 & \textbf{0.0} & 0.21 \\
    \cdashline{2-7}
    \addlinespace[0.1cm]
     & TV ($\alpha=5$) & 0.149{\tiny$\pm$.036} &  0.066{\tiny$\pm$.061} & 0.555 & 0.1 & \textbf{0.05} \\
     & EV ($\alpha=5$)  & \textbf{0.169{\tiny$\pm$.002}} & \textbf{0.153{\tiny$\pm$.019}} & \textbf{0.432} & \textbf{0.0} & 0.73 \\
    \midrule
     \multirow{7}{*}{\rotatebox[origin=c]{90}{$p=0.6$}} & UCB1 & 0.127{\tiny$\pm$.018} & 0.094{\tiny$\pm$.027} & 0.610 & 0.0 & 0.28 \\ 
    & Greedy & 0.199{\tiny$\pm$.000} & 0.092{\tiny$\pm$.090} & 0.645 & 0.396 & 0.03 \\ 
    & Instruct Baseline & 0.111{\tiny$\pm$.007} & 0.076{\tiny$\pm$.043} & 0.620 & 0.0 & 0.18 \\
    \cdashline{2-7}
    \addlinespace[0.1cm]
    & TV ($\alpha=2$) & 0.088{\tiny$\pm$.076} & 0.035{\tiny$\pm$.054} & \textbf{ 0.670} & 0.1 & \textbf{0.04} \\
    & EV ($\alpha=2$) & \textbf{0.140{\tiny$\pm$.045}} & \textbf{0.077{\tiny$\pm$.051}} & 0.635 & \textbf{0.0} & 0.17 \\
    \cdashline{2-7}
    \addlinespace[0.1cm]
     & TV ($\alpha=5$) & 0.198{\tiny$\pm$.001} & 0.138{\tiny$\pm$.078} & 0.568 & 0.6 & \textbf{0.04} \\
     & EV ($\alpha=5$) & \textbf{0.139{\tiny$\pm$.004}} & \textbf{0.113{\tiny$\pm$.022}} & \textbf{0.588} & \textbf{0.0} & 0.50 \\
    \midrule
     \multirow{7}{*}{\rotatebox[origin=c]{90}{$p=0.7$}} & UCB1 & 0.122{\tiny$\pm$.017} & 0.094{\tiny$\pm$.027} & 0.710 & 0.0 & 0.27 \\ 
    & Greedy & 0.199{\tiny$\pm$.000} & 0.085{\tiny$\pm$.089} & 0.760 & 0.369 & 0.03 \\ 
    & Instruct Baseline & 0.132{\tiny$\pm$.043} & 0.087{\tiny$\pm$.040} & 0.703 & 0.0 & 0.18 \\
    \cdashline{2-7}
    \addlinespace[0.1cm]
     & TV ($\alpha=2$ ) & 0.168{\tiny$\pm$.041} & 0.063{\tiny$\pm$.075} & \textbf{ 0.728} & 0.1 & \textbf{0.04} \\
     & EV ($\alpha=2$) & \textbf{0.111{\tiny$\pm$.021}} & \textbf{0.053{\tiny$\pm$.042}} & 0.745 & \textbf{0.0} & 0.08 \\
    \cdashline{2-7}
    \addlinespace[0.1cm]
    & TV ($\alpha=5$) & 0.197{\tiny$\pm$.002} & 0.165{\tiny$\pm$.041} & 0.613 & 0.5 & \textbf{0.04} \\
    & EV ($\alpha=5$) & \textbf{0.127{\tiny$\pm$.021}} & \textbf{0.087{\tiny$\pm$.035}} & \textbf{0.688} & \textbf{0.0} & 0.35 \\
    \bottomrule
    \end{tabular}}
    \end{sc}
    \end{threeparttable}
    \end{normalsize}
\end{table*}

\begin{figure}[H]
    \centering
    \begin{subfigure}[t]{0.49\textwidth}
        \centering
        \includegraphics[width=\textwidth]{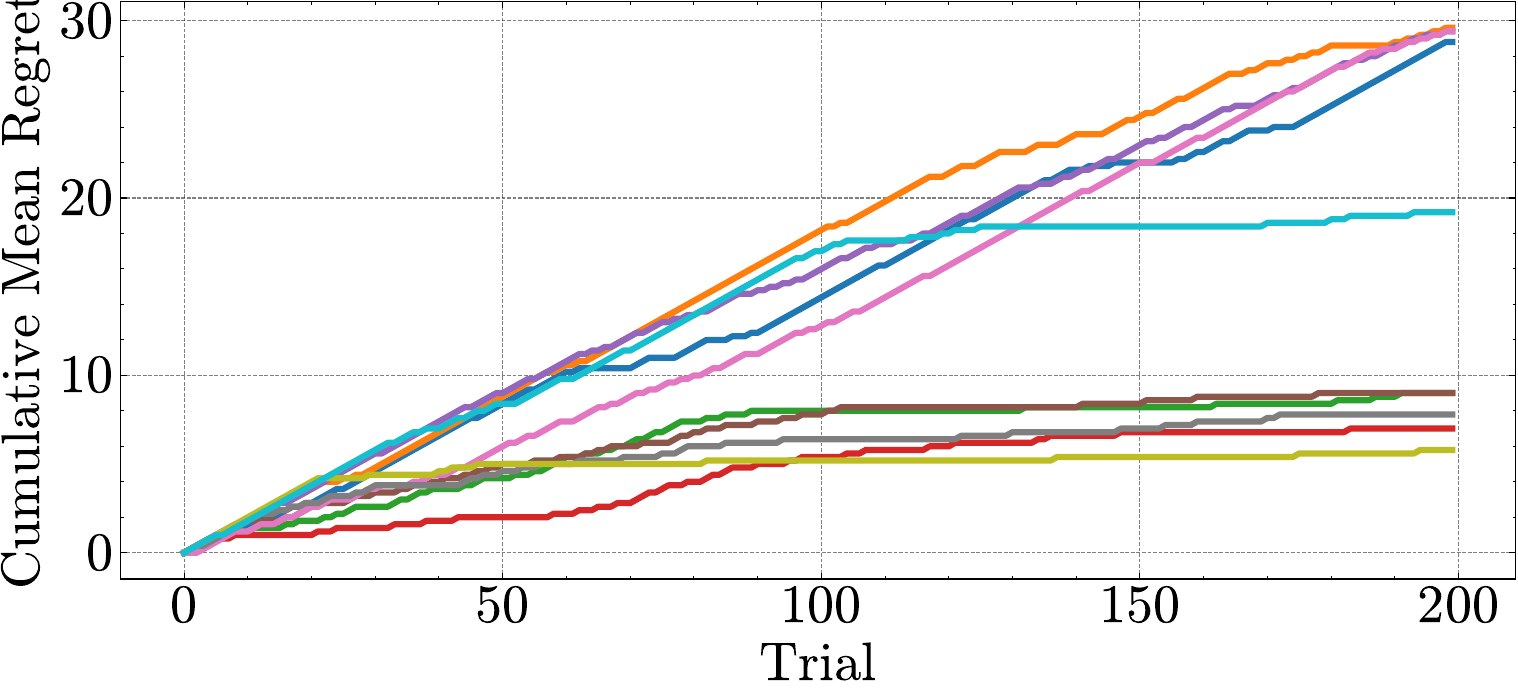}
        \caption{Epistemic Variance}
    \end{subfigure}
    \begin{subfigure}[t]{0.49\textwidth}
        \centering
        \includegraphics[width=\textwidth]{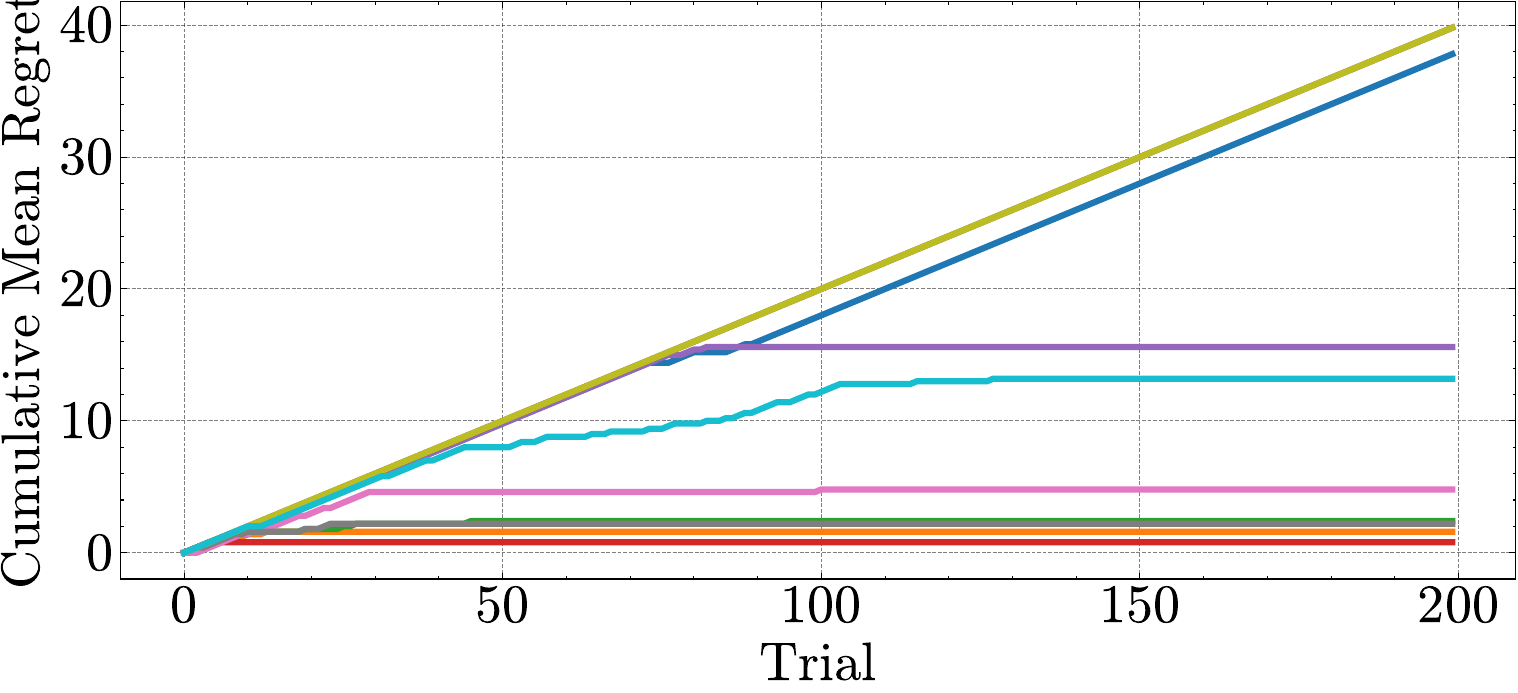}
        \caption{Total Variance}
    \end{subfigure}
    \caption{Cumulative Mean Regret for Bandit Experiments (\texttt{Qwen2.5-14B}, $p=0.5, \alpha=2$).}
    \label{fig:bandits_seed_start}
    \vspace{-4mm}
\end{figure}

\begin{figure}[H]
    \centering
    \begin{subfigure}[t]{0.49\textwidth}
        \centering
        \includegraphics[width=\textwidth]{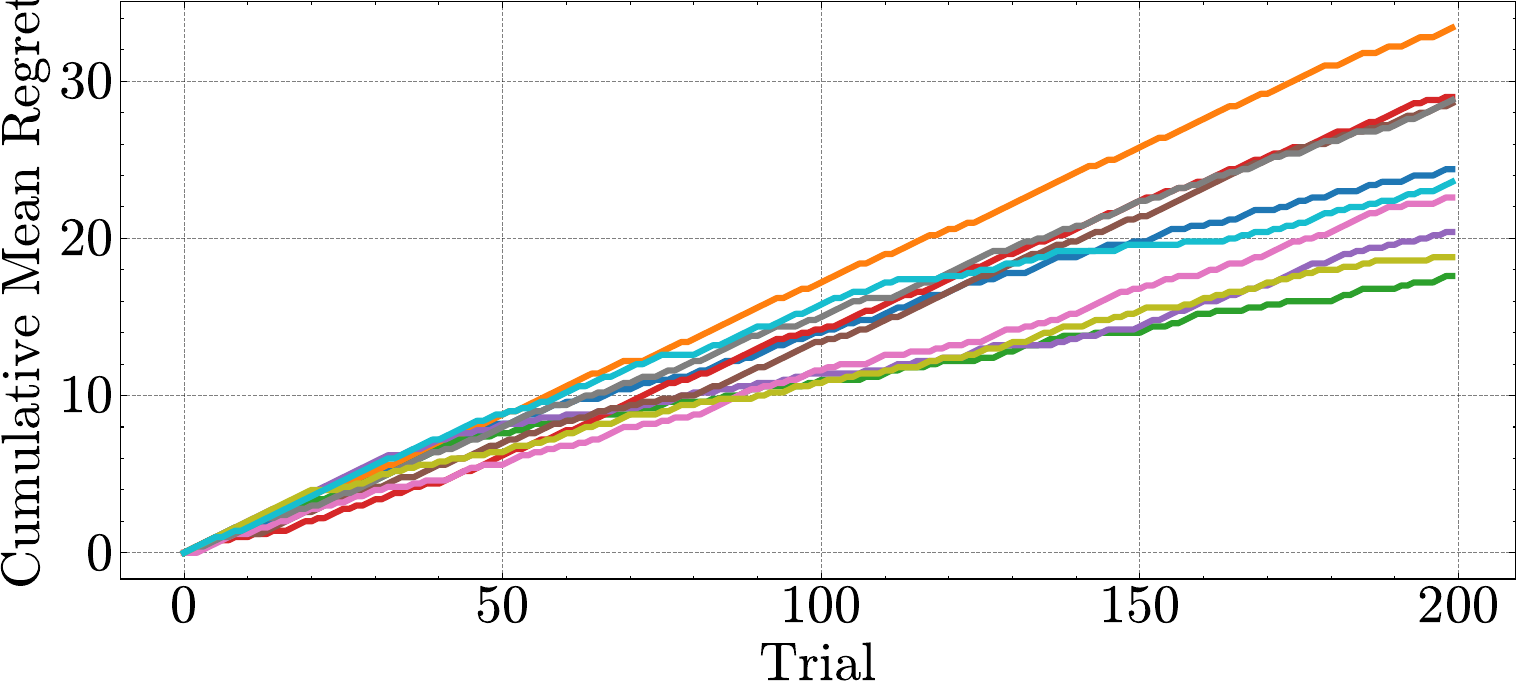}
        \caption{Epistemic Variance}
    \end{subfigure}
    \begin{subfigure}[t]{0.49\textwidth}
        \centering
        \includegraphics[width=\textwidth]{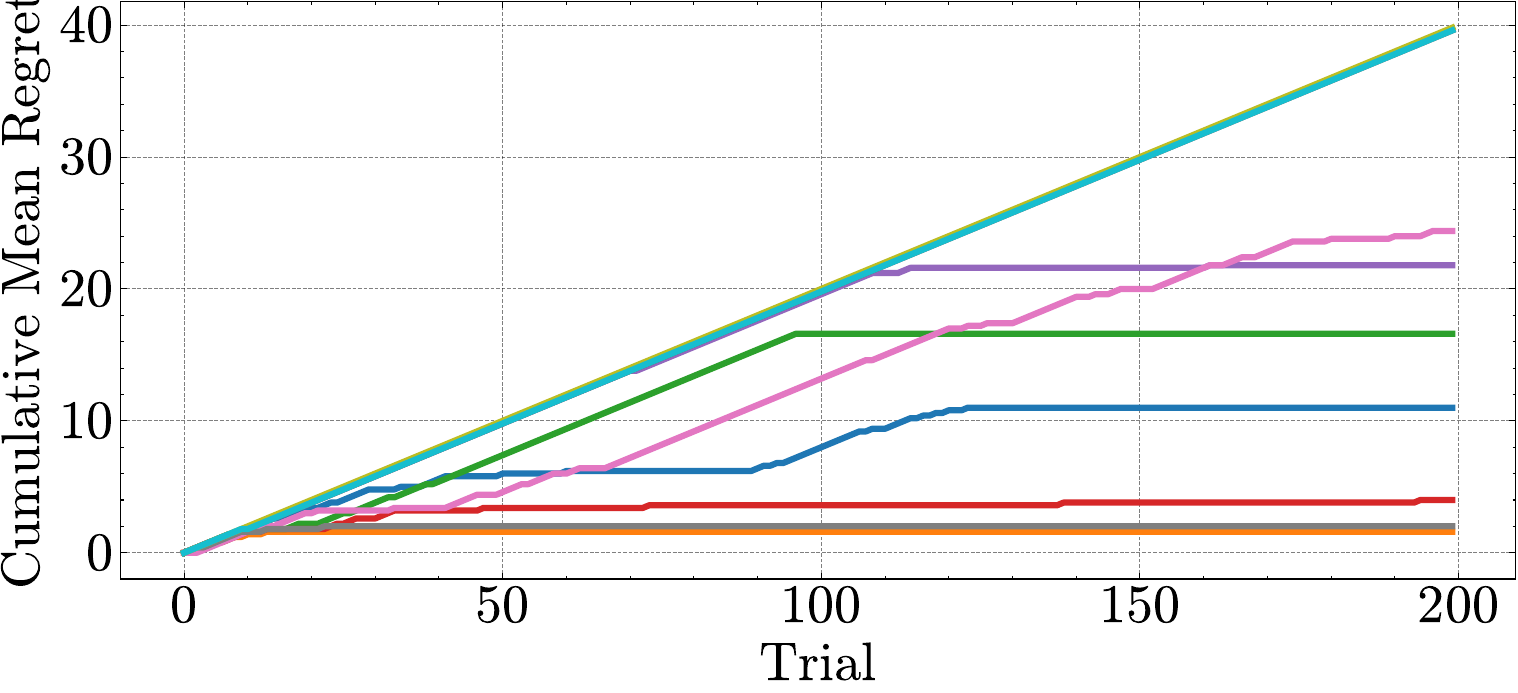}
        \caption{Total Variance}
    \end{subfigure}
    \caption{Cumulative Mean Regret for Bandit Experiments (\texttt{Qwen2.5-14B}, $p=0.5, \alpha=5$).}
    \vspace{-4mm}
\end{figure}

\begin{figure}[H]
    \centering
    \begin{subfigure}[t]{0.49\textwidth}
        \centering
        \includegraphics[width=\textwidth]{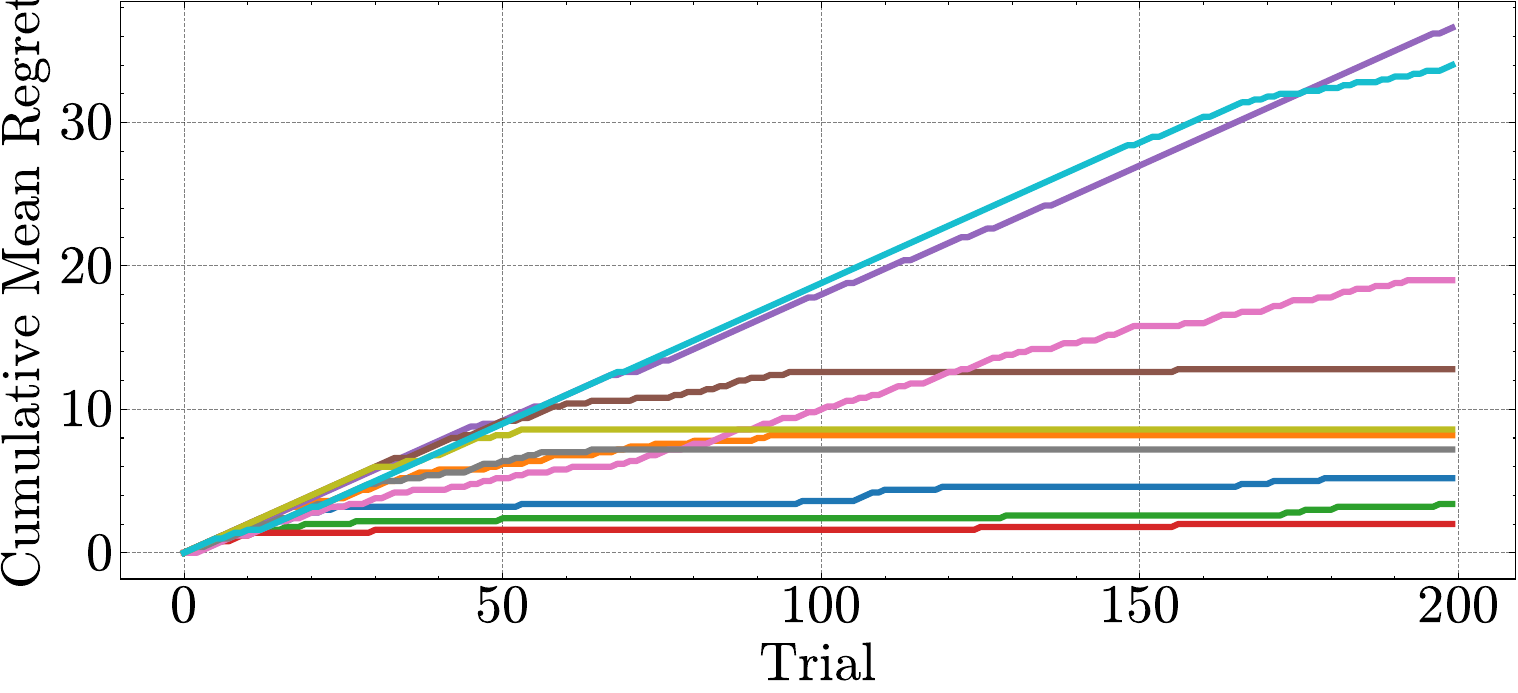}
        \caption{Epistemic Variance}
    \end{subfigure}
    \begin{subfigure}[t]{0.49\textwidth}
        \centering
        \includegraphics[width=\textwidth]{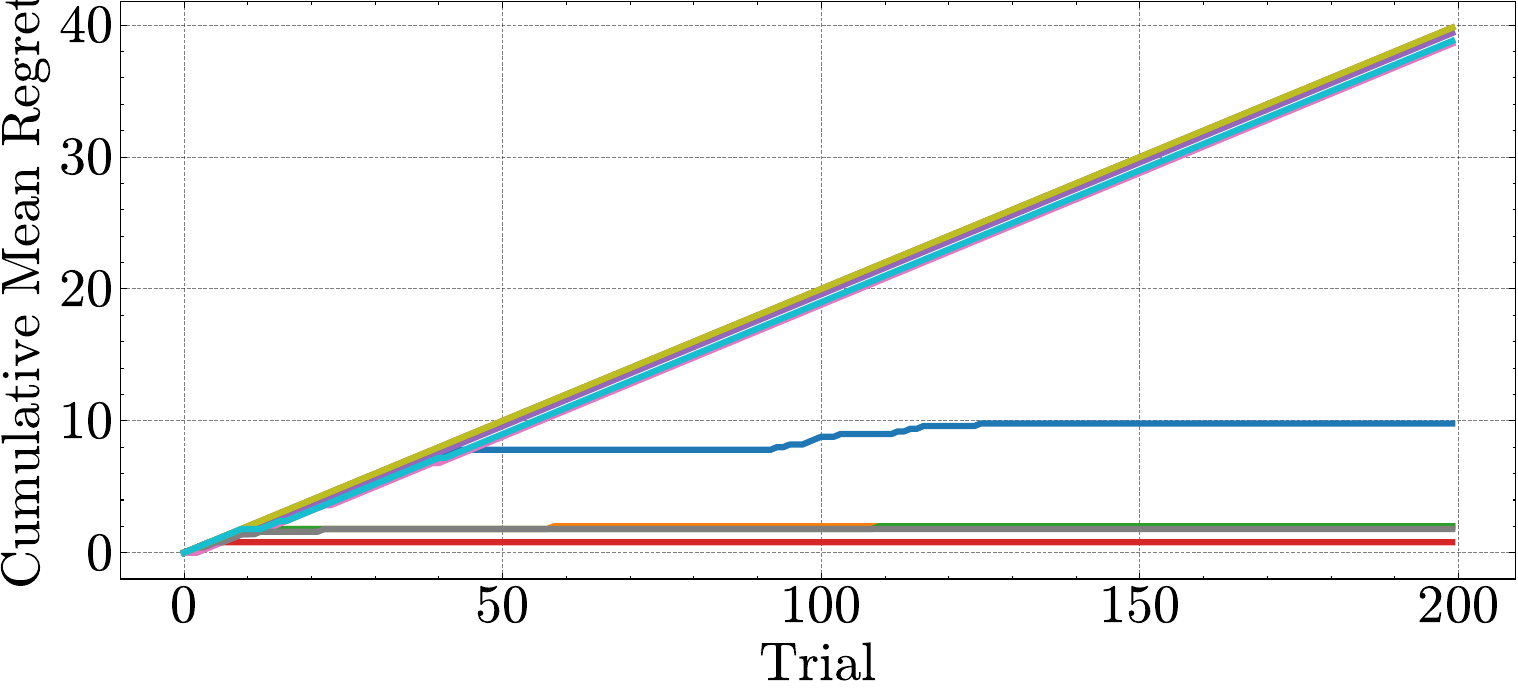}
        \caption{Total Variance}
    \end{subfigure}
    \caption{Cumulative Mean Regret for Bandit Experiments (\texttt{Qwen2.5-14B}, $p=0.6, \alpha=2$).}
    \vspace{-4mm}
\end{figure}

\begin{figure}[H]
    \centering
    \begin{subfigure}[t]{0.49\textwidth}
        \centering
        \includegraphics[width=\textwidth]{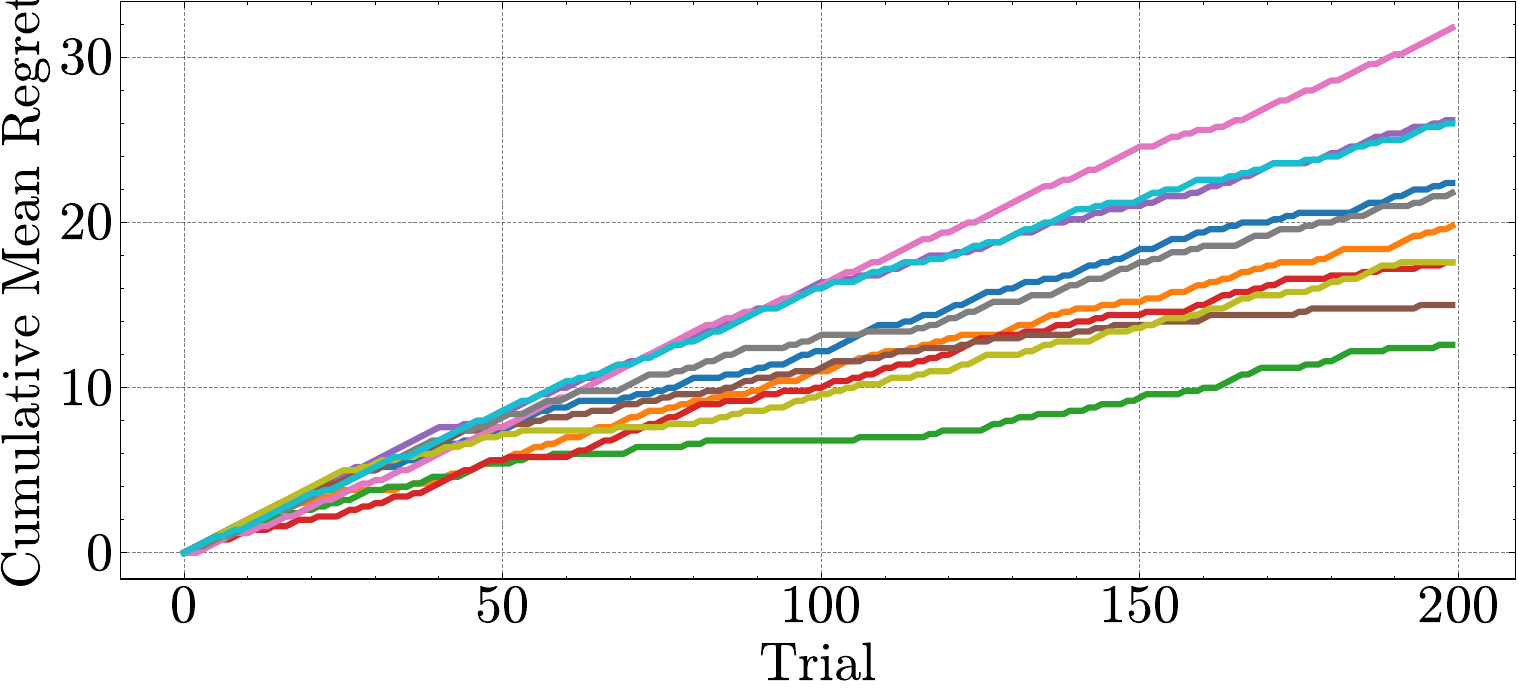}
        \caption{Epistemic Variance}
    \end{subfigure}
    \begin{subfigure}[t]{0.49\textwidth}
        \centering
        \includegraphics[width=\textwidth]{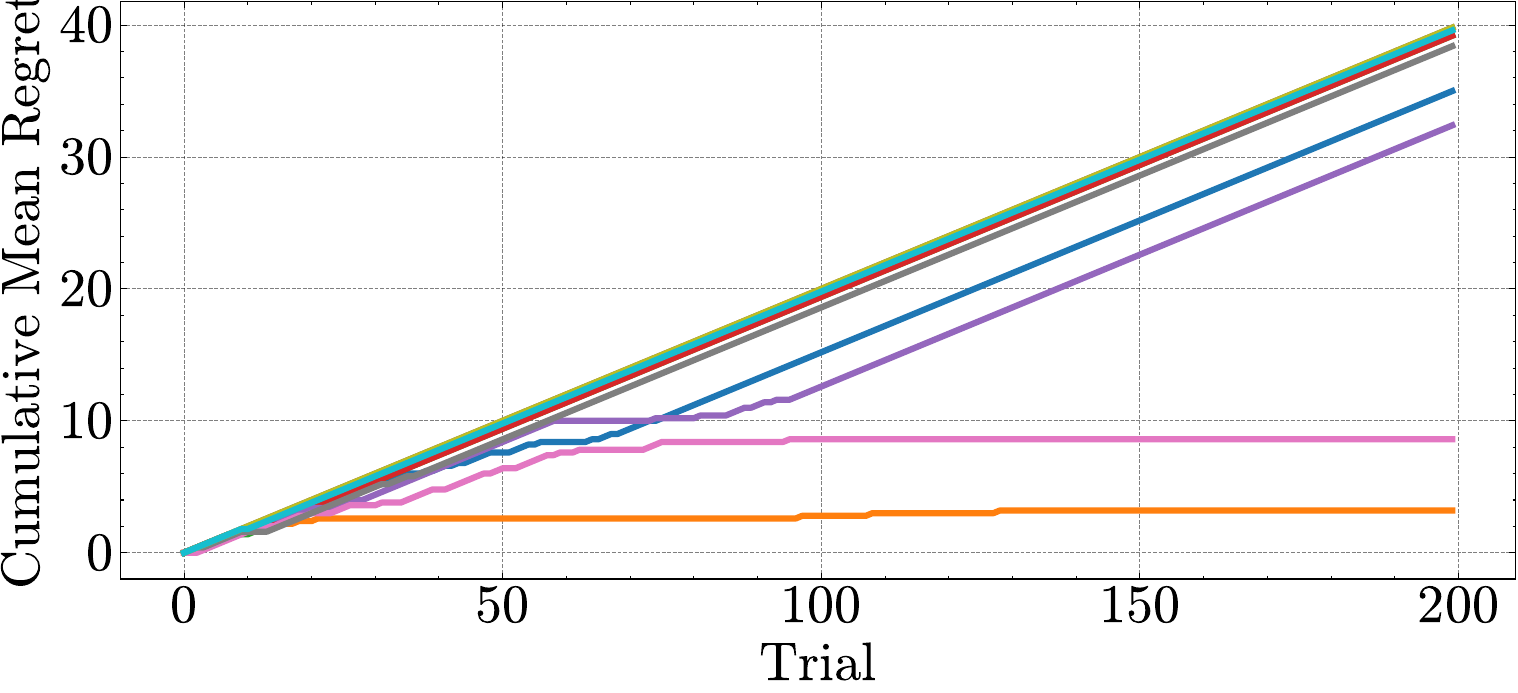}
        \caption{Total Variance}
    \end{subfigure}
    \caption{Cumulative Mean Regret for Bandit Experiments (\texttt{Qwen2.5-14B}, $p=0.6, \alpha=5$).}
    \vspace{-4mm}
\end{figure}

\begin{figure}[H]
    \centering
    \begin{subfigure}[t]{0.49\textwidth}
        \centering
        \includegraphics[width=\textwidth]{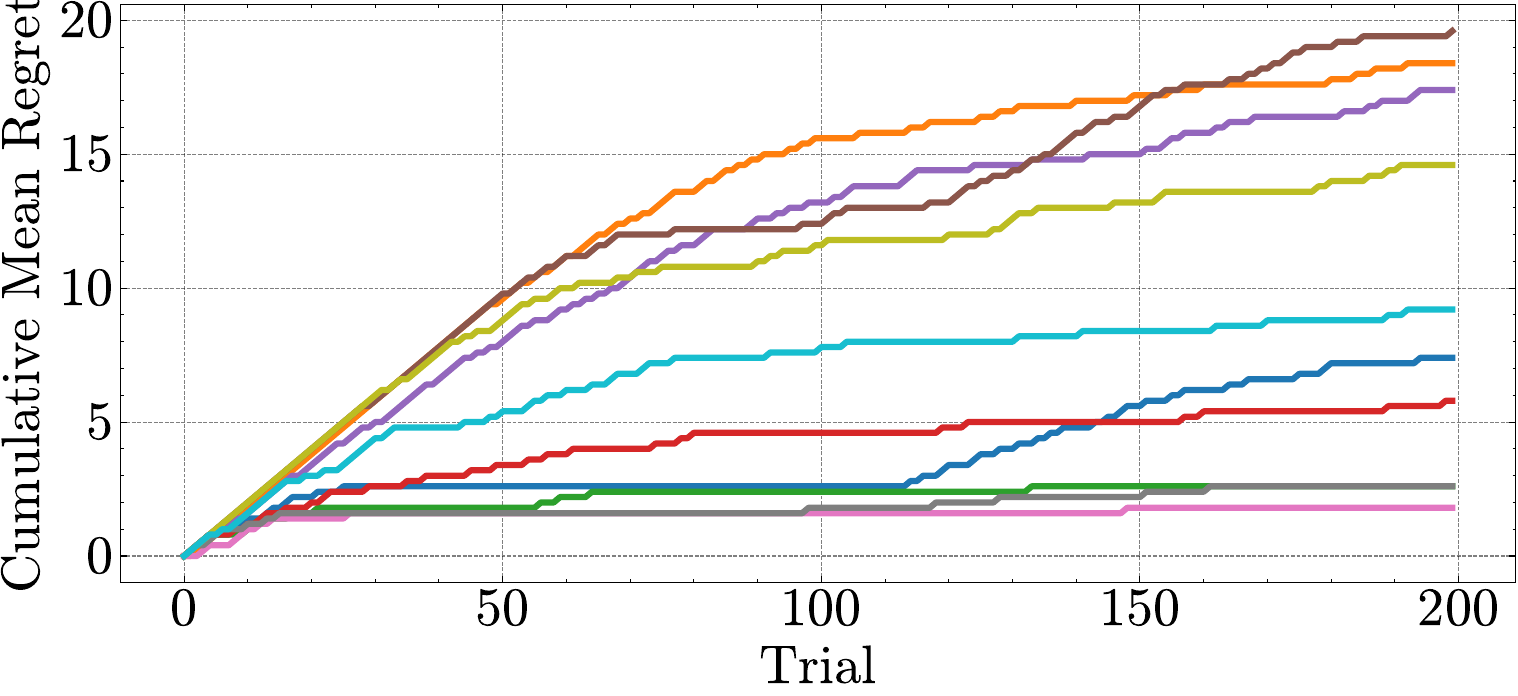}
        \caption{Epistemic Variance}
    \end{subfigure}
    \begin{subfigure}[t]{0.49\textwidth}
        \centering
        \includegraphics[width=\textwidth]{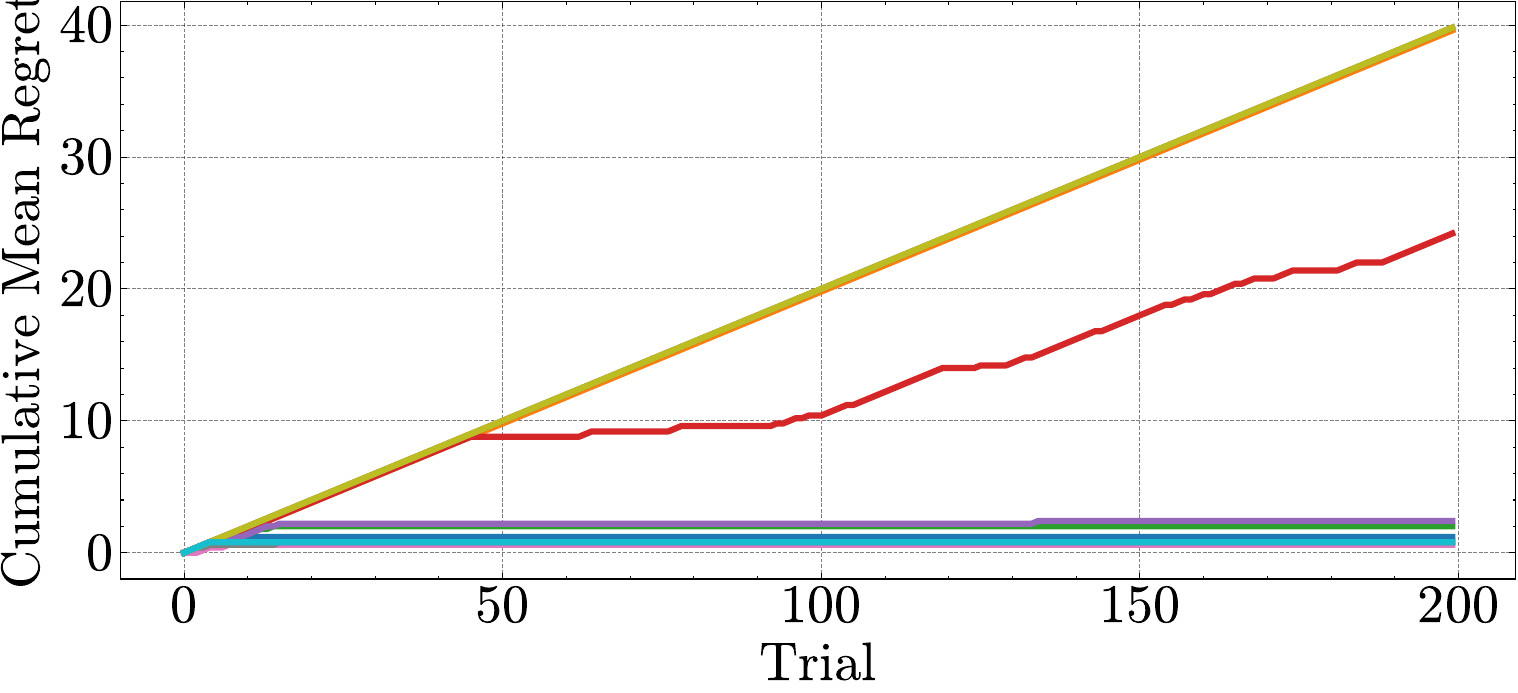}
        \caption{Total Variance}
    \end{subfigure}
    \caption{Cumulative Mean Regret for Bandit Experiments (\texttt{Qwen2.5-14B}, $p=0.7, \alpha=2$).}
    \vspace{-4mm}
\end{figure}

\begin{figure}[H]
    \centering
    \begin{subfigure}[t]{0.49\textwidth}
        \centering
        \includegraphics[width=\textwidth]{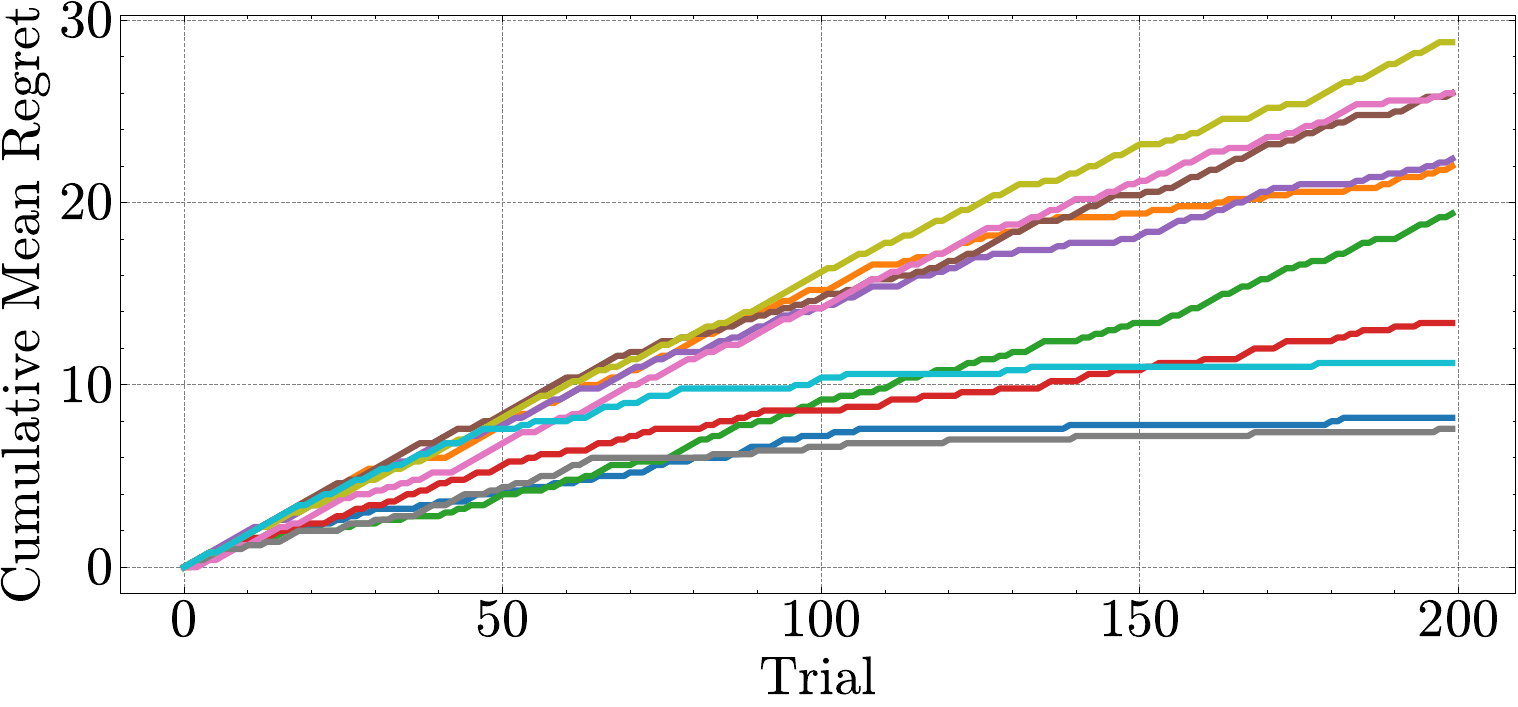}
        \caption{Epistemic Variance}
    \end{subfigure}
    \begin{subfigure}[t]{0.49\textwidth}
        \centering
        \includegraphics[width=\textwidth]{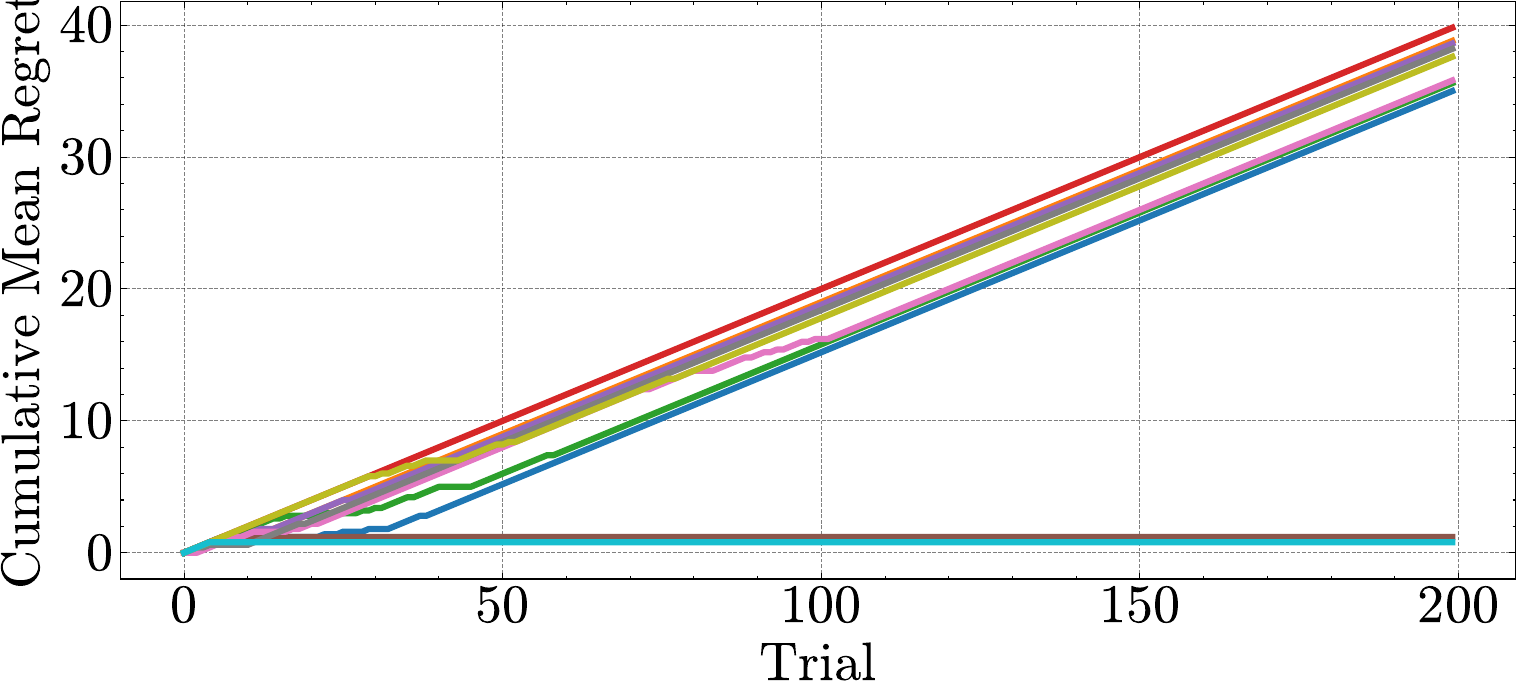}
        \caption{Total Variance}
    \end{subfigure}
    \caption{Cumulative Mean Regret for Bandit Experiments (\texttt{Qwen2.5-14B}, $p=0.7, \alpha=5$).}
    \vspace{-4mm}
\end{figure}

\begin{figure}[H]
    \centering
    \begin{subfigure}[t]{0.49\textwidth}
        \centering
        \includegraphics[width=\textwidth]{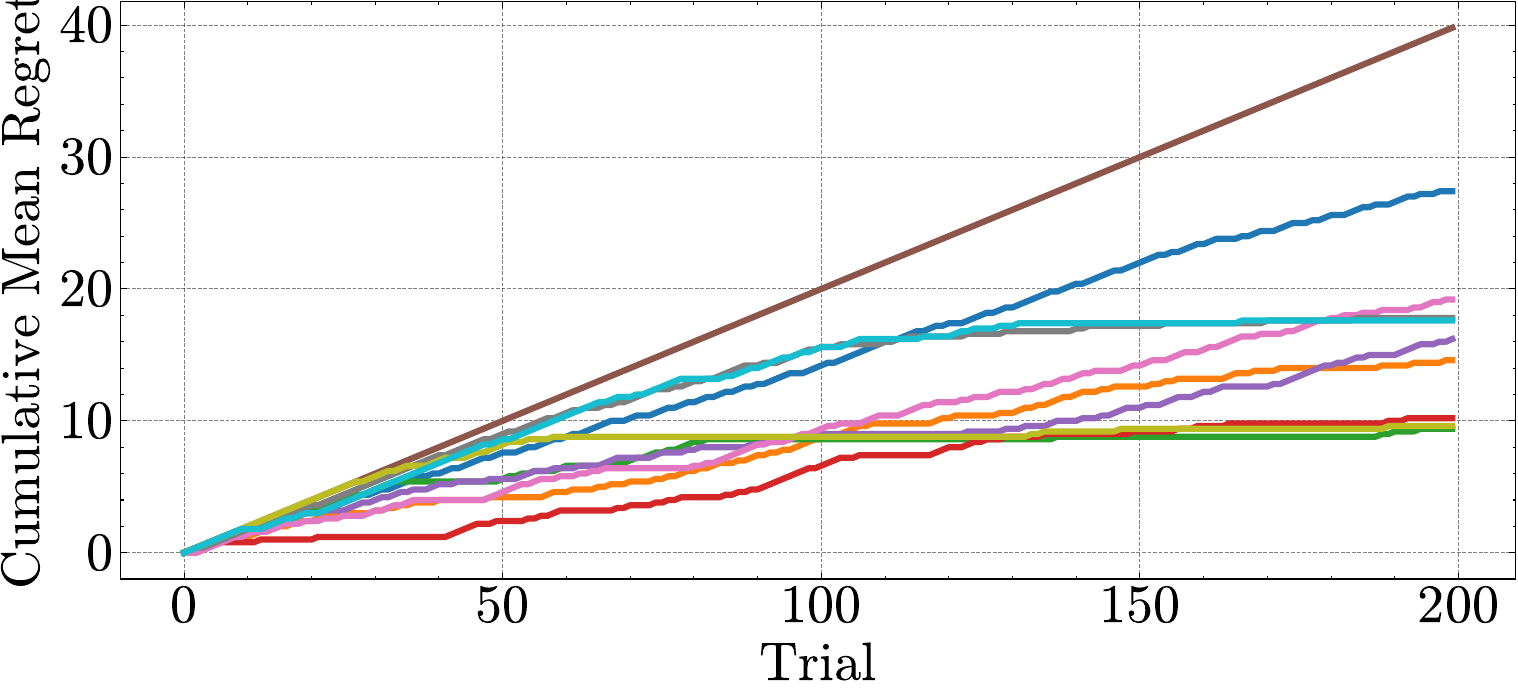}
        \caption{Epistemic Variance}
    \end{subfigure}
    \begin{subfigure}[t]{0.49\textwidth}
        \centering
        \includegraphics[width=\textwidth]{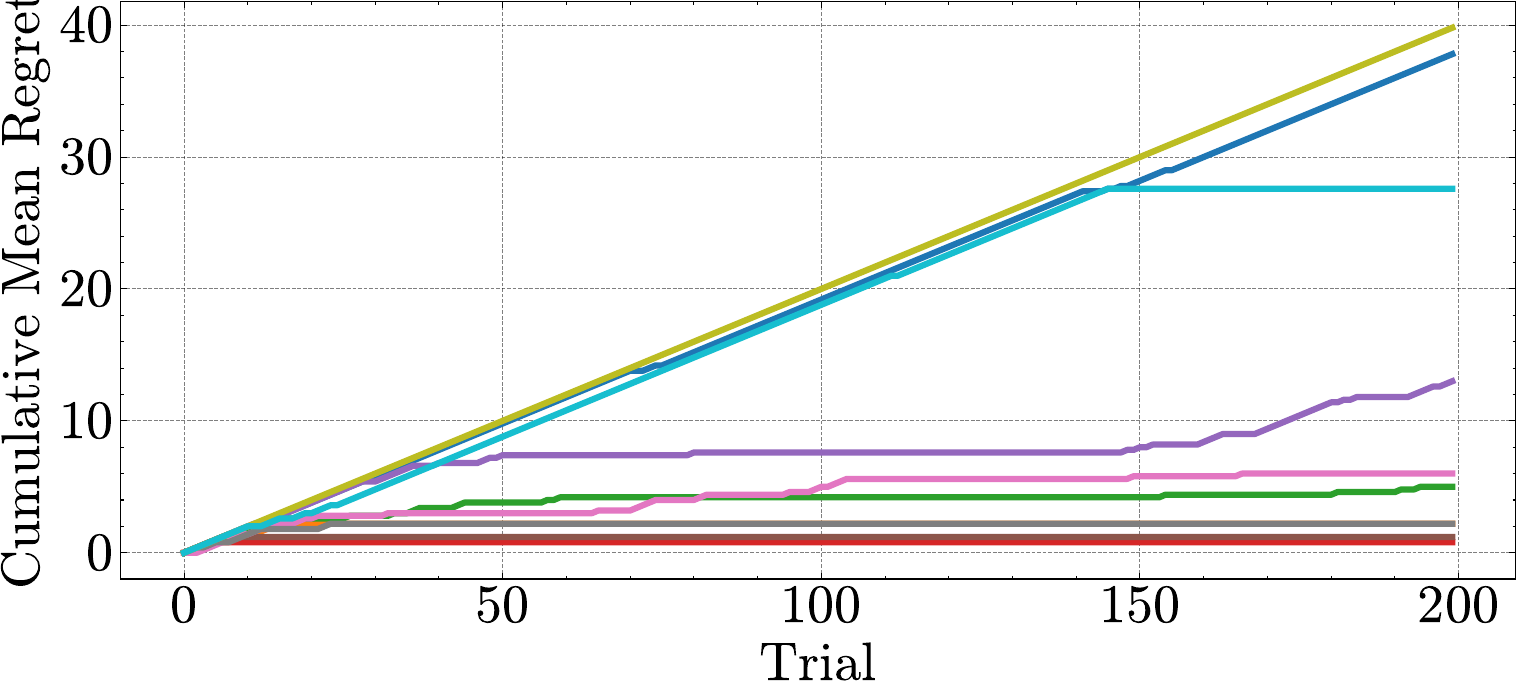}
        \caption{Total Variance}
    \end{subfigure}
    \caption{Cumulative Mean Regret for Bandit Experiments (\texttt{Qwen2.5-7B}, $p=0.5, \alpha=2$).}
    \vspace{-4mm}
\end{figure}

\begin{figure}[H]
    \centering
    \begin{subfigure}[t]{0.49\textwidth}
        \centering
        \includegraphics[width=\textwidth]{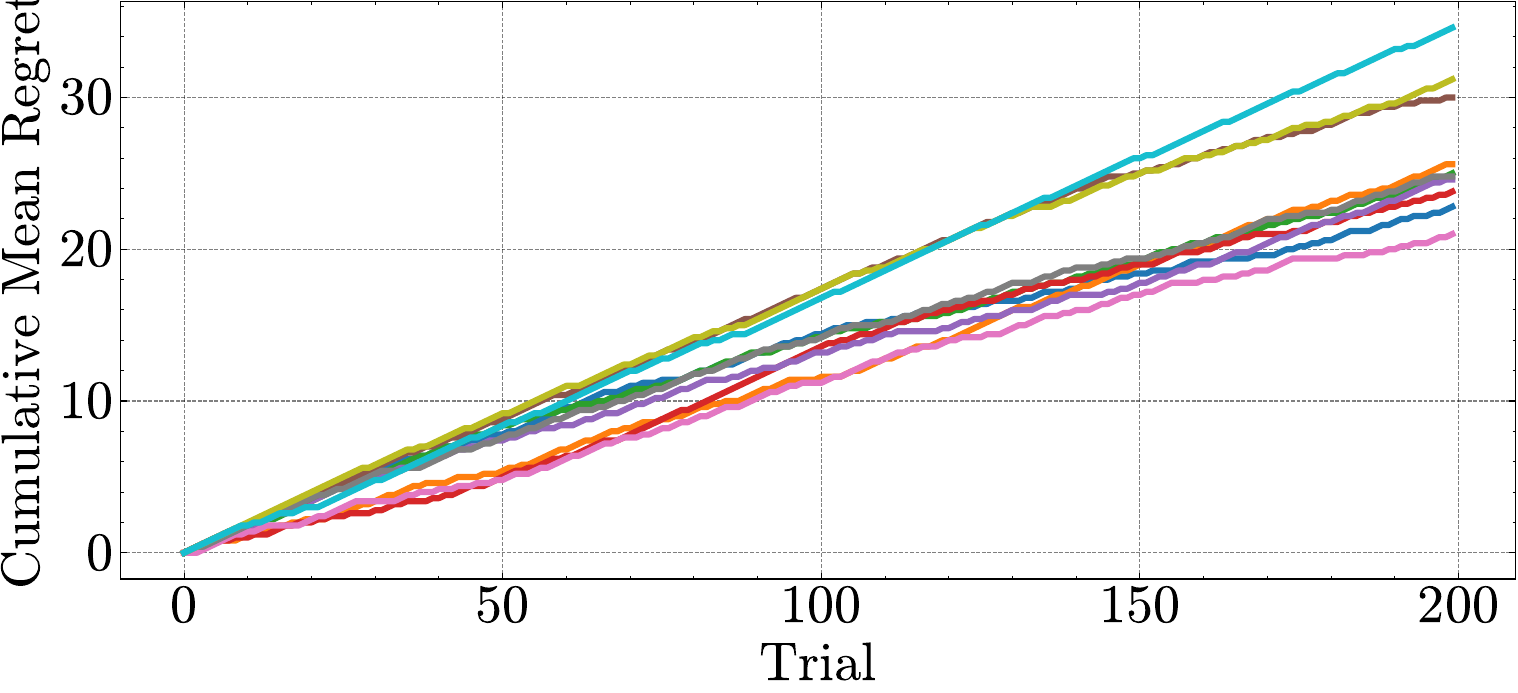}
        \caption{Epistemic Variance}
    \end{subfigure}
    \begin{subfigure}[t]{0.49\textwidth}
        \centering
        \includegraphics[width=\textwidth]{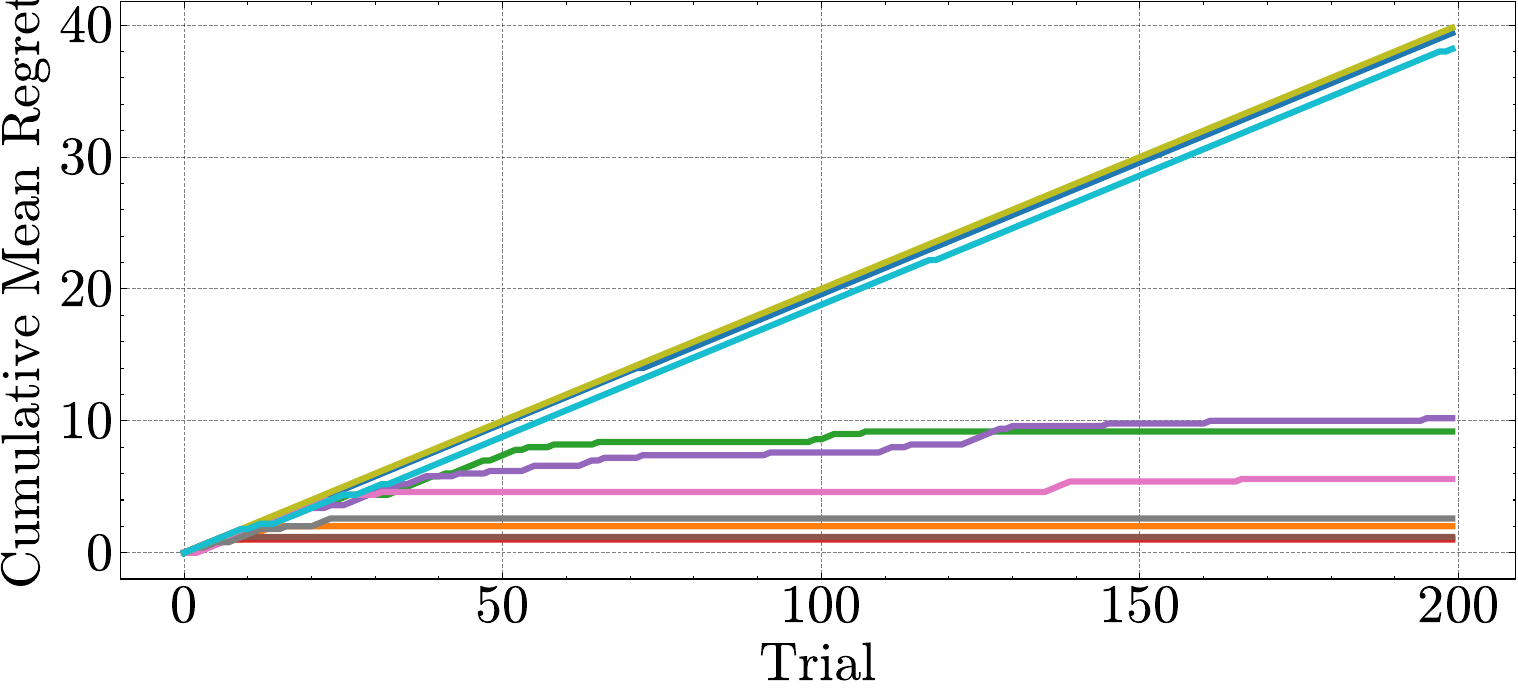}
        \caption{Total Variance}
    \end{subfigure}
    \caption{Cumulative Mean Regret for Bandit Experiments (\texttt{Qwen2.5-7B}, $p=0.5, \alpha=5$).}
    \vspace{-4mm}
\end{figure}

\begin{figure}[H]
    \centering
    \begin{subfigure}[t]{0.49\textwidth}
        \centering
        \includegraphics[width=\textwidth]{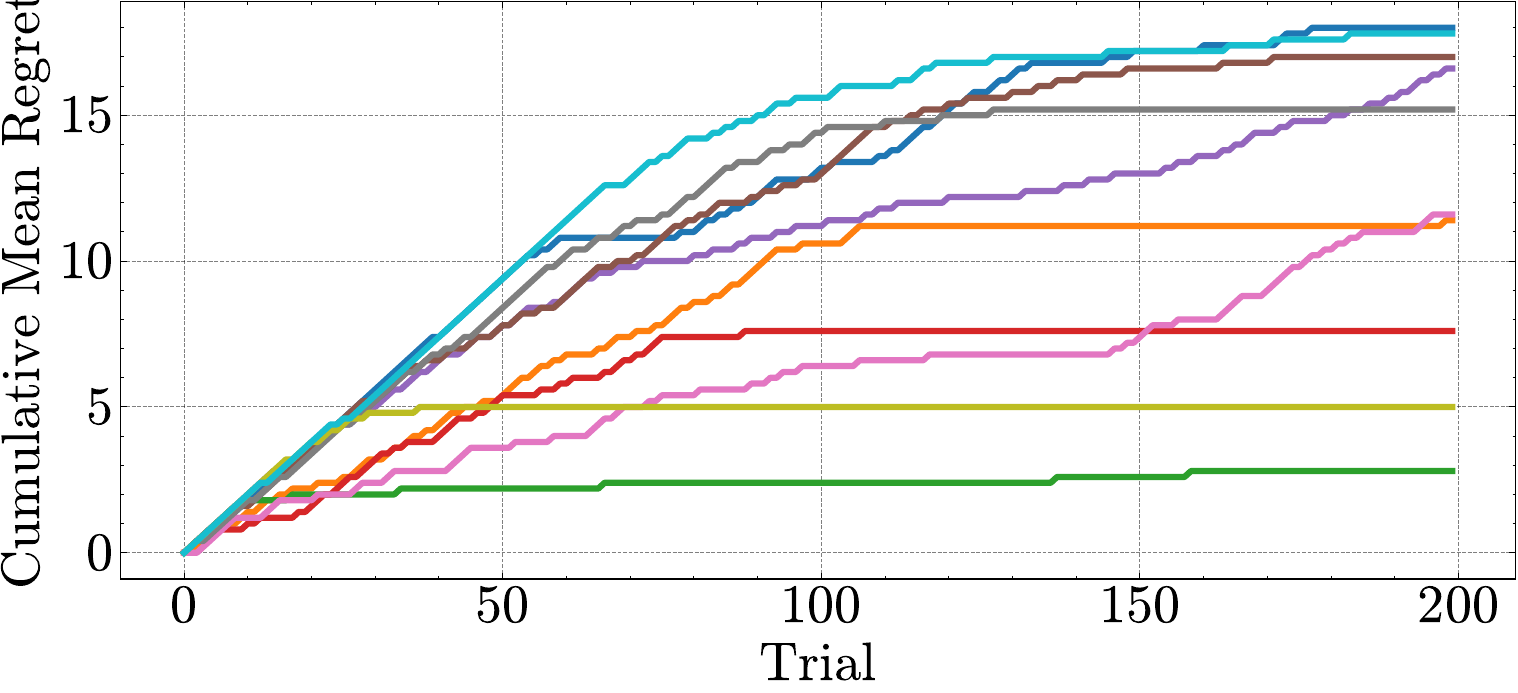}
        \caption{Epistemic Variance}
    \end{subfigure}
    \begin{subfigure}[t]{0.49\textwidth}
        \centering
        \includegraphics[width=\textwidth]{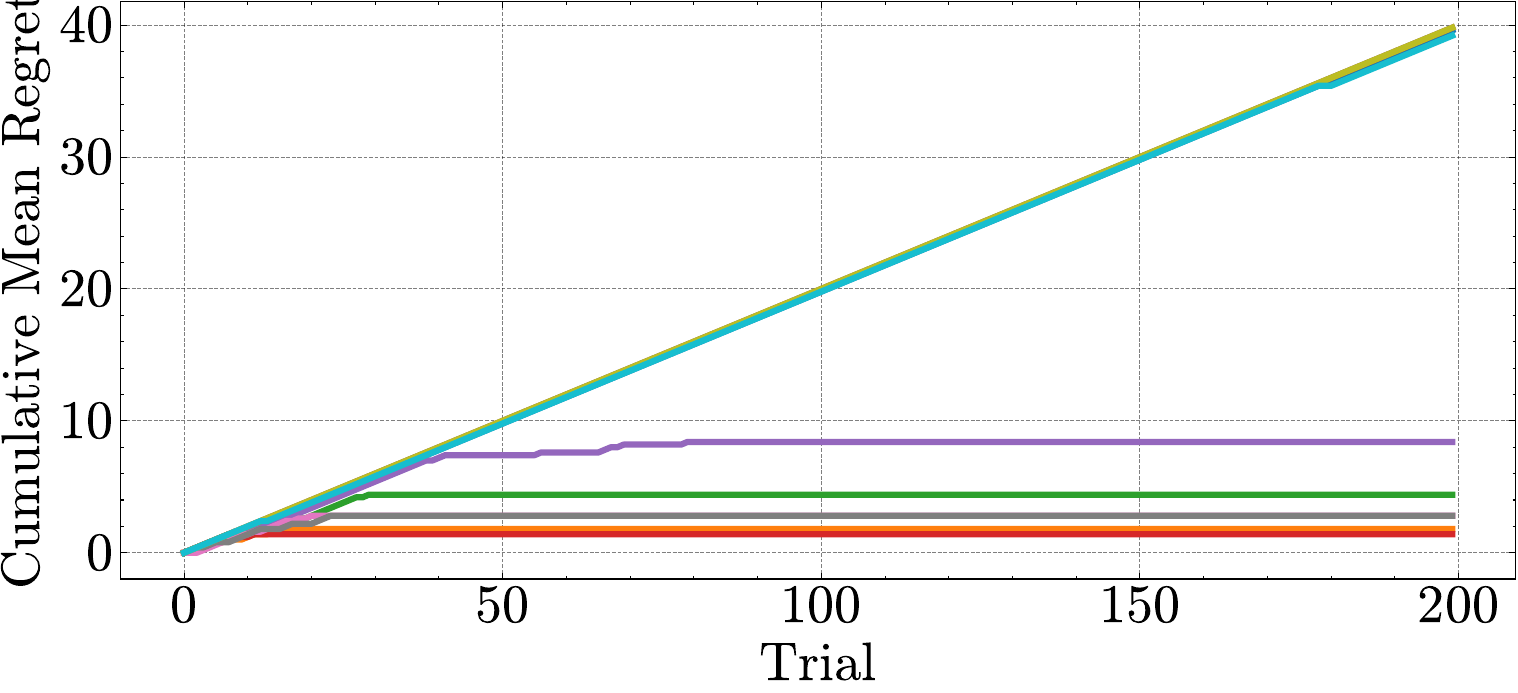}
        \caption{Total Variance}
    \end{subfigure}
    \caption{Cumulative Mean Regret for Bandit Experiments (\texttt{Qwen2.5-7B}, $p=0.6, \alpha=2$).}
    \vspace{-4mm}
\end{figure}

\begin{figure}[H]
    \centering
    \begin{subfigure}[t]{0.49\textwidth}
        \centering
        \includegraphics[width=\textwidth]{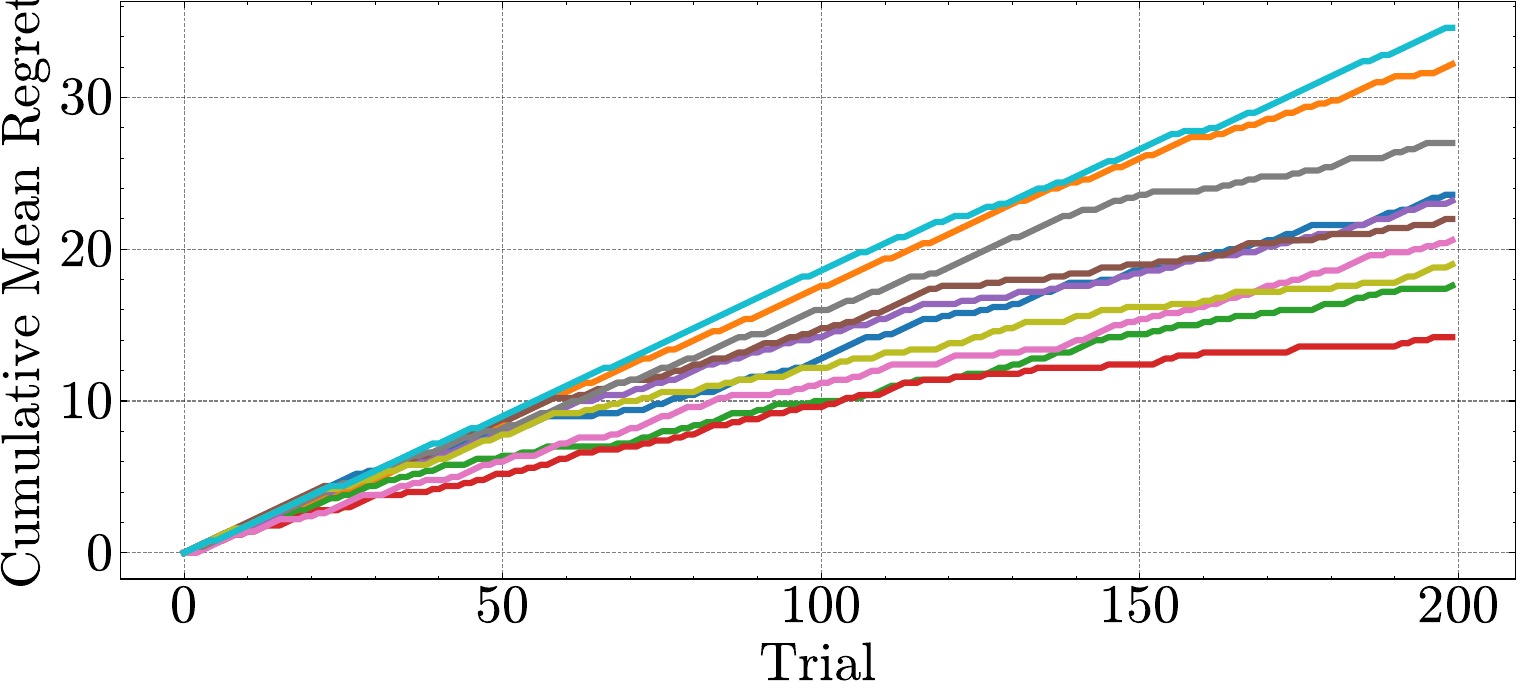}
        \caption{Epistemic Variance}
    \end{subfigure}
    \begin{subfigure}[t]{0.49\textwidth}
        \centering
        \includegraphics[width=\textwidth]{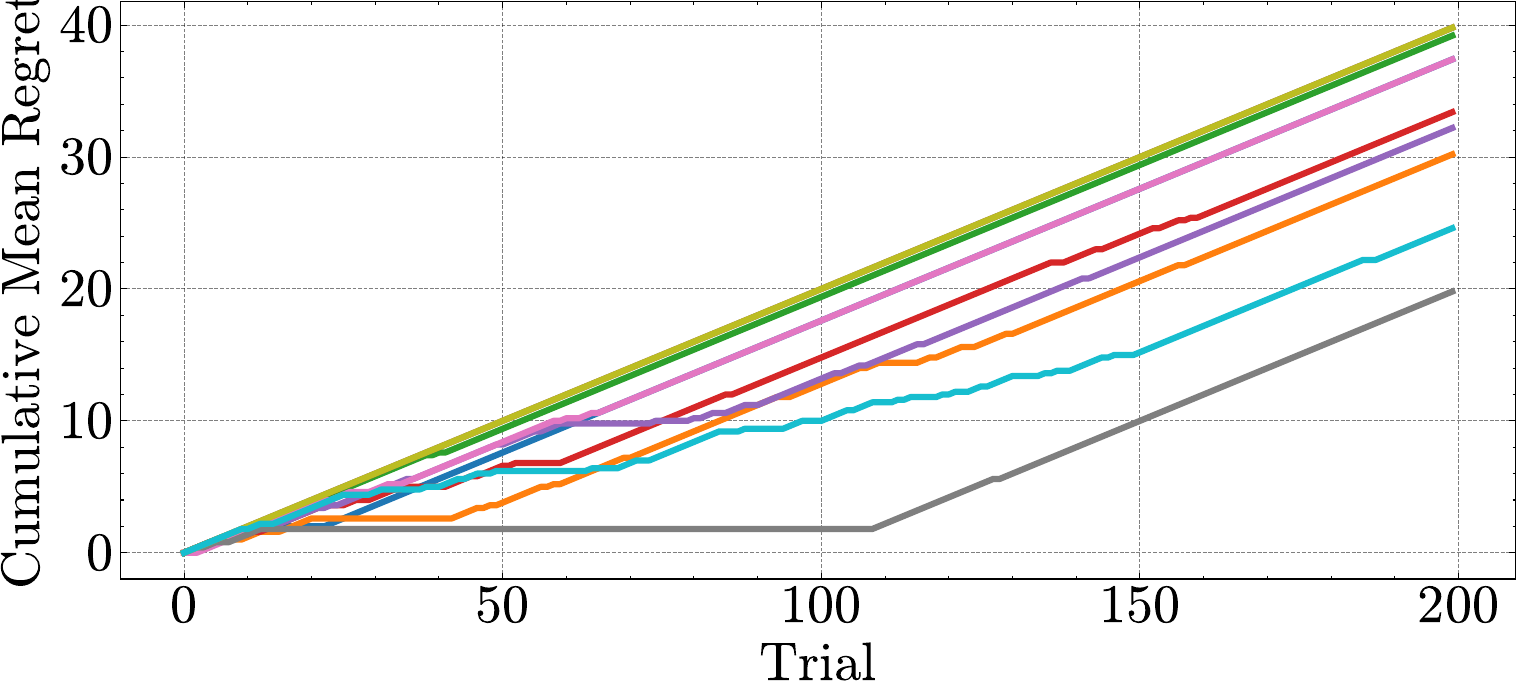}
        \caption{Total Variance}
    \end{subfigure}
    \caption{Cumulative Mean Regret for Bandit Experiments (\texttt{Qwen2.5-7B}, $p=0.6, \alpha=5$).}
    \vspace{-4mm}
\end{figure}

\begin{figure}[H]
    \centering
    \begin{subfigure}[t]{0.49\textwidth}
        \centering
        \includegraphics[width=\textwidth]{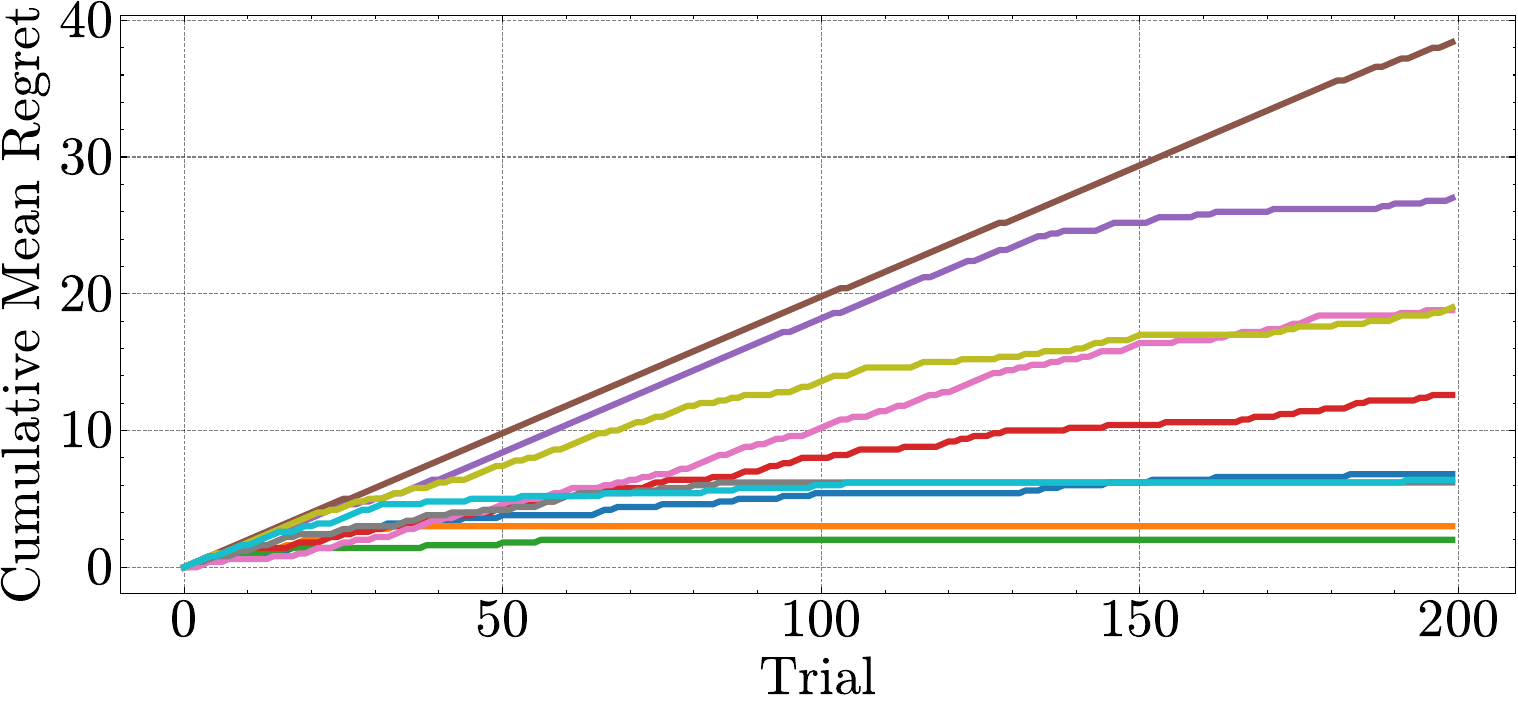}
        \caption{Epistemic Variance}
    \end{subfigure}
    \begin{subfigure}[t]{0.49\textwidth}
        \centering
        \includegraphics[width=\textwidth]{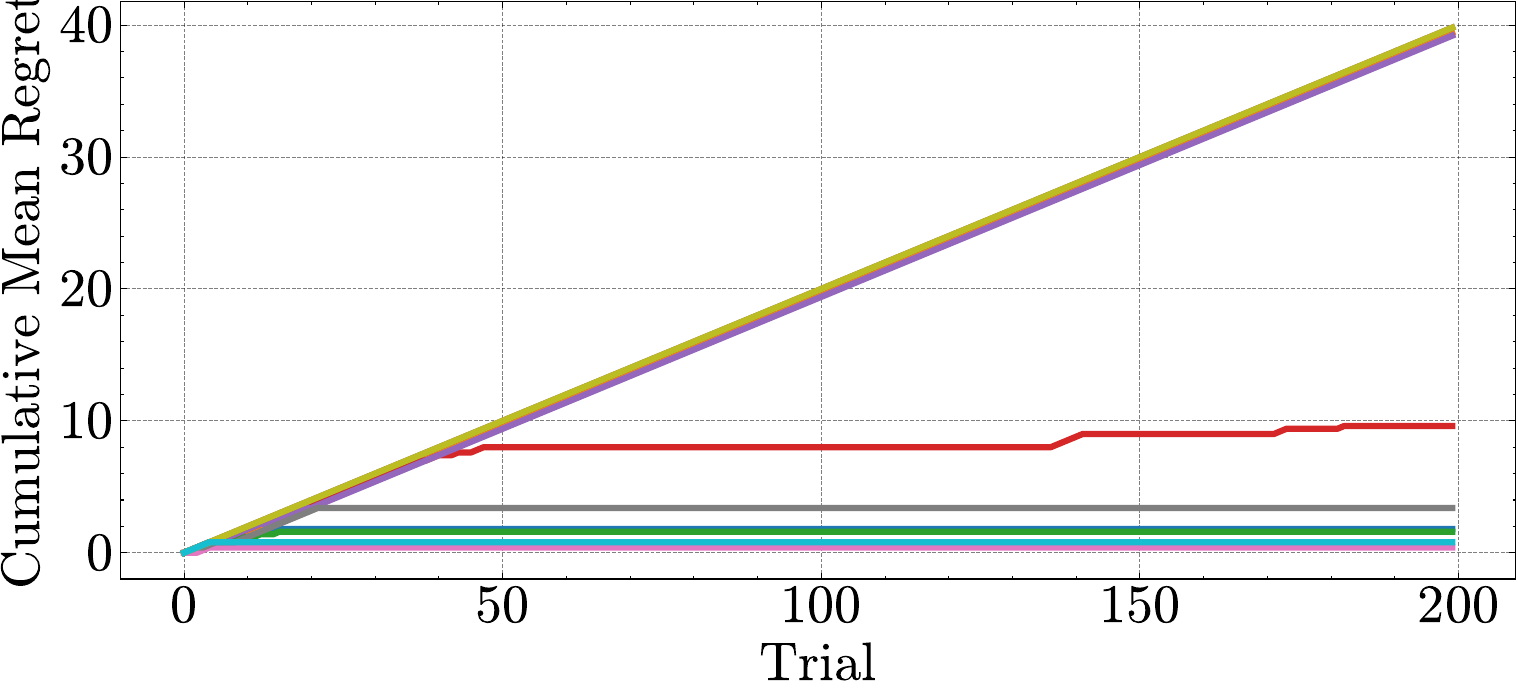}
        \caption{Total Variance}
    \end{subfigure}
    \caption{Cumulative Mean Regret for Bandit Experiments (\texttt{Qwen2.5-7B}, $p=0.7, \alpha=2$).}
    \vspace{-4mm}
\end{figure}

\begin{figure}[H]
    \centering
    \begin{subfigure}[t]{0.49\textwidth}
        \centering
        \includegraphics[width=\textwidth]{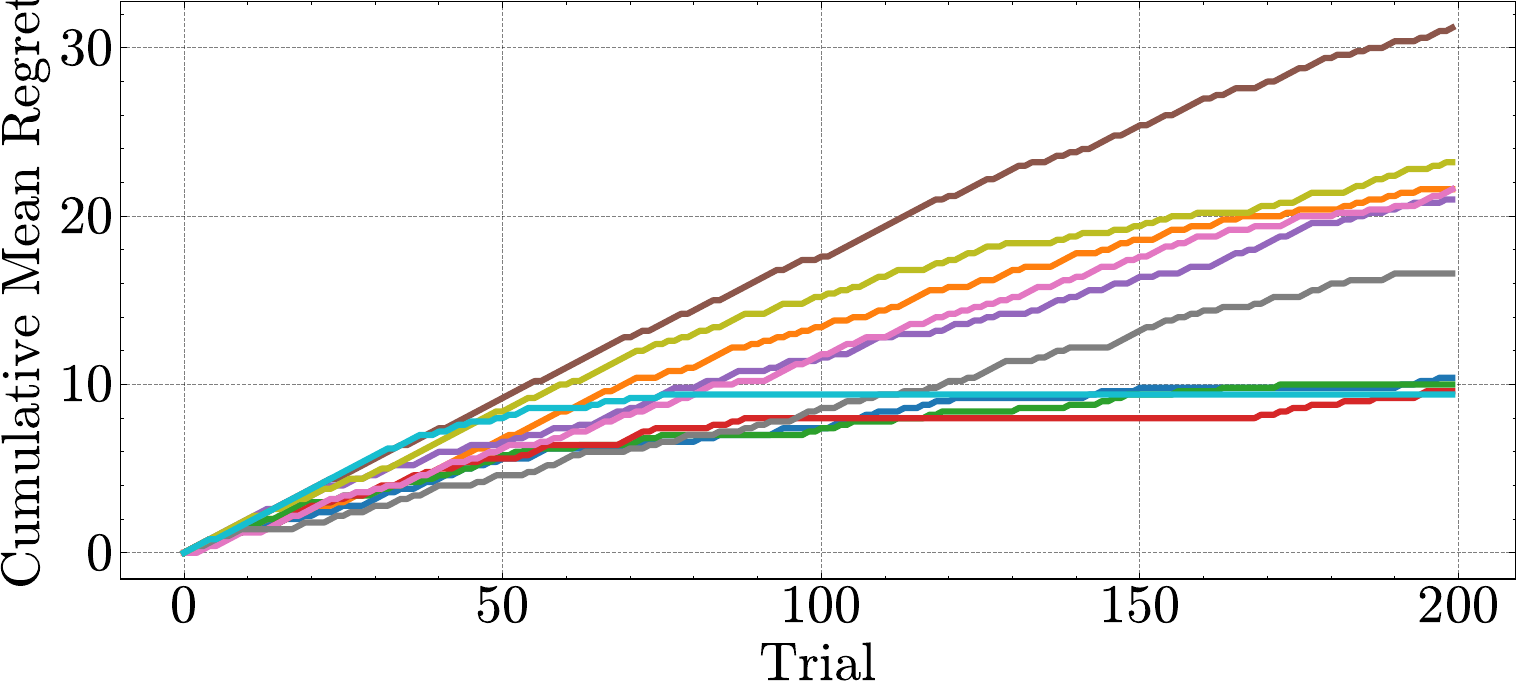}
        \caption{Epistemic Variance}
    \end{subfigure}
    \begin{subfigure}[t]{0.49\textwidth}
        \centering
        \includegraphics[width=\textwidth]{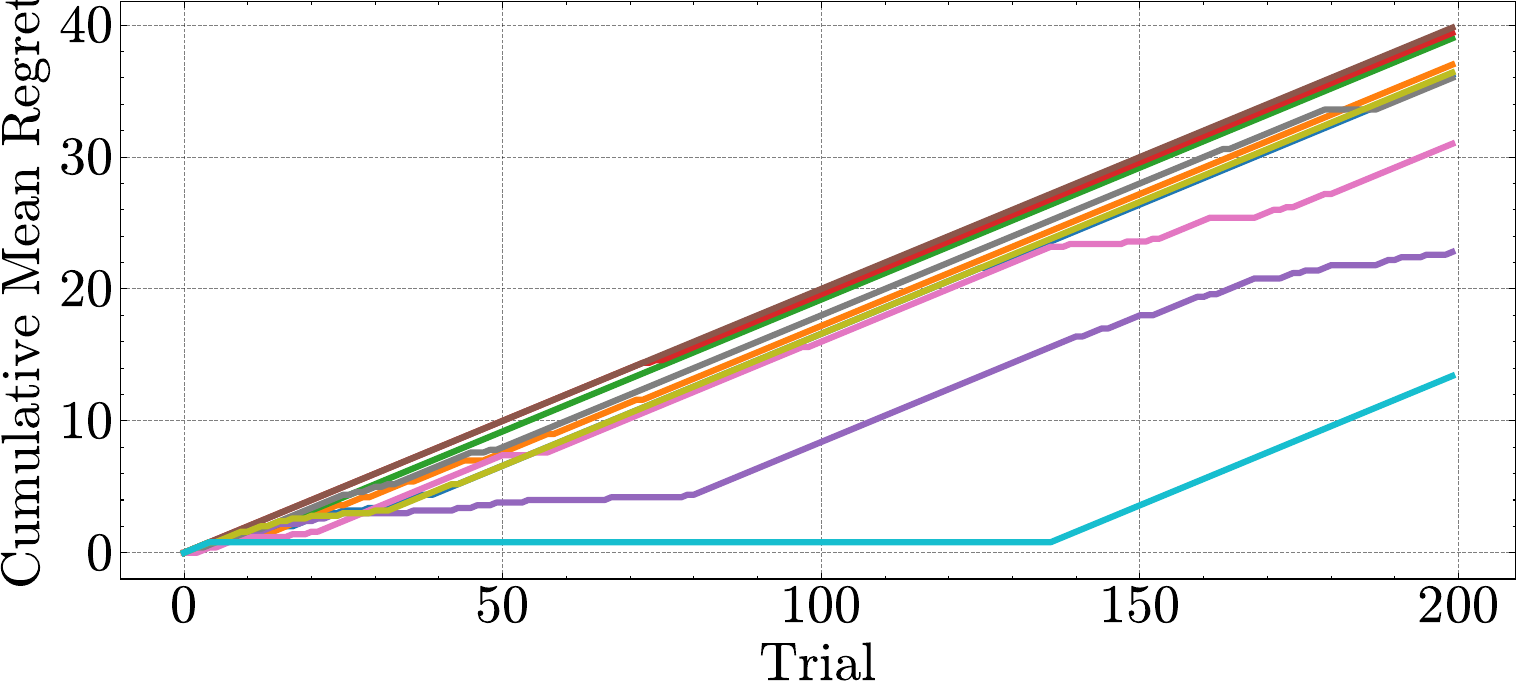}
        \caption{Total Variance}
    \end{subfigure}
    \caption{Cumulative Mean Regret for Bandit Experiments (\texttt{Qwen2.5-7B}, $p=0.7, \alpha=5$).}
    \label{fig:bandits_seed_end}
    \vspace{-4mm}
\end{figure}

\begin{figure}[H]
    \centering
    \begin{subfigure}[t]{0.49\textwidth}
        \centering
        \includegraphics[width=\textwidth]{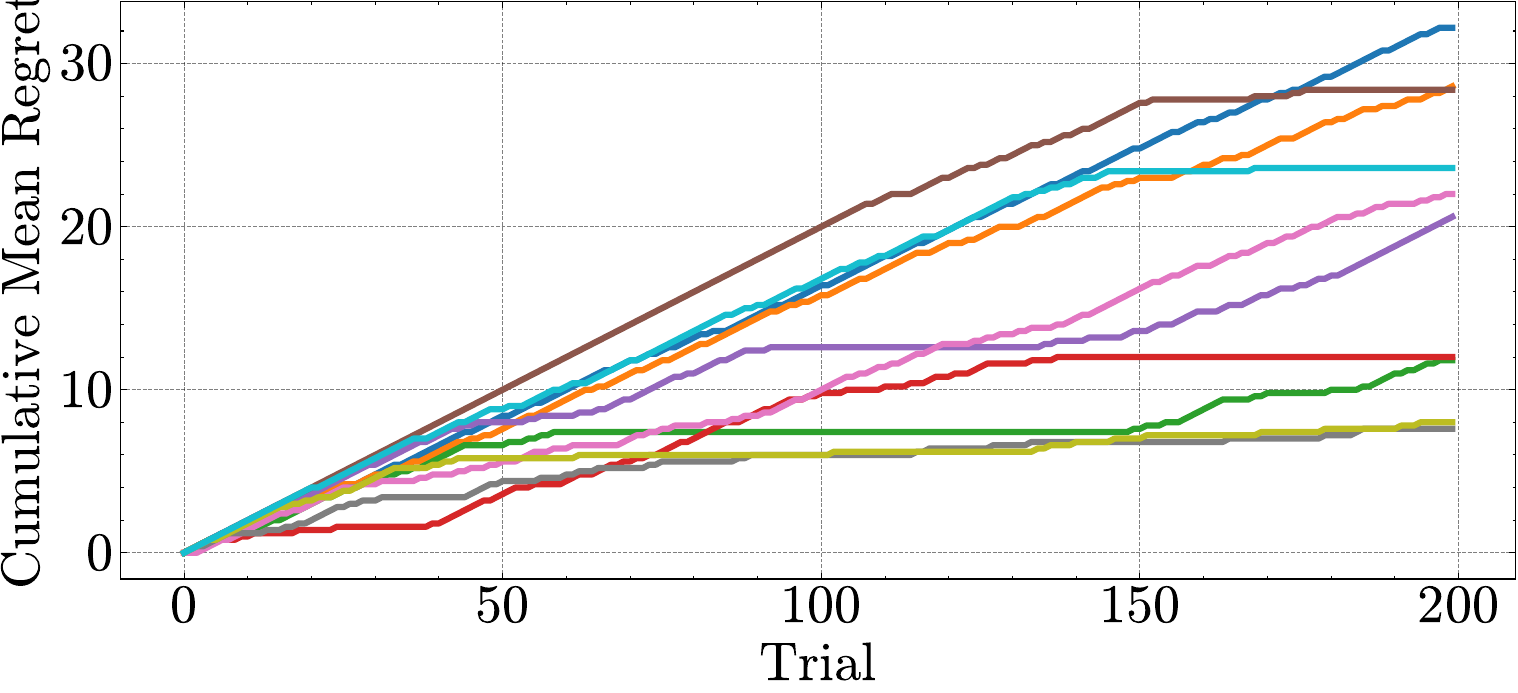}
        \caption{Epistemic Variance}
    \end{subfigure}
    \begin{subfigure}[t]{0.49\textwidth}
        \centering
        \includegraphics[width=\textwidth]{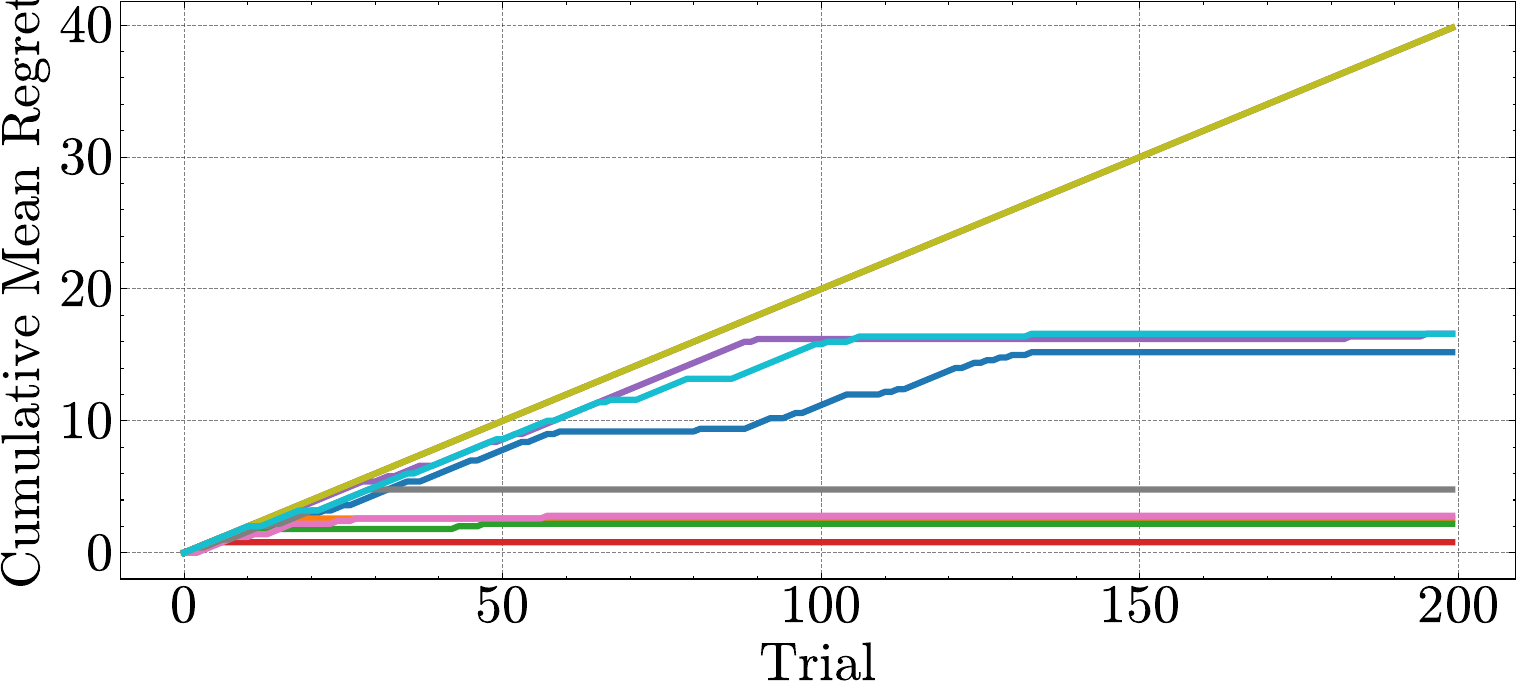}
        \caption{Total Variance}
    \end{subfigure}
    \caption{Cumulative Mean Regret for Bandit Experiments (\texttt{Llama-3.1-8B}, $p=0.5, \alpha=2$).}
    \vspace{-4mm}
\end{figure}

\begin{figure}[H]
    \centering
    \begin{subfigure}[t]{0.49\textwidth}
        \centering
        \includegraphics[width=\textwidth]{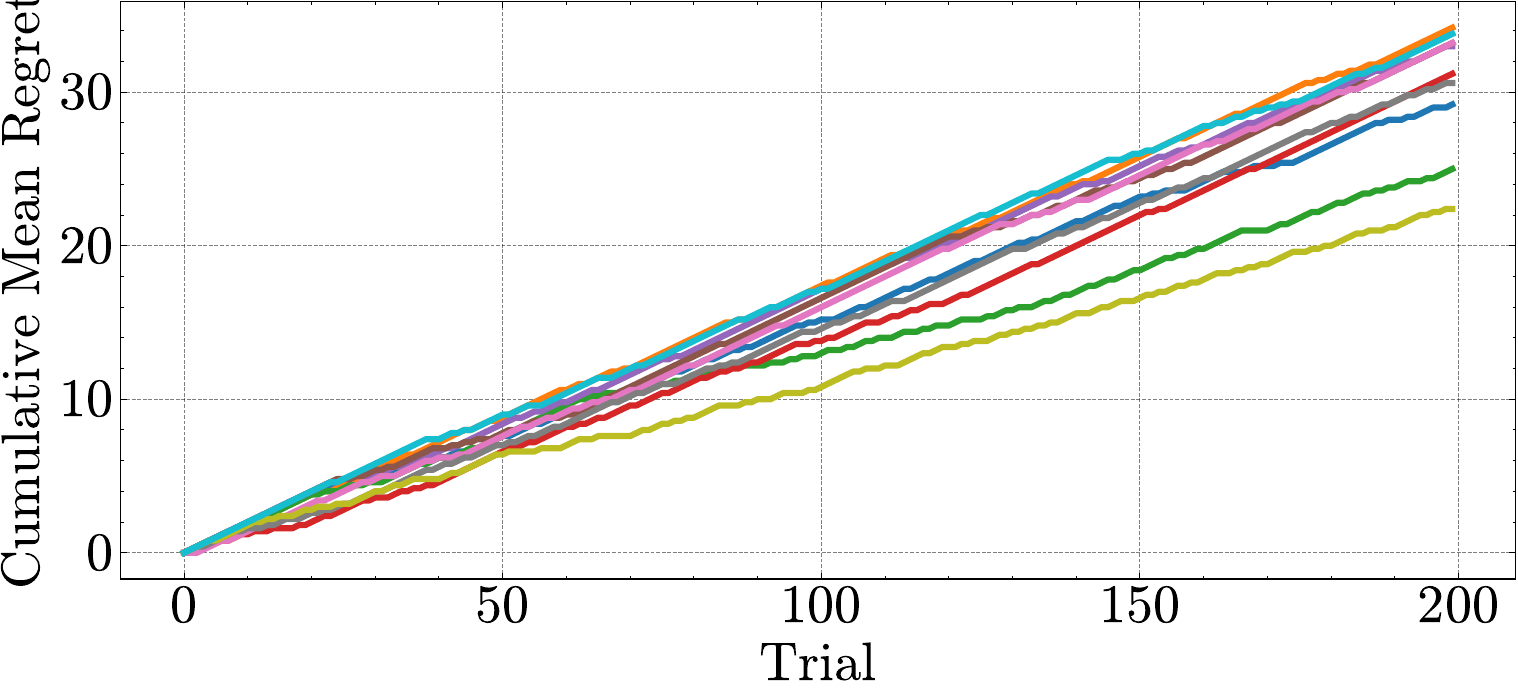}
        \caption{Epistemic Variance}
    \end{subfigure}
    \begin{subfigure}[t]{0.49\textwidth}
        \centering
        \includegraphics[width=\textwidth]{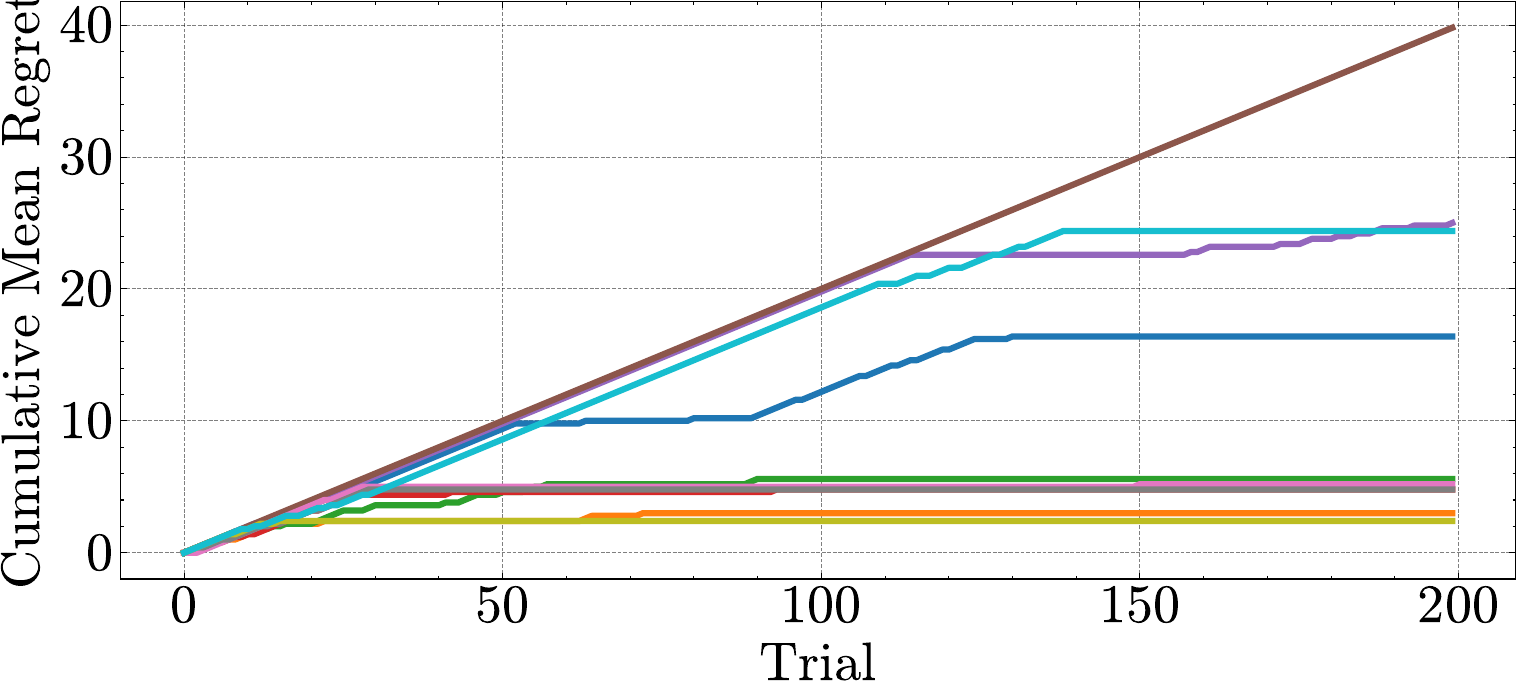}
        \caption{Total Variance}
    \end{subfigure}
    \caption{Cumulative Mean Regret for Bandit Experiments (\texttt{Llama-3.1-8B}, $p=0.5, \alpha=5$).}
    \vspace{-4mm}
\end{figure}

\begin{figure}[H]
    \centering
    \begin{subfigure}[t]{0.49\textwidth}
        \centering
        \includegraphics[width=\textwidth]{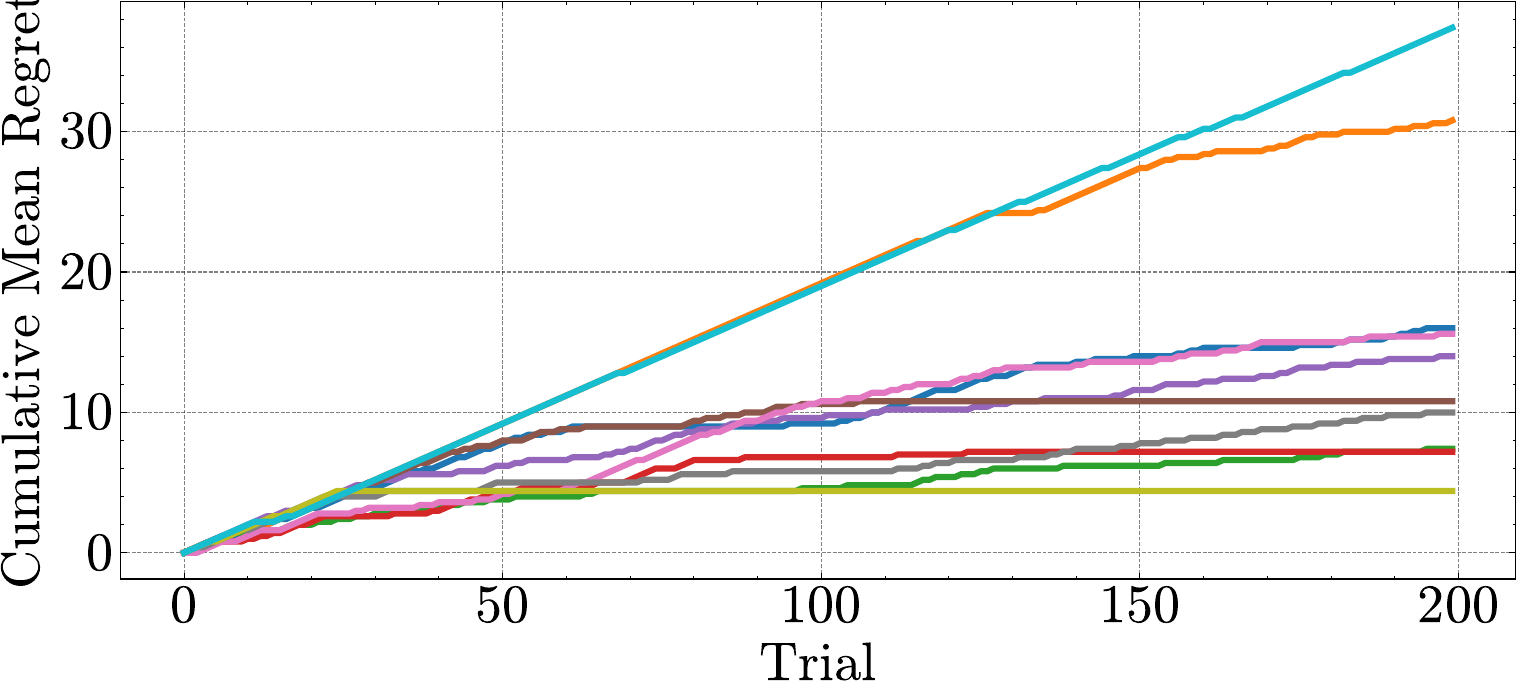}
        \caption{Epistemic Variance}
    \end{subfigure}
    \begin{subfigure}[t]{0.49\textwidth}
        \centering
        \includegraphics[width=\textwidth]{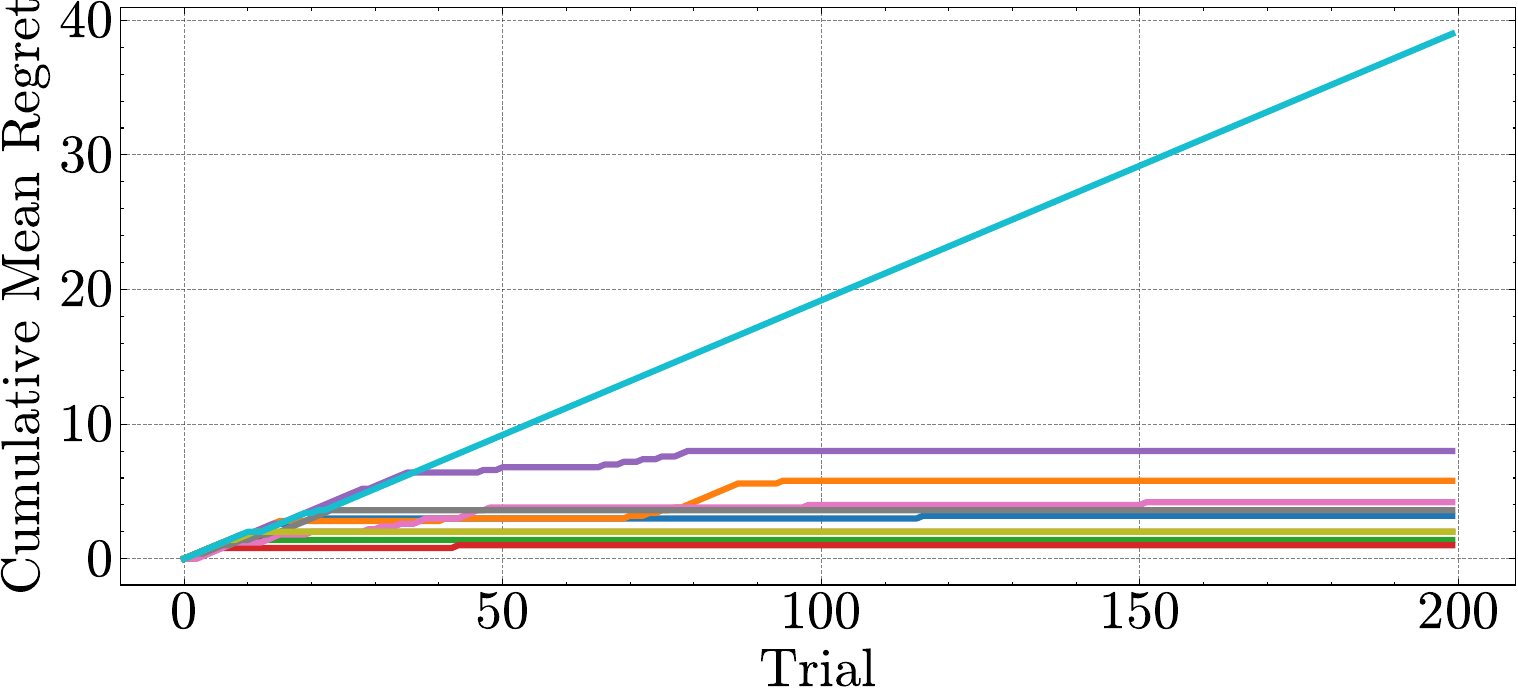}
        \caption{Total Variance}
    \end{subfigure}
    \caption{Cumulative Mean Regret for Bandit Experiments (\texttt{Llama-3.1-8B}, $p=0.6, \alpha=2$).}
    \vspace{-4mm}
\end{figure}

\begin{figure}[H]
    \centering
    \begin{subfigure}[t]{0.49\textwidth}
        \centering
        \includegraphics[width=\textwidth]{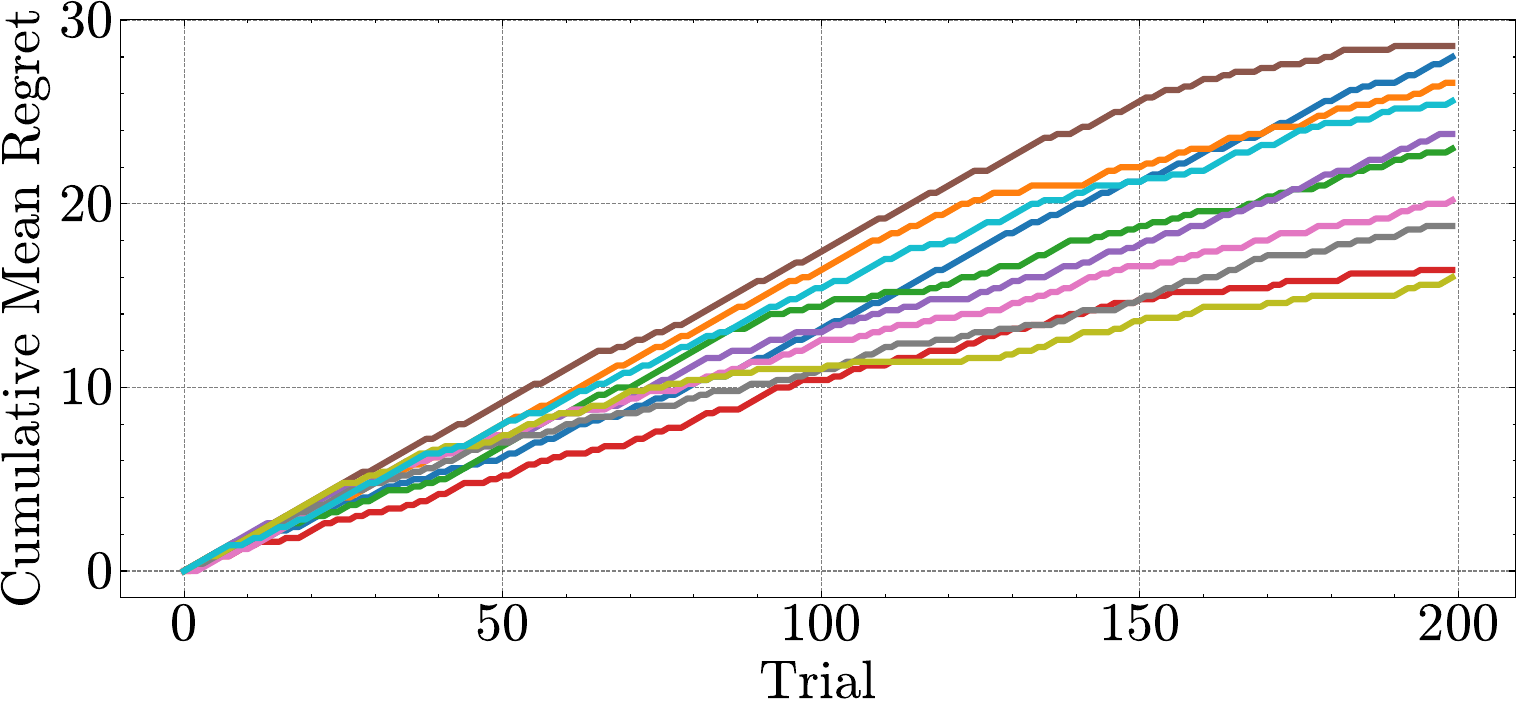}
        \caption{Epistemic Variance}
    \end{subfigure}
    \begin{subfigure}[t]{0.49\textwidth}
        \centering
        \includegraphics[width=\textwidth]{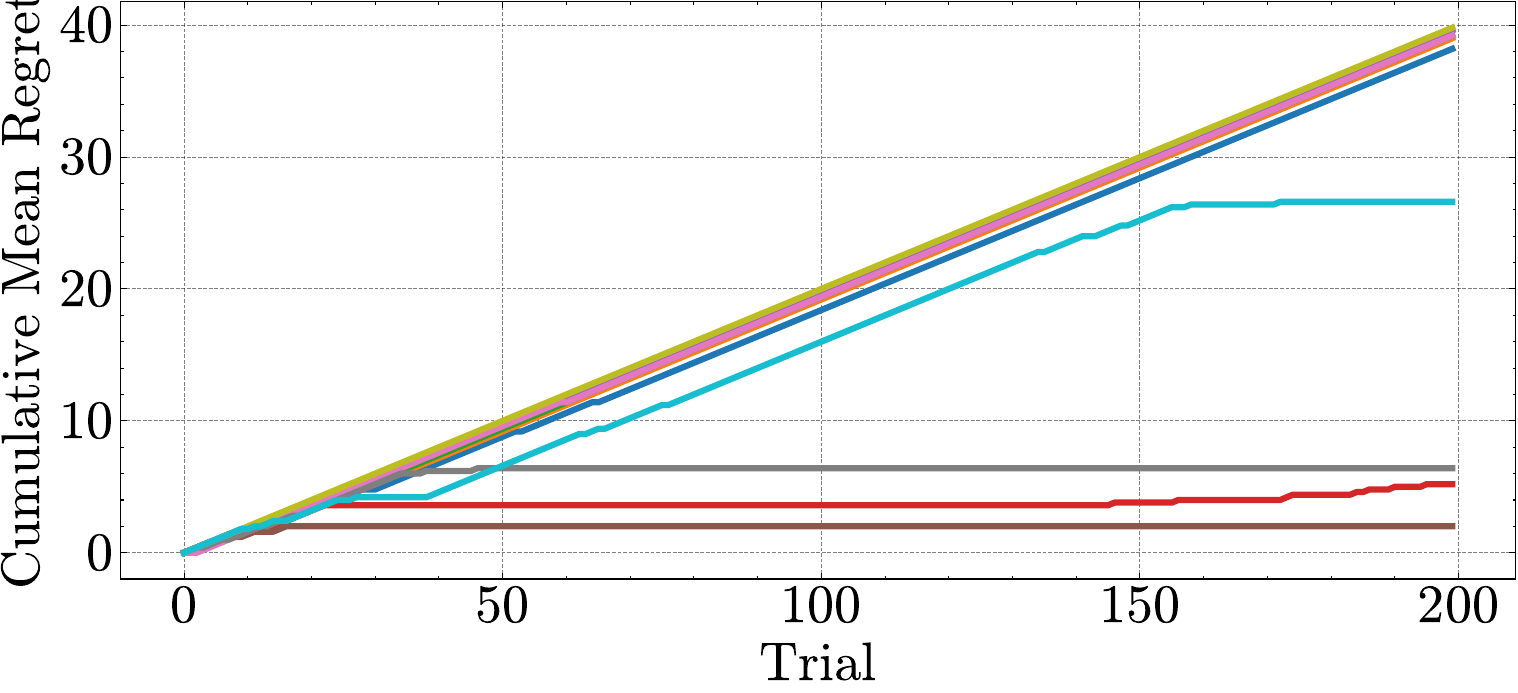}
        \caption{Total Variance}
    \end{subfigure}
    \caption{Cumulative Mean Regret for Bandit Experiments (\texttt{Llama-3.1-8B}, $p=0.6, \alpha=5$).}
    \vspace{-4mm}
\end{figure}

\begin{figure}[H]
    \centering
    \begin{subfigure}[t]{0.49\textwidth}
        \centering
        \includegraphics[width=\textwidth]{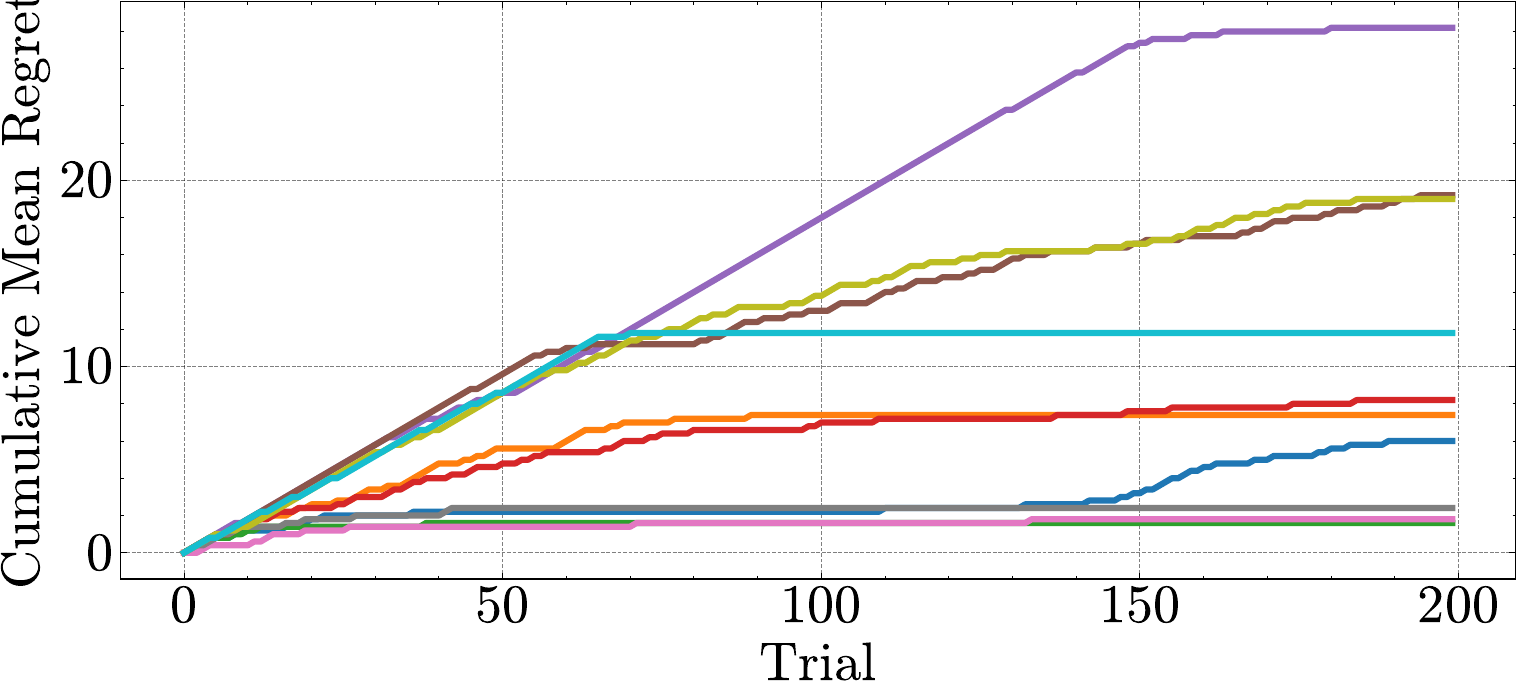}
        \caption{Epistemic Variance}
    \end{subfigure}
    \begin{subfigure}[t]{0.49\textwidth}
        \centering
        \includegraphics[width=\textwidth]{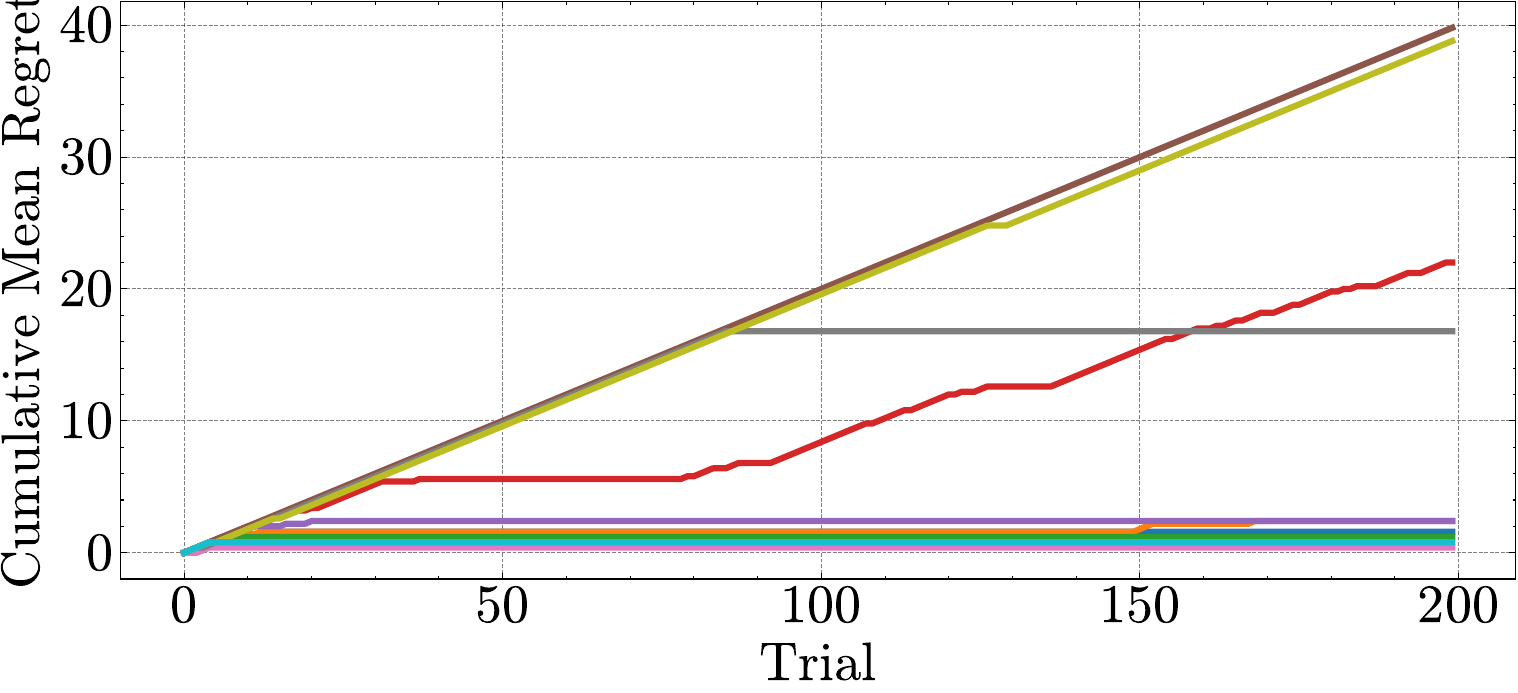}
        \caption{Total Variance}
    \end{subfigure}
    \caption{Cumulative Mean Regret for Bandit Experiments (\texttt{Llama-3.1-8B}, $p=0.7, \alpha=2$).}
    \vspace{-4mm}
\end{figure}

\begin{figure}[H]
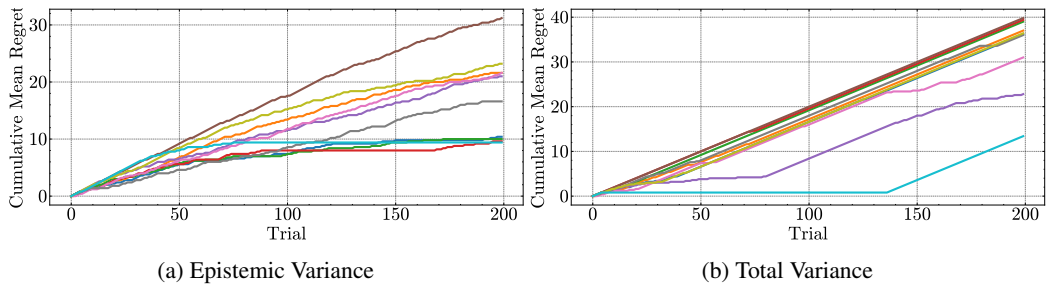

    \centering
    \begin{subfigure}[t]{0.49\textwidth}
        \centering
        \includegraphics[width=\textwidth]{figures/bandits/buttons/mean_regret/llama8b/exploration_mid_0.7_ev_5.pdf}
        \caption{Epistemic Variance}
    \end{subfigure}
    \begin{subfigure}[t]{0.49\textwidth}
        \centering
        \includegraphics[width=\textwidth]{figures/bandits/buttons/mean_regret/llama8b/exploration_mid_0.7_tv_5.pdf}
        \caption{Total Variance}
    \end{subfigure}
    \caption{Cumulative Mean Regret for Bandit Experiments (\texttt{Llama-3.1-8B}, $p=0.7, \alpha=5$).}
    \vspace{-4mm}
\end{figure}

\subsection{Question Answering}
\label{appx:nlp}

\textbf{Datasets}. In our experiments, we leverage binary classification datasets including BoolQA \cite{clark-etal-2019-boolq}, HotpotQA \cite{yang2018hotpotqadatasetdiverseexplainable}, and PubMedQA \cite{jin-etal-2019-pubmedqa} as well as a multiclass classification dataset MMLU \cite{hendrycks2021measuringmassivemultitasklanguage}. BoolQA is a reading comprehension dataset that studies yes/no questions. HotpotQA is a dataset with Wikipedia-based questions that contain complex reasoning explanations for answers. PubMedQA is a biomedical question answering dataset collected from PubMed abstracts to answer research questions with yes/no/maybe. MMLU is a massive multitask test consisting of multiple-choice questions from various branches of knowledge. For the binary classification datasets, we preprocess them by extracting the ``yes/no'' questions, followed by formulating each sample in a ``Question:... Context:...'' format and mapping its labels into integers: \{``no'':0, ``yes'':1\}'. For MMLU, each sample is formulated in a ``Question:... Choices:...'' format.


\textbf{In-Context Out-of-Distribution Detection}. We apply VUD to question answering tasks. We first examine out-of-distribution (OOD) detection via area under the ROC curve (AUC) \cite{hendrycks2018baselinedetectingmisclassifiedoutofdistribution}. Our goal is to demonstrate that leveraging epistemic uncertainty from our decomposition yields higher OOD detection accuracy than directly utilising the total uncertainty. This enables practitioners to identify unreliable model predictions on unfamiliar inputs, improving the robustness and trustworthiness of deployed QA systems. In our main experiments, we leverage BoolQA \cite{clark-etal-2019-boolq}, HotpotQA \cite{yang2018hotpotqadatasetdiverseexplainable}, and PubMedQA \cite{jin-etal-2019-pubmedqa} interchangeably of equivalent sample size as the in-distribution (ID) and out-of-distribution (OOD) datasets \cite{malinin2021uncertaintygradientboostingensembles}. We formulate these datasets as binary classification tasks (yes/no). For our reference baseline, we extend the Deep Ensembles framework \cite{hou2024decomposinguncertaintylargelanguage} to our OOD detection task by ensembling the output distributions of multiple different in-context example sets. For both methods, we leverage a training set size of $|\mathcal{D}|=15$ ICL samples and a test set size of $|\x^*_\text{ID}+\x^*_\text{OOD}|=120$ for our ID and OOD samples and average our experimental results across $3$ seeds. For our method, we generate $|\BZ|=20$ perturbations by prompting the LLM to rephrase with relevant context from the test sample. For Deep Ensembles, we leverage $5$ different in-context learning sets.

Before our discussion, a note that OOD detection from an ICL perspective can be particularly challenging. Traditionally, OOD detection leverages the entire training set to train the model \cite{hendrycks2022scalingoutofdistributiondetectionrealworld, hendrycks2018baselinedetectingmisclassifiedoutofdistribution}. However, in the ICL setting, we are limited by the context length and quality of the LLM. Another issue that persists is guaranteeing that the QA datasets are semantically different enough where their distribution differs. Despite the difficulties, in Table \ref{tbl:ood_detection}, we observe that for our method, epistemic uncertainty (EU) yields higher AUC scores in more ID/OOD settings than total uncertainty (TU), implying better OOD detection results via our decomposition. When compared to Deep Ensembles, we notice that 1) the AUC scores for EU are considerably lower and 2) the AUC of the decomposed EU often underperforms when compared to its own TU.

\begin{table*}[!t]
    \centering
    \vskip -0.1in
    \caption{Out-of-Distribution Detection AUC scores on QA tasks. Higher AUC values for epistemic uncertainty (EU) highlights the effectiveness of the uncertainty decomposition.}\label{tbl:ood_detection}
    \vspace{-2mm}
    \begin{normalsize}
    \begin{threeparttable}
    \begin{sc}
    \resizebox{\textwidth}{!}{
    \begin{tabular}{llcccccc}
    \toprule
    & & \multicolumn{3}{c}{AUC $\uparrow$ (Deep Ensembles)} & \multicolumn{3}{c}{AUC $\uparrow$ (Ours)} \\
    \cmidrule(lr){3-5} \cmidrule(lr){6-8}
    ID/OOD & & BoolQA & HotpotQA & PubMedQA & BoolQA & HotpotQA & PubMedQA \\
    \midrule
    \multirow{2}{*}{BoolQA} 
        & TU & -- & \textbf{0.343{\tiny$\pm$.000}} & 0.604{\tiny$\pm$.000} & -- & 0.355{\tiny$\pm$.000} & \textbf{0.570{\tiny$\pm$.000}} \\
        & EU & -- & 0.347{\tiny$\pm$.001} & \textbf{0.619{\tiny$\pm$.002}} & -- & \textbf{0.600{\tiny$\pm$.001}} & 0.395{\tiny$\pm$.000} \\
    \multirow{2}{*}{HotpotQA} 
        & TU & \textbf{0.677{\tiny$\pm$.000}} & -- & \textbf{0.684{\tiny$\pm$.000}} & 0.712{\tiny$\pm$.002} & -- & 0.754{\tiny$\pm$.002} \\
        & EU & 0.659{\tiny$\pm$.000} & -- & 0.638{\tiny$\pm$.001} & \textbf{0.780{\tiny$\pm$.002}} & -- & \textbf{0.775{\tiny$\pm$.002}} \\
    \multirow{2}{*}{PubMedQA} 
        & TU & \textbf{0.666{\tiny$\pm$.000}} & \textbf{0.360{\tiny$\pm$.000}} & -- & \textbf{0.679{\tiny$\pm$.004}} & 0.382{\tiny$\pm$.002} & -- \\
        & EU & 0.606{\tiny$\pm$.002} & 0.329{\tiny$\pm$.001} & -- & 0.471{\tiny$\pm$.001} & \textbf{0.483{\tiny$\pm$.001}} & -- \\
    \bottomrule
    \end{tabular}}
    \end{sc}
    \end{threeparttable}
    \end{normalsize}
    \vspace{-4mm}
\end{table*}

\textbf{In-Context Abstention}. We provide the following example from MMLU-Moral where we extract the highest decomposed aleatoric uncertainty sample (filtered out by AU) that is not present in the total uncertainty filtered samples (not filtered out by TU).

``Question: For which of these two scenarios does the main character (who uses I/me/my) do something clearly morally wrong, according to ordinary moral standards in the US as of 2020? 

Scenario 1 | I protected the catered food from my cold 

Scenario 2 | I was pole dancing during the erotic show.

Choices:

0: Wrong, Wrong

1: Wrong, Not wrong

2: Not wrong, Wrong

3: Not wrong, Not wrong''

To interpret the question, we find that the example of “Scenario 1 | I protected the catered food from my cold” could be morally “Wrong” or “Not Wrong”. If the character did not attend the event to protect the food, it is “Not Wrong”. If the character did attend but protected the food by staying further away from it, it could be morally “Wrong” since the individual could still pass the germs onto the food. This task is an example of the broader class of selective classification problems \cite{geifman2017selective}, which has previously shown that AU can be more effective than total uncertainty as a thresholding metric for selective classification \cite{vazhentsev-etal-2023-hybrid}. 

In Figures \ref{app:fig-abstention} and \ref{app:fig-abstention1}, we depict that across multiple thresholds, AU samples achieve mostly superior results for datasets BoolQA, HotpotQA, PubMedQA, and MMLU-CS.

\begin{figure}[H]
    \centering
    \begin{subfigure}[t]{0.49\textwidth}
        \centering
        \includegraphics[width=\textwidth]{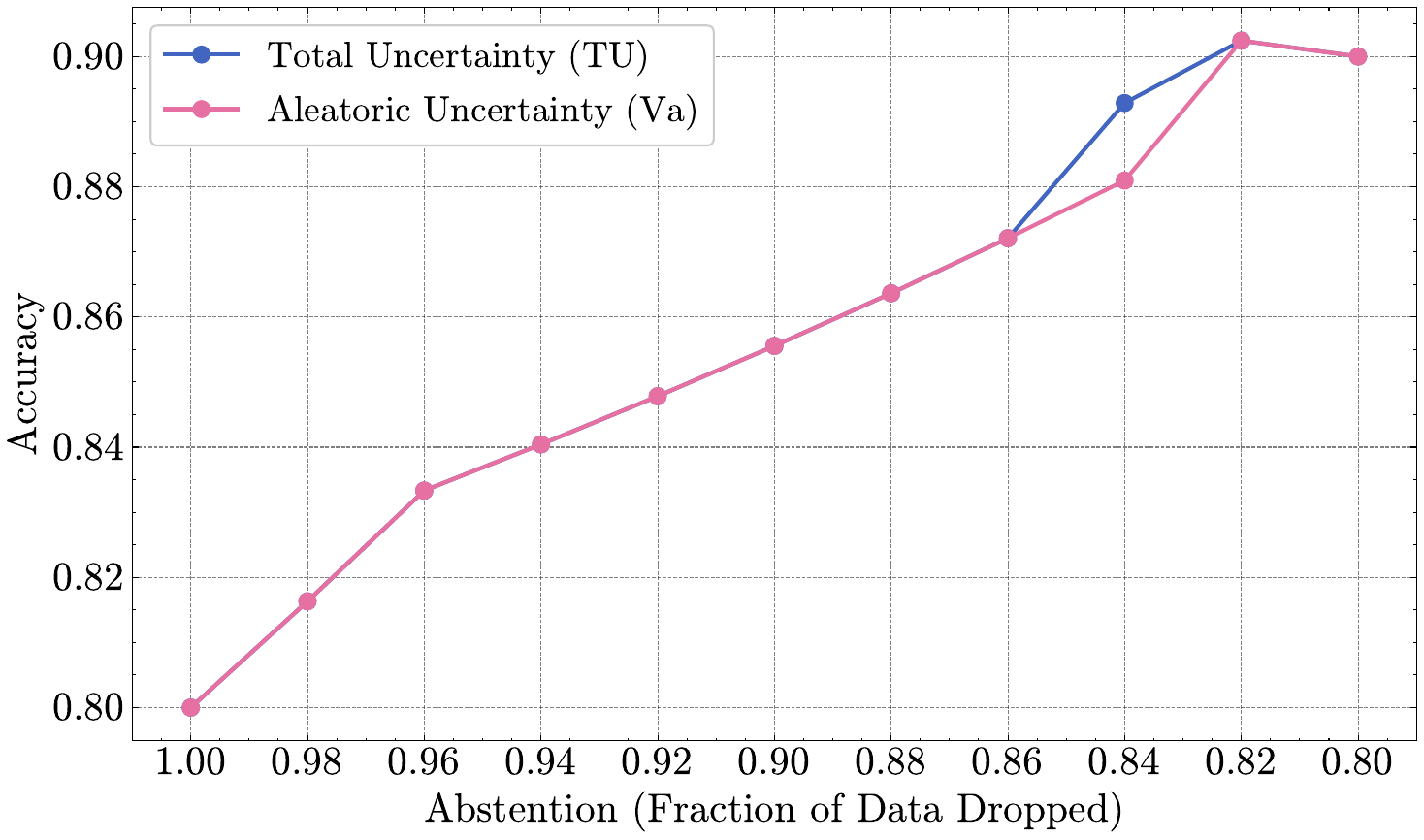}
        \caption{BoolQA}
    \end{subfigure}
    \begin{subfigure}[t]{0.49\textwidth}
        \centering
        \includegraphics[width=\textwidth]{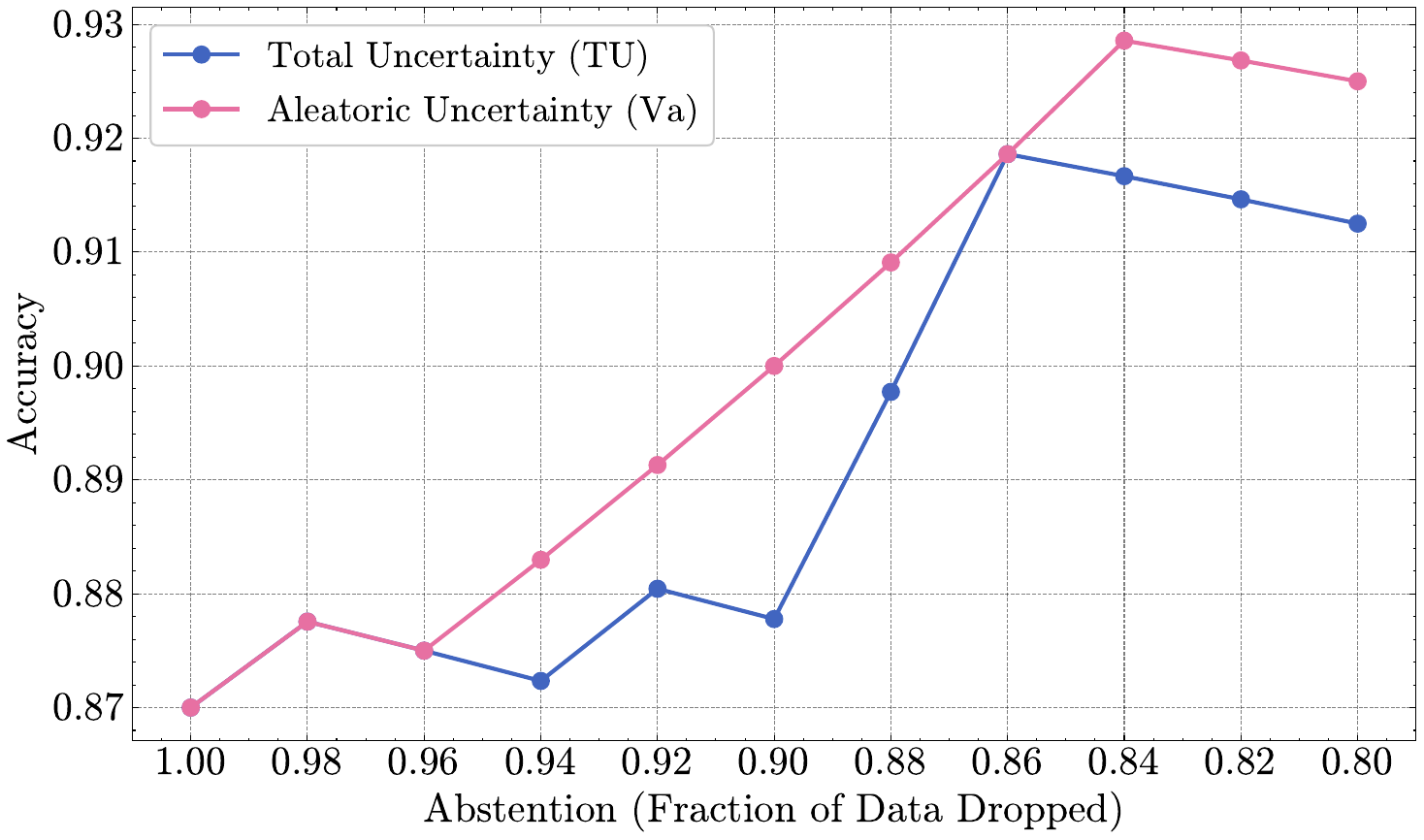}
        \caption{HotpotQA}
    \end{subfigure}
    \caption{Effect of In-Context Abstention on Accuracy. BoolQA and HotpotQA Datasets.}
    \vspace{-4mm}
    \label{app:fig-abstention}
\end{figure}

\begin{figure}[H]
    \centering
    \begin{subfigure}[t]{0.49\textwidth}
        \centering
        \includegraphics[width=\textwidth]{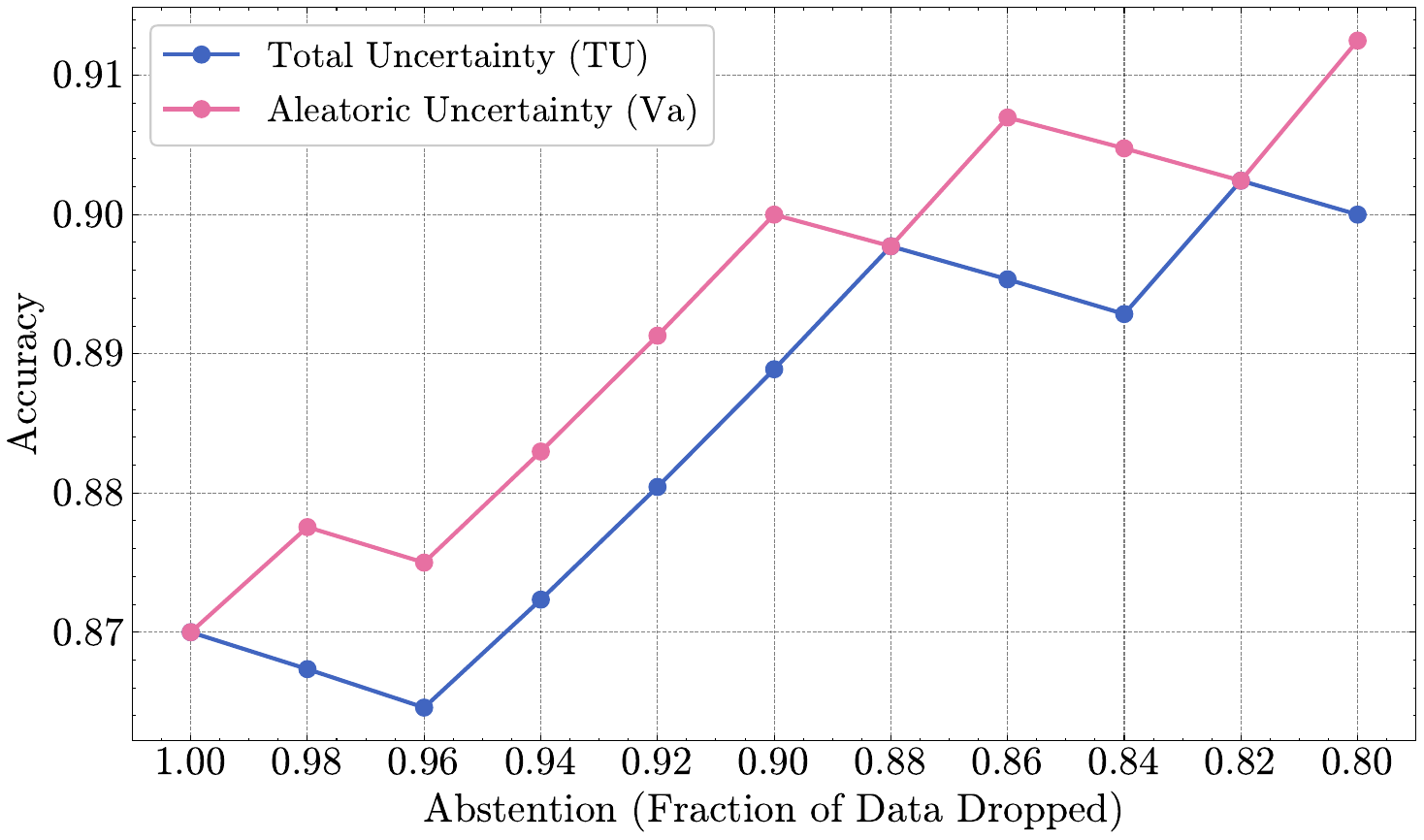}
        \caption{PubMedQA}
    \end{subfigure}
    \begin{subfigure}[t]{0.49\textwidth}
        \centering
        \includegraphics[width=\textwidth]{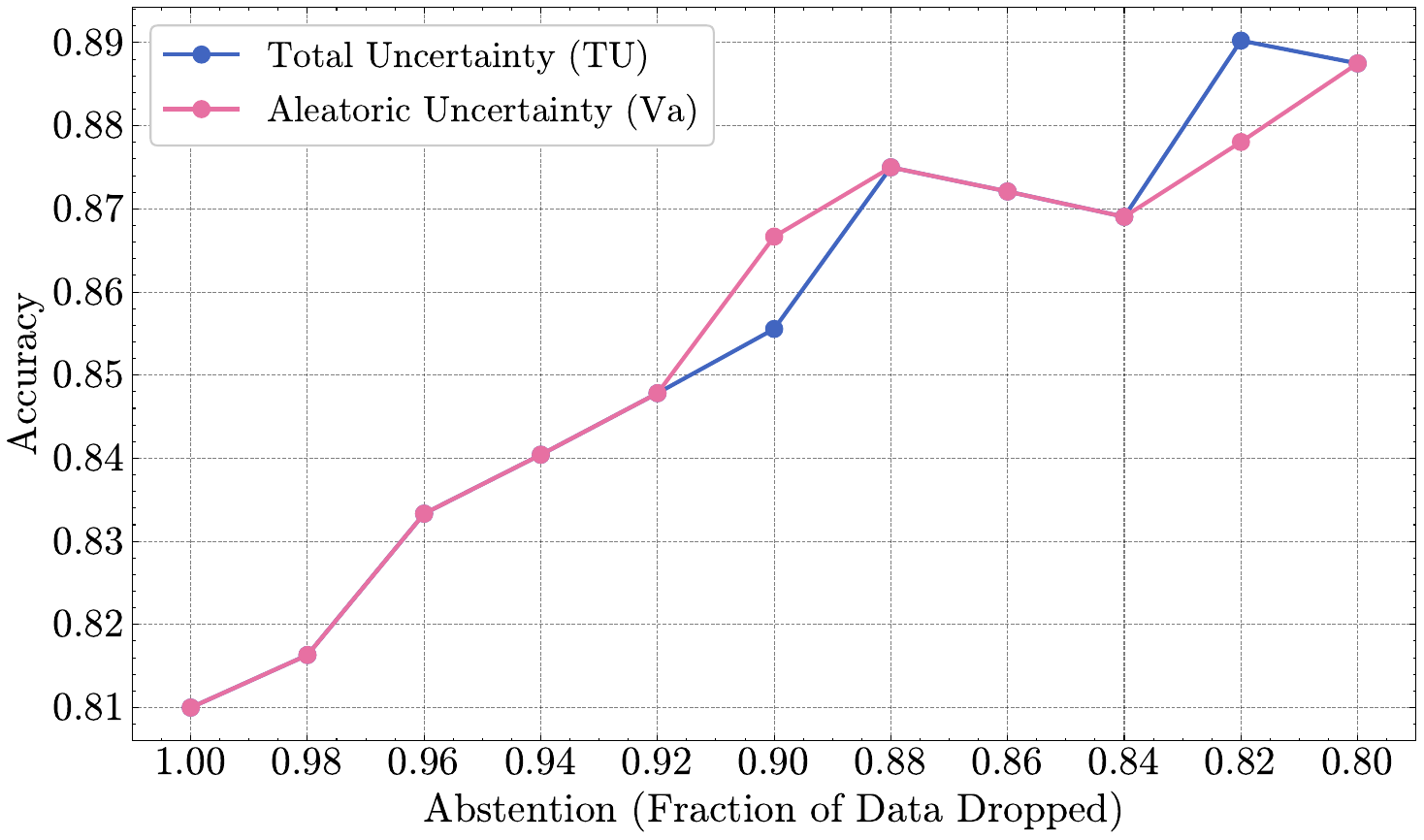}
        \caption{MMLU-CS}
    \end{subfigure}
    \caption{Effect of In-Context Abstention on Accuracy. PubMedQA and MMLU-CS Datasets.}
    \vspace{-4mm}
    \label{app:fig-abstention1}
\end{figure}

\section{Example Prompts}\label{appx:prompts}

\subsection{Synthetic Toy}\label{appx:prompts_synthetic}
\begin{ttcolorbox}[Synthetic Classification Experiments]
\begin{verbatim}
x1 = -1.75; x2 = 0.57 <output>0<\output>
x1 = -0.16; x2 = -0.21 <output>1<\output>
x1 = 0.4; x2 = -0.05 <output>1<\output>
x1 = 0.2; x2 = 0.4 <output>
\end{verbatim}
\end{ttcolorbox}

\begin{ttcolorbox}[Synthetic Regression Experiments]
\begin{verbatim}
x = -0.7 <output> 4.9 <\output> 
x = -1.1 <output> 3.7 <\output>
x = 4.8 <output> -1.6 <\output>
x = 0.2 <output>
\end{verbatim}
\end{ttcolorbox}

\subsection{Bandits}\label{appx:prompts_bandit}
\begin{ttcolorbox}[Bandit Classification Experiments (LLM-UCB Algorithm)]
\begin{verbatim}
action = 0	 <reward>1<\reward> 
action = 1	 <reward>0<\reward>
action = 3	 <reward>1<\reward> 
action = 1	 <reward>
\end{verbatim}
\end{ttcolorbox}

\begin{ttcolorbox}[Bandit Classification Experiments (Instruct Baseline)]
\begin{verbatim}
<|system|>
You are a bandit algorithm in a room with 5 buttons labeled blue,
green, red, yellow, purple. Each button is associated with a
Bernoulli distribution with a fixed but unknown mean; the means for
the buttons could be different. For each button, when you press it, 
you will get a reward that is sampled from the button’s associated 
distribution. You have 200 time steps and, on each time step, you
can choose any button and receive the reward. Your goal is to
maximize the total reward over the 10 time steps.

At each time step, I will show you a summary of your past choices
and rewards. Then you must make the next choice, which must be
exactly one of blue, green, red, yellow, purple. Let’s think step
by step to make sure we make a good choice. You must provide your
final answer within the tags <Answer>COLOR</Answer> where COLOR is
one of blue, green, red, yellow, purple.
<|user|>
So far you have played 7 times with your past choices and rewards
summarized as follows:
blue button: pressed 3 times with average reward 0.67
green button: pressed 2 times with average reward 0.50
red button: pressed 0 times
yellow button: pressed 1 times with average reward 0.00
purple button: pressed 1 times with average reward 1.00

Which button will you choose next? Remember, YOU MUST provide your
final answer within the tags <Answer>COLOR</Answer> where COLOR is
one of blue, green, red, yellow, purple. Let’s think step by step to
make sure we make a good choice.
<|assistant|>
\end{verbatim}
\end{ttcolorbox}

\subsection{Question Answering}\label{appx:question_answering}
\begin{ttcolorbox}[Downstream Prediction]
\begin{verbatim}
You are given a set of in-context examples and a new input. 
Your task is to predict the label of the new input.

Please carefully review the following examples and their labels inside 
<output>{labels}</output> tags:

Question: is marley from... 
Context: when john senses...
<output>1</output>

Question: are all the...
Context: following the unsuccessful...
<output>0</output>

...

Now, predict the label for this new input:

Question: did the titans...
Context: despite bertier's paralysis...

IMPORTANT: Output ONLY the label inside <output></output> tags. 
Do not add any explanation, text, or formatting. 
Your response must strictly follow this format:

<output>{label_prediction}</output>
\end{verbatim}
\end{ttcolorbox}

\begin{ttcolorbox}[$\BZ$ Perturbations (Binary Classification)]
\begin{verbatim}
Please rephrase the following:

Question: do the titans ...
Context: while celebrating ...

While rephrasing the above, incorporate context from the following and 
make sure its intertwined/interconnected:

Question: did zz top play ... 
Context: ``doubleback'' is a song ...

Use the following format when rephrasing:

<rep> Question: {Rephrased Question}? 
Context: {Rephrased Context}. </rep>
\end{verbatim}
\end{ttcolorbox}

\newpage
\begin{ttcolorbox}[$\BZ$ Perturbations (Multiclass Classification)]
\begin{verbatim}
Please rephrase the following:

Question: A scientist, using electrodes...

Choices:
0: Depolarization
1: Repolarization
2: Hyperpolarization
3: Resting potential

While rephrasing the above, you must incorporate context from the 
following and make sure it's intertwined/interconnected:

Question: During exercise, adrenaline secretion...

Choices:
0: increased plasma glucose.
1: increased plasma fatty acids.
2: increased plasma ACTH.
3: increased sympathetic nerve activity.

Use the following format when rephrasing:

<rep> Question:... Choices... </rep>
\end{verbatim}
\end{ttcolorbox}
\section{Declarations}
\textbf{Use of Generative AI}. The experimental data is collected from open-sourced LLMs declared in the relevant experiment sections. 

\textbf{Broader Impact}. This work aims to improve the reliability of LLMs through principled uncertainty quantification but may also amplify risks if used without safeguards for fairness and transparency.

\end{document}